\documentclass[10pt]{article}

\usepackage[margin=0.8in]{geometry}
\usepackage{amsmath,amssymb,amsthm,mathtools,bm}
\usepackage{mathrsfs}
\usepackage{booktabs}
\usepackage{graphicx}
\usepackage{microtype}
\usepackage{enumitem}
\usepackage{natbib}
\usepackage{xcolor}
\usepackage{subcaption}
\usepackage[colorlinks=true,allcolors=blue!55!black]{hyperref}
\usepackage[nameinlink,capitalize,noabbrev]{cleveref}

\newtheorem{theorem}{Theorem}[section]
\newtheorem{proposition}[theorem]{Proposition}
\newtheorem{lemma}[theorem]{Lemma}
\newtheorem{corollary}[theorem]{Corollary}
\theoremstyle{definition}
\newtheorem{definition}[theorem]{Definition}
\newtheorem{remark}[theorem]{Remark}
\newtheorem{assumption}[theorem]{Assumption}

\newcommand{\widesim}[2][1.5]{
  \mathrel{\overset{#2}{\scalebox{#1}[1]{$\sim$}}}
}
\newcommand{\iid}{\widesim[2.5]{i.i.d.}}
\DeclareMathOperator*{\argmin}{\arg\!\min}

\DeclareMathOperator{\Tr}{Tr}

\title{An Active-Bottleneck Mechanism for Weak-to-Strong Generalization}

\author{Mohammad Zeinalpour \and Amir Najafi\thanks{Corresponding author, Email: \texttt{amir.najafi@sharif.edu}}}
\date{
\vspace*{0mm}
\parbox{\linewidth}{
\centering
\small
  Department of Computer Engineering,\\
  Sharif University of Technology
  \endgraf\medskip
  }
}

\begin{document}

\maketitle

\begin{abstract}
Weak-to-strong generalization (W2SG) occurs when a student trained on a teacher's predictions outperforms that teacher. We study when this happens under fully converged, ridgeless two-stage learning, with no early stopping, no explicit regularization, and no assumption that the student is more expressive than the teacher. In two-stage linear regression, a teacher is fit from $n$ labeled examples and a student is trained solely on the teacher's predictions on $m$ fresh, unlabeled inputs. Although both stages share the same hypothesis class and the same training rule, we show that the student outperforms the teacher exactly when $m$ lies in an explicit intermediate range: too few pseudo-labels leave the student without enough signal, too many let it inherit the teacher's noise. Under power-law covariance, we derive this range in closed form as a function of the spectral decay and noise level, including regimes where the improving region splits into two disjoint intervals of $m$. We then study a random-feature model in which the student has strictly more features than the teacher, and identify two regimes, again given by explicit thresholds: one where improvement occurs only for $m$ in a bounded interval, and one where it occurs only once the student width $N_S$ exceeds an explicit threshold. Both regimes are governed by a single \emph{active-bottleneck} principle: whichever of $m$ or $N_S$ is scarcer controls how much teacher error is filtered out, while increasing the other resource only reduces estimation noise. Together, these results show that finite data and finite width can themselves regularize a two-stage learner, with no explicit mechanism doing so.
\end{abstract}

\tableofcontents

%%%%%%%%%%%%%%%%%%%%%%%%%%%%%%%%%%%%%%%%%%%%%%%%%%%%%%%%%%%%%%%%%%%%%%%%%%%%%%%%%%%%%%%%%%%%%%%%%%%%%%%%%%%%%%%%%%%%%%%%%%%%%%%%%%%%%%%%%%%%%%%%%%%%%%%%%%%%%%%%%%%%%%%%%%%%%%%%%%%%%%%%%%%%%%%%%%%%%%%%%%%%%%%%%%%%%%%%%%%%%%%%%%%%%%%%%%%%%%%%%%%%%%%%%%%%%%%%%%%%%%%%%%%%%%%%%%%%%%%%%%%%%%%%%%%%%%%%%%%%%%%%%%%%%%%%%%%%%%%%%%%%%%%%%%%%%%%%%%%%%%%%%%%%%%%%%%%%%%%%%%%%%%%%%%%%%%%%%%%%%%%%%%%%%%%%%%%%%%%%%%%%%%%%%%%%%%%%%%%%%%%%%%%%%%%%%%%%%%%%%%%%%%%%%%%%%%%%%%%%%%%%%%%%%%%%%%%%%%%%%%%%%%%%%%%%%%%%%%%%%%%%%%%%%%%%%%%%%%%%%%%%%%%%%%%%%%%%%%%%%%%%%%%%%%%%%%%%%%%%%%%%%%%%%%%%%%%%%%%%%%%%%%%%%%%%%%%%%%%%%%%%%%%%%%%%%%%%%%%%%%%%%%%%%%%%%%%%%%%%%%%%%%%%%%%%%%%%%%%%%%%%%%%%%%%%%%%%%%%%%%%%%%%%%%%%%

\section{Introduction}
\label{sec:intro}

As AI systems approach and eventually exceed human ability on complex
tasks, humans will need to supervise models more capable than themselves.
\cite{burns2023weak} propose a way to study this problem today: fine-tune a
strong pretrained model using only the predictions of a weaker,
already fine-tuned one, and ask whether the strong model can outperform
its supervisor. They call this \emph{weak-to-strong generalization} (W2SG),
and show that it occurs empirically --- a GPT-4-level model supervised only
by a GPT-2-level model can recover a substantial fraction of the
performance gap between them. The finding is counterintuitive: the strong
model is trained explicitly to fit the weak model's labels, yet it does
not merely imitate them.

Existing theoretical explanations for W2SG typically rely on some form of
\emph{asymmetry} between the two stages. Either the student is assumed to
have strictly greater representational capacity than the teacher
\citep{xue2025representations,
dong2025discrepancies}; or the student is
trained with heavier or differently structured regularization than the
teacher \citep{moniri2026mechanisms}; or the student is stopped early,
before it has had the chance to fully absorb the teacher's errors
\citep{medvedev2025weak,
geng2026weak}. Each of these
mechanisms gives the student some structural advantage that the teacher
does not share. This raises a natural question: is such an asymmetry
\emph{necessary}? Can a student with exactly the teacher's capacity, trained
by exactly the teacher's rule, run to exact convergence with no ridge
penalty at all, still outperform a teacher it was trained only to imitate?

We show that the answer is yes, and that a single, simple resource is
enough to cause it: the number of pseudo-labeled examples, $m$, that the
student sees in its own training stage. In our first setting, both the
teacher and the student solve ridgeless linear regression over the
\emph{same} hypothesis class, and the student is trained to convergence on
$m$ fresh inputs labeled by the teacher. Because $m$ is finite, the
student's minimum-norm solution is constrained to an $m$-dimensional
subspace of parameter space; it does not have room to reproduce the
teacher exactly. This subspace can retain more of the teacher's useful
signal than its noise, or the reverse, depending on $m$. Under power-law
covariance, we make this trade-off completely explicit: we solve for the
critical values of $m$ --- as a fixed threshold, or as a limiting ratio
$m/p$ when $m$ scales with the ambient dimension $p$ --- that separate the
regime where the student is worse than the teacher from the regime where
it is better, and we show that this improving region can itself be a
single interval, all of $(0,p)$, or two disjoint intervals of $m$,
depending on the noise level and the covariance decay rate.

This single-resource result suggests something more general is at work: a
shared resource between the two training stages must be constrained to an
intermediate range --- tight enough to keep out the teacher's error, loose
enough to let the target signal through --- for W2SG to occur. We call such
a resource, whichever it turns out to be, an \emph{active bottleneck}, and
in our second setting we test this idea against a genuine capacity
asymmetry. We give the student strictly more random features than the
teacher, so that the student is a truly stronger model in the usual sense,
and ask whether pseudo-sample size still matters once this asymmetry is
present. It does, but its role now shares the stage with a second resource,
student width. We identify two regimes, again with explicit thresholds:
in one, improvement holds only when $m$ falls inside a bounded interval,
which we solve for exactly; in the other, improvement requires the student
width $N_S$ to exceed an explicit threshold, and now \emph{more} pseudo-labels
are beneficial rather than harmful. The two regimes are governed by a single
principle: whichever resource, $m$ or $N_S$, is scarcer at a given point
in parameter space is the one that determines how much of the teacher's
error is filtered out; the other resource, once it is not the bottleneck,
only reduces estimation noise and can be increased freely. This is the
sense in which our two settings tell one story: Case~I is the special case
of this principle in which only one resource, $m$, is available to bind,
and student and teacher share the same capacity; Case~II makes the
principle explicit and shows it survives, in modified form, once a genuine
capacity asymmetry is introduced.

Our emphasis on an intermediate, rather than merely large, pseudo-sample
size is not only a theoretical curiosity: large-scale empirical studies of
distillation scaling laws have observed the same non-monotonicity directly
in trained language models, where increasing the number of teacher-labeled
examples eventually collapses W2SG even between models with a large
capacity gap \citep{busbridge2025distillation}. We discuss the relationship
between this empirical finding and our theoretical account in
Section~\ref{sec:related-work}.

\paragraph{Contributions.}
\begin{itemize}
    \item \textbf{W2SG at ridgeless convergence with no capacity asymmetry.}
    In two-stage linear regression with identical teacher and student
    hypothesis classes, we show that a student trained to convergence on the
    teacher's own predictions can strictly outperform that teacher, with no
    early stopping and no explicit regularization at either stage.

    \item \textbf{An explicit sample-size phase diagram.} Under power-law
    covariance, we solve for the critical pseudo-sample size(s) separating
    improvement from non-improvement, as fixed thresholds and as limiting
    ratios $m/p$, and characterize when the improving region is a single
    interval, extends to the full overparameterized range, or splits into
    two disjoint intervals of $m$ (Theorem~\ref{thm:case-I-phase-diagram}).

    \item \textbf{A random-feature model with a genuine student advantage,
    and an active-bottleneck principle.} We extend the analysis to random
    features with $N_S>N_T$, and give explicit critical thresholds in $m$
    and in $N_S$ separating the sample-bottleneck and feature-bottleneck
    regimes (Theorem~\ref{thm:caseII}, Corollary~\ref{cor:caseII}). Both
    regimes instantiate the same active-bottleneck principle: the scarcer
    of the two Stage-II resources controls teacher-error filtering, while
    the other resource only reduces estimation noise.
\end{itemize}

We discuss the relationship of these results to the broader W2SG literature
--- in particular to alternative mechanisms based on early stopping,
asymmetric regularization, representation geometry, and overparameterization,
and to empirical scaling-law evidence at LLM scale --- in
Section~\ref{sec:related-work}.

%%%%%%%%%%%%%%%%%%%%%%%%%%%%%%%%%%%%%%%%%%%%%%%%%%%%%%%%%%%%%%%%%%%%%%%%%%%%%%%%%%%%%%%%%%%%%%%%%%%%%%%%%%%%%%%%%%%%%%%%%%%%%%%%%%%%%%%%%%%%%%%%%%%%%%%%%%%%%%%%%%%%%%%%%%%%%%%%%%%%%%%%%%%%%%%%%%%%%%%%%%%%%%%%%%%%%%%%%%%%%%%%%%%%%%%%%%%%%%%%%%%%%%%%%%%%%%%%%%%%%%%%%%%%%%%%%%%%%%%%%%%%%%%%%%%%%%%%%%%%%%%%%%%%%%%%%%%%%%%%%%%%%%%%%%%%%%%%%%%%%%%%%%%%%%%%%%%%%%%%%%%%%%%%%%%%%%%%%%%%%%%%%%%%%%%%%%%%%%%%%%%%%%%%%%%%%%%%%%%%%%%%%%%%%%%%%%%%%%%%%%%%%%%%%%%%%%%%%%%%%%%%%%%%%%%%%%%%%%%%%%%%%%%%%%%%%%%%%%%%%%%%%%%%%%%%%%%%%%%%%%%%%%%%%%%%%%%%%%%%%%%%%%%%%%%%%%%%%%%%%%%%%%%%%%%%%%%%%%%%%%%%%%%%%%%%%%%%%%%%%%%%%%%%%%%%%%%%%%%%%%%%%%%%%%%%%%%%%%%%%%%%%%%%%%%%%%%%%%%%%%%%%%%%%%%%%%%%%%%%%%%%%%%%%%%%%%%%%%%%%%%%%%%%%%%%%%%%%%%%%%%%%%%%%%%%%%%%%%%%%%%%%%%%%%%%%%%%%%%%%%%%%%%%%%%%%%%%%%%%%%%%%%%%%%%%%%%%%%%%%%%%%%%%%%%%%%%%%%%%%%%%%%%%%%%%%%%%%%%%%%%%%%%%%%%%%%%%%%%%%%%%%%%%%%%%%%%%%%%%%%%%%%%%%%%%%%%%%%%%%%%%%%%%%%%%%%%%%%%%%%%%%%%%%%%%%%%%%%%%%%%%%%%%%%%%%%%%%%%%%%%%%%%%%%%%%%%%%%%%%%%%%%%%%%%%%%%%%%%%%%%%%%%%%%%%%%%%%%%%

\section{Related Work}
\label{sec:related-work}

A growing theoretical literature explains W2SG through different lenses.
We organize it along three axes bearing directly on our contribution:
\emph{what mechanism} drives improvement, \emph{whether the student has a
capacity advantage} over the teacher, and \emph{which resource}, if any, is
identified as the source of the effect.

\paragraph{Mechanisms proposed for W2SG.}
One line of work gives structural conditions guaranteeing improvement.
\cite{charikar2024quantifying} relate improvement to the student's
\emph{misfit} to the teacher's labels under realizability and convexity;
\cite{mulgund2025relating} extend this to Bregman divergences, and
\cite{xu2025emergence} give a bias--variance decomposition connecting
improvement to how well the student approximates a posterior-mean teacher.
\cite{lang2024theoretical} use expansion-type conditions under which the
student corrects and extends the teacher's pseudo-label coverage, and
\cite{shin2025weak} study how easy/hard data overlap supports this.
\cite{somerstep2025transfer} cast W2SG as latent-concept transfer. These
frameworks isolate \emph{properties of a fixed predictor pair} --- misfit,
coverage, alignment --- sufficient for improvement. We instead fix a
standard training procedure at both stages and ask how the
\emph{resources} supplied to it determine improvement, without assuming
any property of the resulting predictors.

A second line explains W2SG through training dynamics.
\cite{medvedev2025weak} show that early-stopped population gradient flow in
random-feature regression yields improvement, because finite training time
suppresses teacher error before the student fully absorbs it;
\cite{geng2026weak} show a related effect for finite-time logistic
training. Both rely on stopping \emph{before} convergence. We study the
complementary regime: our student always converges, and the constraint
that produces improvement is on the data or features it receives, not on
training time.

\paragraph{Symmetric versus asymmetric capacity.}
Several analyses assume the student has an advantage the teacher lacks.
\cite{moniri2026mechanisms} show a student can compensate for a teacher
that is under-regularized relative to it, so asymmetric regularization is
itself the source of improvement. \cite{xue2025representations}
characterize teacher-error propagation through principal-kernel geometry,
allowing the student's representation to differ favorably from the
teacher's. \cite{dong2025discrepancies} derive an inherited-variance
decomposition governed by teacher--student representation overlap, again
asymmetric. \cite{oh2026linear} let the student be a nonlinear network
trained beyond a linear convolutional teacher. In each case, some
asymmetry --- regularization, representation quality, or nonlinearity ---
does part of the work. Our Case~I removes this axis entirely: teacher and
student share one hypothesis class, one ridgeless rule, and one
convergence guarantee. Case~II reintroduces capacity asymmetry
deliberately, via $N_S>N_T$, to test whether a resource-based mechanism
survives alongside it.

\paragraph{Resource-based explanations and scaling.}
Closest to our framing are analyses treating sample size or feature count
as explicit, finite parameters. \cite{ildiz2025high} characterize
surrogate predictors that improve a downstream ridgeless learner relative
to training directly on the target; their benchmark is improvement over
direct target training, whereas ours is improvement over the teacher
Stage~I actually produces. \cite{wu2026improved} derive deterministic
equivalents for random-feature \emph{ridge} regression and find improved
scaling exponents via a joint choice of sample size, width, and ridge;
larger sample sizes and widths are generally beneficial once the ridge is
tuned. Our Case~II student is ridgeless throughout, and on the
sample-bottleneck branch W2SG holds only inside a bounded interval of
$m$ --- more pseudo-labels eventually destroy it. \cite{wu2025provable}
study a structurally similar two-stage ridgeless pipeline, but their
source of improvement is a built-in asymmetry: the student's covariates
are strictly stronger than the teacher's throughout. Our Case~I needs no
such asymmetry at all, and our Case~II treats a representational
advantage as a separate, explicit parameter analyzed jointly with $m$.
\cite{yao2026blessing} show a shared pretrained initialization can induce
W2SG in a single-index model --- a different, complementary source of
advantage rather than a competing explanation. None of these works
identifies a single scarce resource governing a sharp transition between
improvement and its absence.

\paragraph{Overparameterization: helpful, harmful, or beside the point?}
A further group discusses overparameterization directly without
explaining why it helps or hurts. \cite{wu2026improved} show that, with
ridge regularization, the interplay between the ridge and the degree of
overparameterization can improve W2SG scaling laws. \cite{dong2025discrepancies}
find, in kernel regression with an assumed teacher--student discrepancy,
that W2SG occurs both overparameterized-ridgeless and
underparameterized-with-ridge. Both leave open \emph{why}
overparameterization itself helps. \cite{moniri2026mechanisms} argue the
opposite: without regularization, an overparameterized student loses
Stage-I information that a regularized student would retain, which they
treat as harmful.

Our Case~I agrees that information is lost --- this is exactly
Proposition~\ref{prop:case-I-sample-bottleneck} --- but shows the loss is
not intrinsically harmful. Whether it helps depends on which directions of
parameter space it comes from, which depends on the covariance structure.
Under power-law covariance, Theorem~\ref{thm:case-I-phase-diagram} shows
this same loss produces $\mathsf{W2SG}>0$ throughout the overparameterized
range for $\alpha\le2$, and on an explicit sub-range for $\alpha>2$.
Overparameterization is neither uniformly good nor bad here: it is the
mechanism, and its effect depends on where the discarded information lives
spectrally.

\paragraph{Evidence from LLM-scale distillation.}
\cite{busbridge2025distillation} report an extensive empirical study of
distillation scaling laws and find that the number of teacher-labeled
examples $m$ has a non-monotone effect: beyond a point, more pseudo-labels
cause overfitting to the teacher and drive W2SG toward zero, even with a
large capacity gap remaining. This matches the upper half of our own phase
diagrams and is, to our knowledge, the first evidence that this shape
holds at LLM scale rather than only in linear models. We do not see this
as being in tension with our interval characterization: they report the
shape empirically, with no mechanism or explicit threshold, which is
exactly what Theorems~\ref{thm:case-I-phase-diagram}
and~\ref{thm:caseII} supply; and an interval with a finite upper endpoint
already implies "large $m$ is eventually bad," so the two claims agree
rather than compete.

\paragraph{What is new here.}
No prior analysis combines all three features of our setting: (i) teacher
and student trained by the \emph{same procedure} to \emph{exact
convergence}, with no early stopping or ridge; (ii) in Case~I, no capacity
or regularization asymmetry; and (iii) an explicit, closed-form
characterization of the resource level --- $m$ in Case~I, jointly $m$ and
$N_S$ in Case~II --- separating improvement from its absence, rather than
a sufficient condition or scaling exponent. A resource constraint shared
between the two stages can itself regularize the student, and this
constraint must be \emph{active} --- neither too loose nor too tight ---
for improvement to occur; the non-monotonicity of
\citet{busbridge2025distillation} suggests this picture is not an
artifact of the linear models we analyze.

\section{Preliminaries}
\label{sec:preliminaries}

\paragraph{Notation.}
For a positive integer $d$, let $[d]:=\{1,\ldots,d\}$. Vectors are written
using lowercase letters, such as $x$, $\beta$, and $w$, whereas matrices are
written using uppercase letters, such as $X$, $U$, and $\Sigma$. All vectors
are column vectors. We use $A^\dagger$, $\operatorname{range}(A)$, and
$\operatorname{row}(A)$ to denote the Moore--Penrose pseudoinverse, column
space, and row space of $A$, respectively. The notation $I_d$ denotes the
$d\times d$ identity matrix, and $\operatorname{diag}(\cdot)$ denotes a
diagonal or block-diagonal matrix. We use $\operatorname{Tr}(\cdot)$ for the
non-normalized trace of a matrix and $\operatorname{tr}(\cdot)$ for the
normalized trace. For a symmetric matrix $A$, we write
$\lambda_1(A)\ge\lambda_2(A)\ge\cdots$ for its eigenvalues in nonincreasing
order. We use $\|\cdot\|_{\mathrm{op}}$ for the operator norm and
$\|\cdot\|_\Sigma$ for the covariance-weighted norm
$\|v\|_\Sigma^2:=v^\top\Sigma v$. Subscripts $T$ and $S$ refer to teacher and
student quantities, respectively. Unless stated otherwise, asymptotic
relations are taken as $p\to\infty$. For positive sequences $a_p$ and $b_p$,
we write $a_p\sim b_p$ if $a_p/b_p\to1$, $a_p=O(b_p)$ if $a_p/b_p$ remains
bounded, $a_p=o(b_p)$ if $a_p/b_p\to0$, and $a_p=\Theta(b_p)$ if both
$a_p=O(b_p)$ and $b_p=O(a_p)$.

\paragraph{Two-stage learning.}
We study a two-stage prediction pipeline that is common to both settings
analyzed in this paper; each setting specializes it in a different way, and
we defer every specialization --- the choice of hypothesis class, learning
rule, and noise model --- to Sections~\ref{sec:case-I-ridgeless}
and~\ref{sec:case-II-random-features}. A target coefficient
$\beta^\star\in\mathbb R^p$ is fixed throughout a given problem instance; it
may itself be drawn from some distribution independently of everything else
described below, in which case all expectations we define are understood to
also average over $\beta^\star$. An input $x\in\mathbb R^p$ is drawn from a
distribution $P_x$, and observations $(x,y)$ satisfy
\begin{equation}
    \psi^\star(x):=x^\top\beta^\star,
    \qquad
    y=\psi^\star(x)+\varepsilon,
    \label{eq:general-data-generation}
\end{equation}
for some label noise $\varepsilon$, independent of $x$, whose distribution
is specified separately in each case we study.

\emph{Stage I} produces a teacher predictor $\widehat h_T\in\mathcal H_T$
from $n$ labeled observations $\mathcal D_n:=\{(x_i,y_i)\}_{i=1}^n$ drawn
from \eqref{eq:general-data-generation}, via some learning rule
$\mathscr A_T$, so that $\widehat h_T=\mathscr A_T(\mathcal D_n)$. The
hypothesis class $\mathcal H_T$ may itself depend on auxiliary randomness
independent of $\mathcal D_n$ --- for instance, a random feature embedding
--- in which case $\widehat h_T$ is understood to be conditional on that
randomness as well.

\emph{Stage II} produces a student predictor $\widehat h_S\in\mathcal H_S$
using \emph{no labeled data of its own}. Instead, $m$ fresh, unlabeled
inputs $\widetilde{\mathcal X}_m:=\{\widetilde x_i\}_{i=1}^m$ are drawn
i.i.d.\ from $P_x$, independently of $\mathcal D_n$ and of any randomness
underlying $\mathcal H_T$, and are labeled by the teacher itself:
$\widetilde y_i:=\widehat h_T(\widetilde x_i)$ for $i=1,\ldots,m$. Writing
$\widetilde{\mathcal D}_m:=\{(\widetilde x_i,\widetilde y_i)\}_{i=1}^m$ for
the resulting pseudo-labeled dataset, the student is
$\widehat h_S:=\mathscr A_S(\widetilde{\mathcal D}_m)\in\mathcal H_S$ for
some learning rule $\mathscr A_S$; as with $\mathcal H_T$, the class
$\mathcal H_S$ may depend on auxiliary randomness independent of
$\widetilde{\mathcal D}_m$. Throughout, the student sees only the teacher's
predictions, never $\beta^\star$, $\varepsilon$, or $\mathcal D_n$ directly.

\paragraph{Population risk and the W2SG gain.}
For a predictor $h$, its population risk against the target coefficient is
\begin{equation}
    \mathcal R(h;\beta^\star)
    :=
    \mathbb E_{x\sim P_x}
    \left[
        \bigl(h(x)-x^\top\beta^\star\bigr)^2
    \right],
    \label{eq:general-population-risk}
\end{equation}
where the test point $x$ is independent of all training data and of any
auxiliary randomness underlying $h$. We write the corresponding
\emph{averaged} teacher and student risks as
\begin{equation}
    \overline{\mathcal R}_T
    :=
    \mathbb E\!\left[
        \mathcal R(\widehat h_T;\beta^\star)
    \right],
    \qquad
    \overline{\mathcal R}_S
    :=
    \mathbb E\!\left[
        \mathcal R(\widehat h_S;\beta^\star)
    \right],
    \label{eq:expected-risks}
\end{equation}
where each expectation is taken over every source of randomness entering
the corresponding predictor: $\beta^\star$ (if random), the Stage-I data
$\mathcal D_n$ and label noise, any randomness underlying $\mathcal H_T$,
and, for $\overline{\mathcal R}_S$, additionally the Stage-II inputs
$\widetilde{\mathcal X}_m$ and any randomness underlying $\mathcal H_S$.
The \emph{weak-to-strong generalization gain} is
\begin{equation}
    \mathsf{W2SG}
    :=
    \overline{\mathcal R}_T-\overline{\mathcal R}_S,
    \label{eq:w2s-gain}
\end{equation}
and we say weak-to-strong generalization occurs when $\mathsf{W2SG}>0$: the
student, trained only on the teacher's own predictions and with no access
to labeled data, achieves strictly lower expected risk than the teacher it
was trained to imitate. Sections~\ref{sec:case-I-ridgeless} and~\ref{sec:case-II-random-features}
instantiate this template in two ways that differ in every specialized
choice --- hypothesis class, learning rule, and noise model --- but share
the risk and gain defined here.

\section{Case I: Ridgeless Regression}
\label{sec:case-I-ridgeless}

We first instantiate the two-stage template of Section~\ref{sec:preliminaries} in the setting where the teacher and student share the same linear hypothesis class, the same feature space, and the same ridgeless training rule. Ridgeless linear regression is not a toy warm-up here: even in this classical setting, minimum-norm interpolation is known to exhibit genuinely surprising behavior --- double descent, benign overfitting, and sharp phase transitions in the proportional asymptotic regime --- and it has been the object of substantial recent theoretical attention in its own right \citep{hastie2022surprises,cheng2024dimension,han2026distribution,wu2020optimal,richards2021asymptotics,loureiro2021learning}. We use it here because it lets us isolate a single mechanism for W2SG in its purest form: with teacher and student identical in every respect except the data each one sees, any improvement of the student over the teacher must come entirely from the \emph{pseudo-sample size} $m$.

Prior analyses of W2SG in ridgeless regression have focused on representation mismatch, scaling behavior, and optimization dynamics between teacher and student \citep{dong2025discrepancies,wu2025provable,
wu2026improved}. We instead remove every such source of asymmetry: teacher and student share one feature space and one hypothesis class, and both are trained to exact convergence with zero ridge. We show that W2SG nonetheless occurs, governed entirely by $m$.

%%%%%%%%%%%%%%%%%%%%%%%%%%%%%%%%%%%%%%%%%%%%%%%%%%%%%%%%%%%%%%%%%%%%%%%%%%%%%%%%%%%%%%%%%%%%%%%%%%%%%%%%%%%%%%%%%%%%%%%%%%%%%%%%%%%%%%%%%%%%%%%%%%%%%%%%%%%%%%%%%%%%%%%%%%%%%%%%%%%%%%%%%%%%%%%%%%%%%%%%%%%%%%%%%%%%%%%%%%%%%%%%%%%%%%%%%%%%%%%%%%%%%%%%%%%%%%%%%%%%%%%%%%%%%%%%%%%%%%%%%%%%%%%%%%%%%%%%%%%%%%%%%%%%%%%%%%%%%%%%%%%%%%%%%%%%%%%%%%%%%%%%%%%%%%%%%%%%%%%%%%%%%%%%%%%%%%%%%%%%%%%%%%%%%%%%%%%%%%%%%%%%%%%%%%%%%%%%%%%%%%%%%%%%%%%%%%%%%%%%%%%%%%%%%%%%%%%%%%%%%%%%%%%%%%%%%%%%%%%%%%%%%%%%%%%%%%%%%%%%%%%%%%%%%%%%%%%%%%%%%%%%%%%%%%%%%%%%%%%%%%%%%%%%%%%%%%%%%%%%%%%%%%%%%%%%%%%%%%%%%%%%%%%%%%%%%%%%%%%%%%%%%%%%%%%%%%%%

\subsection{Model and Training Procedure}
\label{subsec:model-setting_1}

We specialize the data-generating process of \eqref{eq:general-data-generation} to Gaussian covariates, $x\sim\mathcal N(0,\Sigma)$ with $\Sigma\in\mathbb R^{p\times p}$ positive definite, and to Gaussian label noise $\varepsilon\sim\mathcal N(0,\sigma^2)$ in Stage~I, independent of $x$. The target coefficient $\beta^\star$ is random, independent of all training and test randomness, with
\begin{equation}
    \mathbb E[\beta^\star]=0,
    \qquad
    \mathbb E[\beta^\star\beta^{\star\top}]=I_p.
\end{equation}
We do not need to assume a specific distribution for $\beta^\star$ beyond these two moments: every quantity we study --- the teacher and student risks, and the W2SG gain --- is a linear or quadratic functional of $\beta^\star$ through $\mathbb E[\beta^\star\beta^{\star\top}]$ alone, so any isotropic distribution with these moments (Gaussian, uniform on a sphere of the matching radius, or otherwise) yields identical results.

Both stages share the linear hypothesis class $\mathcal H:=\{h_\beta:x\mapsto x^\top\beta,\ \beta\in\mathbb R^p\}$. Given $X=[x_1|\cdots|x_n]^\top\in\mathbb R^{n\times p}$ and $y=(y_1,\ldots,y_n)^\top$, the teacher is the minimum-norm interpolator
\begin{equation}
    \widehat\beta_T=X^\dagger y,
    \qquad
    \widehat h_T(x)=x^\top\widehat\beta_T.
    \label{eq:case-I-teacher}
\end{equation}
Independently of Stage~I, draw $\widetilde x_i\iid\mathcal N(0,\Sigma)$ for $i=1,\ldots,m$, form the noiseless pseudo-labels $\widetilde y_i=\widetilde x_i^\top\widehat\beta_T$, and write $\widetilde X=[\widetilde x_1|\cdots|\widetilde x_m]^\top$, $\widetilde y=(\widetilde y_1,\ldots,\widetilde y_m)^\top$. The student is again the minimum-norm interpolator of its own (pseudo-labeled) data,
\begin{equation}
    \widehat\beta_S
    =\widetilde X^\dagger\widetilde y
    =\widetilde X^\dagger\widetilde X\widehat\beta_T,
    \qquad
    \widehat h_S(x)=x^\top\widehat\beta_S.
    \label{eq:case-I-student}
\end{equation}

For Gaussian covariates, the population risk \eqref{eq:general-population-risk} of any $h_\beta\in\mathcal H$ takes the standard closed form
\begin{equation}
    \mathcal R(h_\beta;\beta^\star)
    =\|\Sigma^{1/2}(\beta-\beta^\star)\|_2^2,
    \label{eq:case-I-risk}
\end{equation}
used throughout the ridgeless-regression literature cited above. The averaged risks \eqref{eq:expected-risks} therefore specialize to
\begin{equation}
    \overline{\mathcal R}_T
    =\mathbb E_{\beta^\star,X,\varepsilon}\!
      \left[\mathcal R(\widehat h_T;\beta^\star)\right],
    \qquad
    \overline{\mathcal R}_S
    =\mathbb E_{\beta^\star,X,\varepsilon,\widetilde X}\!
      \left[\mathcal R(\widehat h_S;\beta^\star)\right].
    \label{eq:case-I-expected-risks}
\end{equation}

%%%%%%%%%%%%%%%%%%%%%%%%%%%%%%%%%%%%%%%%%%%%%%%%%%%%%%%%%%%%%%%%%%%%%%%%%%%%%%%%%%%%%%%%%%%%%%%%%%%%%%%%%%%%%%%%%%%%%%%%%%%%%%%%%%%%%%%%%%%%%%%%%%%%%%%%%%%%%%%%%%%%%%%%%%%%%%%%%%%%%%%%%%%%%%%%%%%%%%%%%%%%%%%%%%%%%%%%%%%%%%%%%%%%%%%%%%%%%%%%%%%%%%%%%%%%%%%%%%%%%%%%%%%%%%%%%%%%%%%%%%%%%%%%%%%%%%%%%%%%%%%%%%%%%%%%%%%%%%%%%%%%%%%%%%%%%%%%%%%%%%%%%%%%%%%%%%%%%%%%%%%%%%%%%%%%%%%%%%%%%%%%%%%%%%%%%%%%%%%%%%%%%%%%%%%%%%%%%%%%%%%%%%%%%%%%%%%%%%%%%%%%%%%%%%%%%%%%%%%%%%%%%%%%%%%%%%%%%%%%

\subsection{Stage-II Projection Geometry}
\label{subsec:stage-II-projection-geometry}

Because $\widetilde X$ is drawn independently of everything determining $\widehat\beta_T$ --- in particular, independently of $X$, $\varepsilon$, and $\beta^\star$ --- we can study the statistical behavior of the Stage-II design and the statistical behavior of the teacher separately, and only combine them at the level of the final risk. This separation is what makes Lemma~\ref{lem:stage-II-expected-spectral-retention} below useful: it is a statement purely about $\widetilde X$, valid for \emph{any} fixed $\widehat\beta_T$, and can therefore be applied after averaging over Stage~I.

The minimum-norm student solution satisfies
\begin{equation}
    \widehat\beta_S
    =P_{\widetilde X}\widehat\beta_T,
    \qquad
    P_{\widetilde X}:=\widetilde X^\dagger\widetilde X,
\end{equation}
where $P_{\widetilde X}$ is the orthogonal projector onto the row space of $\widetilde X$. Under our Gaussian design, once $m\ge p$ this projector is the identity almost surely, so the student exactly reproduces the teacher and the two-stage procedure is statistically inert. The only regime in which Stage~II can do anything other than copy the teacher is therefore the overparameterized one, $m<p$, and we restrict attention to it from here on.

\begin{proposition}[Necessity of a finite-sample bottleneck]
\label{prop:case-I-sample-bottleneck}
If $m\ge p$, then $\widehat\beta_S=\widehat\beta_T$ almost surely and
$\mathsf{W2SG}=0$. Hence W2SG in Case~I requires $m<p$.
\end{proposition}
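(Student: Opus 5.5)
The plan is to show that when $m\ge p$ the Stage-II design $\widetilde X\in\mathbb R^{m\times p}$ has full column rank $p$ almost surely. Then the projector $P_{\widetilde X}=\widetilde X^\dagger\widetilde X$ is $I_p$, so $\widehat\beta_S=\widehat\beta_T$ and the two risks coincide.

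First, full column rank. Write $\widetilde X=Z\Sigma^{1/2}$, where $Z$ has i.i.d.\ $\mathcal N(0,1)$ entries. Since $\Sigma$ is positive definite, $\Sigma^{1/2}$ is invertible, so $\operatorname{rank}(\widetilde X)=\operatorname{rank}(Z)$. Take the top $p\times p$ block $Z_{1:p}$ of $Z$, which exists because $m\ge p$. Its determinant $\det(Z_{1:p})$ is a nonzero polynomial in the entries, for example equal to $1$ at the identity. The zero set of a nonzero polynomial has Lebesgue measure zero, and the Gaussian law is absolutely continuous, so $\det(Z_{1:p})\neq0$ almost surely. Hence $\operatorname{rank}(\widetilde X)=p$ almost surely.

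Second, identification of the projector. When $\widetilde X$ has full column rank, $\widetilde X^\dagger=(\widetilde X^\top\widetilde X)^{-1}\widetilde X^\top$, and therefore $P_{\widetilde X}=I_p$. Equivalently, the row space of $\widetilde X$ is all of $\mathbb R^p$. Thus $\widehat\beta_S=P_{\widetilde X}\widehat\beta_T=\widehat\beta_T$ on an event of probability one. This uses only that $\widetilde X$ is independent of Stage~I, so the null event does not depend on $\widehat\beta_T$. Pathwise we then have $\mathcal R(\widehat h_S;\beta^\star)=\mathcal R(\widehat h_T;\beta^\star)$ almost surely, by \eqref{eq:case-I-risk}.

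Third, conclusion. Taking expectations in \eqref{eq:case-I-expected-risks} gives $\overline{\mathcal R}_S=\overline{\mathcal R}_T$, so $\mathsf{W2SG}=0$. The contrapositive is that $\mathsf{W2SG}\neq0$, in particular $\mathsf{W2SG}>0$, forces $m<p$.

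One step needs a little care: the equality of expectations requires $\overline{\mathcal R}_T$ to be well defined. If it is infinite, for example when $n\in\{p-1,p,p+1\}$ and the teacher risk diverges, the equality still holds as an equality in $[0,\infty]$, and W2SG in the strict sense cannot occur. The other step that takes any thought is the almost-sure rank argument, and the determinant-polynomial fact handles it directly.
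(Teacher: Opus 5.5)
Your proof is correct and follows the same route as the paper: because $\Sigma\succ0$, the design $\widetilde X$ has full column rank almost surely when $m\ge p$, so $\widetilde X^\dagger\widetilde X=I_p$, hence $\widehat\beta_S=\widehat\beta_T$ and the two risks coincide. You add detail the paper leaves implicit, namely the determinant-polynomial argument for the almost-sure rank and the caveat that the teacher risk can be infinite when $n\in\{p-1,p,p+1\}$.
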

\begin{proof}
For $m\ge p$, the Gaussian design $\widetilde X$ has full column rank
almost surely because $\Sigma\succ0$. Therefore
$\widetilde X^\dagger\widetilde X=I_p$, so $\widehat\beta_S=\widehat\beta_T$
and $\mathsf{W2SG}=0$.
\end{proof}

When $m<p$, $P_{\widetilde X}$ has rank exactly $m$: Stage~II retains the component of $\widehat\beta_T$ in an $m$-dimensional random subspace --- the row space of $\widetilde X$ --- and discards its component in the $(p-m)$-dimensional orthogonal complement. The sample size $m$ controls the \emph{dimension} of this retained subspace; the realized covariates $\widetilde X$ control its \emph{orientation}. This is the source of the trade-off at the heart of Case~I: increasing $m$ lets more of the teacher's useful signal reach the student, but by the same mechanism it also lets more of the teacher's error reach the student.

Covariance geometry biases which directions this random subspace preferentially retains. The next lemma makes this precise \emph{in expectation}: retention is not a hard cutoff at the top $m$ eigenvalues of $\Sigma$, but a soft preference for higher-variance directions, with a total retention budget fixed by $m$.

\begin{lemma}[Expected spectral retention]
\label{lem:stage-II-expected-spectral-retention}
Let $1\le m<p$ and $\Sigma=V\operatorname{diag}(\lambda_1,\ldots,\lambda_p)V^\top$,
with $V=[v_1,\ldots,v_p]$ orthogonal and $\lambda_1\ge\cdots\ge\lambda_p>0$.
Define $\ell_i:=\mathbb E_{\widetilde X}\|P_{\widetilde X}v_i\|_2^2$. Then
\[
    \mathbb E_{\widetilde X}[P_{\widetilde X}]
    =V\operatorname{diag}(\ell_1,\ldots,\ell_p)V^\top,
    \qquad
    1>\ell_1\ge\cdots\ge\ell_p>0,
    \qquad
    \sum_{i=1}^{p}\ell_i=m.
\]
\end{lemma}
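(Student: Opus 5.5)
Write $\widetilde X=Z\Sigma^{1/2}$, where $Z\in\mathbb R^{m\times p}$ has i.i.d.\ $\mathcal N(0,1)$ entries. The row space of $\widetilde X$ is the row space of $Z\Sigma^{1/2}$, so $P_{\widetilde X}$ is the projector onto $\operatorname{range}(\Sigma^{1/2}Z^\top)$. The plan has four steps: diagonality of the mean, the trace identity, strict bounds on each $\ell_i$, and monotonicity.

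\textbf{Diagonal form.} Let $D_s:=\operatorname{diag}(s_1,\ldots,s_p)$ with $s_i\in\{\pm1\}$, and set $R_s:=VD_sV^\top$. Then $R_s$ is orthogonal and commutes with $\Sigma$. Moreover $\widetilde X R_s=Z\Sigma^{1/2}R_s=(ZR_s)\Sigma^{1/2}$, and $ZR_s\overset{d}{=}Z$ by rotational invariance of the Gaussian rows. Since $P_{\widetilde XR}=R^\top P_{\widetilde X}R$ for orthogonal $R$, this gives
\[
\mathbb E[P_{\widetilde X}]=R_s^\top\,\mathbb E[P_{\widetilde X}]\,R_s\qquad\text{for all sign patterns } s.
\]
Hence $V^\top\mathbb E[P_{\widetilde X}]V$ commutes with every $D_s$, so it is diagonal. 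Its diagonal entries are $v_i^\top\mathbb E[P_{\widetilde X}]v_i=\mathbb E\|P_{\widetilde X}v_i\|_2^2=\ell_i$.

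\textbf{Trace identity.} Since $m<p$ and $\Sigma\succ0$, $\widetilde X$ has rank $m$ almost surely, so $\operatorname{Tr}P_{\widetilde X}=m$ almost surely. Taking expectations gives $\sum_i\ell_i=m$.

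\textbf{Strict bounds $0<\ell_i<1$.} The inequality $\|P_{\widetilde X}v_i\|^2\le1$ is trivial. Equality holds only if $v_i\in\operatorname{row}(\widetilde X)$, which requires the $m+1$ vectors $\Sigma^{1/2}z_1,\ldots,\Sigma^{1/2}z_m,v_i$ to be linearly dependent. Because $m<p$ and the Gaussian rows have an absolutely continuous law, this is a null event. Therefore $\|P_{\widetilde X}v_i\|^2<1$ almost surely, and $\ell_i<1$.

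Similarly, $\|P_{\widetilde X}v_i\|^2=0$ requires $v_i\perp\widetilde x_1$. Since $\widetilde x_1$ is Gaussian with full-rank covariance, this is also a null event. Therefore $\ell_i>0$.

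\textbf{Monotonicity $\ell_i\ge\ell_j$ for $\lambda_i\ge\lambda_j$ (main obstacle).} By the diagonal form just established,
\[
\ell_i-\ell_j=\operatorname{Tr}\!\bigl(\mathbb E[P_{\widetilde X}](v_iv_i^\top-v_jv_j^\top)\bigr),
\]
so the task is to compare two eigendirections. First I would try a coupling argument. Let $Q$ be the orthogonal map swapping $v_i$ and $v_j$ and fixing their orthogonal complement. Set $\Sigma'=Q\Sigma Q$, so that $\ell_j(\Sigma)=\ell_i(\Sigma')$. It then suffices to show that $\ell_i$ increases when $\lambda_i$ increases with the other eigenvalues held fixed. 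Both the swap and this monotonicity reduce to the same one-parameter problem, namely showing that $\ell_i$ is nondecreasing in $\lambda_i$.

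To prove that, I would write
\[
\|P_{\widetilde X}v_i\|^2=\lambda_i\,a^\top\bigl(\lambda_i aa^\top+B\bigr)^{-1}a,
\]
where $a$ is the $i$-th column of $Z$ and $B:=\sum_{k\ne i}\lambda_k Z_{\cdot k}Z_{\cdot k}^\top\succ0$ almost surely. This is a rank-one update of the Gram matrix $\widetilde X\Sigma^{-1}\widetilde X^\top$ viewed in $\mathbb R^m$. By Sherman--Morrison,
\[
\|P_{\widetilde X}v_i\|^2=\frac{\lambda_i c}{1+\lambda_i c},\qquad c:=a^\top B^{-1}a>0.
\]
This is pointwise increasing in $\lambda_i$, and $B$ does not depend on $\lambda_i$. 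The comparison of $\ell_i$ and $\ell_j$ still needs care, because $B$ contains the $\lambda_j$ term. I would handle this by writing $B=\lambda_jbb^\top+C$, where $b$ is the $j$-th column of $Z$ and $C$ collects the remaining terms. The quantity $\ell_i-\ell_j$ is then the expectation of a function $f(\lambda_i,\lambda_j)$ minus its value with the two eigenvalues swapped. Exchangeability of $(a,b)$ and the pointwise monotonicity then give the sign of $\ell_i-\ell_j$. I expect this exchange step to be the delicate part of the proof.
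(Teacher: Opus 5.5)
Your proposal is correct and follows essentially the same route as the paper: sign-flip invariance of the Gaussian columns gives the diagonal form, $\mathbb E\operatorname{Tr}P_{\widetilde X}=m$ gives the sum, and the Sherman--Morrison formula $\lambda_i c/(1+\lambda_i c)$ combined with exchangeability of the $i$-th and $j$-th columns gives monotonicity (the paper reads $0<\ell_i<1$ directly off this formula rather than through your null-event argument). Two small corrections: $a,b$ should be the columns $Zv_i, Zv_j$ of $ZV$, since the matrix being updated is $\widetilde X\widetilde X^\top=ZV\Lambda V^\top Z^\top$ and not $\widetilde X\Sigma^{-1}\widetilde X^\top=ZZ^\top$; and the exchange step also needs that $c=a^\top(\lambda_j bb^\top+C)^{-1}a$ is nonincreasing in $\lambda_j$, which follows from Loewner monotonicity of the inverse.
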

Proof is outlined in Appendix~\ref{app:A}(see Lemma~\ref{lem:stage-II-rank-spectral-filter}). Thus $m$ sets a total expected retention budget across all $p$ directions, while $\Sigma$ determines how that budget is allocated: higher-variance directions are retained with no smaller expected weight than lower-variance ones, but \emph{every} direction, including the top eigendirection, retains a strictly positive expected fraction below one. A finite-sample bottleneck therefore attenuates both signal and teacher error, never one exclusively; whether the net effect helps or hurts the student is exactly what the risk analysis below resolves.

\subsection{Power-Law Covariance and Two Regimes for \texorpdfstring{$m$}{m}}
\label{subsec:case-I-power-law-asymptotics}

% We now specialize $\Sigma$ to a power-law spectrum, the standard testbed
% for learning curves and scaling laws in high-dimensional regression
% \citep{caponnetto2007optimal,steinwart2009optimal,spigler2020asymptotic,
% cui2021generalization,bordelon2024dynamical,defilippis2026scaling,
% wu2026improved}; we adopt it because it is already
% standard in this literature, not because our results depend on features
% specific to it.

% \textcolor{red}{Mohammad: this statement is kinda bold, power law has some aspect that make our result possible, the first one is being non-isotropic, in isotropic covariance, there is no way to get w2sg without regularization.
% Second,the proper difference in magnitude of variance in each coordinate, would result in that specific coordinate to be noise dominated, which result to our characterization. Conversely, I can say that our result has been valid to some other non-isotropic covariance beyond power law.
% besides all of this, i don't think using power law is that bad, comparing to \cite{wu2025provable}'s stylized spiked covariance power law is more smoother and non-trivial.}

We now specialize $\Sigma$ to a power-law spectrum. Non-isotropy is essential to the mechanism: under an isotropic covariance, the projection of Lemma~\ref{lem:stage-II-expected-spectral-retention} retains every direction with the same expected weight $m/p$, attenuating signal and teacher error identically, so W2SG cannot occur without an explicit ridge. A power-law spectrum gives the graded, coordinate-dependent separation between signal and noise-dominated directions that a finite-sample projection needs to filter one from the other. We use it both for this reason and because it is the standard testbed for learning curves and scaling laws in high-dimensional regression \citep{caponnetto2007optimal,steinwart2009optimal,spigler2020asymptotic,cui2021generalization,bordelon2024dynamical,defilippis2026scaling,wu2026improved}, smoother than the spiked-covariance model used for a related purpose by \cite{wu2025provable}. We conjecture the same qualitative behavior holds for other sufficiently graded non-isotropic spectra, though not for every non-isotropic $\Sigma$.

\begin{assumption}[Power-law spectrum and asymptotic scaling]
\label{ass:case-I-asymptotics}
The population covariance has eigenvalues $\lambda_i(\Sigma)=i^{-\alpha}$
for $i=1,\ldots,p$, with $1<\alpha<1+\sqrt2$. As $p\to\infty$, the Stage-I
sample size satisfies $p/n\to\phi_n\in(1,\infty)$, and the Stage-II sample
size $m$ follows one of two regimes:
\begin{itemize}
    \item \emph{Proportional regime}: $m\to\infty$ with $p/m\to\phi_m\in(1,\infty)$, so $m=\Theta(p)$;
    \item \emph{Fixed-sample regime}: $m$ is held at a fixed finite value as $p\to\infty$, so $m/p\to0$.
\end{itemize}
\end{assumption}

These two regimes are not two arbitrary sub-cases: they probe genuinely
different scales at which a critical pseudo-sample size can appear. A
critical threshold that is $O(1)$ in absolute terms is invisible in the
proportional regime, where $m$ is already growing linearly with $p$; a
critical threshold that is $\Theta(p)$ is invisible in the fixed-sample
regime, where $m$ never leaves a bounded range. We need both to see the
full picture. The restriction $\alpha<1+\sqrt2$ is a technical requirement
for the phase-diagram analysis below (not for
Lemma~\ref{lem:stage-II-expected-spectral-retention}); we discuss
$\alpha>1+\sqrt2$, where the picture is only partially resolved, in
Appendix~\ref{app:A}.

\subsection{Asymptotic W2SG Phase Diagram}
\label{subsec:case-I-phase-diagram}

Under Assumption~\ref{ass:case-I-asymptotics}, the teacher and student
risks admit deterministic equivalents: asymptotically exact
high-dimensional approximations, in the sense of
\citet{atanasov2026scaling}, that replace the random resolvents governing
$\overline{\mathcal R}_T$ and $\overline{\mathcal R}_S$ by explicit
functions of $\Sigma$ and the two sample ratios (also, see \cite{atanasov2025two} for two point DE). We write
$\mathsf{W2SG}_p^{\mathrm{DE}}$ for the resulting deterministic equivalent
of the finite-$p$ gain $\mathsf{W2SG}_p$; the full derivation, including the
standard Stage-I risk formula in terms of a self-consistent ridge parameter
$\tau_T$ and its associated coefficient $\gamma_T\in(0,1)$ (already known
from the ridgeless-regression literature cited above), is given in
Appendix~\ref{app:A}. Here we state only what is needed to read the
theorem: $\gamma_T$ is a scalar, determined by $\alpha$ and $\phi_n$
alone, that measures how much of the Stage-I estimator's error is
variance (driven by $\sigma^2$) rather than bias.

% The two regimes of Assumption~\ref{ass:case-I-asymptotics} produce
% different phase pictures, which we state informally here and prove in full
% in Appendix~\ref{app:A} (Theorems~\ref{thm:threecase-global-positive},
% \ref{thm:fp-unique-transition}, \ref{thm:three-regime-phase-diagram} for
% the proportional regime; Theorems~\ref{thm:fixed-m-DE-gain},
% \ref{fmphase:thm:subcritical-unique}, \ref{fmphase:thm:supercritical-integer}
% for the fixed-sample regime).
% \textcolor{red}{Mohammad: 
% I think after this point, there are some contradiction that may perplexed the reader.
% to recap of what we derived, first we find out the formula of $\mathsf{W2SG}_p^{\mathrm{DE}}$ in finite $p$ regime, but directly analyze it  was not possible due to complex formula. so we took $p\to\infty$ which has been stated in Assumption~\ref{ass:case-I-asymptotics}. Now in the following theorem(part (a)) i think $\mathsf{W2SG}_p^{\mathrm{DE}}$ is suggesting that we have finite $p$ which is not true, I used $\mathsf{W2SG}^{\mathrm{DE}}$ in my derivations (equation~\ref{eq:powerlaw-final-limit-Ay})}

The two regimes of Assumption~\ref{ass:case-I-asymptotics} produce
different phase pictures, which we state informally here and prove in full
in Appendix~\ref{app:A} (Theorems~\ref{thm:threecase-global-positive},
\ref{thm:fp-unique-transition}, \ref{thm:three-regime-phase-diagram} for
the proportional regime; Theorems~\ref{thm:fixed-m-DE-gain},
\ref{fmphase:thm:subcritical-unique}, \ref{fmphase:thm:supercritical-integer}
for the fixed-sample regime). In the proportional regime, $m=\Theta(p)$
and we state results directly for the $p\to\infty$ limit
$\mathsf{W2SG}^{\mathrm{DE}}:=\lim_{p\to\infty}\mathsf{W2SG}_p^{\mathrm{DE}}$;
in the fixed-sample regime, $m$ stays finite as $p\to\infty$ and we state
results for the corresponding limit $G_m:=\lim_{p\to\infty}\mathsf{W2SG}_p^{\mathrm{DE}}$,
which still depends on $m$.
\begin{theorem}[W2SG phase diagram, informal]
\label{thm:case-I-phase-diagram}
Suppose Assumption~\ref{ass:case-I-asymptotics} holds and $\sigma^2>0$.

\smallskip\noindent\textnormal{\textbf{(a) Proportional regime} ($m=\Theta(p)$).}
% \begin{enumerate}[label=(\roman*)]
\begin{itemize}
    \item[\textnormal{(i)}] If $1<\alpha\le2$, then $\mathsf{W2SG}^{\mathrm{DE}}>0$ for
    \textbf{every} $\phi_m>1$ --- that is, for every pseudo-sample size
    $m<p$ growing proportionally with $p$, with no further condition on
    $\sigma^2$ or on $\phi_m$.
    \item[\textnormal{(ii)}] If $2<\alpha\le1+\sqrt2$, there is a unique critical ratio
    $\phi_{m,\mathrm{crit}}$ (solution of a fixed-point equation, see Appendix \ref{app:A}), depending on $\alpha$, $\phi_n$, and $\sigma^2$
    but requiring no restriction on $\sigma^2$, such that
    $\mathsf{W2SG}^{\mathrm{DE}}>0$ if and only if $\phi_m<\phi_{m,\mathrm{crit}}$,
    equivalently $m>m_{\mathrm{crit}}^{\mathrm{FP}}(p)\sim p/\phi_{m,\mathrm{crit}}=\Theta(p)$.
% \end{enumerate}
\end{itemize}

\smallskip\noindent\textnormal{\textbf{(b) Fixed-sample regime} ($m=O(1)$).}
Here $\mathsf{W2SG}_p^{\mathrm{DE}}$ converges, as $p\to\infty$ with $m$
fixed, to an explicit limit $G_m$ depending only on $m$, $\alpha$, and
$\sigma^2\gamma_T/(1-\gamma_T)$.
% \begin{enumerate}[label=(\roman*)]
\begin{itemize}
    \item[\textnormal{(i)}] If $1<\alpha<2$: for $\sigma^2$ below an explicit threshold
    $\sigma_c^2(\alpha,\phi_n)$, $G_m<0$ at $m=1$; and $G_m\to(2-\alpha)\sigma^2\gamma_T/(1-\gamma_T)>0$
    as $m\to\infty$, so at least one crossing $m_*$ exists. For $\sigma^2$
    below a second, smaller threshold $\sigma_0^2(\alpha,\phi_n)<\sigma_c^2(\alpha,\phi_n)$,
    this crossing is unique, and as $\sigma^2\downarrow0$ it is located at
    \begin{equation}
        m_{\mathrm{crit}}^{\mathrm{HP}}
        \sim
        \left[
            \frac{c_\alpha(1-\gamma_T)}{(2-\alpha)\sigma^2\gamma_T}
        \right]^{\frac1{\alpha-1}},
        \qquad
        c_\alpha:=\left[\frac{\pi}{\alpha}\csc\frac{\pi}{\alpha}\right]^\alpha,
        \label{eq:mcrit-HP}
    \end{equation}
    and $\mathsf{W2SG}_p^{\mathrm{DE}}\to G_m>0$ for every $m>m_{\mathrm{crit}}^{\mathrm{HP}}$.
    \item[\textnormal{(ii)}] If $2<\alpha\le5/2$: there is an explicit threshold
    $\sigma_{\mathrm{safe}}^2(\alpha,\phi_n)$ such that, for
    $\sigma^2\le\sigma_{\mathrm{safe}}^2(\alpha,\phi_n)$, $G_m\le0$ for
    \emph{every} fixed $m\ge1$: in this low-noise regime, no fixed
    pseudo-sample size produces W2SG at all.
% \end{enumerate}
\end{itemize}
Explicit formulas for $\sigma_{\mathrm{safe}},\sigma_{0},\sigma_{c}$ can be found in appendix~\ref{app:A}
\end{theorem}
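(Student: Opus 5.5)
The proof reduces both risks to one-dimensional spectral sums and then does sign analysis on those sums. Since $\widetilde X$ is independent of $\widehat\beta_T$, I write $e_T:=\widehat\beta_T-\beta^\star$. Using $\widehat\beta_S-\beta^\star=P_{\widetilde X}e_T-(I-P_{\widetilde X})\beta^\star$ and averaging over $\widetilde X$, the gain becomes
\[
\mathsf{W2SG}=\mathbb E\,\|\Sigma^{1/2}e_T\|^2-\mathbb E\,\|\Sigma^{1/2}(\widehat\beta_S-\beta^\star)\|^2 .
\]
I expand the student term in the eigenbasis of $\Sigma$. The teacher's error covariance $\mathbb E[e_Te_T^\top]$ is diagonal in $V$ by rotational averaging over $\beta^\star$. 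Its standard deterministic equivalent has diagonal entries
\[
b_i=\frac{\tau_T^2}{(\lambda_i+\tau_T)^2}\cdot\frac{1}{1-\gamma_T}+\frac{\sigma^2\gamma_T}{1-\gamma_T}\cdot\frac{\lambda_i}{(\lambda_i+\tau_T)^2}\cdot(\cdots),
\]
with $\tau_T$ fixed by $\sum_i\lambda_i/(\lambda_i+\tau_T)=n$.

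For Stage II I use the two-point deterministic equivalent for $\mathbb E[P_{\widetilde X}\Sigma P_{\widetilde X}]$ from \citet{atanasov2025two}, which gives an effective ridge $\tau_S$ defined by $\sum_i\lambda_i/(\lambda_i+\tau_S)=m$. This refines Lemma~\ref{lem:stage-II-expected-spectral-retention} with $\ell_i\approx\lambda_i/(\lambda_i+\tau_S)$. Cross terms between $\beta^\star$ and $e_T$ are handled by writing $\widehat\beta_S-\beta^\star=P_{\widetilde X}\widehat\beta_T-\beta^\star$. The outcome is an explicit expression for $\mathsf{W2SG}_p^{\mathrm{DE}}$ as a rational function of sums of the form $\sum_i\lambda_i^k/(\lambda_i+\tau)^j$.

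Next I substitute $\lambda_i=i^{-\alpha}$ and pass to the limit.

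\textbf{Proportional regime.} Here $\tau_T,\tau_S\asymp p^{-\alpha}$, so I rescale $\tau=\kappa p^{-\alpha}$. The sums become Riemann integrals $\int_0^1 t^{-\alpha k}/(t^{-\alpha}+\kappa)^j\,dt$, and $\kappa_T,\kappa_S$ solve scalar fixed-point equations in $\phi_n,\phi_m$. The limiting gain then has the form $\mathsf{W2SG}^{\mathrm{DE}}=\sigma^2A(\kappa_S)-B(\kappa_S)$, with $A$ the filtered variance and $B$ the lost bias. I show that the ratio $B/A$ is monotone in $\phi_m$.

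For $\alpha\le2$, the bias-loss terms grow more slowly than the variance savings: the relevant integrals $\int t^{-2\alpha}\cdots$ converge differently at $t\to0$ when $2\alpha$ versus $\alpha+2$. This gives positivity for all $\phi_m$. For $2<\alpha\le1+\sqrt2$, monotonicity together with the sign at the endpoints $\phi_m\downarrow1$ and $\phi_m\to\infty$ gives a unique root $\phi_{m,\mathrm{crit}}$. The bound $\alpha\le1+\sqrt2$, i.e.\ $\alpha^2-2\alpha-1\le0$, is where I expect the monotonicity argument (a derivative sign) to close.

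\textbf{Fixed-sample regime.} With $m$ fixed, $\tau_S\to$ a constant of order $m^{-\alpha}$, determined by $\sum_{i\ge1}1/(1+\tau_Si^\alpha)=m$ over the infinite series. Teacher quantities enter only through $\sigma^2\gamma_T/(1-\gamma_T)$, because the teacher's $\tau_T\to0$ at scale $p^{-\alpha}$ on the top directions. This yields $G_m$.

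As $m\to\infty$, the asymptotic $\tau_S\sim (m\alpha/(\pi\csc(\pi/\alpha)))^{-\alpha}$ produces the constant $c_\alpha$. Balancing the bias loss $\asymp c_\alpha m^{1-\alpha}$ against the variance gain $(2-\alpha)\sigma^2\gamma_T/(1-\gamma_T)$ gives $m_{\mathrm{crit}}^{\mathrm{HP}}$ in \eqref{eq:mcrit-HP}.

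For part (b)(i), the sign at $m=1$ is an explicit computation that defines $\sigma_c^2$. Uniqueness below $\sigma_0^2$ comes from monotonicity of $G_m$ in $\tau_S$. For (b)(ii), with $2<\alpha\le5/2$, the variance gain is only $O(1)$ relative to the bias loss. I bound $G_m$ uniformly in $m$ using elementary inequalities on $\sum 1/(1+\tau i^\alpha)$, which gives $\sigma_{\mathrm{safe}}^2$.

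\textbf{Main obstacle.} I expect the hardest step to be the uniqueness and monotonicity claims: in (a)(ii) and in the uniqueness part of (b)(i), both fixed-point parameters move jointly. My first attempt is to parametrize everything by $\tau_S$, which is monotone in $m$. I would then show that $G/A$ has a single sign change by checking that its derivative is a polynomial in $\kappa$ whose coefficients are signed under the $\alpha$ restrictions.
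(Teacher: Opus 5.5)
Your reduction to spectral sums and the power-law rescaling match the paper. There is, however, a genuine gap in part~(a), and it lies in the mechanism, not just the details.

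After rescaling $\tau=\kappa p^{-\alpha}$, the signal (``lost bias'') contribution $-\frac{\tau_S^2}{1-\gamma_S}\sum_i\frac{\lambda_i^2}{(\lambda_i+\tau_S)^2(\lambda_i+\tau_T)}$ is $p^{1-\alpha}$ times a convergent Riemann sum, so it vanishes. The two teacher-noise terms stay of order one, because the prefactor $1/n=\phi_n/p$ multiplies sums of size $\Theta(p)$. The proportional limit is therefore not of the form $\sigma^2A-B$. It is $\frac{\sigma^2\phi_n s_S}{1-\gamma_T}J_\alpha$ with
\[
J_\alpha=2A_1+\frac{1-2\gamma_S}{1-\gamma_S}\,s_SA_2,\qquad A_y=\int_0^1\frac{x^{y\alpha}\,dx}{(1+s_Sx^\alpha)^2(1+s_Tx^\alpha)^2},
\]
and the sign of $J_\alpha$ does not involve $\sigma^2$ at all. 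This is exactly why (a)(i) holds for every $\sigma^2>0$, and why $\phi_{m,\mathrm{crit}}$ actually depends only on $(\alpha,\phi_n)$. If a positive $B$ survived, your formula would make the gain negative for all small $\sigma^2$, contradicting (a)(i).

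The competition lives inside the noise term: filtering of teacher noise ($2A_1$) against Stage-II re-injection of that noise, whose coefficient $\frac{1-2\gamma_S}{1-\gamma_S}$ can be negative. The threshold $\alpha=2$ enters because $\gamma_S\downarrow(\alpha-1)/\alpha$ as $\phi_m\to\infty$, so this coefficient tends to $2-\alpha$. It does not come from comparing integrability exponents at $t\to0$, and there is no ratio $B/A$ whose monotonicity decides anything.

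Consequently the substantive steps are missing.

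For $1<\alpha\le2$ you must show $J_\alpha>0$ also when $\gamma_S>1/2$. The paper strips the Stage-I weight with Chebyshev's covariance inequality, $A_2/A_1\le B_2(s_S)/B_1(s_S)$, where $B_y(s):=\int_0^1x^{y\alpha}(1+sx^\alpha)^{-2}\,dx$. This reduces the claim to the one-variable inequality $\int_0^1\frac{dx}{1+sx^\alpha}>\bigl(\int_0^1\frac{dx}{(1+sx^\alpha)^2}\bigr)\bigl(2-\int_0^1\frac{dx}{1+sx^\alpha}\bigr)$ for all $s>0$. That inequality is proved by a small-$s$ expansion (leading coefficient proportional to $2-(\alpha-1)^2$) and a sign contradiction at a putative first zero.

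For $2<\alpha\le1+\sqrt2$, you found the right algebraic condition $\alpha^2-2\alpha-1\le0$, but its role is the endpoint sign, not closing a derivative. As $\phi_m\downarrow1$, $J_\alpha\to2B_1(s_T)-(\alpha+1)B_2(s_T)$. The bound $B_2/B_1<(\alpha+1)/(2\alpha+1)$ makes this positive whenever $2-(\alpha-1)^2\ge0$. At the sparse end, $s_SJ_\alpha\to(2-\alpha)\int_0^1(1+s_Tx^\alpha)^{-2}dx<0$.

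Uniqueness is a separate and harder fact. Your plan to check signed coefficients of a polynomial in $\kappa$ is not viable, since $J_\alpha$ is built from Stieltjes-type integrals in $s_S$. The paper writes $J_\alpha=-\frac{F^3}{F'}R'(s_S)$, where $F(s)=\int_0^1\frac{dx}{1+sx^\alpha}$, $R=\mathcal B/F^2$, and $\mathcal B(s)=\int_0^1\frac{x^\alpha\,dx}{(1+sx^\alpha)(1+s_Tx^\alpha)^2}$. It then proceeds in three steps:
\begin{itemize}
\item it represents $F^2$ itself as the Stieltjes transform of an explicit positive density;
\item it shows the induced ratio density is log-concave in $\log y$, hence unimodal;
\item it proves that every stationary point of such a Stieltjes quotient is a strict maximum.
\end{itemize}
Hence $J_\alpha$ has at most one zero.

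Your fixed-$m$ outline agrees with the paper: $G_m=-m\tau_m+q_mV$ with $V=\sigma^2\gamma_T/(1-\gamma_T)$ and $q_m=\frac{1-2\gamma_m}{1-\gamma_m}$, $c_\alpha$ from $\int_0^\infty(1+u^\alpha)^{-1}du$, and the balance producing $m_{\mathrm{crit}}^{\mathrm{HP}}$. Two steps, though, are only asserted.
\begin{itemize}
\item Uniqueness via ``monotonicity of $G_m$'' would require controlling $q_m'$ against $(m\tau_m)'=-\tau_m\gamma_m/(1-\gamma_m)$, which you have not done. The paper instead shows the threshold $m\tau_m/q_m$ is strictly decreasing on a tail $[M,\infty)$ where $q_m>0$ and a certain moment combination is negative. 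On $[1,M]$ it uses $G_m<-M\tau_M+V\le0$ once $V\le M\tau_M$.
\item For (b)(ii) the decisive fact is noise-independent: $q_m\le0$ for every $m\ge(\alpha+4)/(2(\alpha-2))$. This leaves finitely many integers that need an explicit lower bound on $m\tau_m/q_m$. Defining $\sigma_c^2$ also requires proving $q_1>0$ for $\alpha\le5/2$.
\end{itemize}

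Finally, your displayed $b_i$ is not the diagonal of $\mathbb E[e_Te_T^\top]$. For isotropic $\beta^\star$ its bias part is $\mathbb E[I-X^\dagger X]$, whose one-point equivalent is $\tau_T/(\lambda_i+\tau_T)$. The form $\tau_T^2(\lambda_i+\tau_T)^{-2}(1-\gamma_T)^{-1}$ agrees with it only after weighting by $\lambda_i$ and summing.
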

Two qualitative points are worth drawing out explicitly. First, part~(a)
shows that when the pseudo-sample size scales with the ambient dimension,
a single, unconditional statement is possible for $\alpha\le2$: W2SG is
guaranteed for \emph{every} $m<p$, with no bottleneck fine-tuning needed
at all; the bottleneck only becomes a genuine constraint, with a sharp
critical threshold, once $\alpha>2$. Second, part~(b) shows that the fixed-
and proportional-sample regimes are not merely two proof techniques for
the same fact --- they can disagree qualitatively. For $2<\alpha\le5/2$
and sufficiently small noise, no \emph{fixed} $m$ ever produces W2SG, yet
part~(a)(ii) guarantees a $\Theta(p)$-scale pseudo-sample size that does.
The bottleneck effect, in this regime, is a genuinely proportional
phenomenon: it requires $m$ to grow with $p$, not merely to be large in
absolute terms.

\begin{remark}[Beyond the small-noise threshold, and disconnected regions]
\label{rem:case-I-disconnected} The explicit formula \eqref{eq:mcrit-HP} for $m_{\mathrm{crit}}^{\mathrm{HP}}$ is an asymptotic statement as $\sigma^2\downarrow0$, though in practice it tracks the true critical $m$ closely even at moderate noise levels. For noise above the thresholds in Theorem~\ref{thm:case-I-phase-diagram}, teacher error can be concentrated enough near specific high-variance directions --- rather than spread smoothly across the spectrum --- that the Stage-II projection improves on the teacher even at very small $m$. Combined with the proportional-scale threshold of part~(a), the set of pseudo-sample sizes producing W2SG can then split into a window near $m=O(1)$ and a separate window near $m=\Theta(p)$. This is not merely an artifact of two disconnected proof techniques: the intermediate scaling $m\to\infty$ with $m/p\to0$, analyzed in Appendix~\ref{app:A} (Section~\ref{subsec:sparse-growing-extension}), contains no critical $m$ at all, so our fixed-sample and proportional results together already give a complete picture across every scaling of $m$ with $p$.
\end{remark}
%%%%%%%%%%%%%%%%%%%%%%%%%%%%%%%%%%%%%%%%%%%%%%%%%%%%%%%%%%%%%%%%%%%%%%%%%%%%%%%%%%%%%%%%%%%%%%%%%%%%%%%%%%%%%%%%%%%%%%%%%%%%%%%%%%%%%%%%%%%%%%%%%%%%%%%%%%%%%%%%%%%%%%%%%%%%%%%%%%%%%%%%%%%%%%%%%%%%%%%%%%%%%%%%%%%%%%%%%%%%%%%%%%%%%%%%%%%%%%%%%%%%%%%%%%%%%%%%%%%%%%%%%%%%%%%%%%%%%%%%%%%%%%%%%%%%%%%%%%%%%%%%%%%%%%%%%%%%%%%%%%%%%%%%%%%%%%%%%%%%%%%%%%%%%%%%%%%%%%%%%%%%%%%%%%%%%%%%%%%%%%%%%%%%%%%%%%%%%%%%%%%%%%%%%%%%%%%%%%%%%%%%%%%%%%%%%%%%%%%%%%%%%%%%%%%%%%%%%%%%%%%%%%%%%%%%%%%%%%%%%%%%%%%%%%%%%%%%%%%%%%%%%%%%%%%%%%%%%%%%%%%%%%%%%%%%%%%%%%%%%%%%%%%%%%%%%%%%%%%%%%%%%%%%%%%%%%%%%%%%%%%%%%%%%%%%%%%%%%%%%%%%%%%%%%%%%%%%%%%%%%%%%%%%%%%%%%%%%%%%%%%%%%%%%%%%%%%%%%%%%%%%%%%%%%%%%%%%%%%%%%%%%%%%%%%%%%%%%%%%%%%%%%%%%%%%%%%%%%%%%%%%%%%%%%%%%%%%%%%%%%%%%%%%%%%%%%%%%%%%%%%%%%%%%%%%%%%%%%%%%%%%%%%%%%%%%%%%%%%%%%%%%%%%%%%%%%%%%%%%%%%%%%%%%%%%%%%%%%%%%%%%%%%%%%%%%%%%%%%%%%%%%%%%%%%%%%%%%%%%%%%%%%%%%%%%%%%%%%%%%%%%%%%%%%%%%%%%%%%%%%%%%%%%%%%%%%%%%%%%%%%%%%%%%%%%%%%%%%%%%%%%%%%%%%%%%%%%%%%%%%%%%%%%%%%%%%%%%%%%%%%%%%%%%%%%%%%%%%%%%%%%%%%%%%%%%%%%%%%%%%%%%%%%%%%%%%%%%%%%%%%%%%%%%%%%%%%%%%%%%%%%%%%%%%%%%%%%%%%%%%%%%%%%%%%%%%%%%%%%%%%%%%%%%%%%%%%%%%%%%%%%%%%%%%%%%%%%%%%%%%%%%%%%%%%%%%%%%%%%%%%%%%%%%%%%%%%%%%%%%%%%%%%%%%%%%%%%%%%%%%%%%%%%%%%%%%%%%%%%%%%%%%%%%%%%%%%%%%%%%%%%%%%%%%%%%%%%%%%%%%%%%%%%%%%%%%%%%%%%%%%%%%%%%%%%%%%%%%%%%%%%%%%%%%%%%%%%%%%%%%%%%%%%%%%%%%%%%%%%%%%%%%%%%%%%
\section{Case II: The Random-Feature Model}
\label{sec:case-II-random-features}

Case~I shows that a representational advantage is not necessary for W2SG:
with teacher and student sharing one hypothesis class, the pseudo-sample
size $m$ alone was enough to act as the bottleneck. We now ask a different
question. Suppose the student genuinely \emph{is} a stronger model, with
access to a richer representation than the teacher. Does the bottleneck
mechanism survive this asymmetry, and if so, is $m$ still the resource
that plays the bottleneck role, or can that role pass to a different
resource entirely? This section shows that the mechanism survives, and
that the bottleneck can indeed be either resource: the pseudo-sample size
$m$, as in Case~I, or the student's feature width $N_S$, which has no
counterpart in Case~I at all. Which resource is active, and where the
handoff between them occurs, is the content of
Theorem~\ref{thm:caseII} below.

\subsection{Model and Two-Stage Training}
\label{subsec:case-II-model}

We adopt the linear random-feature model of
\citet{medvedev2025weak}, which gives
teacher and student each their own independently sampled random-feature
map --- a standard device for endowing a linear model with a controllable,
finite amount of representational capacity, and the natural setting in
which to let the student have genuinely more capacity than the teacher.
We keep their model exactly, but ask a different question of it: rather
than showing that early stopping in Stage~II produces W2SG, as
\citet{medvedev2025weak} do, we train
the student to full convergence and ask whether the \emph{finiteness} of
its two resources --- pseudo-sample size and feature width --- is by
itself enough. This makes our result complementary to theirs: two
different mechanisms, early stopping versus a resource bottleneck, both
sufficient for W2SG in the same underlying model.

\paragraph{Target and covariance structure.}
Let $x\sim\mathcal N(0,\Sigma)$, where
$\Sigma:=\operatorname{diag}(I_k,\delta_pI_{p-k})$: the first $k$
coordinates have unit variance and the remaining $p-k$ have variance
$\delta_p\in(0,1)$, a sequence that we allow to depend on $p$ and whose
asymptotic rate (relative to $p-k$) will matter once we take $p\to\infty$
in Section~\ref{sec:pf-DE-only}. We refer to the first block as the
\emph{strong} coordinates and the second as the \emph{weak} coordinates.
The target coefficient $\beta^\star\in\mathbb R^p$ is deterministic ---
unlike Case~I, where $\beta^\star$ was random with isotropic
second moment --- and is supported entirely on the strong block, so
$\beta_i^\star=0$ for $i>k$: the signal we want the student to recover
lives entirely in the high-variance directions, while the weak block is
pure background. We normalize $\|\beta^\star\|_\Sigma^2=1$. There is no
label noise in this section ($\varepsilon\equiv0$ in
\eqref{eq:general-data-generation}); the only source of teacher error here
is a finite random-feature representation, not observation noise, and
isolating this source is the point of the model.

\paragraph{Stage I: teacher.}
Let $U_T\in\mathbb R^{N_T\times p}$ have i.i.d.\ $\mathcal N(0,1)$ entries;
$U_T$ is drawn once and then held fixed throughout Stage~I --- it plays
the role of a fixed, randomly initialized feature map, not a parameter
that Stage~I's learning rule optimizes over. Conditional on $U_T$, the
teacher's feature extractor is $f_T(x)=U_Tx\in\mathbb R^{N_T}$, and its
hypothesis class is
$\mathcal H_T(U_T):=\{h_{T,w}:x\mapsto x^\top U_T^\top w:w\in\mathbb R^{N_T}\}$.
Rather than fitting $n$ noisy labeled samples as in Case~I, the teacher
here is idealized as the \emph{population} risk minimizer within
$\mathcal H_T(U_T)$ --- the $n\to\infty$ limit of Stage~I training, which
lets us isolate representation bias as the sole source of teacher error,
uncontaminated by any Stage-I estimation variance:
\begin{equation}
    w_T^\star
    :=
    \argmin_{w\in\mathbb R^{N_T}}~\|w\|_2
    \quad\text{subject to}\quad
    U_T\Sigma U_T^\top w=U_T\Sigma\beta^\star.
    \label{eq:case-II-teacher-solution}
\end{equation}
The induced coefficient in the original input space is
$\beta_T^\star:=U_T^\top w_T^\star$, and the teacher predictor is
$h_T^\star(x):=x^\top\beta_T^\star$. Because $w_T^\star$ solves the exact
population problem, $h_T^\star$'s only error, conditional on $U_T$, is
approximation bias from the finiteness of $N_T$; the only remaining
randomness, across different draws of $U_T$, is randomness in \emph{which}
$N_T$-dimensional representation the teacher happened to receive, not
sample noise.

\paragraph{Stage II: student.}
Independently of $U_T$, draw fresh covariates
$\widetilde x_1,\ldots,\widetilde x_m\iid\mathcal N(0,\Sigma)$ and assign
pseudo-labels $\widetilde y_i:=h_T^\star(\widetilde x_i)=\widetilde x_i^\top\beta_T^\star$,
giving the Stage-II dataset $\widetilde{\mathcal D}_m$ of
Section~\ref{sec:preliminaries}, with design matrix $\widetilde X$ and
label vector $\widetilde y$. The student is given its \emph{own}
independent random-feature map, $U_S\in\mathbb R^{N_S\times p}$ with
i.i.d.\ $\mathcal N(0,1)$ entries, independent of $U_T$ and of
$\widetilde{\mathcal D}_m$, with feature extractor $f_S(x)=U_Sx$ and
hypothesis class $\mathcal H_S(U_S)$ defined analogously to
$\mathcal H_T(U_T)$. This is the crucial departure from Case~I: the
student's representation is no longer tied to the teacher's, and the
student's width $N_S$ is a resource entirely under our control, with no
counterpart on the teacher side. Once $N_S$ is allowed to differ from
$N_T$ --- in particular, once $N_S>N_T$ --- the student is a genuinely
stronger model in the ordinary sense of having more capacity, and it is
exactly this second resource, alongside $m$, that gives Case~II its two
possible bottlenecks.

Because $\widetilde y$ is generated by $h_T^\star\notin\mathcal H_S(U_S)$
in general, it need not be exactly interpolable by the student; Stage~II
therefore performs minimum-norm \emph{least-squares} regression, rather
than minimum-norm interpolation as in Case~I:
\begin{equation}
    \widehat w_S
    =
    \argmin_{w\in\mathbb R^{N_S}}~\|w\|_2
    \qquad\text{subject to}\qquad
    w\in\argmin_{v\in\mathbb R^{N_S}}~
    \left\|\widetilde XU_S^\top v-\widetilde y\right\|_2^2,
\end{equation}
with induced coefficient $\widehat\beta_S:=U_S^\top\widehat w_S$ and
student predictor $\widehat h_S(x):=x^\top\widehat\beta_S$.

\paragraph{Risks.}
The population risk \eqref{eq:general-population-risk} again takes the
closed form $\mathcal R(h_\beta;\beta^\star)=(\beta-\beta^\star)^\top\Sigma(\beta-\beta^\star)$,
so the averaged risks of \eqref{eq:expected-risks} specialize to
\begin{equation}
    \overline{\mathcal R}_T
    =\mathbb E_{U_T}\!\left[\mathcal R(h_T^\star;\beta^\star)\right],
    \qquad
    \overline{\mathcal R}_S
    =\mathbb E_{U_T,U_S,\widetilde{\mathcal X}_m}\!
      \left[\mathcal R(\widehat h_S;\beta^\star)\right],
    \label{eq:case-II-expected-risks}
\end{equation}
where $\overline{\mathcal R}_S$ additionally averages over Stage-II sample
randomness and over the student's own representation $U_S$.

\subsection{Two Stage-II Resources, Three Regimes}
\label{subsec:case-II-bottleneck-regimes}

As in Case~I, whether Stage~II can do anything nontrivial at all is
governed by a rank condition --- but here there are \emph{two} independent
Stage-II quantities, $m$ and $N_S$, that can each be the scarcer one
relative to $p$.

\begin{definition}[Stage-II bottleneck regimes]
\label{def:stage-II-bottlenecks}
The Stage-II regime is determined by which of $m$, $N_S$, and $p$ is
smallest:
\begin{itemize}
    \item the \emph{sample-bottleneck regime}: $m<\min\{N_S,p\}$;
    \item the \emph{feature-bottleneck regime}: $N_S<\min\{m,p\}$;
    \item the \emph{no-bottleneck regime}: $p<\min\{m,N_S\}$.
\end{itemize}
\end{definition}

\begin{proposition}[No-bottleneck regime]
\label{prop:no-stage-II-bottleneck}
If $p<\min\{m,N_S\}$, then $\widehat h_S=h_T^\star$ almost surely, so
$\overline{\mathcal R}_S=\overline{\mathcal R}_T$ and $\mathsf{W2SG}=0$.
\end{proposition}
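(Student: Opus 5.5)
The argument runs parallel to Proposition~\ref{prop:case-I-sample-bottleneck}: once both Stage-II resources exceed $p$, the student's effective design spans all of $\mathbb R^p$, the pseudo-labels are exactly realizable within the student class, and the minimum-norm least-squares solution reproduces $\beta_T^\star$.

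\textbf{Step 1 (full rank).} For $m>p$, the Gaussian design $\widetilde X\in\mathbb R^{m\times p}$ has rank $p$ almost surely, because $\Sigma\succ0$. Independently, for $N_S>p$ the matrix $U_S\in\mathbb R^{N_S\times p}$ with i.i.d.\ $\mathcal N(0,1)$ entries has rank $p$ almost surely, so $U_S^\top$ is surjective onto $\mathbb R^p$. Hence the composite map $A:=\widetilde X U_S^\top\in\mathbb R^{m\times N_S}$ has rank $p$ and column space equal to $\operatorname{range}(\widetilde X)$.

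\textbf{Step 2 (realizability and identification).}
\begin{itemize}
\item Since $U_S^\top$ is onto, there exists $v$ with $U_S^\top v=\beta_T^\star$. Then $Av=\widetilde X\beta_T^\star=\widetilde y$, so the least-squares residual vanishes and every minimizer $w$ satisfies $\widetilde X U_S^\top w=\widetilde y$.
\item Because $\widetilde X$ has full column rank, this forces $U_S^\top w=\beta_T^\star$ for every minimizer $w$, in particular for $\widehat w_S$. Thus $\widehat\beta_S=U_S^\top\widehat w_S=\beta_T^\star$ no matter which minimizer the minimum-norm rule selects.
\end{itemize}
It follows that $\widehat h_S=h_T^\star$ almost surely, conditional on $U_T$.

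\textbf{Step 3 (risks).} Taking expectations over $U_T$, $U_S$ and $\widetilde{\mathcal X}_m$ in \eqref{eq:case-II-expected-risks} gives $\overline{\mathcal R}_S=\overline{\mathcal R}_T$, and hence $\mathsf{W2SG}=0$ by \eqref{eq:w2s-gain}. The only care needed is to note that $\beta_T^\star$ is independent of $\widetilde X$ and $U_S$, so the almost-sure statements hold jointly.

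\textbf{Main obstacle.} The step needing attention is that the student solves a least-squares problem rather than an interpolation problem. One must check that the minimum-norm selection in weight space cannot change $\widehat\beta_S$. This is resolved by the injectivity of $\widetilde X$ in Step~2: all minimizers induce the same coefficient $\beta_T^\star$, so the choice among them is irrelevant.
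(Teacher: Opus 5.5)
Your proposal is correct and matches the paper's argument: both use almost-sure full column rank of $\widetilde X$ and $U_S$, realizability of $\widetilde y=\widetilde X\beta_T^\star$ through surjectivity of $U_S^\top$, and injectivity of $\widetilde X$, which forces every least-squares minimizer to induce $\widehat\beta_S=\beta_T^\star$. You also note that only the induced coefficient is unique, not the weight-space minimizer; this is slightly more careful than the main-text proof's phrase ``unique minimizer'' and coincides with the version in the proof of part (d) of Theorem~\ref{thm:positive-fixed-ratio-DE}.
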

\begin{proof}
Since $m>p$ and $N_S>p$, both $\widetilde X\in\mathbb R^{m\times p}$ and
$U_S\in\mathbb R^{N_S\times p}$ have full column rank almost surely; hence
$\operatorname{range}(U_S^\top)=\mathbb R^p$, and because
$\widetilde y=\widetilde X\beta_T^\star$, the unique minimizer of the
Stage-II loss is $\widehat\beta_S=\beta_T^\star$, giving
$\widehat h_S=h_T^\star$, $\overline{\mathcal R}_S=\overline{\mathcal R}_T$,
and $\mathsf{W2SG}=0$.
\end{proof}

Exactly as in Case~I, then, W2SG is only possible once \emph{at least
one} Stage-II resource is scarce relative to $p$ --- both being abundant
just reproduces the teacher exactly. What Case~II adds is that this
scarcity can come from either resource, and the two subsections below show
that it produces genuinely different behavior depending on which one it
is.

\subsection{Asymptotic W2SG Phase Diagram}
\label{sec:pf-DE-only}

We work under the same style of deterministic-equivalent analysis
introduced in Section~\ref{subsec:case-I-phase-diagram}, now applied
jointly to the two Stage-II resources, and let all five relevant
quantities grow proportionally with $p$.

\begin{assumption}[Proportional asymptotic scaling]
\label{ass:positive-fraction}
Let $d_p:=p-k$, and assume
\begin{equation}
    \frac kp\to\vartheta,
    \qquad
    d_p\delta_p\to\eta,
    \qquad
    \frac{N_T}p\to\rho_T,
    \qquad
    \frac mp\to\rho_m,
    \qquad
    \frac{N_S}p\to\rho_S,
\end{equation}
where $\vartheta,\rho_T\in(0,1)$, $\eta\in(0,\infty)$, and
$\rho_m,\rho_S\in(0,\infty)$. We consider the \emph{regular} fixed-ratio
regime, in which $\rho_T\neq\vartheta$, $\rho_m\neq\rho_S$,
$\rho_m,\rho_S\neq\vartheta$, and $\rho_m,\rho_S\neq1$; the nontrivial excluded boundary cases are treated in
Section~\ref{sec:case-II-critical-boundaries} of
Appendix~\ref{app:B}.
\end{assumption}

Here $\vartheta$ is the asymptotic fraction of coordinates carrying signal,
and $\rho_T,\rho_m,\rho_S$ are the three Stage-II-relevant resource
ratios of Definition~\ref{def:stage-II-bottlenecks}, now expressed in the
proportional limit. As in Case~I, we use \emph{deterministic equivalents}:
$\mathsf{W2SG}_p^{\mathrm{DE}}$ denotes the high-dimensional asymptotic
approximation to the finite-$p$ gain $\mathsf{W2SG}_p$, in the same sense
introduced in Section~\ref{subsec:case-I-phase-diagram}; the full
derivation is in Appendix~\ref{app:B}.

\begin{theorem}[W2SG phase diagram]
\label{thm:caseII}
Suppose Assumption~\ref{ass:positive-fraction} holds. Define
\begin{equation}
    \Lambda_T
    :=
    \begin{cases}
        \vartheta-\rho_T, & \rho_T<\vartheta,\\[1mm]
        \dfrac{(1-\vartheta)(\rho_T-\vartheta)}{1-\rho_T}, & \rho_T>\vartheta.
    \end{cases}
\end{equation}

\smallskip\noindent\textnormal{(i) \textbf{Necessary condition.}}
If $\min\{\rho_m,\rho_S\}<\vartheta$, then $\mathsf{W2SG}_p^{\mathrm{DE}}<0$:
if either Stage-II resource falls below the signal fraction, the student
cannot even recover the signal, let alone improve on the teacher.

\smallskip\noindent\textnormal{(ii) \textbf{Sample-bottleneck regime}}
($\vartheta<\rho_m<\min\{1,\rho_S\}$, so $m$, not $N_S$, is the scarcer
Stage-II resource). Let
$\Delta_{\mathrm{DE}}:=(\rho_S+3\vartheta-\Lambda_T)^2-16\vartheta\rho_S$.
If $\Delta_{\mathrm{DE}}>0$, then
\begin{equation}
\label{eq:main-positive-fraction-SB-rho-m}
    \mathsf{W2SG}_p^{\mathrm{DE}}>0
    \iff
    \rho_m\in
    \left(
        \frac{\rho_S+3\vartheta-\Lambda_T-\sqrt{\Delta_{\mathrm{DE}}}}{4},
        \frac{\rho_S+3\vartheta-\Lambda_T+\sqrt{\Delta_{\mathrm{DE}}}}{4}
    \right)
    \cap\bigl(\vartheta,\min\{1,\rho_S\}\bigr).
\end{equation}
If $\Delta_{\mathrm{DE}}\le0$, there is no positive region on this branch.
This is the direct analogue of the Case~I phase diagram: $m$ must lie in a
\emph{bounded window} for the student to improve on the teacher.

\smallskip\noindent\textnormal{(iii) \textbf{Feature-bottleneck regime}}
($\vartheta<\rho_S<\min\{1,\rho_m\}$, so $N_S$, not $m$, is now the
scarcer Stage-II resource). Here
\begin{equation}
\label{eq:main-positive-fraction-FB-rho-m}
    \mathsf{W2SG}_p^{\mathrm{DE}}>0
    \iff
    \rho_S-\vartheta>\Lambda_T
    \quad\text{and}\quad
    \rho_m>\frac{2\rho_S(\rho_S-\vartheta)}{\rho_S-\vartheta-\Lambda_T}.
\end{equation}
Unlike (ii), the condition on $\rho_m$ is a \emph{half-line}: once $N_S$
is the active bottleneck and satisfies $\rho_S-\vartheta>\Lambda_T$, more
pseudo-labels are unambiguously good, and W2SG occurs for every $m$ large
enough.
\end{theorem}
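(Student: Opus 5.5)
The plan is to compute the deterministic equivalents of $\overline{\mathcal R}_T$ and $\overline{\mathcal R}_S$ explicitly as rational functions of $(\vartheta,\rho_T,\rho_m,\rho_S)$ in the limit $\delta_p\to0$ with $d_p\delta_p\to\eta$, and then read off the sign of their difference.

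\textbf{Teacher.} The population minimizer $\beta_T^\star=U_T^\top(U_T\Sigma U_T^\top)^\dagger U_T\Sigma\beta^\star$ is a $\Sigma$-orthogonal projection of $\beta^\star$ onto the row space of $U_T$. Hence $\mathcal R_T=\|\beta^\star\|_\Sigma^2-\|\Pi_T\beta^\star\|_\Sigma^2$, where $\Pi_T$ is the $\Sigma$-projector onto $\operatorname{row}(U_T)$. Because $\Sigma$ is two-block with the weak block of vanishing total variance per direction, I expect the following limits:
\begin{itemize}
\item For $\rho_T<\vartheta$, the teacher captures a fraction $\rho_T/\vartheta$ of the isotropically distributed strong-block signal. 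The error is then $1-\rho_T/\vartheta$; after the normalisation used in the theorem this should match $\Lambda_T=\vartheta-\rho_T$ (scaled by $1/\vartheta$).
\item For $\rho_T>\vartheta$, the signal is recovered on the strong block, but the min-norm solution leaks mass into the weak block. That leakage contributes $\Lambda_T=(1-\vartheta)(\rho_T-\vartheta)/(1-\rho_T)$ through a Marchenko--Pastur-type inverse-moment term $\mathbb E\operatorname{Tr}(W^{-1})\sim(\rho_T-\vartheta)/(1-\rho_T)$.
\end{itemize}
I will derive both limits via the standard Gaussian conditioning trick: write $U_T=[A\,|\,B]$ along the blocks, and use Schur complements together with the inverse Wishart moment $\mathbb E[W^{-1}]=I/(N-d-1)$.

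\textbf{Student.} Conditional on $U_T$, the student's coefficient is $\widehat\beta_S=U_S^\top(\widetilde XU_S^\top)^\dagger\widetilde X\beta_T^\star$, an oblique projection of $\beta_T^\star$. The plan is to split $\beta_T^\star=\beta^\star+e_T$, with $e_T$ the teacher error, and decompose the student risk into three pieces:
\begin{itemize}
\item a signal-loss term from applying the Stage-II operator to $\beta^\star$;
\item a transmitted-teacher-error term from $e_T$;
\item a cross term, which vanishes in expectation over $U_S$ and $\widetilde X$ by rotational invariance within each block.
\end{itemize}
In the sample-bottleneck regime ($m<N_S$), the operator acts like the min-norm interpolator on $m$ samples, so Lemma~\ref{lem:stage-II-expected-spectral-retention}-type retention applies blockwise. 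The strong block is retained with weight $\rho_m/\vartheta$-type factors once $\rho_m>\vartheta$, and there is a variance inflation of $\rho_m/(\rho_S-\rho_m)$ from the finite student width. In the feature-bottleneck regime the roles swap: $N_S$ plays the role of rank, and $m$ enters only through a least-squares variance factor $\rho_S/(\rho_m-\rho_S)$, which decreases in $\rho_m$. This is the source of the half-line condition. Part (i) should follow directly: if $\min\{\rho_m,\rho_S\}<\vartheta$, the Stage-II operator has rank below $k$ on the strong block, so the student incurs a signal loss of at least $1-\min/\vartheta$. I will check that this dominates $\Lambda_T$ by direct comparison of the closed forms.

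\textbf{Sign analysis.} With closed forms in hand, $\mathsf W2SG^{\mathrm{DE}}$ in the sample-bottleneck branch should reduce, after clearing the positive denominator $(\rho_S-\rho_m)\rho_m$, to the quadratic inequality
\[
2\rho_m^2-\tfrac12(\rho_S+3\vartheta-\Lambda_T)\rho_m+2\vartheta\rho_S<0 .
\]
This quadratic has discriminant proportional to $\Delta_{\mathrm{DE}}$, which yields the interval in \eqref{eq:main-positive-fraction-SB-rho-m}. In the feature-bottleneck branch the inequality should be linear in $1/(\rho_m-\rho_S)$, giving \eqref{eq:main-positive-fraction-FB-rho-m}.

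\textbf{Main obstacle.} The hardest step is the student's deterministic equivalent with two independent random matrices $\widetilde X$ and $U_S$ composed through a pseudoinverse, together with the anisotropic two-block $\Sigma$ and $\delta_p\to0$. I would first try to reduce it to a single Wishart computation. In the sample-bottleneck regime, I would condition on $\widetilde X$ and use the fact that $\widetilde XU_S^\top$ is a Gaussian $m\times N_S$ matrix given $\widetilde X$. In the feature-bottleneck regime, I would condition on $U_S$ symmetrically. After that, I would track the strong and weak blocks via block-Schur complements.
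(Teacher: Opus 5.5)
There is a genuine gap, and it is one of scale. On branches (ii) and (iii) the active Stage-II ratio exceeds $\vartheta$, so $\tau_{S,p}/\delta_p\to(1-\rho)/(\rho-\vartheta)$ and every Stage-II correction is $O(\delta_p)=O(1/p)$. The teacher and student risks therefore have the same limit: $(\vartheta-\rho_T)/\vartheta$ if $\rho_T<\vartheta$, and $0$ if $\rho_T>\vartheta$. Hence $\mathsf{W2SG}_p^{\mathrm{DE}}\to0$.

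Your plan computes both risks as rational functions ``in the limit $\delta_p\to0$'' and subtracts. That gives $0$ and no sign. Worse, when $\rho_T<\vartheta$ you would be subtracting two order-one approximations whose difference must be resolved to $o(1/p)$, which separately derived equivalents do not provide.

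The paper avoids this by expanding the student around the teacher $b=\widehat\beta_T$ rather than around $\beta^\star$. The identity $\overline{\mathcal R}_{S,p}=\overline{\mathcal R}_{T,p}+\mathcal M_{S,p}+\mathcal C^{\mathrm{cross}}_{S,p}$ holds exactly, so the teacher risk cancels before any approximation and $\mathsf{W2SG}_p^{\mathrm{DE}}=2\tau_{S,p}V_{T,p}^{\mathrm{DE}}\Delta_p(\tau_{S,p})-\mathcal M_{S,p}^{\mathrm{DE}}$. The paper then divides by $V_{T,p}^{\mathrm{DE}}A(\rho)$, with $A(\rho)=(1-\rho)/(1-\vartheta)$, and takes the limit.

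This is also where $\Lambda_T$ comes from: $\Lambda_T=(1-\vartheta)\lim_p\delta_pS_{T,p}^{\mathrm{DE}}/V_{T,p}^{\mathrm{DE}}$. It is the $\delta_p$-weighted strong-block teacher energy relative to the weak-block leakage, not a teacher error. For $\rho_T>\vartheta$ the teacher risk is $\tau_{T,p}/(1+\tau_{T,p})=O(1/p)$ while $\Lambda_T$ is of order one. There $\Lambda_T$ is \emph{inversely} proportional to the leakage $V_{T,p}^{\mathrm{DE}}/\delta_p\to(1-\rho_T)/(\rho_T-\vartheta)$, not produced by it. The relation $\Lambda_T=\vartheta\lim_p\overline{\mathcal R}_{T,p}^{\mathrm{DE}}$ holds only on the branch $\rho_T<\vartheta$.

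Similarly, once $\rho_m>\vartheta$ the strong block is retained up to $1-O(\delta_p)$. The filtering acts on the weak block, which is retained with weight $(\rho_m-\vartheta)/(1-\vartheta)$, not through ``$\rho_m/\vartheta$-type'' factors.

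Several individual steps would also fail.

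\begin{itemize}
\item \textbf{Cross term.} Write $\mathcal P:=U_S^\top(\widetilde XU_S^\top)^\dagger\widetilde X$ and $e_T:=\beta_T^\star-\beta^\star$. Your cross term $2\mathbb E\langle(\mathcal P-I)\beta^\star,\mathcal Pe_T\rangle_\Sigma$ does not vanish. Block-rotation invariance of $(\widetilde X,U_S)$ only gives $\mathbb E_S[(\mathcal P-I)^\top\Sigma\mathcal P]=c_1I_k\oplus c_2I_{p-k}$. The cross term therefore equals $2c_1\,\mathbb E\langle\beta^\star_{\mathrm{str}},e_{T,\mathrm{str}}\rangle=-2c_1\overline{\mathcal R}_{T,p}$, because the teacher's strong-block error is anti-aligned with $\beta^\star$. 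Here $c_1$ is nonzero and of order $1/p$, and $\overline{\mathcal R}_{T,p}=\Theta(1)$ when $\rho_T<\vartheta$. So this term is exactly the order of the gain, and dropping it changes the sign condition.

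\item \textbf{Part (i).} The lower bound $1-\min\{\rho_m,\rho_S\}/\vartheta$ on the student's risk does not exceed the teacher's risk $1-\rho_T/\vartheta$ when $\rho_T\le\min\{\rho_m,\rho_S\}<\vartheta$. Comparing with $\Lambda_T$ is the wrong comparison. You need the fact that the student re-projects the teacher's already-truncated signal. The paper's limit $-s_T\frac{\vartheta-\rho_m}{\vartheta}\frac{\rho_S}{\rho_S-\rho_m}$ (and its feature-bottleneck mirror) expresses exactly this.

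\item \textbf{Target quadratic.} The correct condition is $2\rho_m^2-(\rho_S+3\vartheta-\Lambda_T)\rho_m+2\vartheta\rho_S<0$. It comes from the numerator $(\rho_m-2\vartheta)\rho_S-\rho_m(2\rho_m-3\vartheta+\Lambda_T)$ over the positive denominator $(\rho_m-\vartheta)(\rho_S-\rho_m)$. With your coefficient $\tfrac12(\rho_S+3\vartheta-\Lambda_T)$, the discriminant is $\tfrac14[(\rho_S+3\vartheta-\Lambda_T)^2-64\vartheta\rho_S]$, which is not proportional to $\Delta_{\mathrm{DE}}$.
\end{itemize}
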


The handoff between the two regimes in the theorem is governed entirely by
\emph{which of $m,N_S$ is smaller relative to $p$}: regime (ii) is exactly
$\rho_m<\rho_S$ (with both above $\vartheta$), and regime (iii) is exactly
$\rho_S<\rho_m$. There is no third, intermediate mechanism --- whichever
resource is scarcer is the one whose value must be tuned into a window (or
past a threshold) for W2SG to occur, and the theorem's two branches are
this dichotomy made precise.

\begin{remark}[Reading the phase diagram]
\label{rem:case-II-reading}
Three consequences of Theorem~\ref{thm:caseII} are worth stating
explicitly.

First, combining (ii) and (iii), any positive-W2SG region forces
$\rho_S>\vartheta+\Lambda_T>\rho_T$: the student must have asymptotically
\emph{more} random features than the teacher. This is not an assumption we
imposed --- it falls out of the theorem. It confirms that a genuine
capacity advantage is \emph{necessary} for W2SG once representations
differ, but the theorem shows it is not \emph{sufficient}: satisfying
$\rho_S>\vartheta+\Lambda_T$ only opens the door to a bottleneck condition
on $m$ still needing to hold.

Second, the qualitative character of the two regimes differs, not just
their formulas. In the sample-bottleneck regime (ii), the pseudo-sample
size itself must be tuned to a bounded window, exactly as in Case~I. In
the feature-bottleneck regime (iii), by contrast, $m$ only needs to clear
a threshold, after which it is purely beneficial: the sample-side
trade-off of Case~I disappears once $N_S$ takes over as the active
constraint.

Third, this asymmetry reflects a general division of labor between the two
resources: whichever resource is the active bottleneck governs how much of
the teacher's approximation error the student filters out, while the
\emph{other} resource, once it is not the bottleneck, only reduces
estimation noise around whatever the bottleneck resource already lets
through --- which is why it can be safely increased without limit.
\end{remark}

The next corollary rearranges Theorem~\ref{thm:caseII} to hold $\rho_m$
fixed and vary $\rho_S$ instead, exhibiting the same two regimes from the
feature-width side.

\begin{corollary}[Phase diagram in student width]
\label{cor:caseII}
Suppose the conditions of Theorem~\ref{thm:caseII} hold and fix
$\rho_m>\vartheta$.

\smallskip\noindent\textnormal{(i) \textbf{Feature-bottleneck regime}}
($\vartheta<\rho_S<\min\{1,\rho_m\}$). Let
$D_S:=(\rho_m+2\vartheta)^2-8\rho_m(\vartheta+\Lambda_T)$. If $D_S>0$,
\begin{equation}
\label{eq:varying-width-feature-interval}
    \mathsf{W2SG}_p^{\mathrm{DE}}>0
    \iff
    \rho_S\in
    \left(
        \frac{\rho_m+2\vartheta-\sqrt{D_S}}4,
        \frac{\rho_m+2\vartheta+\sqrt{D_S}}4
    \right)
    \cap\bigl(\vartheta,\min\{1,\rho_m\}\bigr);
\end{equation}
if $D_S\le0$, there is no positive region on this branch. The
feature-bottleneck positive region is thus a \emph{bounded interval} in
$\rho_S$ --- too little width starves the student, too much lets it
overfit the teacher's bias.

\smallskip\noindent\textnormal{(ii) \textbf{Sample-bottleneck regime}}
($\vartheta<\rho_m<\min\{1,\rho_S\}$). If $\rho_m\le2\vartheta$, there is
no positive region on this branch. If $\rho_m>2\vartheta$,
\begin{equation}
\label{eq:varying-width-sample-half-line}
    \mathsf{W2SG}_p^{\mathrm{DE}}>0
    \iff
    \rho_S>\frac{\rho_m(2\rho_m-3\vartheta+\Lambda_T)}{\rho_m-2\vartheta}.
\end{equation}
The sample-bottleneck positive region is thus a \emph{half-line} in
$\rho_S$: once $m$ is already the active bottleneck, additional student
width is purely beneficial.
\end{corollary}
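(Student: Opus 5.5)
The plan is to obtain Corollary~\ref{cor:caseII} from Theorem~\ref{thm:caseII} by pure algebra. On each branch, the theorem's positivity condition can be rewritten as the sign of one polynomial that is jointly explicit in $(\rho_m,\rho_S)$. Then I solve that inequality in $\rho_S$ instead of $\rho_m$, keeping the branch constraints as side conditions. I will use $\Lambda_T>0$ throughout; this holds in the regular regime of Assumption~\ref{ass:positive-fraction}, since $\rho_T\neq\vartheta$ and $\rho_T<1$.

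\textbf{Feature-bottleneck branch, part (i).} On $\vartheta<\rho_S<\min\{1,\rho_m\}$, Theorem~\ref{thm:caseII}(iii) gives positivity iff two conditions hold: $\rho_S-\vartheta-\Lambda_T>0$, and $\rho_m>2\rho_S(\rho_S-\vartheta)/(\rho_S-\vartheta-\Lambda_T)$. Multiplying the second by the positive denominator, the pair is equivalent to
\[
q(\rho_S):=2\rho_S^2-(\rho_m+2\vartheta)\rho_S+\rho_m(\vartheta+\Lambda_T)<0 .
\]
This needs one check: that $q<0$ by itself already forces $\rho_S-\vartheta>\Lambda_T$. Write $q(\rho_S)=2\rho_S(\rho_S-\vartheta)-\rho_m(\rho_S-\vartheta-\Lambda_T)$. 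If $\rho_S-\vartheta-\Lambda_T\le0$, then $q(\rho_S)\ge 2\rho_S(\rho_S-\vartheta)>0$, because $\rho_S>\vartheta$. So the single quadratic inequality is equivalent to the theorem's condition. Its discriminant is exactly $D_S$, which gives the two outcomes:
\begin{itemize}
\item if $D_S\le0$, $q$ is never negative, so the branch has no positive region;
\item if $D_S>0$, $q<0$ precisely between the roots $(\rho_m+2\vartheta\pm\sqrt{D_S})/4$. Intersecting with the branch constraint gives \eqref{eq:varying-width-feature-interval}.
\end{itemize}

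\textbf{Sample-bottleneck branch, part (ii).} On $\vartheta<\rho_m<\min\{1,\rho_S\}$, the interval in \eqref{eq:main-positive-fraction-SB-rho-m} is the negativity set of
\[
r(\rho_m):=2\rho_m^2-(\rho_S+3\vartheta-\Lambda_T)\rho_m+2\vartheta\rho_S .
\]
Its discriminant is $\Delta_{\mathrm{DE}}$, and the case $\Delta_{\mathrm{DE}}\le0$ is simply the case where $r$ has no negative values. So on this branch, positivity is exactly $r<0$. Now view $r$ as affine in $\rho_S$. Then $r<0$ iff
\[
\rho_S(\rho_m-2\vartheta)>\rho_m(2\rho_m-3\vartheta+\Lambda_T).
\]
I split on the sign of $\rho_m-2\vartheta$:
\begin{itemize}
\item \emph{$\rho_m>2\vartheta$.} Dividing by the positive factor gives the half-line \eqref{eq:varying-width-sample-half-line}.
\item \emph{$\rho_m=2\vartheta$.} The condition reads $0>2\vartheta(\vartheta+\Lambda_T)$, which is false.
\item \emph{$\rho_m<2\vartheta$.} Dividing flips the inequality, giving $\rho_S<\rho_m(3\vartheta-\Lambda_T-2\rho_m)/(2\vartheta-\rho_m)$. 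This upper bound is at most $\rho_m$ iff $\vartheta-\Lambda_T\le\rho_m$, which holds because $\rho_m>\vartheta$. That contradicts the branch requirement $\rho_S>\rho_m$, so the region is empty.
\end{itemize}
The branch constraints $\rho_S>\rho_m$ and $\rho_m<1$ remain as side conditions. I will state them as implicit, as the corollary does.

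\textbf{Main obstacle.} There is nothing deep here. The only delicate points are the two "no spurious solutions" checks: that $q<0$ already implies $\rho_S-\vartheta>\Lambda_T$, and that the case $\rho_m<2\vartheta$ is incompatible with $\rho_S>\rho_m$. Both reduce to one-line sign arguments using $\rho_m,\rho_S>\vartheta$ and $\Lambda_T>0$.
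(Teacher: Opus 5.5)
Your proof is correct and follows essentially the paper's route: the corollary is just the sign conditions of Theorem~\ref{thm:positive-fixed-ratio-DE}(b)--(c) solved in $\rho_S$ instead of $\rho_m$. Those conditions are $(\rho_m-2\vartheta)\rho_S-\rho_m(2\rho_m-3\vartheta+\Lambda_T)>0$ and $(\rho_m-2\rho_S)(\rho_S-\vartheta)-\rho_m\Lambda_T>0$, which are exactly your $-r>0$ and $-q>0$. Your two side checks ($q<0$ already forces $\rho_S-\vartheta>\Lambda_T$, and $\rho_m<2\vartheta$ is incompatible with $\rho_S>\rho_m$ since $\Lambda_T>0$) supply details the paper leaves implicit.
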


Corollary~\ref{cor:caseII} is the mirror image of
Theorem~\ref{thm:caseII}: interval versus half-line swaps sides depending
on which variable is held fixed and which is scanned, but the underlying
division is the same one identified in
Remark~\ref{rem:case-II-reading} --- whichever resource is currently the
bottleneck needs a window; the other only needs to clear a threshold.
Together, Theorem~\ref{thm:caseII} and Corollary~\ref{cor:caseII} are what
we mean by the \emph{active-bottleneck principle}: the scarcer of the two
Stage-II resources determines how much teacher error the student filters
out and must sit at an intermediate value to do so well, while the other
resource, once identified as non-bottleneck, can be increased freely and
only serves to reduce estimation noise. This is the sense in which Case~I
is the single-resource special case of the same phenomenon: with only $m$
available to bind, there is no "other resource" to play the free role, and
the sample-bottleneck branch of Theorem~\ref{thm:caseII} is what survives.

\section{Experimental Results}
\label{sec:experiments}

We validate the theory against Monte Carlo simulations by comparing the
finite-$p$ deterministic equivalents with empirically measured W2SG.
The experiments are organized around the two theoretical settings studied
in the paper. For Case~I, we examine the ridgeless-regression phase diagram
on the two sides of the critical exponent $\alpha=2$ and then study the
finite-size behavior associated with the fixed-$m$ and proportional
scalings. For Case~II, we validate the random-feature phase diagram,
separate the sample- and feature-bottleneck mechanisms, and include a
negative-control experiment in which the theory predicts no admissible
positive-W2SG region.

% =============================================================================
% CASE I
% =============================================================================

\begin{figure}[t]
    \centering
    \begin{subfigure}{0.48\textwidth}
        \centering
        \includegraphics[width=\linewidth]{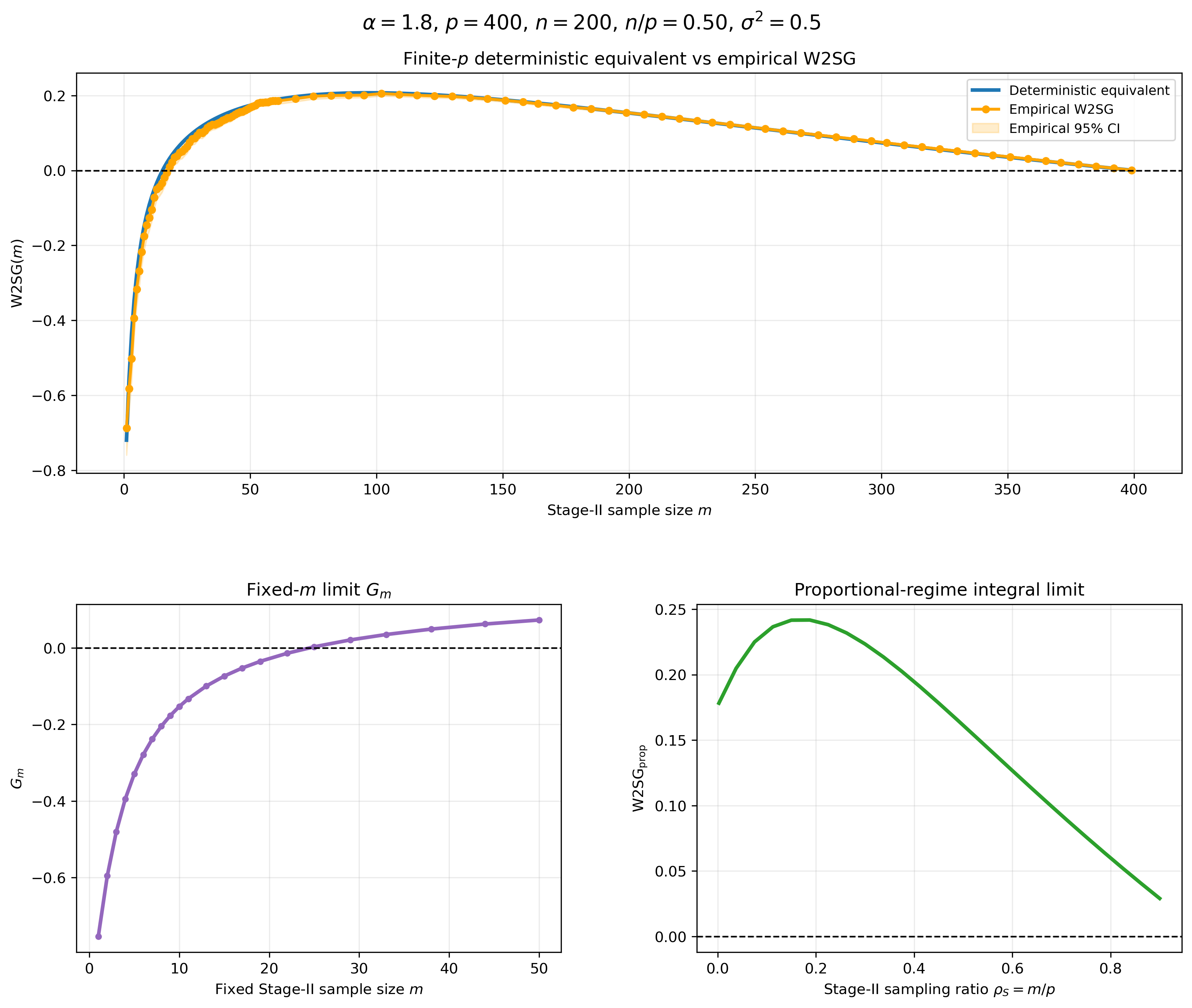}
        \caption{$\sigma^2=0.5$.}
        \label{fig:a}
    \end{subfigure}
    \hfill
    \begin{subfigure}{0.48\textwidth}
        \centering
        \includegraphics[width=\linewidth]{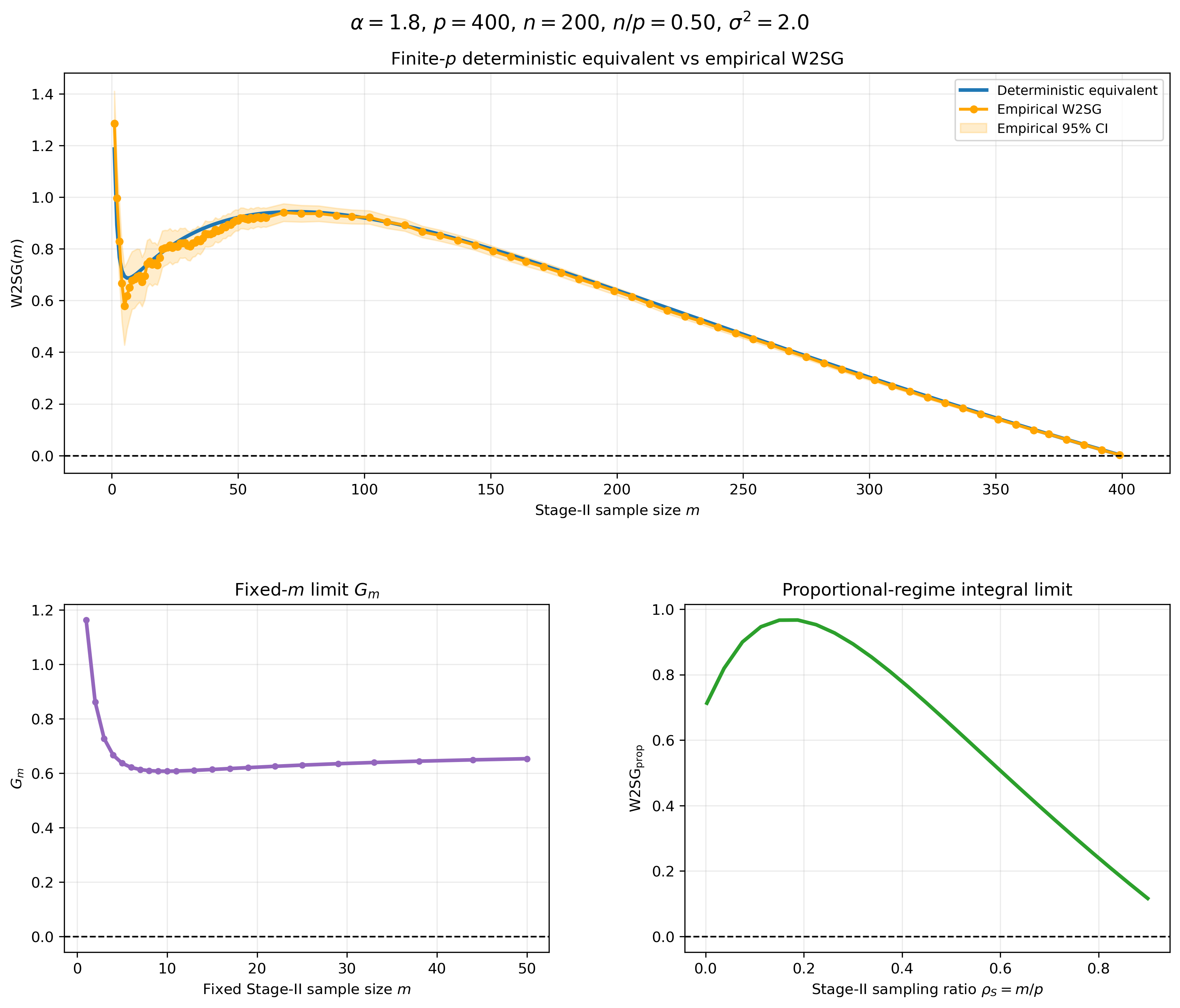}
        \caption{$\sigma^2=2$.}
        \label{fig:b}
    \end{subfigure}
    \caption{\textbf{Case I with $1<\alpha<2$.}
    Numerical validation at $\alpha=1.8$, $p=400$, and $n=200$ using
    $1000$ Monte Carlo trials.  In each subfigure, the top panel compares
    the finite-$p$ deterministic equivalent with empirical W2SG and its
    $95\%$ confidence interval; the bottom-left panel shows the fixed-$m$
    limit $G_m$; and the bottom-right panel shows the proportional-regime
    limit as a function of the Stage-II sampling ratio.}
    \label{fig:combined}
\end{figure}

\subsection{Case I: Ridgeless Regression}
\label{subsec:caseI}

We first consider the ridgeless-regression setting of
Theorem~\ref{thm:case-I-phase-diagram}. We examine the W2SG phase behavior
on the two sides of the critical exponent $\alpha=2$, comparing the
finite-$p$ deterministic equivalent with Monte Carlo simulations and with
the fixed-$m$ and proportional asymptotic limits. We then study finite-size
scaling to illustrate how these two asymptotic regimes emerge under
different scalings of the Stage-II sample size $m$.
% -----------------------------------------------------------------------------
% Case I phase diagram
% -----------------------------------------------------------------------------

\subsubsection{Phase Diagram}
\label{subsubsec:caseI-phase}

We first validate the Case~I deterministic equivalent and phase diagram at
$p=400$ and $n/p=0.5$.  We consider $\alpha=1.8$ and $\alpha=2.2$,
representing the two sides of the critical exponent $\alpha=2$, and use
$\sigma^2\in\{0.5,2\}$.  For each setting, we compare the finite-$p$
deterministic equivalent with empirical Monte Carlo W2SG and also display
the fixed-$m$ and proportional asymptotic limits.

\begin{figure}[t]
    \centering
    \begin{subfigure}{0.48\textwidth}
        \centering
        \includegraphics[width=\linewidth]{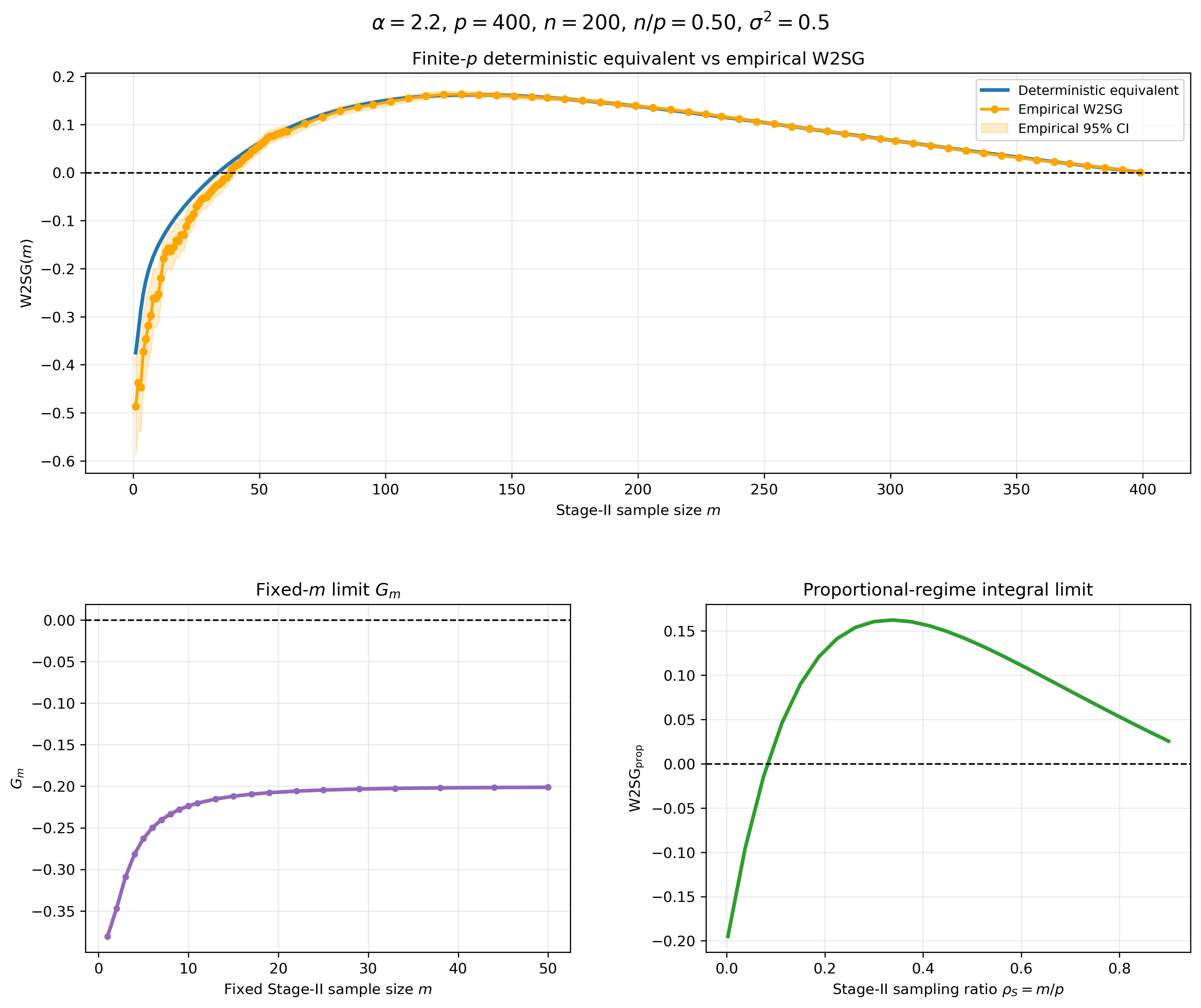}
        \caption{$\sigma^2=0.5$.}
        \label{fig:c}
    \end{subfigure}
    \hfill
    \begin{subfigure}{0.48\textwidth}
        \centering
        \includegraphics[width=\linewidth]{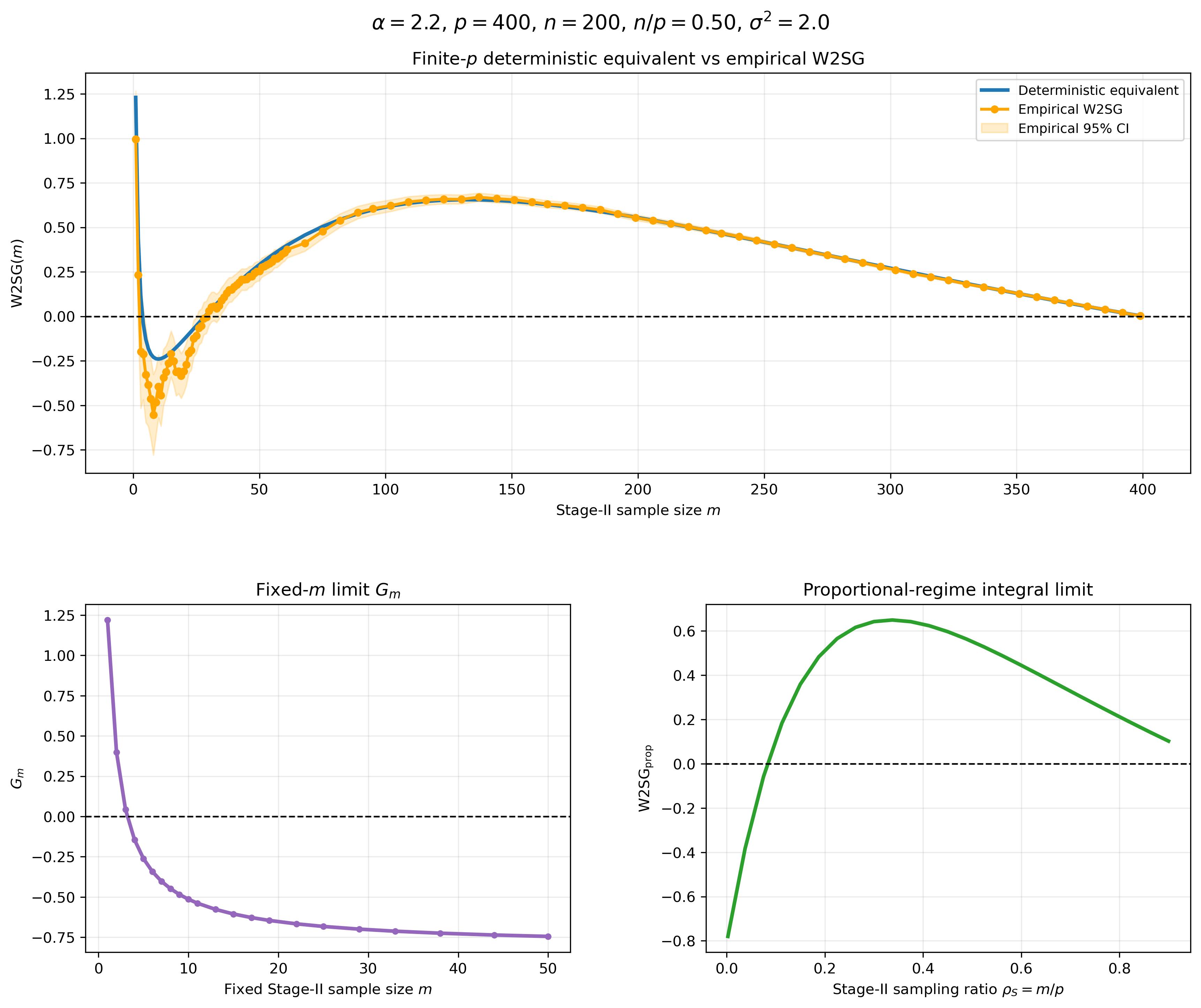}
        \caption{$\sigma^2=2$.}
        \label{fig:d}
    \end{subfigure}
    \caption{\textbf{Case I with $\alpha>2$.}
    Numerical validation at $\alpha=2.2$, $p=400$, and $n=200$ using
    $1000$ Monte Carlo trials.  The panel layout is the same as in
    Figure~\ref{fig:combined}: finite-$p$ deterministic equivalent versus
    empirical W2SG on top, fixed-$m$ limit on the bottom left, and
    proportional-regime limit on the bottom right.}
    \label{fig:combined_alpha_gt_2}
\end{figure}

\paragraph{Regime $1<\alpha<2$.}
Figure~\ref{fig:combined} considers $\alpha=1.8$ and contrasts two noise
levels.  At $\sigma^2=0.5$ (Figure~\ref{fig:a}), the finite-$p$
deterministic equivalent is negative for the smallest pseudo-sample sizes
and crosses zero at approximately $m=15.25$; the empirical curve crosses at
approximately $m=17.39$.  The fixed-sample limit exhibits the same eventual
sign change, but at the larger value $m\approx24.47$.  By contrast, the
proportional-regime limit is positive throughout the displayed interval of
fixed sampling ratios $m/p>0$.  These observations illustrate the distinction
between the two asymptotic scalings in
Theorem~\ref{thm:case-I-phase-diagram}: small fixed $m$ can remain unfavorable
even though every fixed positive proportional ratio is favorable when
$\alpha\le2$. At the higher noise level $\sigma^2=2$
(Figure~\ref{fig:b}), the gain is positive throughout all three displayed
regimes.  The finite-$p$ curve is nevertheless strongly non-monotone: it
decreases sharply at very small $m$, rises over an intermediate range, and
then decreases toward zero as $m$ approaches $p$.  The fixed-$m$ and
proportional limits display the same broad spectral trade-off at their
respective scales.  In both noise settings, the finite-$p$ deterministic
equivalent follows the Monte Carlo mean closely, including through the
small-$m$ transition and the non-monotone region.

\begin{figure*}[t]
    \centering

    \begin{subfigure}{0.49\textwidth}
        \centering
        \includegraphics[width=\linewidth]{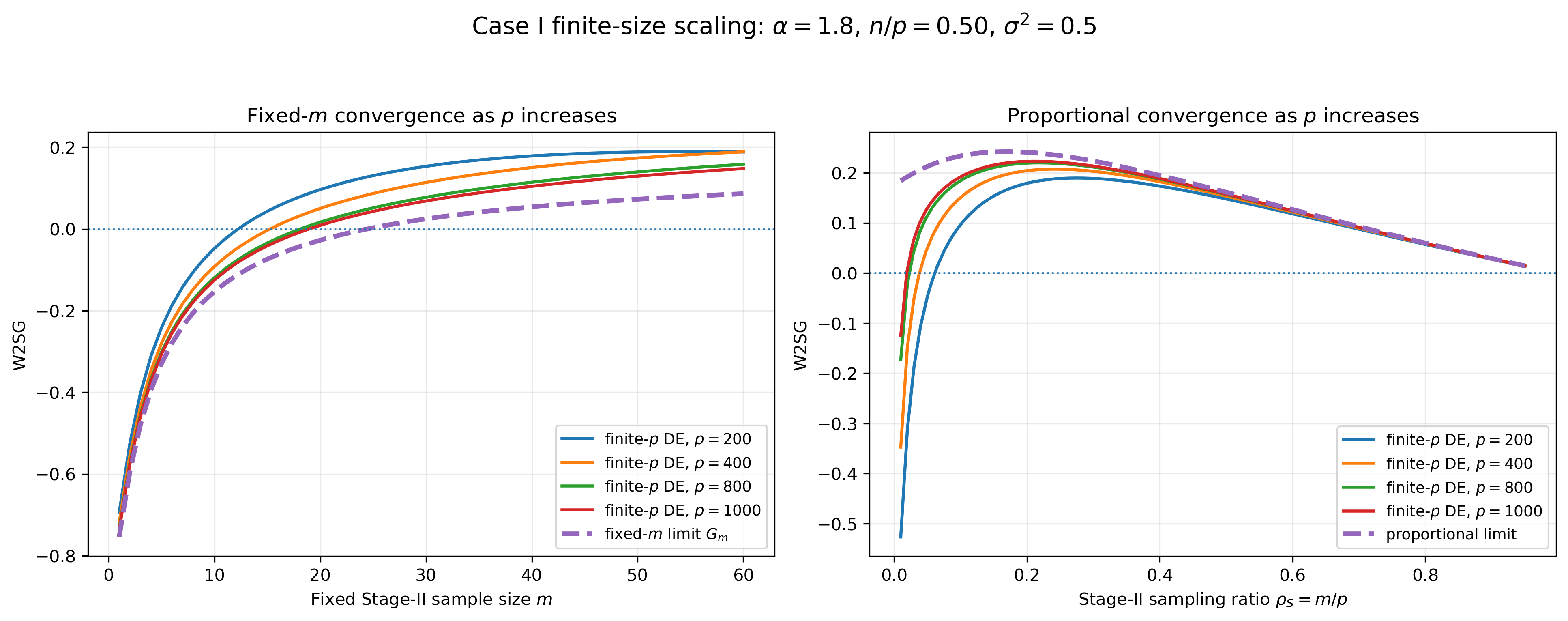}
        \caption{$\alpha=1.8$, $\sigma^2=0.5$.}
        \label{fig:caseI-p-scaling-a18-s05}
    \end{subfigure}
    \hfill
    \begin{subfigure}{0.49\textwidth}
        \centering
        \includegraphics[width=\linewidth]{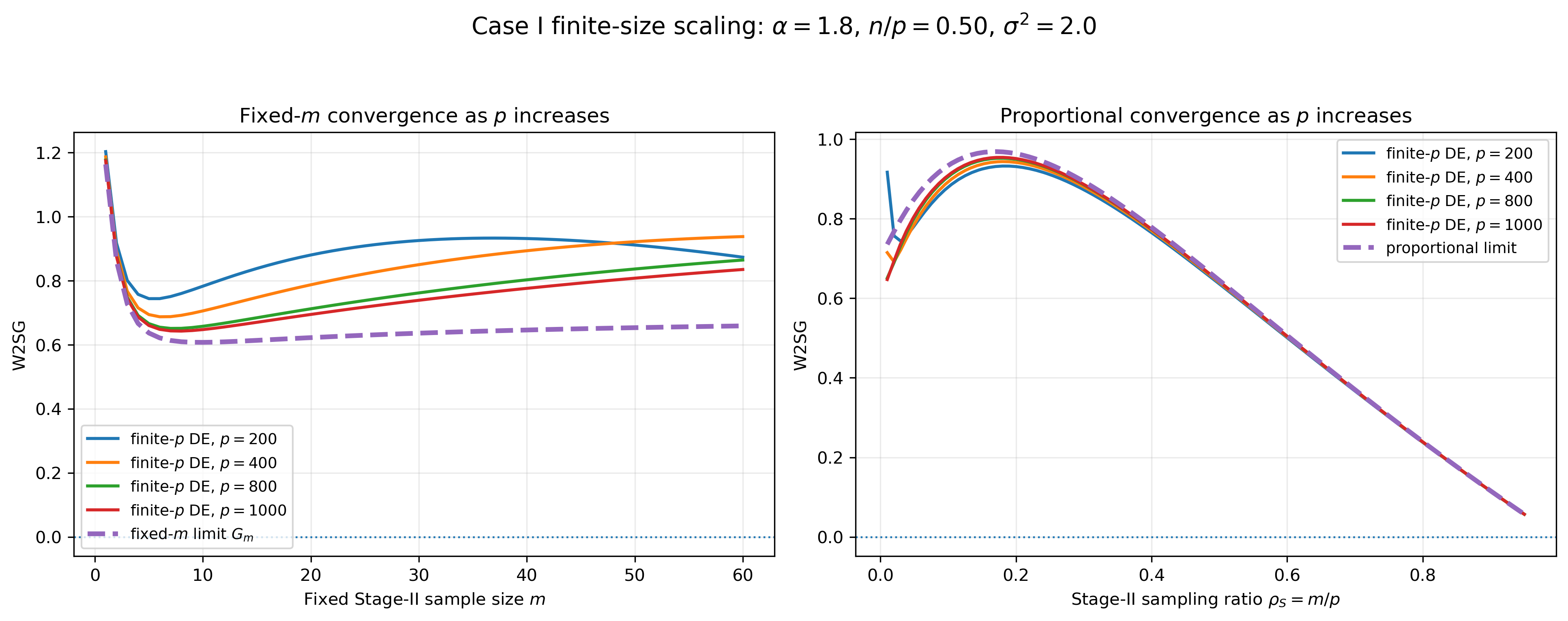}
        \caption{$\alpha=1.8$, $\sigma^2=2$.}
        \label{fig:caseI-p-scaling-a18-s2}
    \end{subfigure}

    \vspace{0.6em}

    \begin{subfigure}{0.49\textwidth}
        \centering
        \includegraphics[width=\linewidth]{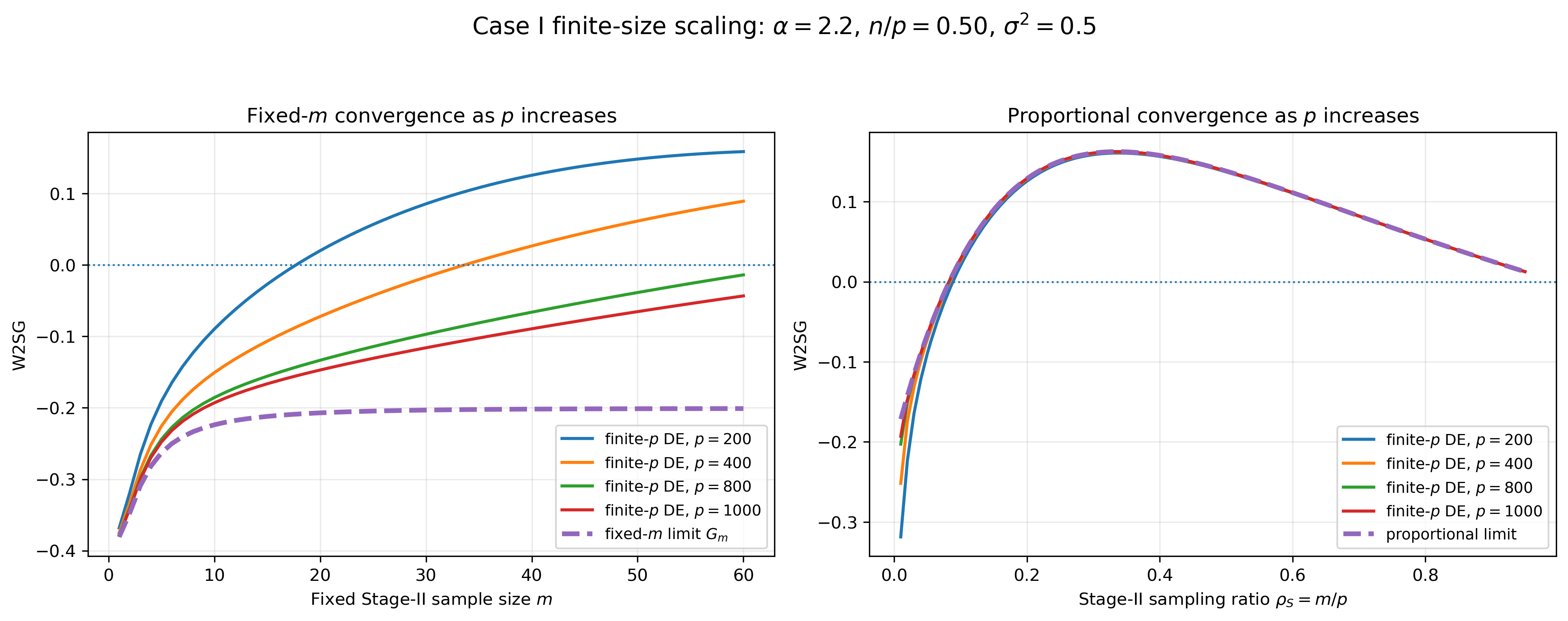}
        \caption{$\alpha=2.2$, $\sigma^2=0.5$.}
        \label{fig:caseI-p-scaling-a22-s05}
    \end{subfigure}
    \hfill
    \begin{subfigure}{0.49\textwidth}
        \centering
        \includegraphics[width=\linewidth]{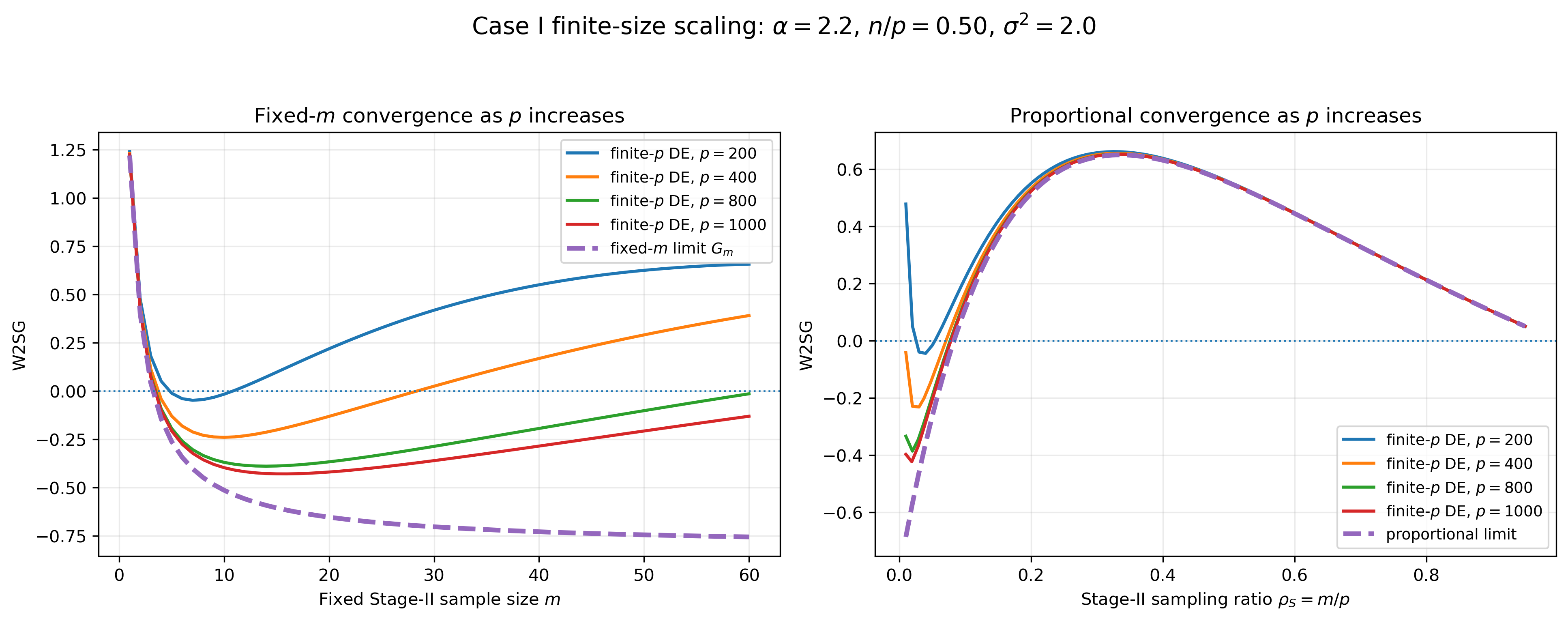}
        \caption{$\alpha=2.2$, $\sigma^2=2$.}
        \label{fig:caseI-p-scaling-a22-s2}
    \end{subfigure}

    \caption{\textbf{Finite-size convergence in Case I.}
    Finite-$p$ deterministic equivalents for
    $p\in\{200,400,800,1000\}$ with $n/p=0.5$, compared with the two
    asymptotic limits in Theorem~\ref{thm:case-I-phase-diagram}.  In each
    subfigure, the left panel keeps the absolute Stage-II sample size $m$
    fixed and compares $\mathsf{W2SG}^{\mathrm{DE}}_p(m)$ with the fixed-$m$
    limit $G_m$ (purple dashed curve).  The right panel keeps the sampling
    ratio $\rho_m=m/p$ fixed and compares the finite-$p$ curves with the
    proportional-regime limit (purple dashed curve).  The four subfigures
    use the same $(\alpha,\sigma^2)$ pairs as the main Case~I experiments.}
    \label{fig:caseI-p-scaling}
\end{figure*}

\paragraph{Regime $\alpha>2$.}
Figure~\ref{fig:combined_alpha_gt_2} repeats the comparison at $\alpha=2.2$.
At $\sigma^2=0.5$ (Figure~\ref{fig:c}), the fixed-$m$ limit $G_m$ remains
negative throughout the displayed range, whereas the proportional limit is
negative for small sampling ratios and becomes positive only after a critical
ratio $\rho_{m,\mathrm{crit}}\approx0.0840$.  At $p=400$, this predicts a
critical pseudo-sample size $p\rho_{m,\mathrm{crit}}\approx33.6$, essentially
coinciding with the finite-$p$ deterministic-equivalent crossing
$m\approx33.64$.  The empirical crossing, $m\approx38.36$, is slightly shifted
at this finite dimension but exhibits the same transition.  This is the
clearest numerical example of the qualitative disagreement between the
fixed-sample and proportional regimes for $\alpha>2$: a fixed number of
pseudo-samples remains unfavorable, while a sufficiently large
$\Theta(p)$-scale pseudo-sample size yields positive W2SG. At $\sigma^2=2$
(Figure~\ref{fig:d}), the phase structure is richer.  The finite-$p$
deterministic equivalent is positive at the very smallest $m$, becomes
negative over an intermediate small-$m$ interval, and becomes positive again,
with zero crossings at approximately $m=3.73$ and $m=28.38$.  The empirical
curve shows the same disconnected structure, with crossings near $m=2.54$
and $m=29.14$.  The fixed-$m$ limit likewise changes sign near $m=3.23$ and
is negative thereafter over the displayed range, whereas the proportional
limit again crosses from negative to positive near $\rho_m\approx0.084$.
This is the finite-dimensional manifestation of the disconnected
positive-W2SG regions discussed in
Remark~\ref{rem:case-I-disconnected}: improvement can occur at very small
$m$ and again at proportional scale, while an intermediate range remains
unfavorable.

% -----------------------------------------------------------------------------
% Case I finite-size scaling
% -----------------------------------------------------------------------------

\subsubsection{Finite-Size Scaling}
\label{subsubsec:caseI-scaling}

\begin{figure}[t]
    \centering
    \includegraphics[width=\linewidth]{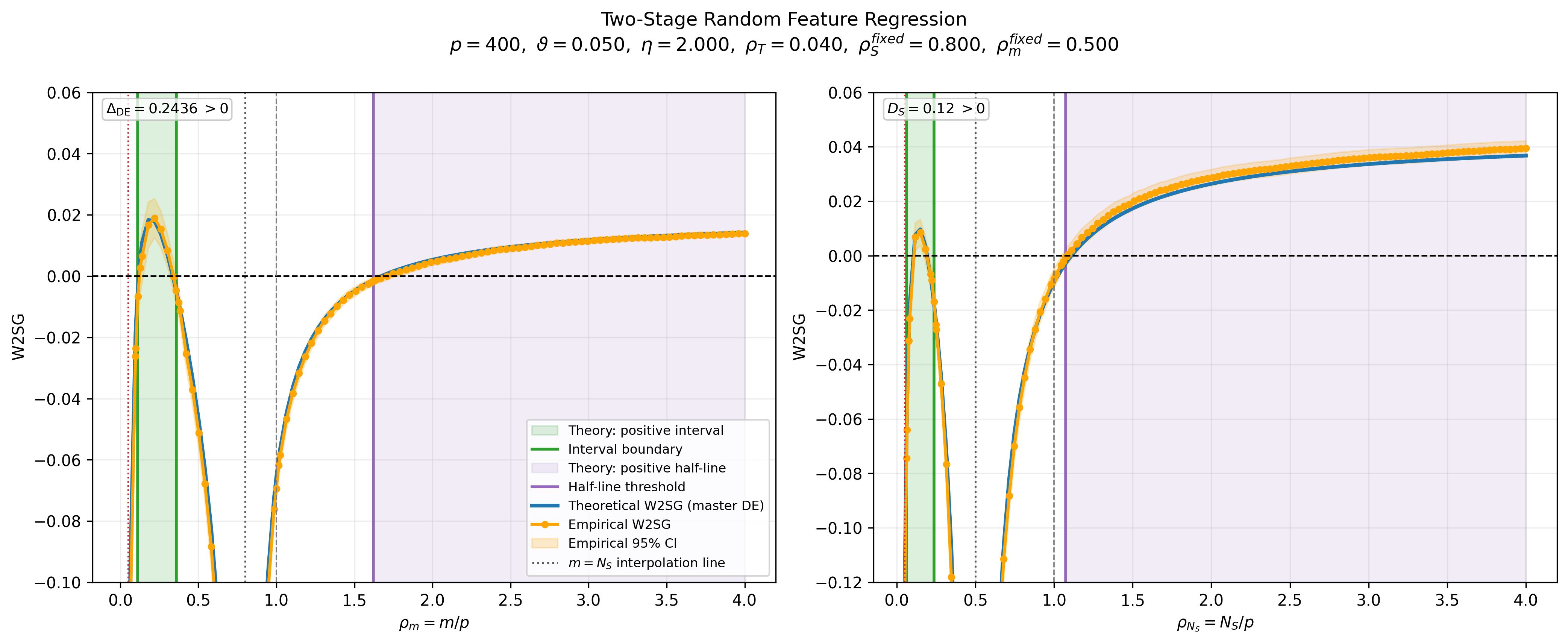}
    \caption{\textbf{Two-stage random-feature regression.} Empirical and
    deterministic-equivalent W2SG for $p=400$, $\vartheta=0.05$, $\eta=2$,
    $\rho_T=0.04$, over $200$ Monte Carlo trials. Left: W2SG versus
    $\rho_m=m/p$ at fixed $\rho_S=0.8$. Right: W2SG versus $\rho_S=N_S/p$
    at fixed $\rho_m=0.5$. Blue: finite-$p$ master deterministic
    equivalent. Orange: empirical mean with $95\%$ confidence band. Green
    lines: predicted bounded positive-W2SG interval. Purple line:
    threshold of the predicted positive half-line. Gray dotted line: the
    interpolation boundary $m=N_S$. Red dotted line: the signal threshold
    $\rho=\vartheta$. Gray dashed line: the ambient-rank boundary $\rho=1$.}
    \label{fig:caseII-rf-phase-diagram}
\end{figure}

Figure~\ref{fig:caseI-p-scaling} checks directly that the two asymptotic
formulas arise from two different large-$p$ scalings of the same finite-$p$
deterministic equivalent.  In the left panels, $m$ is held fixed as $p$
increases.  The finite-$p$ curves move toward the fixed-sample limit $G_m$;
for $\alpha=1.8$ and $\sigma^2=0.5$, this is visible in the rightward motion
of the finite-$p$ zero crossing toward the crossing of $G_m$.  For
$\alpha=2.2$ and $\sigma^2=0.5$, increasing $p$ instead drives the fixed-$m$
curve downward toward the negative limiting branch, reinforcing that fixed
$m$ does not access the positive proportional regime. In the right panels,
$m/p$ is held fixed.  Away from the small-$\rho_m$ boundary, the finite-$p$
curves collapse rapidly onto the proportional-limit prediction as $p$
increases.  For $\alpha=1.8$, the limiting proportional curve remains
positive for both noise levels, in agreement with
Theorem~\ref{thm:case-I-phase-diagram}(a)(i).  For $\alpha=2.2$, the curves
converge to the proportional phase transition predicted by
Theorem~\ref{thm:case-I-phase-diagram}(a)(ii).  The $\sigma^2=2$ setting has
the largest finite-size corrections near $\rho_m=0$, precisely where the
proportional scaling approaches the fixed-$m$/sparse boundary. Taken
together, the four scaling experiments suggest that the asymptotic behavior
of the deterministic equivalent depends critically on how the Stage-II
sample size $m$ scales with the ambient dimension $p$.  In particular,
fixed-$m$ and proportional-$m$ limits capture different high-dimensional
regimes, highlighting the need to analyze distinct scalings of $m$ when
characterizing W2SG asymptotically.

\subsection{Case II: Random-Feature Regression}
\label{subsec:caseII}

We next examine the random-feature setting of
Theorem~\ref{thm:caseII} and Corollary~\ref{cor:caseII}, where the
Stage-II learner is controlled by both the pseudo-sample size $m$ and the
student feature width $N_S$.  We first validate the predicted phase
diagram, then isolate the two bottleneck mechanisms through a
branch-separation experiment, and finally consider a negative-control
setting in which the theory predicts no admissible positive-W2SG region.

\begin{figure*}[t]
    \centering
    \includegraphics[width=\textwidth]
    {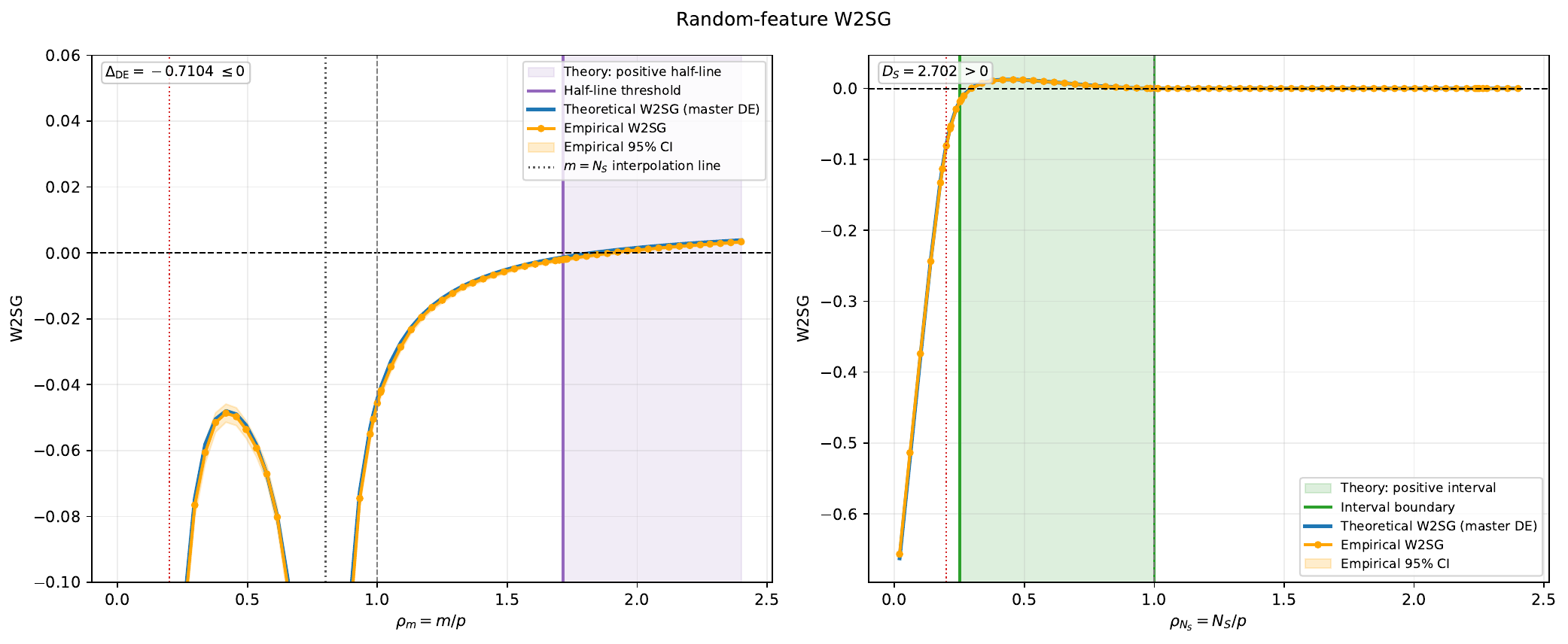}
    \caption{
    \textbf{Case II branch separation.}
    A configuration in which the two scans realize different components
    of the phase diagram. Left: the bounded positive interval in
    $\rho_m$ is absent, while the positive half-line remains. Right: a
    bounded positive interval in $\rho_S$ survives. The plotting
    conventions are the same as in
    Figure~\ref{fig:caseII-rf-phase-diagram}. Results use $500$ Monte
    Carlo trials.
    }
    \label{fig:caseII-branch-separation}
\end{figure*}

\subsubsection{Phase Diagram}
\label{subsubsec:caseII-phase}

Figure~\ref{fig:caseII-rf-phase-diagram} tests both branches of
Theorem~\ref{thm:caseII} and Corollary~\ref{cor:caseII} at $p=400$,
$\vartheta=0.05$, $\eta=2$, $\rho_T=0.04$. Sweeping $\rho_m$ at fixed
$\rho_S=0.8$ (left panel) recovers the sample-bottleneck branch: a bounded
positive-W2SG interval, exactly as in
Theorem~\ref{thm:caseII}(ii). Sweeping $\rho_S$ at fixed $\rho_m=0.5$
(right panel) recovers the feature-bottleneck branch instead: a bounded
interval in $\rho_S$ per Corollary~\ref{cor:caseII}(i), with positivity
resuming as a half-line once $\rho_S$ grows large enough to make $m$ the
active bottleneck again. In both panels, the empirical curve tracks the
finite-$p$ master deterministic equivalent closely away from the internal
interpolation boundary $m=N_S$, where the risk is singular and
finite-dimensional fluctuations are strongly amplified, exactly as
predicted by the exact interpolation classification of
Appendix~\ref{app:B}.

\subsubsection{Separation of the Two Phase Mechanisms}
\label{subsubsec:caseII-branch-separation}

Figure~\ref{fig:caseII-branch-separation} separates the two phase
mechanisms by choosing a configuration in which one bounded interval
disappears while another branch survives.  The headline setting displays
several positive branches simultaneously. To verify that these branches
are genuinely distinct rather than different representations of the same
phenomenon, we take $p=400$, $\vartheta=0.2$, $\eta=2$, and
$\rho_T=0.16$, with $\rho_S=0.8$ in the $\rho_m$-scan and
$\rho_m=2.25$ in the $\rho_S$-scan.

\begin{figure*}[t]
    \centering
    \includegraphics[width=\textwidth]
    {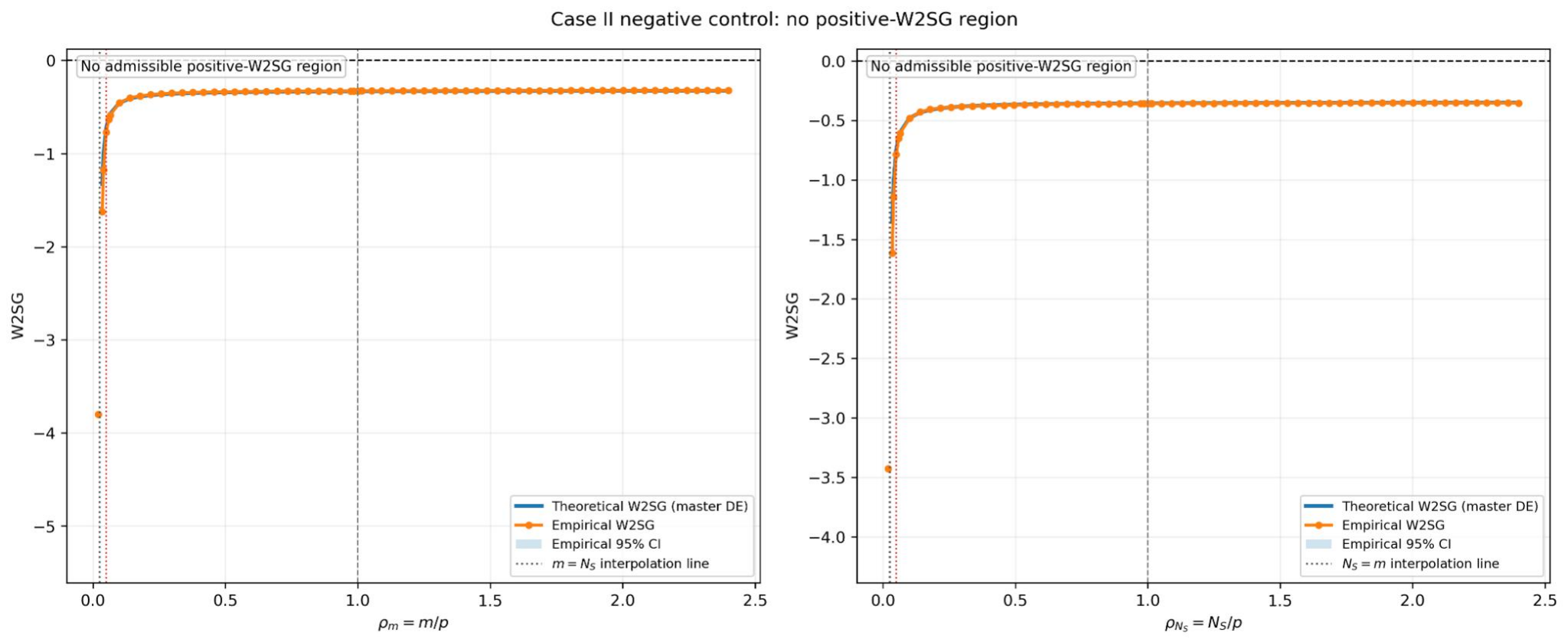}
    \caption{
    \textbf{Case II negative control.}
    A configuration in which the fixed Stage-II resource lies below the
    signal threshold, so neither scan admits an admissible positive-W2SG
    region. Left: W2SG as a function of $\rho_m=m/p$ at fixed $\rho_S$.
    Right: W2SG as a function of $\rho_S=N_S/p$ at fixed $\rho_m$.
    In both scans the finite-$p$ master deterministic equivalent remains
    strictly negative throughout the displayed range and closely follows
    the empirical Monte Carlo mean. The gray dotted line denotes the
    interpolation boundary $m=N_S$ and the red dotted line denotes the
    signal threshold $\rho=\vartheta$. Shaded bands are empirical $95\%$
    confidence intervals.
    }
    \label{fig:caseII-negative-control}
\end{figure*}

For the left scan in Figure~\ref{fig:caseII-branch-separation}, the
interval discriminant satisfies $\Delta_{\mathrm{DE}}<0$, so no bounded
positive-$\rho_m$ interval exists. Nevertheless, the half-line branch
remains admissible, with the asymptotic threshold at
$\rho_m\simeq1.714$. The finite-$p$ deterministic-equivalent and empirical
first zero crossings occur at approximately
$\rho_m^{\mathrm{DE}}\simeq1.827$ and
$\rho_m^{\mathrm{emp}}\simeq1.902$, respectively. In contrast, the right
scan has $D_S>0$ and retains a bounded interval
$\rho_S\in(0.252,1)$. The corresponding finite-$p$ and empirical
transitions, approximately $0.294$ and $0.296$, are nearly coincident.
This experiment therefore isolates the two geometric components of the
phase diagram: the bounded interval can disappear without eliminating the
positive half-line, while the complementary scan can simultaneously retain
its bounded positive branch.

\subsubsection{Negative Control}
\label{subsubsec:caseII-negative-control}

Figure~\ref{fig:caseII-negative-control} provides a negative control in
which the branch constraints exclude all positive-W2SG regions. We retain
$p=400$, $\vartheta=0.05$, $\eta=2$, and $\rho_T=0.04$, but fix both
Stage-II resources below the signal threshold,
$\rho_S^{\mathrm{fixed}}=\rho_m^{\mathrm{fixed}}=0.025<\vartheta$.

Although the raw quadratic discriminants are positive in the configuration
of Figure~\ref{fig:caseII-negative-control}, their formal roots lie outside
the admissible bottleneck branches once the signal-threshold and
active-resource constraints are enforced. Consequently, neither scan
contains a positive-W2SG interval or half-line. Consistent with this
prediction, both the deterministic equivalent and the empirical Monte
Carlo curves remain strictly below zero throughout the scanned domains.
This negative control is useful because it shows that the phase theory
does not merely fit observed sign changes: it also predicts settings in
which improvement is absent altogether.
\section{Conclusion}
\label{sec:conclusion}

We showed that weak-to-strong generalization can arise from a single, simple mechanism --- a finite resource shared between two training stages, constrained to an intermediate range --- without early stopping, without explicit regularization, and, in Case~I, without any capacity advantage for the student at all. In ridgeless linear regression with identical teacher and student hypothesis classes, the pseudo-sample size $m$ alone determines whether the student improves on its teacher, and we gave the explicit critical values of $m$, in both the fixed-sample and proportional regimes, that separate improvement from its absence. In the random-feature setting, where the student genuinely has more capacity than the teacher, the same trade-off persists but can be governed by either of two resources, pseudo-sample size or student width, and we identified an active-bottleneck principle unifying both regimes: whichever resource is scarcer controls how much teacher error is filtered out, while the other resource, once identified as non-binding, can be increased freely.

Several directions remain open. Our power-law and two-block covariance models isolate the mechanism cleanly, but we only conjecture, rather than prove, that a comparably graded non-isotropic spectrum produces the same qualitative phase transitions beyond these specific families. The boundary cases excluded from Assumption~\ref{ass:positive-fraction} in Case~II, and the regime $\alpha>1+\sqrt2$ in Case~I, are only partially resolved. More broadly, we view the active-bottleneck principle as plausibly extending beyond the two linear models studied here --- to nonlinear students, to other resources such as training time, and, ultimately, to the large-scale settings where W2SG was first observed and where recent empirical work already points to the same non-monotonic dependence on pseudo-sample size that our theory predicts.

%%%%%%%%%%%%%%%%%%%%%%%%%%%%%%%%%%%%%%%%%%%%%%%%%%%%%%%%%%%%%%%%%%%%%%%%%%%%%%%%%%%%%%%%%%%%%%%%%%%%%%%%%%%%%%%%%%%%%%%%%%%%%%%%%%%%%%%%%%%%%%%%%%%%%%%%%%%%%%%%%%%%%%%%%%%%%%%%%%%%%%%%%%%%%%%%%%%%%%%%%%%%%%%%%%%%%%%%%%%%%%%%%%%%%%%%%%%%%%%%%%%%%%%%%%%%%%%%%%%%%%%%%%%%%%%%%%%%%%%%%%%%%%%%%%%%%%%%%%%%%%%%%%%%%%%%%%%%%%%%%%%%%%%%%%%%%

\section*{AI Use Statement}
Generative AI tools were used to assist with literature discovery,
organization and language editing of the abstract, introduction, and
related-work discussion, and to provide feedback on manuscript structure
and experimental design. All citations, mathematical claims, proofs,
code, and reported results were independently verified by the authors.
The authors take responsibility for the final content.

\bibliographystyle{plainnat}
\bibliography{references}

%%%%%%%%%%%%%%%%%%%%%%%%%%%%%%%%%%%%%%%%%%%%%%%%%%%%%%%%%%%%%%%%%%%%%%%%%%%%%%%%%%%%%%%%%%%%%%%%%%%%%%%%%%%%%%%%%%%%%%%%%%%%%%%%%%%%%%%%%%%%%%%%%%%%%%%%%%%%%%%%%%%%%%%%%%%%%%%%%%%%%%%%%%%%%%%%%%%%%%%%%%%%%%%%%%%%%%%%%%%%%%%%%%%%%%%%%%%%%%%%%%%%%%%%%%%%%%%%%%%%%%%%%%%%%%%%%%%%%%%%%%%%%%%%%%%%%%%%%%%%%%%%%%%%%%%%%%%%%%%%%%%%%%%%%%%%%

\appendix
\section{Two-Stage Minimum-Norm Interpolation Theory}
\label{app:A}
This section develops the theoretical analysis of the two-stage
minimum-norm interpolation model used in
Section~\ref{sec:case-I-ridgeless}.  In this setting, the teacher and
student operate in the same ambient linear hypothesis space, but are
trained on different samples: Stage~I fits a ridgeless estimator to
noisy labeled data, while Stage~II fits a second ridgeless estimator to
fresh covariates pseudo-labeled by the Stage-I teacher.  The central
question is whether this second interpolation step can improve
population prediction risk despite receiving no information beyond the
teacher-generated labels.

We first develop the deterministic-equivalent description of the
two-stage procedure in the proportional high-dimensional regime.  After
recording the resolvent deterministic equivalence used throughout the
analysis, we derive the Stage-I prediction-risk formula and then analyze
Stage~II conditionally on the realized teacher.  This separates the
student risk into the Stage-I error carried into the second stage, the
additional error generated by Stage-II interpolation, and the cross
term produced by the interaction between the two stages.  A weighted
ridgeless pseudoinverse deterministic equivalent then allows the
remaining Stage-I randomness to be evaluated.  Under the additional
isotropic random-teacher assumption, these calculations yield a fully
deterministic expression for the Stage-I-to-Stage-II risk improvement
in terms of the population spectrum and the effective Stage-I and
Stage-II resolution parameters.

We then specialize the deterministic-equivalent formula to a
power-law covariance spectrum and study its asymptotic sign.  The
fixed-point equations for the effective resolutions are reduced to
limiting integral equations, and the trace terms in the W2SG gain are
analyzed separately before being combined into a scalar sign
criterion.  This leads to the proportional W2SG phase diagram and
reveals a qualitative change at the spectral-decay exponent
\(\alpha=2\): the regime \(1<\alpha\le2\) exhibits positive
deterministic-equivalent gain throughout the nontrivial proportional
Stage-II range, whereas for \(\alpha>2\) the gain can change sign as
the Stage-II sampling ratio varies.  The subsequent analysis
characterizes the existence and, where established, uniqueness of
these proportional transitions.

Finally, we move beyond fixed proportional Stage-II sampling.  We
first consider a sparse-growing Stage-II sample size and derive the
corresponding deterministic-equivalent limit.  We then study the
ultra-sparse regime in which the Stage-II sample count is fixed while
the ambient dimension diverges.  Because this regime lies outside the
standard proportional deterministic-equivalence theorem, its analysis
is stated under an explicit continuation assumption.  Within that
framework, we derive the fixed-\(m\) gain, identify noise- and
spectral-decay-dependent sign transitions, obtain critical and
large-\(m\) asymptotics, and give explicit conditions for W2SG and
non-W2SG behavior.  Together, these results describe how the benefit
of the second interpolation stage depends on spectral decay, label
noise, and the amount of pseudo-labeled Stage-II data across
proportional, sparse-growing, and fixed-sample regimes.

\subsection{Deterministic Equivalence}

We begin by stating the deterministic-equivalence results that form
the basis of our analysis
\cite{atanasov2026scaling,atanasov2025two}.
These results apply in proportional high-dimensional asymptotic
regimes, where the sample size and feature dimension grow at
comparable rates. Any additional assumptions required in the
subsequent derivations are stated explicitly.
\begin{lemma}[Resolvent deterministic equivalence]
\label{lem:strong-deterministic-equivalence}

We adopt the notation and definition of \emph{strong deterministic
equivalence} used by \cite[Sec.~II.E.3]{atanasov2026scaling}.
In particular, the symbol $\simeq$ below has the same test-matrix
meaning as in that reference.

Let $\{A_p\}_{p\ge1}$ be a sequence of random $p\times p$ matrices and
let $\{B_p\}_{p\ge1}$ be a sequence of deterministic $p\times p$
matrices. Following \cite[Sec.~II.E.3]{atanasov2026scaling}, we write
\[
A_p \simeq B_p
\]
if, for every deterministic test matrix $M_p$ satisfying
\[
\sup_p \|M_p\|_{\mathrm{op}}<\infty,
\]
one has
\[
\frac{\operatorname{tr}(A_pM_p)}
     {\operatorname{tr}(B_pM_p)}
\xrightarrow[p\to\infty]{\mathbb P}1.
\]
The same notion is used in
\cite[Eqs.~(18)--(20)]{atanasov2026scaling} to obtain strong
deterministic equivalents for empirical-covariance resolvents.
For the sample covariance matrix, this framework yields the following deterministic equivalents in the proportional high-dimensional regime:
\[
(\widehat{\Sigma}_p+\lambda I)^{-1}
\simeq
\frac{\tau}{\lambda}
(\Sigma_p+\tau I)^{-1}.
\tag{DE1}
\label{eq:resolvent-DE}
\]

\[
\widehat{\Sigma}_p(\widehat{\Sigma}_p+\lambda I)^{-1}
\simeq
\Sigma_p(\Sigma_p+\tau I)^{-1}.
\tag{DE2}
\label{eq:shrinkage-DE}
\]
where
\[
x_i \sim \mathcal{N}(0,\Sigma_p),
\qquad
\widehat{\Sigma}_p
=
\frac{1}{n_p}\sum_{i=1}^{n_p} x_i x_i^\top,
\qquad
p,n_p \to \infty,
\qquad
\frac{p}{n_p}\to\phi\in(0,\infty).
\]
Here $\lambda>0$ is the ridge regularization parameter and
$\tau=\tau(\lambda)$ denotes the corresponding renormalized ridge
parameter in the proportional regime. Its precise characterization
will be introduced later in the derivation. The ridgeless regime is
obtained, when appropriate, by subsequently taking the limit
$\lambda\downarrow0$. In the overparameterized regime $\phi>1$,
$\tau(\lambda)$ may remain nonzero in this limit, whereas in the
underparameterized regime $\phi<1$, one has $\tau(\lambda)\to0$.
\end{lemma}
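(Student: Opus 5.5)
The plan is to reduce both displays to a single self-consistent identity for the resolvent $R(\lambda):=(\widehat\Sigma_p+\lambda I)^{-1}$ and then import the concentration machinery of \cite[Sec.~II.E.3]{atanasov2026scaling} rather than reproving it. First I will pin down the renormalized ridge. Because the text defers its characterization, I will define $\tau=\tau(\lambda)$ as the unique positive root of
\[
\lambda=\tau\Bigl(1-\tfrac{1}{n_p}\operatorname{Tr}\bigl[\Sigma_p(\Sigma_p+\tau I)^{-1}\bigr]\Bigr).
\]
Existence and uniqueness follow because the right-hand side is continuous and strictly increasing in $\tau$. It tends to $0$ as $\tau\downarrow0$ when $\phi<1$, and it is negative for small $\tau$ when $\phi>1$. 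It grows like $\tau$ as $\tau\to\infty$.

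Next I will derive \eqref{eq:resolvent-DE} by the standard leave-one-out argument. I start from the identity $I=R(\lambda)(\widehat\Sigma_p+\lambda I)=\lambda R+\frac1{n_p}\sum_i Rx_ix_i^\top$. Applying Sherman--Morrison, I write $Rx_i=R_{-i}x_i/(1+\frac1{n_p}x_i^\top R_{-i}x_i)$, where $R_{-i}$ is the resolvent with sample $i$ removed. Gaussian quadratic-form concentration (Hanson--Wright) gives $\frac1{n_p}x_i^\top R_{-i}x_i\approx\frac1{n_p}\operatorname{Tr}(\Sigma_pR)=:\kappa$ uniformly in $i$. The rank-one perturbation bound $\|R-R_{-i}\|$ contributes only $O(1/p)$ to normalized traces against bounded test matrices $M_p$. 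Averaging over $i$ then yields $\lambda R+\frac{1}{1+\kappa}R\Sigma_p\simeq I$, that is, $R\simeq\bigl(\lambda I+\Sigma_p/(1+\kappa)\bigr)^{-1}=\frac{\tau}{\lambda}(\Sigma_p+\tau I)^{-1}$ with $\tau:=\lambda(1+\kappa)$.

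Substituting this back into $\kappa=\frac1{n_p}\operatorname{Tr}(\Sigma_pR)$ reproduces exactly the fixed-point equation above, which closes the loop. The strong (ratio) form for every bounded $M_p$ follows because both $\operatorname{tr}(RM_p)$ and its equivalent are $\Theta(1)$ for $\lambda>0$. Concentration of the trace around its mean, via Lipschitz Gaussian concentration or a martingale bound on $R$, upgrades convergence in expectation to convergence in probability.

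\eqref{eq:shrinkage-DE} is then immediate from the algebraic identity $\widehat\Sigma_pR=I-\lambda R$:
\[
\widehat\Sigma_pR\simeq I-\tau(\Sigma_p+\tau I)^{-1}=\Sigma_p(\Sigma_p+\tau I)^{-1}.
\]
One subtlety is that ratio-type equivalence is not automatically preserved under subtraction. I will handle this by proving the additive version $\operatorname{tr}((A_p-B_p)M_p)\to0$ and noting that $\operatorname{tr}(\Sigma_p(\Sigma_p+\tau)^{-1}M_p)$ stays bounded away from zero for nondegenerate test matrices.

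For the ridgeless statements, I will analyze the fixed-point equation as $\lambda\downarrow0$. When $\phi>1$, $\tau(\lambda)\to\tau_0>0$, where $\tau_0$ solves $\frac1{n_p}\operatorname{Tr}[\Sigma_p(\Sigma_p+\tau_0)^{-1}]=1$. When $\phi<1$, the bracket is bounded below by $1-\phi>0$, which forces $\tau\to0$.

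The main obstacle is the interchange of the limits $\lambda\downarrow0$ and $p\to\infty$. In particular, the Stage-II analysis needs uniformity when $\Sigma_p$ has power-law eigenvalues decaying to $0$, so the smallest eigenvalue is not bounded below. My first attempt will be to establish the leave-one-out error bounds with constants depending on $\lambda$ only through $\tau(\lambda)$, which stays bounded away from zero for $\phi>1$. I will then use monotonicity in $\lambda$ of the traces involved to pass to the limit.
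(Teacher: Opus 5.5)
The paper gives no proof of this lemma. (DE1)--(DE2) are simply imported from \cite{atanasov2026scaling}, where they come from free-probability ($S$-transform) arguments. Your Sherman--Morrison/leave-one-out derivation is the classical Silverstein-type alternative, and it has the advantage of being self-contained. It produces $\bar R=\frac{\tau}{\lambda}(\Sigma_p+\tau I)^{-1}$ with exactly the fixed point \eqref{eq:tau-fp}. (DE2) follows correctly from $\widehat\Sigma_pR=I-\lambda R$, and your analysis of $\tau(\lambda)$ as $\lambda\downarrow0$ is right in both regimes.

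One local fix is needed. The map $f(\tau)=\tau\bigl(1-\frac1{n_p}\operatorname{Tr}[\Sigma_p(\Sigma_p+\tau I)^{-1}]\bigr)$ is \emph{not} increasing on $(0,\infty)$ when $p>n_p$, since $f'(0^+)=1-p/n_p<0$; this matches your own remark that $f$ is negative near $0$. Uniqueness comes from the factorization instead. The second factor is strictly increasing, so on $(\tau_0,\infty)$, where it is positive, $f$ is a product of two positive increasing functions and runs from $0$ to $\infty$. On $(0,\tau_0]$ one has $f\le0$.

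The genuine gap is the passage to the ratio form. Your claim that $\operatorname{tr}(RM_p)$ and its equivalent are $\Theta(1)$ for every bounded $M_p$ is false.

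\begin{itemize}
\item For sign-indefinite $M_p$ with $\operatorname{tr}(\bar RM_p)=0$, the ratio is not even defined. The statement can therefore only hold additively, or for a restricted class of test matrices.
\item More seriously, take $M_p=vv^\top$ with $\|v\|_2=1$. This is exactly the kind of test matrix the paper later feeds into this lemma ($M_p=\theta^\star\theta^{\star\top}$ in the Case~II teacher analysis). Then $\operatorname{tr}(RM_p)=p^{-1}v^\top Rv=O(1/p)$. That is the same size as the normalized-trace errors you budget for: a generic $O(1/p)$ per rank-one replacement, plus fluctuations controlled only at the normalized-trace scale. So nothing follows about the ratio.
\end{itemize}

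What you need is an anisotropic version of the argument:
\begin{itemize}
\item Run the leave-one-out identity on unnormalized quadratic forms $v^\top(R-\bar R)w$. There the denominator-fluctuation terms are $O(n_p^{-1/2})$ by Cauchy--Schwarz and Hanson--Wright, and the rank-one corrections are $O(n_p^{-1})$.
\item Then use Gaussian Lipschitz concentration of $v^\top Rv$, together with the lower bound $v^\top\bar Rv\ge\tau/\bigl(\lambda(\|\Sigma_p\|_{\mathrm{op}}+\tau)\bigr)$.
\item For (DE2), the ratio transfer also requires $v^\top\Sigma_p(\Sigma_p+\tau I)^{-1}v$ to stay bounded below. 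This fails as soon as $\Sigma_p$ has eigenvalues tending to zero.
\end{itemize}

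That last point also undercuts your plan for the ridgeless limit. For the power-law spectra you single out, $\tau_0$ is \emph{not} bounded away from zero. The fixed point forces $\tau_{T,p}\sim s_Tp^{-\alpha}$, cf.\ \eqref{eq:tauj-asymptotic}, so constants depending on $1/\tau$ blow up polynomially in $p$. The problem is present even at fixed $\lambda>0$: since $\tau>\lambda$ and $\operatorname{Tr}\Sigma_p=O(1)$, you get $\operatorname{tr}[\Sigma_p(\Sigma_p+\tau I)^{-1}]\le\operatorname{Tr}(\Sigma_p)/(p\lambda)=O(1/p)$. Power-law covariances therefore sit outside the bounded-spectrum, normalized-trace framework, unless one rescales to $p^\alpha\Sigma_p$, whose operator norm is unbounded. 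This goes beyond what the lemma itself asserts, but the interchange of $\lambda\downarrow0$ and $p\to\infty$ cannot be obtained the way you propose.
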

We now introduce the two-stage ridgeless linear-regression model with
Gaussian covariates that will be used throughout the subsequent analysis.
\subsection{Two-Stage Linear Regression with Gaussian Covariates}

Let
\begin{itemize}
  \item $n$ be the number of data points in Stage~I,
  \item $m$ be the number of data points in Stage~II,
  \item $p$ be the dimension of the feature space.
\end{itemize}

We consider the linear model
\begin{equation}
y = X\beta^\star + \varepsilon,
\end{equation}
where $X\in\mathbb{R}^{n\times p}$ is the design matrix,
$\beta^\star\in\mathbb{R}^p$ is the true parameter vector, and
$\varepsilon\in\mathbb{R}^n$ is the label-noise vector.

For the time being, $\beta^\star$ is treated as deterministic; later,
when a random-teacher model is required, the corresponding distributional
assumptions will be stated explicitly.

The rows $x_i^\top$ of $X$ satisfy
$
x_i
\stackrel{\mathrm{i.i.d.}}{\sim}
\mathcal N(0,\Sigma),
$
where $\Sigma\in\mathbb R^{p\times p}$ is the population covariance
matrix, assumed positive definite, $\Sigma\succ0$. The noise components
are independent of the design and satisfy
$
\varepsilon_i
\stackrel{\mathrm{i.i.d.}}{\sim}
\mathcal N(0,\sigma^2).
$
For an independent test covariate $x\sim\mathcal N(0,\Sigma)$ and any
$\widehat\beta\in\mathbb R^p$, we define the population prediction risk
by
\begin{align}
\mathcal R(h_{\widehat\beta})
:=&
\mathbb E_x
\left[
\bigl(x^\top\widehat\beta-x^\top\beta^\star\bigr)^2
\right]
=
(\widehat\beta-\beta^\star)^\top
\Sigma
(\widehat\beta-\beta^\star)
=
\left\|
\Sigma^{1/2}
(\widehat\beta-\beta^\star)
\right\|_2^2 .
\label{eq:population-risk-stage1}
\end{align}
\subsection{Stage I: Ridgeless Regression}
\paragraph{Stage-I ridgeless regression.}
In Stage~I, we fit the ridgeless least-squares estimator
\begin{equation}
\widehat{\beta}_T
:=
X^\dagger y,
\label{eq:stage1-ridgeless}
\end{equation}
where $X^\dagger$ denotes the Moore--Penrose pseudoinverse of $X$.

For Gaussian covariates, the design matrix has maximal rank almost
surely. Therefore, when $n\ge p$,
\begin{equation}
\widehat{\beta}_T
=
(X^\top X)^{-1}X^\top y,
\end{equation}
which is the ordinary least-squares estimator. When $p>n$,
\begin{equation}
\widehat{\beta}_T
=
X^\top(XX^\top)^{-1}y,
\end{equation}
which is the unique minimum-Euclidean-norm solution of
$
X\beta=y.$
Thus, in the overparameterized regime $p>n$, the ridgeless
least-squares estimator coincides with the minimum-norm interpolator.

Define the empirical covariance matrix
$
\widehat{\Sigma}
:=
\frac{1}{n}X^\top X
\in\mathbb{R}^{p\times p}.
$
For fixed $p$, the law of large numbers gives
$\widehat{\Sigma}\to\Sigma$ as $n\to\infty$ under the usual moment
assumptions. In the proportional high-dimensional regime considered
below, however, $\widehat{\Sigma}$ does not in general converge to
$\Sigma$ in operator norm, and its effect must instead be described
through deterministic equivalents.

An equivalent covariance-form representation of
\eqref{eq:stage1-ridgeless} is
\begin{equation}
\widehat{\beta}_T
=
\widehat{\Sigma}^{\dagger}
\left(\frac{1}{n}X^\top y\right),
\label{eq:stage1-ridgeless-closed-form}
\end{equation}
where $\widehat{\Sigma}^{\dagger}$ denotes the Moore--Penrose
pseudoinverse.
A substantial body of recent work has developed both asymptotic and non-asymptotic risk characterizations for the minimum $\ell_2$-norm interpolator, corresponding to ridgeless regression in the overparameterized regime $p \geq n$ \cite{hastie2022surprises,cheng2024dimension,han2026distribution,wu2020optimal,richards2021asymptotics,loureiro2021learning}.
To analyze the Stage-I population risk in the proportional high-dimensional
regime, we use the strong deterministic-equivalence framework introduced in
Lemma~\ref{lem:strong-deterministic-equivalence}, following \cite{atanasov2026scaling}. In particular, the random
resolvents involving $\widehat{\Sigma}$ that appear in the risk calculation
are replaced by their deterministic equivalents in the bounded-test-matrix
sense. Since $\mathcal R(h_{\widehat\beta_T})$ already averages over an
independent test covariate, we further average over the randomness of
the training sample and define
\[
\overline{\mathcal R}_T
:=
\mathbb E_{X,\varepsilon}
\!\left[
\mathcal R(h_{\widehat\beta_T})
\right].
\]
This quantity corresponds to the average generalization error considered in
\cite[Sec.~III.A--B]{atanasov2026scaling}. Applying the deterministic-equivalent
resolvent identities of Lemma~\ref{lem:strong-deterministic-equivalence} yields
\[
\overline{\mathcal R}_T
\overset{\mathrm{DE}}{\simeq}
\overline{\mathcal R}_T^{\mathrm{DE}},
\]
where $\overset{\mathrm{DE}}{\simeq}$ at the scalar level denotes asymptotic equality in the
proportional high-dimensional limit, following the notation used in
\cite[Eq.~(24)]{atanasov2026scaling}. The explicit expression for
$\overline{\mathcal R}_T^{\mathrm{DE}}$ derived below is the ridgeless specialization of
the ridge-regression generalization-error formula in
\cite[Sec.~III.B, Eqs.~(24)--(25)]{atanasov2026scaling}.

\begin{theorem}[Stage-I risk deterministic equivalent]
\label{thm:stage1-risk-DE}
The averaged Stage-I population risk admits the deterministic equivalent
\begin{align}
\overline{\mathcal R}_T^{\mathrm{DE}}
=&\;
\frac{1}{1-\gamma_T}
\left[
\tau_T^{\,2}\,
\beta^{\star\top}
\Sigma(\Sigma+\tau_T I_p)^{-2}
\beta^\star
+
\sigma^2\,\gamma_T
\right],
\label{eq:stage1-risk-DE-final}
\end{align}
where
\[
\gamma_T
:=
\frac{p}{n}\,
\frac1p\sum_{i=1}^p
\frac{\lambda_i^2}{(\lambda_i+\tau_T)^2}
=
\frac1n\sum_{i=1}^p
\frac{\lambda_i^2}{(\lambda_i+\tau_T)^2},
\]
and, in the ridgeless overparameterized regime, $\tau_T>0$ is determined
by
\[
\sum_{i=1}^p
\frac{\lambda_i}{\lambda_i+\tau_T}
=
n.
\]
\end{theorem}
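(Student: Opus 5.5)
Conditional on $X$, we split the risk into a bias term and a variance term, obtain each as the $\lambda\downarrow0$ limit of a ridge resolvent, and replace the resolvents by the deterministic equivalents of Lemma~\ref{lem:strong-deterministic-equivalence}. Write $\widehat\beta_T=X^\dagger X\beta^\star+X^\dagger\varepsilon$. Since $\varepsilon$ is centered and independent of $X$, the cross term vanishes after averaging, and
\begin{equation*}
\overline{\mathcal R}_T
=\mathbb E_X\bigl[\beta^{\star\top}(I-X^\dagger X)\Sigma(I-X^\dagger X)\beta^\star\bigr]
+\sigma^2\,\mathbb E_X\operatorname{Tr}\bigl(X^{\dagger\top}\Sigma X^\dagger\bigr).
\end{equation*}
We regularize with $\widehat\beta_\lambda=(\widehat\Sigma+\lambda I)^{-1}\tfrac1nX^\top y$. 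Then $I-X^\dagger X=\lim_{\lambda\downarrow0}\lambda(\widehat\Sigma+\lambda I)^{-1}$, and the variance term is
\[
\lim_{\lambda\downarrow0}\tfrac{\sigma^2}{n}\operatorname{Tr}\bigl[\Sigma(\widehat\Sigma+\lambda I)^{-1}\widehat\Sigma(\widehat\Sigma+\lambda I)^{-1}\bigr].
\]

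Both terms are quadratic in the resolvent. A first-order equivalence such as (DE1) or (DE2) is therefore not enough, and we need the two-point equivalent of \cite{atanasov2025two}, \cite[Sec.~III.B]{atanasov2026scaling}. The plan is to obtain it by differentiation. Set $G(\lambda,J)=(\widehat\Sigma+\lambda I+J\Sigma)^{-1}$ with an auxiliary source parameter $J$. Then
\[
\lambda^2(\widehat\Sigma+\lambda I)^{-1}\Sigma(\widehat\Sigma+\lambda I)^{-1}=-\lambda^2\partial_J G\big|_{J=0}.
\]
Applying (DE1) to the shifted population covariance gives $G\simeq\tfrac{\tau}{\lambda}(\Sigma+\tau I)^{-1}$, with $\tau(\lambda,J)$ solving the self-consistent equation
\[
\lambda=\tau\Bigl(1-\tfrac1n\operatorname{Tr}\,\Sigma(\Sigma+\tau I)^{-1}\Bigr)
\]
with the appropriate $J$-shift. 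Differentiating this equation implicitly in $J$ gives $\partial_J\tau$, and the factor $1/(1-\gamma_T)$ appears as the inverse of the derivative of the fixed-point map, with $\gamma_T=\tfrac1n\sum_i\lambda_i^2/(\lambda_i+\tau)^2$. This yields the bias $\tfrac{\tau^2}{1-\gamma}\beta^{\star\top}\Sigma(\Sigma+\tau I)^{-2}\beta^\star$. The variance is treated the same way, writing $\widehat\Sigma(\widehat\Sigma+\lambda I)^{-1}=I-\lambda(\widehat\Sigma+\lambda I)^{-1}$. It reduces to $\tfrac{\sigma^2}{n}\tfrac{1}{1-\gamma}\sum_i\lambda_i^2/(\lambda_i+\tau)^2=\sigma^2\gamma/(1-\gamma)$.

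Finally we send $\lambda\downarrow0$ with $p/n\to\phi>1$. The fixed-point equation becomes $\sum_i\lambda_i/(\lambda_i+\tau_T)=n$, which has a unique positive root because the left side decreases from $p>n$ to $0$. We also check $\gamma_T<1$. This holds because $\lambda_i^2/(\lambda_i+\tau)^2<\lambda_i/(\lambda_i+\tau)$, so $\gamma_T<\tfrac1n\cdot n=1$.

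The main obstacle is justifying the derivative and limit interchanges. Strong deterministic equivalence holds in the trace-ratio sense for bounded test matrices, so differentiating in $J$ needs the equivalence to hold uniformly for $J$ in a neighbourhood. That uniformity would come from analyticity of both sides in $(\lambda,J)$ away from the spectrum together with a Vitali/Montel argument. The limit $\lambda\downarrow0$ likewise needs uniform control, which holds because $\tau(\lambda)$ stays bounded away from $0$ for $\phi>1$. Following the paper's convention, we would cite \cite[Eqs.~(24)--(25)]{atanasov2026scaling} for these steps and take only its ridgeless specialization.
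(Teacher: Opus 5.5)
Your proposal is correct and rests on the same justification as the paper, whose proof is just the ridgeless specialization of the ridge generalization-error equivalent in \cite[Sec.~III.B, Eqs.~(24)--(25)]{atanasov2026scaling}; your bias--variance split and $J$-source differentiation are a faithful unpacking of that result. The ``$J$-shift'' is made precise by conjugating with $(\lambda I+J\Sigma)^{-1/2}$ and applying (DE1) at unit ridge. This gives $G\simeq\frac{\tau}{\lambda}(\Sigma+\tau I)^{-1}$ with $\frac{\lambda\tau}{\lambda-J\tau}-\frac{\tau}{n}\Tr[\Sigma(\Sigma+\tau I)^{-1}]=\lambda$, hence $\partial_J\tau|_{J=0}=-\tau^2/(\lambda(1-\gamma))$ and the stated bias.

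On the step you flag as the obstacle, $\tau_T$ need not stay bounded away from $0$ uniformly in $p$: under the paper's power-law spectrum, $\tau_T=\Theta(p^{-\alpha})$. Also, the variance needs no source at all. Since $(\widehat\Sigma+\lambda I)^{-1}\widehat\Sigma(\widehat\Sigma+\lambda I)^{-1}=-\partial_\lambda\bigl[\widehat\Sigma(\widehat\Sigma+\lambda I)^{-1}\bigr]$, differentiating (DE2) in $\lambda$ as in Lemma~\ref{lem:weighted-ridgeless-DE} with $B=\Sigma$ gives $\sigma^2\gamma_T/(1-\gamma_T)$ directly. This avoids the cancellation between two terms that each diverge like $1/\lambda$, which your rewriting $I-\lambda(\widehat\Sigma+\lambda I)^{-1}$ introduces.
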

\begin{proof}
This follows directly from the ridgeless specialization of the
generalization-error deterministic equivalent in
\cite[Sec.~III.B, Eqs.~(24)--(25)]{atanasov2026scaling}.
\end{proof}
\subsection{Stage II Risk Derivation (Distilled Stage-II Sample)}

\paragraph{Stage-II data model.}
Draw a fresh independent dataset
\[
\{\widetilde x_i\}_{i=1}^m
\stackrel{\mathrm{i.i.d.}}{\sim}
\mathcal N(0,\Sigma),
\qquad
\widetilde X\in\mathbb R^{m\times p},
\]
and define the \emph{distilled} labels, with no additional label noise, by
$
\widetilde y := \widetilde X\widehat\beta_T.
$
\paragraph{Stage-II ridgeless estimator.}
In Stage~II, we fit the ridgeless least-squares estimator
\[
\widehat\beta_S
=
\begin{cases}
\displaystyle
\argmin_{\beta\in\mathbb R^p}
\frac1m\|\widetilde X\beta-\widetilde y\|_2^2,
& m\ge p, \\[1.2em]
\displaystyle
\argmin_{\beta\in\mathbb R^p}
\left\{
\|\beta\|_2:
\widetilde X\beta=\widetilde y
\right\},
& m<p.
\end{cases}
\]
Using
$
\widetilde y=\widetilde X\widehat\beta_T,
$
the student estimator can be written as
\[
\widehat\beta_S
=
\begin{cases}
\displaystyle
\argmin_{\beta\in\mathbb R^p}
\frac1m
\left\|
\widetilde X\beta-\widetilde X\widehat\beta_T
\right\|_2^2
=
(\widetilde X^\top\widetilde X)^{-1}
\widetilde X^\top\widetilde X\,\widehat\beta_T
=
\widehat\beta_T,
& m\ge p, \\[1.4em]
\displaystyle
\argmin_{\beta\in\mathbb R^p}
\left\{
\|\beta\|_2:
\widetilde X\beta
=
\widetilde X\widehat\beta_T
\right\}
=
\widetilde X^\top
(\widetilde X\widetilde X^\top)^{-1}
\widetilde X\,\widehat\beta_T,
& m<p.
\end{cases}
\]
When $\widetilde\Sigma$ is singular (e.g., $m<p$), $\widehat\beta_S$ is
understood as the minimum-norm interpolating solution.
Since $\widehat\beta_S$ depends on both the Stage-I training sample
$(X,\varepsilon)$ and the fresh Stage-II design
$\widetilde{X}$, we define the averaged Stage-II population risk by
\[
\overline{\mathcal R}_S
:=
\mathbb E_{X,\varepsilon,\widetilde{X}}
\!\left[
\mathcal R(h_{\widehat\beta_S})
\right].
\]

To analyze the Stage-II population risk in the proportional
high-dimensional regime, we again use the strong deterministic-equivalence
framework introduced in Lemma~\ref{lem:strong-deterministic-equivalence}, following
\cite{atanasov2026scaling}. In particular, conditional on
$\widehat\beta_T$, the random resolvents involving $\widetilde\Sigma$ are
replaced by their deterministic equivalents in the bounded-test-matrix
sense. We therefore write
\[
\overline{\mathcal R}_S
\overset{\mathrm{DE}}{\simeq}
\overline{\mathcal R}_S^{\mathrm{DE}},
\]
where $\overline{\mathcal R}_S^{\mathrm{DE}}$ denotes the deterministic equivalent
of the averaged Stage-II population risk.

We first derive the Stage-II risk while leaving the remaining expectation over the Stage-I estimator $\widehat\beta_T$ unevaluated. The resulting expression depends on moments of $\widehat\beta_T$. In the subsequent derivation, these moments are rewritten as weighted matrix functionals that are compatible with the deterministic-equivalence framework.
\begin{theorem}[Stage-II risk deterministic equivalent]
\label{thm:stage2-risk-DE}

The Stage-II deterministic-equivalent risk is
\begin{align}
\overline{\mathcal R}_S^{\mathrm{DE}}
=&
\mathbb E_{\widehat\beta_T}\Bigg[
\underbrace{
\|\widehat\beta_T-\beta^\star\|_\Sigma^2
}_{\text{Stage-I error carried over}}
+
\underbrace{
\frac{\tau_S^2}{1-\gamma_S}
\sum_{i=1}^p
\frac{
\lambda_i(\widehat\beta_{T,i})^2
}{
(\lambda_i+\tau_S)^2
}
}_{\text{Stage-II distillation error}}
-
\underbrace{
2\sum_{i=1}^p
\lambda_i
\frac{\tau_S}{\lambda_i+\tau_S}
\widehat\beta_{T,i}
\bigl(
\widehat\beta_{T,i}-\beta_i^\star
\bigr)
}_{\text{cross term}}
\Bigg],
\label{eq:stage2-risk-target}
\end{align}
where $\{\lambda_i\}_{i=1}^p$ are the eigenvalues of $\Sigma$,
$\widehat\beta_{T,i}$ and $\beta_i^\star$ denote the corresponding
coordinates in the eigenbasis of $\Sigma$, and
\[
\gamma_S
:=
\frac{p}{m}\cdot\frac1p
\sum_{i=1}^p
\frac{\lambda_i^2}{(\lambda_i+\tau_S)^2}
=
\frac1m
\sum_{i=1}^p
\frac{\lambda_i^2}{(\lambda_i+\tau_S)^2}.
\]
In the ridgeless overparameterized regime, $\tau_S>0$ is determined by
\[
\sum_{i=1}^p
\frac{\lambda_i}{\lambda_i+\tau_S}
=
m.
\]

Consequently, define the averaged Stage-I-to-Stage-II risk improvement by
\[
\mathsf{W2SG}_p
:=
\overline{\mathcal R}_T-\overline{\mathcal R}_S
\overset{\mathrm{DE}}{\simeq}
\mathsf{W2SG}_p^{\mathrm{DE}},
\]
where
\[
\mathsf{W2SG}_p^{\mathrm{DE}}
:=
\overline{\mathcal R}_T^{\mathrm{DE}}
-
\overline{\mathcal R}_S^{\mathrm{DE}}.
\]
At the deterministic-equivalent level, the Stage-I error carried over
cancels, yielding
\begin{align}
\mathsf{W2SG}_p^{\mathrm{DE}}
=&
\mathbb E_{\widehat\beta_T}\Bigg[
2\sum_{i=1}^p
\lambda_i
\frac{\tau_S}{\lambda_i+\tau_S}
\widehat\beta_{T,i}
\bigl(
\widehat\beta_{T,i}-\beta_i^\star
\bigr)
-
\frac{\tau_S^2}{1-\gamma_S}
\sum_{i=1}^p
\frac{
\lambda_i(\widehat\beta_{T,i})^2
}{
(\lambda_i+\tau_S)^2
}
\Bigg].
\label{eq:w2sg-coordinate-stage2}
\end{align}
\end{theorem}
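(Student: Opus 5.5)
The approach is to condition on the teacher and treat Stage~II as a noiseless ridgeless regression whose ``target'' is $\widehat\beta_T$. Because $\widetilde X$ is independent of $(X,\varepsilon,\beta^\star)$, we fix $\widehat\beta_T$ and expand the student risk around it:
\[
\|\widehat\beta_S-\beta^\star\|_\Sigma^2
=\|\widehat\beta_T-\beta^\star\|_\Sigma^2
+\|(P_{\widetilde X}-I)\widehat\beta_T\|_\Sigma^2
+2\,(\widehat\beta_T-\beta^\star)^\top\Sigma(P_{\widetilde X}-I)\widehat\beta_T .
\]
The first term carries over the Stage-I error exactly and needs no deterministic equivalent. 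The other two terms are precisely the quantities the Stage-II deterministic equivalents control.

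\textbf{Middle term.} This is the population risk of a ridgeless estimator fit to noiseless labels generated by the fixed vector $\widehat\beta_T$ with $m$ samples. It is the same object as the bias part of Theorem~\ref{thm:stage1-risk-DE}, now with $n$ replaced by $m$ and $\beta^\star$ replaced by $\widehat\beta_T$. We would invoke the ridgeless specialization of \cite[Eqs.~(24)--(25)]{atanasov2026scaling} with $\sigma^2=0$, which gives
\[
\frac{\tau_S^2}{1-\gamma_S}\,\widehat\beta_T^\top\Sigma(\Sigma+\tau_S I)^{-2}\widehat\beta_T,
\]
where $\tau_S$ solves $\sum_i\lambda_i/(\lambda_i+\tau_S)=m$. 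Written in the eigenbasis of $\Sigma$, this is the stated distillation term.

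\textbf{Cross term.} Here we need $\mathbb E_{\widetilde X}[I-P_{\widetilde X}]$ tested against the bounded matrix $\widehat\beta_T(\widehat\beta_T-\beta^\star)^\top\Sigma$. We write
\[
I-P_{\widetilde X}=\lim_{\lambda\downarrow0}\lambda(\widetilde\Sigma+\lambda I)^{-1}.
\]
By \eqref{eq:resolvent-DE}, $\lambda(\widetilde\Sigma+\lambda I)^{-1}\simeq\tau(\lambda)(\Sigma+\tau(\lambda)I)^{-1}$, and this tends to $\tau_S(\Sigma+\tau_S I)^{-1}$ as $\lambda\downarrow0$. This produces the linear (not squared) filter $\tau_S/(\lambda_i+\tau_S)$. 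Substituting into $-2(\widehat\beta_T-\beta^\star)^\top\Sigma(I-P)\widehat\beta_T$ gives the stated cross term. By Lemma~\ref{lem:stage-II-expected-spectral-retention}, $\mathbb E[P_{\widetilde X}]$ is diagonal in the same eigenbasis, which is consistent with this diagonal form.

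\textbf{Assembly.} Summing the three pieces and taking $\mathbb E_{\widehat\beta_T}$ gives \eqref{eq:stage2-risk-target}. Subtracting from $\overline{\mathcal R}_T^{\mathrm{DE}}$ requires identifying $\overline{\mathcal R}_T^{\mathrm{DE}}$ with $\mathbb E\|\widehat\beta_T-\beta^\star\|_\Sigma^2$ at the DE level. Once that is done, the carried-over term cancels exactly and \eqref{eq:w2sg-coordinate-stage2} follows.

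\textbf{Main obstacle.} The delicate step is justifying the use of the deterministic equivalents with the random, Stage-I-dependent vector $\widehat\beta_T$ in the role of the ``deterministic'' target. We would handle this by conditioning: the DE statements hold for any test matrix independent of $\widetilde X$ with bounded operator norm. This requires controlling $\|\widehat\beta_T\|$, which holds with high probability in the overparameterized regime. We would also need to interchange the limit $\lambda\downarrow0$ with $p\to\infty$. For this we would cite the ridgeless limits already used in \cite{atanasov2026scaling}, valid because $m<p$ keeps $\tau_S>0$ bounded away from zero.
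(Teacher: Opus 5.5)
This is essentially the paper's own proof. You use the same conditional three-term expansion around the fixed teacher $\widehat\beta_T$ and the same zero-noise ridge-bias equivalent of \cite{atanasov2026scaling} for the mimic term. Your cross-term step also matches: your $\lambda(\widetilde\Sigma+\lambda I_p)^{-1}$ is just $I_p$ minus the shrinkage form \eqref{eq:shrinkage-DE} that the paper applies to its auxiliary ridge estimator $\widehat\beta_{S,\lambda_S}$, followed by $\lambda\downarrow0$ and cancellation of the carried-over Stage-I term.

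Your closing regularity remarks go beyond anything the paper checks. Note, though, that the bounds you appeal to (bounded $\|\widehat\beta_T\|$, and $\tau_S$ bounded away from zero) are natural for a well-conditioned $\Sigma$ with bounded deterministic $\beta^\star$. They fail in the paper's power-law, isotropic-prior application, where $\tau_S\sim s_Sp^{-\alpha}\to0$ and $\mathbb E\|\widehat\beta_T\|^2\ge\mathbb E\operatorname{Tr}(X^\dagger X)=n$.
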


\begin{proof}
Define the Stage-II empirical covariance matrix by
\[
\widetilde\Sigma
:=
\frac{1}{m}\widetilde X^\top\widetilde X
\in\mathbb R^{p\times p}.
\]

For the deterministic-equivalent calculation, we first introduce the
auxiliary ridge estimator
\[
\widehat\beta_{S,\lambda_S}
=
\argmin_{\beta\in\mathbb R^p}
\left\{
\frac1m\|\widetilde X\beta-\widetilde y\|_2^2
+
\lambda_S\|\beta\|_2^2
\right\}.
\]
Using $\widetilde y=\widetilde X\widehat\beta_T$, this is equivalently
\[
\widehat\beta_{S,\lambda_S}
=
\argmin_{\beta\in\mathbb R^p}
\left\{
\frac1m\|\widetilde X(\beta-\widehat\beta_T)\|_2^2
+
\lambda_S\|\beta\|_2^2
\right\}.
\]
Although our Stage-II estimator is ridgeless, we keep
$\lambda_S>0$ throughout the deterministic-equivalent calculation,
following \cite{atanasov2026scaling}, and take the limit
$\lambda_S\downarrow0$ only at the end. For $\lambda_S>0$,
\begin{align}
\widehat\beta_{S,\lambda_S}
=&
(\widetilde X^\top\widetilde X+m\lambda_S I_p)^{-1}
\widetilde X^\top\widetilde y
=
(\widetilde X^\top\widetilde X+m\lambda_S I_p)^{-1}
\widetilde X^\top\widetilde X\,\widehat\beta_T
=
(\widetilde\Sigma+\lambda_S I_p)^{-1}
\widetilde\Sigma\,\widehat\beta_T.
\end{align}
Equivalently,
\[
\widehat\beta_{S,\lambda_S}
=
\Big(
I_p-\lambda_S(\widetilde\Sigma+\lambda_S I_p)^{-1}
\Big)\widehat\beta_T.
\]
Hence, defining the Stage-II update by
$\Delta_{S,\lambda_S}:=\widehat\beta_{S,\lambda_S}-\widehat\beta_T$,
we obtain
$
\Delta_{S,\lambda_S}
=
-\lambda_S
(\widetilde\Sigma+\lambda_S I_p)^{-1}
\widehat\beta_T.
$

\medskip\noindent\textbf{Two-stage population risk decomposition.}\
Write
\[
\widehat\beta_{S,\lambda_S}-\beta^\star
=
\Delta_{S,\lambda_S}
+
(\widehat\beta_T-\beta^\star).
\]
Expanding the population risk and conditioning first on
$\widehat\beta_T$ gives
\begin{align}
\overline{\mathcal R}_{S,\lambda_S}
=&
\mathbb E_{\widehat\beta_T}\Big[
\|\widehat\beta_T-\beta^\star\|_\Sigma^2
+
\mathbb E_{\widetilde X}
\big[
\|\Delta_{S,\lambda_S}\|_\Sigma^2
\mid\widehat\beta_T
\big]
+
2\,
\mathbb E_{\widetilde X}
\big[
(\widehat\beta_T-\beta^\star)^\top
\Sigma\Delta_{S,\lambda_S}
\mid\widehat\beta_T
\big]
\Big].
\label{eq:RS-decomp}
\end{align}
\medskip\noindent\textbf{Deterministic equivalence for Stage II conditional on
$\widehat\beta_T$.}\
Condition on $\widehat\beta_T$ and view Stage~II as ridge regression
with fixed teacher $\widehat\beta_T$ and zero label noise. In the
proportional limit, applying the deterministic-equivalence framework
of Lemma~\ref{lem:strong-deterministic-equivalence} to the Stage-II resolvent terms, in the same manner as in
the Stage-I risk derivation, yields
\[
\mathbb E_{\widetilde X}
\Big[
(\widehat\beta_{S,\lambda_S}-\widehat\beta_T)^\top
\Sigma
(\widehat\beta_{S,\lambda_S}-\widehat\beta_T)
\,\Big|\,
\widehat\beta_T
\Big]
\overset{\mathrm{DE}}{\simeq}
\frac{\tau_S^2}{1-\gamma_S}
\sum_{i=1}^p
\frac{
\lambda_i(\widehat\beta_{T,i})^2
}{
(\lambda_i+\tau_S)^2
},
\]
where
\[
\gamma_S
=
\frac{p}{m}\cdot\frac1p
\sum_{i=1}^p
\frac{\lambda_i^2}{(\lambda_i+\tau_S)^2}.
\]
The renormalized ridge parameter $\tau_S$ is related to the explicit
ridge parameter $\lambda_S$ through the corresponding fixed-point
equation
\[
\tau_S
\left(
1-
\frac1m
\sum_{i=1}^p
\frac{\lambda_i}{\lambda_i+\tau_S}
\right)
=
\lambda_S.
\]
Thus, in the ridgeless overparameterized limit
$\lambda_S\downarrow0$ with $\tau_S>0$,
\[
\sum_{i=1}^p
\frac{\lambda_i}{\lambda_i+\tau_S}
=
m.
\]

For the cross term, Lemma~\ref{lem:strong-deterministic-equivalence} gives the strong deterministic equivalence
\[
\widetilde\Sigma
(\widetilde\Sigma+\lambda_S I_p)^{-1}
\overset{\mathrm{DE}}{\simeq}
\Sigma
(\Sigma+\tau_S I_p)^{-1}.
\]
Hence, conditional on $\widehat\beta_T$, the Stage-II estimator is a
ridge-regression estimator with fixed teacher $\widehat\beta_T$.
Applying the corresponding deterministic equivalent from
\cite{atanasov2026scaling} gives
\[
\mathbb E_{\widetilde X}
\big[
\widehat\beta_{S,\lambda_S}
\mid\widehat\beta_T
\big]
\overset{\mathrm{DE}}{\simeq}
\Sigma(\Sigma+\tau_S I_p)^{-1}
\widehat\beta_T.
\]
Consequently,
\[
\mathbb E_{\widetilde X}
\big[
\Delta_{S,\lambda_S}
\mid\widehat\beta_T
\big]
\overset{\mathrm{DE}}{\simeq}
-\tau_S
(\Sigma+\tau_S I_p)^{-1}
\widehat\beta_T.
\]
Therefore,
\begin{align}
&\mathbb E_{\widetilde X}
\Big[
(\widehat\beta_T-\beta^\star)^\top
\Sigma\Delta_{S,\lambda_S}
\mid\widehat\beta_T
\Big]
\overset{\mathrm{DE}}{\simeq}
-\tau_S
(\widehat\beta_T-\beta^\star)^\top
\Sigma(\Sigma+\tau_S I_p)^{-1}
\widehat\beta_T
=
-\sum_{i=1}^p
\lambda_i
\frac{\tau_S}{\lambda_i+\tau_S}
\widehat\beta_{T,i}
(\widehat\beta_{T,i}-\beta_i^\star).
\label{eq:cross-DE}
\end{align}

\medskip\noindent\textbf{Final Stage-II risk.}\
Substituting the deterministic equivalents above into
\eqref{eq:RS-decomp} and then taking the ridgeless limit
$\lambda_S\downarrow0$ yields
\begin{align}
\overline{\mathcal R}_S
&\overset{\mathrm{DE}}{\simeq}
\mathbb E_{\widehat\beta_T}\Bigg[
\underbrace{
\|\widehat\beta_T-\beta^\star\|_\Sigma^2
}_{\text{Stage-I error carried over}}
+
\underbrace{
\frac{\tau_S^2}{1-\gamma_S}
\sum_{i=1}^p
\frac{
\lambda_i(\widehat\beta_{T,i})^2
}{
(\lambda_i+\tau_S)^2
}
}_{\text{Stage-II distillation error}}
-
\underbrace{
2\sum_{i=1}^p
\lambda_i
\frac{\tau_S}{\lambda_i+\tau_S}
\widehat\beta_{T,i}
(\widehat\beta_{T,i}-\beta_i^\star)
}_{\text{cross term}}
\Bigg].
\end{align}
This is precisely the deterministic equivalent
$\overline{\mathcal R}_S^{\mathrm{DE}}$ stated in
\eqref{eq:stage2-risk-target}.
\end{proof}

\begin{lemma}[Weighted ridgeless pseudoinverse deterministic equivalent]
\label{lem:weighted-ridgeless-DE}

Let $B=B_p$ be a deterministic matrix with uniformly bounded
operator norm.

Let $\tau_T>0$ be determined by
\begin{equation}
\frac1n
\operatorname{Tr}\!\left[
\Sigma(\Sigma+\tau_T I_p)^{-1}
\right]
=
1,
\label{eq:tauT-ridgeless-fixed-point}
\end{equation}
and define
\begin{equation}
\gamma_T
:=
\frac1n
\operatorname{Tr}\!\left[
\Sigma^2(\Sigma+\tau_T I_p)^{-2}
\right].
\label{eq:gammaT-ridgeless}
\end{equation}

Then
\begin{equation}
\mathbb E_X
\operatorname{Tr}\!\left[
B\widehat\Sigma^\dagger
\right]
\overset{\mathrm{DE}}{\simeq}
\frac{1}{1-\gamma_T}
\operatorname{Tr}\!\left[
B\Sigma(\Sigma+\tau_T I_p)^{-2}
\right].
\label{eq:weighted-pseudoinverse-DE-lemma}
\end{equation}
\end{lemma}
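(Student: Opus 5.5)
The plan is to obtain the pseudoinverse statement as the ridgeless limit of a ridge-resolvent statement, using \eqref{eq:resolvent-DE} together with its derivative in the ridge. For $\lambda>0$ write $\widehat\Sigma^\dagger=\lim_{\lambda\downarrow0}\widehat\Sigma(\widehat\Sigma+\lambda I_p)^{-2}$. This holds because on $\operatorname{range}(\widehat\Sigma)$ the operator $\widehat\Sigma(\widehat\Sigma+\lambda)^{-2}\to\widehat\Sigma^{-1}$, and on $\ker\widehat\Sigma$ it is identically zero. We also have the identity
\[
\widehat\Sigma(\widehat\Sigma+\lambda I_p)^{-2}=(\widehat\Sigma+\lambda I_p)^{-1}-\lambda(\widehat\Sigma+\lambda I_p)^{-2}
=\frac{d}{d\lambda}\Bigl[\lambda(\widehat\Sigma+\lambda I_p)^{-1}\Bigr].
\]
So the first step reduces $\operatorname{Tr}[B\widehat\Sigma^\dagger]$ to a $\lambda$-derivative of $\lambda\operatorname{Tr}[B(\widehat\Sigma+\lambda)^{-1}]$.

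By \eqref{eq:resolvent-DE}, $\lambda(\widehat\Sigma+\lambda)^{-1}\simeq\tau(\Sigma+\tau)^{-1}$, where $\tau=\tau(\lambda)$ solves $\tau\bigl(1-\tfrac1n\operatorname{Tr}[\Sigma(\Sigma+\tau)^{-1}]\bigr)=\lambda$; this is the same fixed point used for $\tau_S$ in Theorem~\ref{thm:stage2-risk-DE}. Next, differentiate the deterministic side in $\lambda$ and justify exchanging derivative and deterministic equivalence. Both sides are analytic in $\lambda$ on a complex neighbourhood of $(0,\infty)$ and uniformly bounded there (the resolvent norm is at most $1/\operatorname{dist}(\lambda,\mathbb R_{\le0})$). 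A Vitali/Montel argument then transfers convergence of the normalized traces to their derivatives. Carrying out the differentiation:
\[
\frac{d}{d\lambda}\bigl[\tau(\Sigma+\tau)^{-1}\bigr]=\tau'(\lambda)\,\Sigma(\Sigma+\tau)^{-2}.
\]
Implicit differentiation of the fixed point gives
\[
\tau'(\lambda)\Bigl[1-\tfrac1n\operatorname{Tr}\Sigma(\Sigma+\tau)^{-1}+\tfrac{\tau}{n}\operatorname{Tr}\Sigma(\Sigma+\tau)^{-2}\Bigr]=1.
\]
The bracket simplifies to $1-\tfrac1n\operatorname{Tr}\Sigma^2(\Sigma+\tau)^{-2}=1-\gamma(\lambda)$. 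Hence
\[
\operatorname{Tr}\bigl[B\widehat\Sigma(\widehat\Sigma+\lambda)^{-2}\bigr]\simeq\frac{1}{1-\gamma(\lambda)}\operatorname{Tr}\bigl[B\Sigma(\Sigma+\tau)^{-2}\bigr].
\]
For $\phi=p/n>1$, letting $\lambda\downarrow0$ sends $\tau(\lambda)\to\tau_T>0$ solving \eqref{eq:tauT-ridgeless-fixed-point} and $\gamma(\lambda)\to\gamma_T$. The identity $\gamma_T<1$ follows from $\gamma_T<\tfrac1n\operatorname{Tr}\Sigma(\Sigma+\tau_T)^{-1}=1$, which yields \eqref{eq:weighted-pseudoinverse-DE-lemma}. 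Finally, pass from convergence in probability to the expectation $\mathbb E_X$ by uniform integrability. Concretely, $\operatorname{Tr}[B\widehat\Sigma^\dagger]/p$ has bounded moments because the smallest nonzero eigenvalue of $XX^\top/n$ is bounded below with overwhelming probability when $p/n\to\phi>1$.

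The main obstacle is the interchange of the limits $\lambda\downarrow0$ and $p\to\infty$. The deterministic equivalence is only guaranteed for fixed $\lambda>0$, and the derivative form carries the $\lambda^{-1}$-scale amplification of the resolvent. I would handle this with a uniform-in-$\lambda$ bound. On the event $\{\lambda_{\min}^+(\widehat\Sigma)\ge c>0\}$, which has probability tending to one in the overparameterized Gaussian regime since $\Sigma\succ0$ has spectrum bounded away from $0$ relative to $\tau_T$, we have
\[
\bigl|\operatorname{Tr}[B\widehat\Sigma(\widehat\Sigma+\lambda)^{-2}]-\operatorname{Tr}[B\widehat\Sigma^\dagger]\bigr|\le 2\lambda\,p\|B\|_{\mathrm{op}}/c^2 .
\]
On the deterministic side, continuity of $\tau(\lambda)$ at $0$ gives the matching bound. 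Together these make the convergence uniform and let the limits commute. If the smallest-nonzero-eigenvalue bound is delicate for power-law spectra with $\lambda_p\to0$, I would fall back on the paper's convention: take the $\lambda\downarrow0$ limit at the level of the deterministic formula, as in Theorem~\ref{thm:stage1-risk-DE}.
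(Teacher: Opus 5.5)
Your argument is the paper's proof. Both write $\widehat\Sigma^\dagger=\lim_{\lambda\downarrow0}\widehat\Sigma(\widehat\Sigma+\lambda I_p)^{-2}$, obtain this matrix as a $\lambda$-derivative of a one-point resolvent equivalent, compute $d\tau/d\lambda=1/(1-\gamma)$ by implicit differentiation of the fixed point, and then send $\lambda\downarrow0$. The only cosmetic difference is that you differentiate $\lambda$ times \eqref{eq:resolvent-DE} while the paper differentiates \eqref{eq:shrinkage-DE}; these are the same statement, since $\lambda(\widehat\Sigma+\lambda I_p)^{-1}=I_p-\widehat\Sigma(\widehat\Sigma+\lambda I_p)^{-1}$.

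Your extra justifications (analytic interchange of derivative and equivalence, the $O(\lambda)$ uniform bound, uniform integrability) go beyond the paper, which does these steps formally. However, the bounded-below $\lambda_{\min}^+(\widehat\Sigma)$ requires the spectrum of $\Sigma$ to be bounded away from zero. Under the power-law spectrum where the lemma is actually used, $\lambda_{\min}^+(\widehat\Sigma)$ decays like $p^{-\alpha}$, so your bound is not uniform in $p$. In that setting you are relying, like the paper, on the formal $\lambda\downarrow0$ convention you name as your fallback.
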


\begin{proof}
We start from Lemma~\ref{lem:strong-deterministic-equivalence}.
For $\lambda>0$, that lemma gives
\begin{equation}
(\widehat\Sigma+\lambda I_p)^{-1}
\overset{\mathrm{DE}}{\simeq}
\frac{\tau(\lambda)}{\lambda}
(\Sigma+\tau(\lambda)I_p)^{-1},
\label{eq:base-resolvent-DE}
\end{equation}
where the renormalized ridge $\tau(\lambda)$ is determined by
\begin{equation}
\tau
\left\{
1-
\frac1n
\operatorname{Tr}\!\left[
\Sigma(\Sigma+\tau I_p)^{-1}
\right]
\right\}
=
\lambda.
\label{eq:tau-fp}
\end{equation}
\medskip
\noindent\textbf{Weighted pseudoinverse deterministic equivalent.}
Starting from the finite-ridge relation
\eqref{eq:shrinkage-DE}, differentiate with respect to
$\lambda$. Since
\[
-\frac{d}{d\lambda}
\left[
\widehat\Sigma(\widehat\Sigma+\lambda I_p)^{-1}
\right]
=
\widehat\Sigma(\widehat\Sigma+\lambda I_p)^{-2},
\]
and, by the chain rule,
\[
-\frac{d}{d\lambda}
\left[
\Sigma(\Sigma+\tau(\lambda)I_p)^{-1}
\right]
=
\frac{d\tau}{d\lambda}
\Sigma(\Sigma+\tau(\lambda)I_p)^{-2},
\]
testing against $B$ gives
\begin{align}
\mathbb E_{X}
\operatorname{Tr}\!\left[
B\widehat\Sigma
(\widehat\Sigma+\lambda I_p)^{-2}
\right]
\overset{\mathrm{DE}}{\simeq}
\frac{d\tau}{d\lambda}
\operatorname{Tr}\!\left[
B\Sigma
(\Sigma+\tau(\lambda)I_p)^{-2}
\right].
\label{eq:finite-ridge-derivative-DE}
\end{align}

It remains to evaluate $d\tau/d\lambda$. From
\eqref{eq:tau-fp},
\[
\lambda
=
\tau
-
\frac{\tau}{n}
\operatorname{Tr}\!\left[
\Sigma(\Sigma+\tau I_p)^{-1}
\right].
\]
Differentiating with respect to $\tau$ gives
\[
\frac{d\lambda}{d\tau}
=
1-
\frac1n
\operatorname{Tr}\!\left[
\Sigma(\Sigma+\tau I_p)^{-1}
\right]
+
\frac{\tau}{n}
\operatorname{Tr}\!\left[
\Sigma(\Sigma+\tau I_p)^{-2}
\right].
\]
Using
\[
\Sigma(\Sigma+\tau I_p)^{-1}
-
\tau\Sigma(\Sigma+\tau I_p)^{-2}
=
\Sigma^2(\Sigma+\tau I_p)^{-2},
\]
this simplifies to
\begin{equation}
\frac{d\lambda}{d\tau}
=
1-
\frac1n
\operatorname{Tr}\!\left[
\Sigma^2(\Sigma+\tau I_p)^{-2}
\right].
\label{eq:dlambda-dtau}
\end{equation}
Hence, as $\lambda\downarrow0$ and
$\tau(\lambda)\to\tau_T$,
\[
\left.
\frac{d\tau}{d\lambda}
\right|_{\lambda\downarrow0}
=
\frac{1}{1-\gamma_T},
\]
where
\[
\gamma_T
=
\frac1n
\operatorname{Tr}\!\left[
\Sigma^2(\Sigma+\tau_T I_p)^{-2}
\right].
\]

Finally, by spectral calculus,
\[
\lim_{\lambda\downarrow0}
\widehat\Sigma
(\widehat\Sigma+\lambda I_p)^{-2}
=
\widehat\Sigma^\dagger.
\]
Taking $\lambda\downarrow0$ in
\eqref{eq:finite-ridge-derivative-DE} therefore yields
\[
\mathbb E_{X}
\operatorname{Tr}\!\left[
B\widehat\Sigma^\dagger
\right]
\overset{\mathrm{DE}}{\simeq}
\frac{1}{1-\gamma_T}
\operatorname{Tr}\!\left[
B\Sigma(\Sigma+\tau_T I_p)^{-2}
\right],
\]
which proves the lemma.
\end{proof}

We next derive a fully deterministic expression for the Stage-I-to-Stage-II risk improvement. A related two-stage ridgeless-regression setting is studied in \cite[Definition~3]{ildiz2025high}, where the Stage-I surrogate sample and Stage-II target sample may have different population covariances and noise levels. Their analysis therefore accommodates a more general distribution-shift setting and also provides a generalization bound under additional boundedness assumptions. Some of those assumptions, including boundedness conditions involving the inverse covariance, are not imposed in the present analysis. Accordingly, we work instead with deterministic-equivalent formulas under the spectral assumptions stated above and do not claim a corresponding non-asymptotic generalization bound.

\begin{theorem}[Fully deterministic Stage-I-to-Stage-II risk improvement]
\label{thm:fully-deterministic-risk-improvement}

Assume, in addition, that the teacher is random and independent of the design and
label noise and is isotropic:
\[
\mathbb E[\beta^\star]=0,
\qquad
\mathbb E[\beta^\star\beta^{\star\top}]=I_p.
\]
Then the teacher-averaged Stage-I-to-Stage-II risk improvement
admits the deterministic equivalent
\[
\mathbb E_{\beta^\star}
\!\left[
\mathsf{W2SG}_p
\right]
\overset{\mathrm{DE}}{\simeq}
\mathsf{W2SG}_p^{\mathrm{DE}},
\]
where
\begin{align}
\mathsf{W2SG}_p^{\mathrm{DE}}
={}&
-
\frac{\tau_S^2}{1-\gamma_S}
\Tr
\left[
\Sigma^2
(\Sigma+\tau_SI_p)^{-2}
(\Sigma+\tau_TI_p)^{-1}
\right]
\notag\\
+&
\frac{2\sigma^2\tau_S}{n(1-\gamma_T)}
\Tr
\left[
\Sigma^2
(\Sigma+\tau_SI_p)^{-1}
(\Sigma+\tau_TI_p)^{-2}
\right]
\notag\\
-&
\frac{\sigma^2\tau_S^2}
{n(1-\gamma_T)(1-\gamma_S)}
\Tr
\left[
\Sigma^2
(\Sigma+\tau_SI_p)^{-2}
(\Sigma+\tau_TI_p)^{-2}
\right].
\label{eq:w2sg-fully-deterministic}
\end{align}

Since all matrices in
\eqref{eq:w2sg-fully-deterministic}
are functions of $\Sigma$, they are simultaneously diagonalizable.
Thus, the same deterministic equivalent can be written in the eigenbasis
of $\Sigma$ as
\begin{align}
\mathsf{W2SG}_p^{\mathrm{DE}}
={}&
-
\frac{\tau_S^2}{1-\gamma_S}
\sum_{i=1}^p
\frac{\lambda_i^2}
{(\lambda_i+\tau_S)^2(\lambda_i+\tau_T)}
\notag\\
+&
\frac{2\sigma^2\tau_S}{n(1-\gamma_T)}
\sum_{i=1}^p
\frac{\lambda_i^2}
{(\lambda_i+\tau_S)(\lambda_i+\tau_T)^2}
\notag\\
-&
\frac{\sigma^2\tau_S^2}
{n(1-\gamma_T)(1-\gamma_S)}
\sum_{i=1}^p
\frac{\lambda_i^2}
{(\lambda_i+\tau_S)^2(\lambda_i+\tau_T)^2}.
\label{eq:w2sg-eigenvalue-final}
\end{align}
\end{theorem}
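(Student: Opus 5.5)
The plan is to start from the conditional formula \eqref{eq:w2sg-coordinate-stage2} of Theorem~\ref{thm:stage2-risk-DE}. This expresses $\mathsf{W2SG}_p^{\mathrm{DE}}$ as an expectation, over $\widehat\beta_T$, of two quadratic forms:
\[
2\,\widehat\beta_T^\top D_1(\widehat\beta_T-\beta^\star)-\widehat\beta_T^\top D_2\widehat\beta_T,
\qquad D_1:=\tau_S\Sigma(\Sigma+\tau_SI_p)^{-1},\quad D_2:=\frac{\tau_S^2}{1-\gamma_S}\Sigma(\Sigma+\tau_SI_p)^{-2}.
\]
Both $D_1$ and $D_2$ are deterministic, bounded in operator norm, and diagonal in the eigenbasis of $\Sigma$. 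All that remains is to compute the second moments $\mathbb E[\widehat\beta_T\widehat\beta_T^\top]$ and $\mathbb E[\widehat\beta_T\beta^{\star\top}]$, averaged over $\beta^\star$, $X$ and $\varepsilon$, and to trace them against $D_1$ and $D_2$.

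To compute these moments, I will write $\widehat\beta_T=P_X\beta^\star+X^\dagger\varepsilon$, where $P_X=\widehat\Sigma^\dagger\widehat\Sigma$ is the projector onto the row space of $X$. Averaging over the isotropic $\beta^\star$ and the independent noise kills every cross term between $\beta^\star$ and $\varepsilon$. This leaves
\[
\mathbb E[\widehat\beta_T\widehat\beta_T^\top\mid X]=P_X+\tfrac{\sigma^2}{n}\widehat\Sigma^\dagger,
\qquad
\mathbb E[\widehat\beta_T\beta^{\star\top}\mid X]=P_X.
\]
The signal parts therefore combine as $\mathbb E\Tr[(2D_1-D_2)P_X]-2\,\mathbb E\Tr[D_1P_X]=-\mathbb E\Tr[D_2P_X]$. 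The noise part is $\frac{\sigma^2}{n}\mathbb E\Tr[(2D_1-D_2)\widehat\Sigma^\dagger]$.

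Next I replace the random matrices by deterministic equivalents. For the noise part I apply Lemma~\ref{lem:weighted-ridgeless-DE} with $B=2D_1-D_2$. This gives
\[
\frac{\sigma^2}{n(1-\gamma_T)}\Tr\!\left[(2D_1-D_2)\Sigma(\Sigma+\tau_TI_p)^{-2}\right],
\]
which yields exactly the second and third traces of \eqref{eq:w2sg-fully-deterministic}. For the signal part I use \eqref{eq:shrinkage-DE} in the ridgeless limit, where $P_X=\lim_{\lambda\downarrow0}\widehat\Sigma(\widehat\Sigma+\lambda I)^{-1}\simeq\Sigma(\Sigma+\tau_TI_p)^{-1}$. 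Tracing against $D_2$ then gives $-\frac{\tau_S^2}{1-\gamma_S}\Tr[\Sigma^2(\Sigma+\tau_SI_p)^{-2}(\Sigma+\tau_TI_p)^{-1}]$, which is the first trace. Because every matrix involved is a function of $\Sigma$, rewriting in the eigenbasis immediately gives \eqref{eq:w2sg-eigenvalue-final}.

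I expect the main obstacle to be justifying the two limit interchanges. The first is taking $\lambda\downarrow0$ inside the deterministic equivalence for $P_X$, and likewise for $\widehat\Sigma^\dagger$, though Lemma~\ref{lem:weighted-ridgeless-DE} already handles the latter. The second is passing from the normalized-trace ratio convergence of Lemma~\ref{lem:strong-deterministic-equivalence} to convergence of expectations of unnormalized traces, with $D_1$ and $D_2$ as test matrices. I would handle both at the same level of rigor as the preceding lemmas. Concretely, I would invoke the ridgeless specializations in \cite{atanasov2026scaling}, note that $\tau(\lambda)\to\tau_T>0$ in the overparameterized regime $\phi_n>1$, and use the fact that $\|P_X\|_{\mathrm{op}}\le1$ to get uniform integrability of the signal term.
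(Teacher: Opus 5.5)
Your proposal is correct and follows essentially the same route as the paper: you use the same decomposition $\widehat\beta_T=P_T\beta^\star+\widehat\Sigma^\dagger X^\top\varepsilon/n$, the same conditional moments $P_T+\frac{\sigma^2}{n}\widehat\Sigma^\dagger$ and $P_T$, the ridgeless shrinkage equivalent for the projector term, and Lemma~\ref{lem:weighted-ridgeless-DE} with $B=2D_1-D_2$ (the paper's $D_S$) for the noise term. The only difference is ordering: you cancel the $D_1$ signal contributions algebraically before invoking the projector equivalent, so it is applied once to the positive semidefinite weight $D_2$, whereas the paper applies it separately to $\Tr(D_SP_T)$ and $\Tr(A_SP_T)$ and subtracts afterward; your ordering avoids subtracting two ratio-type equivalents whose leading parts cancel.
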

\begin{proof}
To complete the derivation of the Stage-I-to-Stage-II risk improvement,
we must average the Stage-II deterministic-equivalent expression over
the remaining Stage-I randomness. Direct coordinate-wise application
of deterministic equivalence to the random coordinates
$\widehat\beta_{T,i}$ is not justified by Lemma~\ref{lem:strong-deterministic-equivalence}, which is formulated
in terms of deterministic test matrices and trace functionals.
We therefore rewrite the coordinate expression in terms of weighted
quadratic forms that are compatible with the deterministic-equivalence
framework of Lemma~\ref{lem:strong-deterministic-equivalence}.

Since all matrices appearing in the Stage-II deterministic equivalent
are functions of the population covariance $\Sigma$, define
\begin{equation}
A_S
:=
\tau_S
\Sigma
(\Sigma+\tau_S I_p)^{-1},
\label{eq:AS-definition}
\end{equation}
and
\begin{equation}
D_S
:=
2\tau_S
\Sigma
(\Sigma+\tau_S I_p)^{-1}
-
\frac{\tau_S^2}{1-\gamma_S}
\Sigma
(\Sigma+\tau_S I_p)^{-2}.
\label{eq:DS-definition}
\end{equation}
Indeed, in the eigenbasis of $\Sigma$,
\[
[A_S]_{ii}
=
\lambda_i
\frac{\tau_S}{\lambda_i+\tau_S},
\]
whereas
\[
[D_S]_{ii}
=
2\lambda_i
\frac{\tau_S}{\lambda_i+\tau_S}
-
\frac{\tau_S^2}{1-\gamma_S}
\frac{\lambda_i}{(\lambda_i+\tau_S)^2}.
\]
Therefore, the Stage-II expression
\eqref{eq:w2sg-coordinate-stage2} can be written as
\begin{align}
\mathbb E_{\beta^\star}
\!\left[
\mathsf{W2SG}_p
\right]
\overset{\mathrm{DE}}{\simeq}
{}&
\mathbb E
\left[
\widehat\beta_T^\top
D_S
\widehat\beta_T
\right]
-
2
\mathbb E
\left[
\beta^{\star\top}
A_S
\widehat\beta_T
\right].
\label{eq:w2sg-stage1-weighted-forms}
\end{align}

Recall that
$
\widehat\Sigma
=
\frac{1}{n}X^\top X,
P_T=\widehat\Sigma^\dagger\widehat\Sigma
$
and the Stage-I labels satisfy
\begin{equation}
y
=
X\beta^\star+\varepsilon,
\qquad
\mathbb E[\varepsilon]=0,
\qquad
\mathbb E[\varepsilon\varepsilon^\top]
=
\sigma^2I_n.
\label{eq:stage1-model}
\end{equation}
The Stage-I ridgeless estimator can be written as
\begin{align}
\widehat\beta_T
=
\widehat\Sigma^\dagger
\frac{X^\top y}{n}
=
\widehat\Sigma^\dagger
\widehat\Sigma
\beta^\star
+
\widehat\Sigma^\dagger
\frac{X^\top\varepsilon}{n}.
\label{eq:stage1-estimator-decomposition}
\end{align}
Since $\widehat\Sigma$ is symmetric positive semidefinite, $P_T$ is an
orthogonal projector and hence
$P_T^\top=P_T,
P_T^2=P_T$. Thus,
\begin{equation}
\widehat\beta_T
=
P_T\beta^\star
+
\widehat\Sigma^\dagger
\frac{X^\top\varepsilon}{n}.
\label{eq:stage1-estimator-PT}
\end{equation}

We first evaluate
$
\mathbb E
\left[
\widehat\beta_T^\top D_S\widehat\beta_T
\right].
$
Conditioning on $X$ and using
\eqref{eq:stage1-estimator-PT}, we have
\begin{align}
\widehat\beta_T\widehat\beta_T^\top
=
P_T\beta^\star\beta^{\star\top}P_T
+
P_T\beta^\star
\varepsilon^\top
\frac{X}{n}
\widehat\Sigma^\dagger
+
\widehat\Sigma^\dagger
\frac{X^\top\varepsilon}{n}
\beta^{\star\top}P_T
+
\widehat\Sigma^\dagger
\frac{X^\top
\varepsilon\varepsilon^\top
X}{n^2}
\widehat\Sigma^\dagger .
\label{eq:stage1-second-moment-expansion}
\end{align}
Because $\beta^\star$ is independent of the design and noise,
$\mathbb E[\beta^\star\beta^{\star\top}]=I_p$, and
$\mathbb E[\varepsilon]=0$, the two mixed terms vanish. Therefore,
\begin{align}
\mathbb E_{\beta^\star,\varepsilon}
\left[
\widehat\beta_T\widehat\beta_T^\top
\mid X
\right]
=
P_T
+
\frac{\sigma^2}{n^2}
\widehat\Sigma^\dagger
X^\top X
\widehat\Sigma^\dagger
=
P_T
+
\frac{\sigma^2}{n}
\widehat\Sigma^\dagger
\widehat\Sigma
\widehat\Sigma^\dagger
=
P_T
+
\frac{\sigma^2}{n}
\widehat\Sigma^\dagger,
\label{eq:stage1-second-moment-final}
\end{align}
where the last equality uses the Moore--Penrose identity
$
\widehat\Sigma^\dagger
\widehat\Sigma
\widehat\Sigma^\dagger
=
\widehat\Sigma^\dagger.
$

Consequently,
\begin{align}
\mathbb E
\left[
\widehat\beta_T^\top
D_S
\widehat\beta_T
\right]
=&
\mathbb E_{X}
\Tr
\left[
D_S
\mathbb E_{\beta^\star,\varepsilon}
\left[
\widehat\beta_T\widehat\beta_T^\top
\mid X
\right]
\right]
=
\mathbb E_{X}\Tr(D_SP_T)
+
\frac{\sigma^2}{n}
\mathbb E_{X}
\Tr
\left[
D_S\widehat\Sigma^\dagger
\right].
\label{eq:stage1-weighted-quadratic}
\end{align}

From \eqref{eq:stage1-estimator-PT},
\begin{align}
\mathbb E
\left[
\beta^{\star\top}
A_S
\widehat\beta_T
\right]
={}&
\mathbb E
\left[
\beta^{\star\top}
A_SP_T
\beta^\star
\right]
+
\mathbb E
\left[
\beta^{\star\top}
A_S
\widehat\Sigma^\dagger
\frac{X^\top\varepsilon}{n}
\right].
\end{align}
The second term vanishes because the label noise is independent and
zero mean. Hence,
\begin{align}
\mathbb E
\left[
\beta^{\star\top}
A_S
\widehat\beta_T
\right]
=&
\mathbb E_{X}
\mathbb E_{\beta^\star}
\left[
\beta^{\star\top}
A_SP_T
\beta^\star
\mid X
\right]
=
\mathbb E_{X}
\Tr
\left[
A_SP_T
\mathbb E[
\beta^\star\beta^{\star\top}]
\right]
=
\mathbb E_{X}
\Tr(A_SP_T).
\label{eq:stage1-weighted-cross}
\end{align}

Substituting
\eqref{eq:stage1-weighted-quadratic}
and
\eqref{eq:stage1-weighted-cross}
into
\eqref{eq:w2sg-stage1-weighted-forms} gives
\begin{align}
\mathbb E_{\beta^\star}
\!\left[
\mathsf{W2SG}_p
\right]
\overset{\mathrm{DE}}{\simeq}
{}&
\mathbb E_{X}
\Tr(D_SP_T)
+
\frac{\sigma^2}{n}
\mathbb E_{X}
\Tr
\left[
D_S\widehat\Sigma^\dagger
\right]
-
2
\mathbb E_{X}
\Tr(A_SP_T).
\label{eq:adapted-stage1-weighted-reduction}
\end{align}

We now evaluate the three remaining Stage-I expectations in
\eqref{eq:adapted-stage1-weighted-reduction}:
$
\mathbb E_{X}\Tr(D_SP_T),
\mathbb E_{X}
\Tr(D_S\widehat\Sigma^\dagger),
\mathbb E_{X}\Tr(A_SP_T).
$
At this stage, $A_S$ and $D_S$ are deterministic matrices because they
depend only on $\Sigma$ and on the Stage-II deterministic-equivalent
parameters $\tau_S$ and $\gamma_S$. Thus, all remaining randomness is
contained in the Stage-I design $X$ through
$
\widehat\Sigma
=
\frac1nX^\top X,
P_T
=
\widehat\Sigma^\dagger\widehat\Sigma.
$

Recall that $\tau_T>0$ is determined in the ridgeless
overparameterized regime by
\begin{equation}
\frac1n
\Tr
\left[
\Sigma(\Sigma+\tau_TI_p)^{-1}
\right]
=
1,
\label{eq:tauT-fixed-point}
\end{equation}
or equivalently
$
\sum_{i=1}^p
\frac{\lambda_i}{\lambda_i+\tau_T}
=
n.
$
Also recall
\begin{equation}
\gamma_T
:=
\frac1n
\Tr
\left[
\Sigma^2(\Sigma+\tau_TI_p)^{-2}
\right]
=
\frac1n
\sum_{i=1}^p
\left(
\frac{\lambda_i}{\lambda_i+\tau_T}
\right)^2.
\label{eq:gammaT-definition}
\end{equation}
For a fixed proportional Stage-II aspect ratio $\phi_S>1$,
$1-\gamma_S$ remains bounded away from zero. Moreover,
$
0
\preceq
\tau_S\Sigma(\Sigma+\tau_S I_p)^{-1}
\preceq
\Sigma,
$
and
$
0
\preceq
\tau_S^2\Sigma(\Sigma+\tau_S I_p)^{-2}
\preceq
\Sigma.
$
Hence, under the assumed uniform bound on
$\|\Sigma\|_{\mathrm{op}}$, both $A_S$ and $D_S$ have uniformly
bounded operator norm.

Next, observe that
\[
P_T
=
\widehat\Sigma^\dagger\widehat\Sigma
=
\lim_{\lambda\downarrow0}
\widehat\Sigma
(\widehat\Sigma+\lambda I_p)^{-1}.
\]
Taking the ridgeless limit in the shrinkage deterministic equivalent
\eqref{eq:shrinkage-DE} therefore gives, for every deterministic matrix
$B$ with uniformly bounded operator norm,
\begin{equation}
\mathbb E_X
\Tr(BP_T)
\overset{\mathrm{DE}}{\simeq}
\Tr\!\left[
B\Sigma(\Sigma+\tau_T I_p)^{-1}
\right].
\label{eq:weighted-projector-DE}
\end{equation}

Applying \eqref{eq:weighted-projector-DE} with $B=D_S$ gives
\begin{equation}
\mathbb E_X
\Tr(D_SP_T)
\overset{\mathrm{DE}}{\simeq}
\Tr\!\left[
D_S
\Sigma
(\Sigma+\tau_T I_p)^{-1}
\right].
\label{eq:DS-PT-DE}
\end{equation}
Similarly, taking $B=A_S$ gives
\begin{equation}
\mathbb E_X
\Tr(A_SP_T)
\overset{\mathrm{DE}}{\simeq}
\Tr\!\left[
A_S
\Sigma
(\Sigma+\tau_T I_p)^{-1}
\right].
\label{eq:AS-PT-DE}
\end{equation}
Finally, applying Lemma~\ref{lem:weighted-ridgeless-DE}
with $B=D_S$ gives
\begin{equation}
\mathbb E_{X}
\Tr
\left[
D_S\widehat\Sigma^\dagger
\right]
\overset{\mathrm{DE}}{\simeq}
\frac{1}{1-\gamma_T}
\Tr
\left[
D_S
\Sigma
(\Sigma+\tau_TI_p)^{-2}
\right].
\label{eq:DS-pseudoinverse-DE}
\end{equation}
Therefore,
\begin{equation}
\frac{\sigma^2}{n}
\mathbb E_{X}
\Tr
\left[
D_S\widehat\Sigma^\dagger
\right]
\overset{\mathrm{DE}}{\simeq}
\frac{\sigma^2}{n(1-\gamma_T)}
\Tr
\left[
D_S
\Sigma
(\Sigma+\tau_TI_p)^{-2}
\right].
\label{eq:DS-noise-DE}
\end{equation}

Substituting
\eqref{eq:DS-PT-DE},
\eqref{eq:DS-noise-DE},
and
\eqref{eq:AS-PT-DE}
into
\eqref{eq:adapted-stage1-weighted-reduction} yields
\begin{align}
\mathbb E_{\beta^\star}
\!\left[
\mathsf{W2SG}_p
\right]
\overset{\mathrm{DE}}{\simeq}
{}&
\Tr
\left[
D_S
\Sigma
(\Sigma+\tau_TI_p)^{-1}
\right]
+
\frac{\sigma^2}{n(1-\gamma_T)}
\Tr
\left[
D_S
\Sigma
(\Sigma+\tau_TI_p)^{-2}
\right]
-
2
\Tr
\left[
A_S
\Sigma
(\Sigma+\tau_TI_p)^{-1}
\right].
\label{eq:w2sg-after-stage1-DE}
\end{align}

Recall that
\[
A_S
=
\tau_S
\Sigma
(\Sigma+\tau_SI_p)^{-1},
\]
and
\[
D_S
=
2\tau_S
\Sigma
(\Sigma+\tau_SI_p)^{-1}
-
\frac{\tau_S^2}{1-\gamma_S}
\Sigma
(\Sigma+\tau_SI_p)^{-2}.
\]
Therefore,
\begin{equation}
D_S-2A_S
=
-
\frac{\tau_S^2}{1-\gamma_S}
\Sigma
(\Sigma+\tau_SI_p)^{-2}.
\label{eq:DS-minus-2AS}
\end{equation}

The first and third terms of
\eqref{eq:w2sg-after-stage1-DE}
can therefore be combined as
\begin{align}
&
\Tr
\left[
D_S\Sigma(\Sigma+\tau_TI_p)^{-1}
\right]
-
2
\Tr
\left[
A_S\Sigma(\Sigma+\tau_TI_p)^{-1}
\right]
=
\Tr
\left[
(D_S-2A_S)
\Sigma
(\Sigma+\tau_TI_p)^{-1}
\right]
\notag\\
=&
-
\frac{\tau_S^2}{1-\gamma_S}
\Tr
\left[
\Sigma^2
(\Sigma+\tau_SI_p)^{-2}
(\Sigma+\tau_TI_p)^{-1}
\right].
\label{eq:w2sg-signal-DE}
\end{align}

For the noise contribution, substituting the explicit form of $D_S$
into \eqref{eq:DS-noise-DE} gives
\begin{align}
&
\frac{\sigma^2}{n(1-\gamma_T)}
\Tr
\left[
D_S
\Sigma
(\Sigma+\tau_TI_p)^{-2}
\right]
\notag\\
=&
\frac{2\sigma^2\tau_S}{n(1-\gamma_T)}
\Tr
\left[
\Sigma^2
(\Sigma+\tau_SI_p)^{-1}
(\Sigma+\tau_TI_p)^{-2}
\right]
-
\frac{\sigma^2\tau_S^2}
{n(1-\gamma_T)(1-\gamma_S)}
\Tr
\left[
\Sigma^2
(\Sigma+\tau_SI_p)^{-2}
(\Sigma+\tau_TI_p)^{-2}
\right].
\label{eq:w2sg-noise-DE}
\end{align}

Combining
\eqref{eq:w2sg-signal-DE}
and
\eqref{eq:w2sg-noise-DE}
gives
\begin{align}
\mathbb E_{\beta^\star}
\!\left[
\mathsf{W2SG}_p
\right]
\overset{\mathrm{DE}}{\simeq}
{}&
-
\frac{\tau_S^2}{1-\gamma_S}
\Tr
\left[
\Sigma^2
(\Sigma+\tau_SI_p)^{-2}
(\Sigma+\tau_TI_p)^{-1}
\right]
+
\frac{2\sigma^2\tau_S}{n(1-\gamma_T)}
\Tr
\left[
\Sigma^2
(\Sigma+\tau_SI_p)^{-1}
(\Sigma+\tau_TI_p)^{-2}
\right]
\notag\\
-&
\frac{\sigma^2\tau_S^2}
{n(1-\gamma_T)(1-\gamma_S)}
\Tr
\left[
\Sigma^2
(\Sigma+\tau_SI_p)^{-2}
(\Sigma+\tau_TI_p)^{-2}
\right].
\end{align}
The right-hand side is precisely
$\mathsf{W2SG}_p^{\mathrm{DE}}$ in
\eqref{eq:w2sg-fully-deterministic}.

Finally, because every matrix in this expression is a function of
$\Sigma$, diagonalizing $\Sigma$ gives
\eqref{eq:w2sg-eigenvalue-final}.
\end{proof}
\begin{lemma}[Average spectral preference]
\label{lem:stage-II-rank-spectral-filter}
Assume $1\le m<p$. Let
\[
\Sigma
=
V\operatorname{diag}(\lambda_1,\ldots,\lambda_p)V^\top,
\qquad
\lambda_1\ge\cdots\ge\lambda_p>0,
\]
where $V=[v_1,\ldots,v_p]$ is orthogonal. Define
\[
P_m:=\widetilde X^\dagger\widetilde X,
\qquad
\widehat\beta_S:=P_m\widehat\beta_T.
\]
Then the following statements hold: The expected projector satisfies
\[
\mathbb E_{\widetilde X}[P_m]
=
V\operatorname{diag}(\ell_1,\ldots,\ell_p)V^\top,
\]
where
\[
\ell_i
:=
\mathbb E_{\widetilde X}[v_i^\top P_m v_i]
=
\mathbb E_{\widetilde X}\|P_m v_i\|_2^2
\]
obey
\[
1>\ell_1\ge\cdots\ge\ell_p>0,
\qquad
\sum_{i=1}^{p}\ell_i=m.
\]
Thus higher-variance covariance eigendirections have greater
expected retention, but are not subject to an exact coordinate cutoff.
\end{lemma}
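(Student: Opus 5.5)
Write $\widetilde X=Z\Sigma^{1/2}$ with $Z\in\mathbb R^{m\times p}$ having i.i.d.\ $\mathcal N(0,1)$ entries. Then $P_m$ is the orthogonal projector onto $\operatorname{row}(\widetilde X)=\Sigma^{1/2}\operatorname{row}(Z)$, and a.s.\ $\operatorname{rank}P_m=m$ because $m<p$ and $\Sigma\succ0$. The trace identity follows at once: $\sum_i\ell_i=\mathbb E\Tr(P_m)=m$.

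\textbf{Diagonality.} Let $D=\operatorname{diag}(\pm1)$ be any sign-flip matrix and set $S:=VDV^\top$. Then $S$ is orthogonal and commutes with $\Sigma$. Hence $\widetilde XS=Z\Sigma^{1/2}S=(ZS)\Sigma^{1/2}$, and $ZS\overset{d}{=}Z$ by rotation invariance, so $\widetilde XS\overset{d}{=}\widetilde X$. The projector onto $\operatorname{row}(\widetilde XS)$ is $S^\top P_mS$, which gives $\mathbb E[P_m]=S^\top\mathbb E[P_m]S$ for every such $D$. Consequently $V^\top\mathbb E[P_m]V$ commutes with all diagonal sign matrices, so it is diagonal with entries $v_i^\top\mathbb E[P_m]v_i=\ell_i$.

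\textbf{Strict bounds $0<\ell_i<1$.} Since $0\preceq P_m\preceq I$, we have $\ell_i\in[0,1]$. For strict positivity: $v_i^\top P_mv_i=0$ forces $v_i\perp\operatorname{row}(\widetilde X)$, i.e.\ $\widetilde Xv_i=\sqrt{\lambda_i}Zv_i=0$. This is a null event because $Zv_i\sim\mathcal N(0,I_m)$. For $\ell_i<1$: $v_i^\top P_mv_i=1$ forces $v_i\in\operatorname{row}(\widetilde X)$, i.e.\ $\Sigma^{-1/2}v_i=\lambda_i^{-1/2}v_i\in\operatorname{row}(Z)$. The row space of $Z$ is a uniformly distributed $m$-dimensional subspace with $m<p$, so it contains a fixed nonzero vector with probability zero. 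Since the integrand lies in $[0,1]$ and equals an endpoint only on a null set, both strict inequalities hold in expectation.

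\textbf{Monotonicity $\ell_i\ge\ell_j$ for $\lambda_i\ge\lambda_j$.} This is the main obstacle. I plan a two-coordinate exchange argument. Work in the eigenbasis, so $\Sigma=\operatorname{diag}(\lambda)$. Let $R$ be the rotation by angle $\theta$ in the $(i,j)$ plane, and define
\[
f(\theta):=\mathbb E\bigl[e_i^\top P(\theta)e_i\bigr],
\]
where $P(\theta)$ is the projector for the design $Z\Sigma_\theta^{1/2}$ with $\Sigma_\theta=R\Sigma R^\top$. By rotation invariance of $Z$, $P(\theta)=RP_mR^\top$ in law. At $\theta=\pi/2$ the roles of $i$ and $j$ swap, so $f(0)=\ell_i$ and $f(\pi/2)=\ell_j$.

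As an alternative, cleaner route, write
\[
P_m=\Sigma^{1/2}Z^\top(Z\Sigma Z^\top)^{-1}Z\Sigma^{1/2},
\qquad
\ell_i=\lambda_i\,\mathbb E\bigl[z_i^\top(Z\Sigma Z^\top)^{-1}z_i\bigr],
\]
where $z_i$ is the $i$th column of $Z$. Put $M_{-ij}:=\sum_{k\ne i,j}\lambda_kz_kz_k^\top$, which is invertible a.s.\ as long as there are enough remaining columns; otherwise, use a ridge $\epsilon I$ and let $\epsilon\downarrow0$. Condition on $M_{-ij}$ and on the pair $(z_i,z_j)$, and compare
\[
\lambda_iz_i^\top(M_{-ij}+\lambda_iz_iz_i^\top+\lambda_jz_jz_j^\top)^{-1}z_i
\]
with the value obtained after swapping $z_i\leftrightarrow z_j$. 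Since $(z_i,z_j)$ is exchangeable, $\ell_i-\ell_j$ equals the expectation of the difference of these two conditional quantities. Applying Sherman--Morrison twice turns each quantity into an explicit rational function of $a=z_i^\top M^{-1}z_i$, $b=z_j^\top M^{-1}z_j$, $c=z_i^\top M^{-1}z_j$ and of $\lambda_i,\lambda_j$. Symmetrizing over $a\leftrightarrow b$ should then show that the difference has the sign of $\lambda_i-\lambda_j$. Specifically, I expect it to reduce to
\[
\frac{(\lambda_i-\lambda_j)(ab-c^2)}{\det},
\]
where $\det>0$ and $ab-c^2\ge0$ by Cauchy--Schwarz. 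Verifying this algebraic identity is the step I expect to be hardest; if it fails, I will fall back on the derivative argument for $f(\theta)$.

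Finally, combining diagonality, the trace identity and the strict bounds with this monotonicity yields the lemma. Note that the strict version $\ell_1\ge\cdots$ is only claimed as non-strict ordering, which matches possible ties in the $\lambda_i$.
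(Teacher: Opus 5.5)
Your trace identity, diagonality argument (this is the paper's sign-symmetry argument) and strict bounds are correct. For $0<\ell_i<1$ you use a null-event argument, whereas the paper uses the single-column Sherman--Morrison formula $(V^\top P_mV)_{ii}=\lambda_ir_i/(1+\lambda_ir_i)$, with $r_i=z_i^\top A_{-i}^{-1}z_i\in(0,\infty)$ a.s. and $A_{-i}=\sum_{k\ne i}\lambda_kz_kz_k^\top$. Either works.

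The gap is the monotonicity step, which is the only nontrivial part of the lemma. You rest it on an identity that is false. Put $K=M+\lambda_iz_iz_i^\top+\lambda_jz_jz_j^\top$, $\Delta=(1+\lambda_ia)(1+\lambda_jb)-\lambda_i\lambda_jc^2$ and $\Delta'=(1+\lambda_ib)(1+\lambda_ja)-\lambda_i\lambda_jc^2$. Two-column Sherman--Morrison gives $\lambda_iz_i^\top K^{-1}z_i-\lambda_jz_j^\top K^{-1}z_j=(\lambda_ia-\lambda_jb)/\Delta$. Symmetrizing over $a\leftrightarrow b$ then gives
\[
\frac{(\lambda_i-\lambda_j)\bigl[(a+b)\bigl(1+\lambda_i\lambda_j(ab-c^2)\bigr)+2(\lambda_i+\lambda_j)ab\bigr]}{\Delta\,\Delta'}.
\]
This is not proportional to $ab-c^2$: for $z_j\parallel z_i$ one has $ab=c^2$, yet the expression is nonzero when $\lambda_i\ne\lambda_j$. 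Since $\Delta,\Delta'>0$ and $ab\ge c^2$, the sign is still that of $\lambda_i-\lambda_j$. So your route can be completed, but only once this computation is actually done.

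Two further cautions:
\begin{itemize}
\item Interchanging only $z_i\leftrightarrow z_j$ inside $\lambda_iz_i^\top K^{-1}z_i$ leaves its mean equal to $\ell_i$. What has mean $\ell_j$ is $\lambda_jz_j^\top K^{-1}z_j$, or equivalently $\lambda_iz_i^\top K^{-1}z_i$ with the eigenvalues $\lambda_i,\lambda_j$ interchanged.
\item Your fallback is circular. Your in-law identity plus diagonality give $f(\theta)=\cos^2\theta\,\ell_i+\sin^2\theta\,\ell_j$, so the sign of $f'$ is exactly the sign of $\ell_j-\ell_i$.
\end{itemize}

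The paper avoids all of this algebra with a pointwise monotone coupling. Write $\phi_i(\lambda;Z)=\lambda_ir_i/(1+\lambda_ir_i)$. It increases in $\lambda_i$. Since $A_{-i}$ is Loewner-increasing in $\lambda_j$, $r_i$ and hence $\phi_i$ decrease in $\lambda_j$. Exchangeability of the Gaussian columns gives $\ell_j(\lambda)=\mathbb E\,\phi_i(\lambda';Z)$, where $\lambda'$ interchanges $\lambda_i$ and $\lambda_j$. When $\lambda_i\ge\lambda_j$, passing from $\lambda'$ to $\lambda$ raises entry $i$ and lowers entry $j$. Both moves increase $\phi_i$, so $\phi_i(\lambda;Z)\ge\phi_i(\lambda';Z)$ for every $Z$, hence $\ell_i\ge\ell_j$. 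This needs only $A_{-i}$, which has $p-1\ge m$ terms, to be invertible, so the ridge regularization you introduce at $m=p-1$ is unnecessary.
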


\begin{proof}
Write
$\widetilde XV=G\Lambda^{1/2}$, where
$\Lambda=\operatorname{diag}(\lambda_1,\ldots,\lambda_p)$
and $G=[g_1,\ldots,g_p]$ has independent
$\mathcal N(0,I_m)$ columns. In the covariance eigenbasis,
\[
V^\top P_mV
=
\Lambda^{1/2}G^\top
(G\Lambda G^\top)^{-1}G\Lambda^{1/2}.
\]
Sign symmetry of each Gaussian column implies that every
off-diagonal entry of this matrix has zero expectation.

For a diagonal entry, define
\[
A_{-i}:=\sum_{j\ne i}\lambda_jg_jg_j^\top,
\qquad
a_i:=g_i^\top A_{-i}^{-1}g_i.
\]
Because $p-1\ge m$, $A_{-i}$ is positive definite almost surely.
The Sherman--Morrison identity gives
\[
(V^\top P_mV)_{ii}
=
\frac{\lambda_i a_i}{1+\lambda_i a_i}.
\]
Hence $0<\ell_i<1$. For fixed $G$, this expression increases
with $\lambda_i$ and decreases with any $\lambda_j$, $j\ne i$.
Comparing two configurations with $\lambda_i$ and $\lambda_j$
interchanged, and using exchangeability of the Gaussian columns,
therefore gives
$\lambda_i\ge\lambda_j\Rightarrow\ell_i\ge\ell_j$.
Finally,
\[
\sum_{i=1}^{p}\ell_i
=
\mathbb E[\operatorname{Tr}(P_m)]
=
m,
\]
because $P_m$ is a rank-$m$ orthogonal projector.
\end{proof}
\paragraph{Interpretation.}
The pseudo-labeled sample size $m$ determines the dimension of the
teacher-parameter subspace retained by Stage-II training: an
$m$-dimensional, sample-dependent subspace is preserved, while its
$(p-m)$-dimensional orthogonal complement is eliminated. The retained
subspace is the row space of $\widetilde X$, whereas the eliminated
subspace is its null space. Importantly, these are sample-dependent
directions and do not generally coincide with the covariance
eigenvectors. In the covariance eigenbasis, the same mechanism appears
as preferential retention of higher-variance directions in expectation,
with total expected retention
\[
\sum_{i=1}^p \ell_i=m.
\]
Thus, $m$ determines the dimension of the retained subspace, while the
sampled covariates and the population covariance determine its
orientation.
\begin{assumption}[Power-law covariance]
\label{ass:powerlaw-covariance}

We assume that the population covariance follows the power-law model
\begin{equation}
\Sigma
=
\operatorname{diag}
\left(
1,2^{-\alpha},\ldots,p^{-\alpha}
\right),
\qquad
\lambda_i=i^{-\alpha},
\qquad
\alpha>1,
\label{eq:powerlaw-covariance}
\end{equation}
where $\{\lambda_i\}_{i=1}^p$ are the eigenvalues of $\Sigma$.
\end{assumption}

\begin{assumption}[Proportional high-dimensional asymptotics]
\label{ass:proportional-asymptotics}

We work in the proportional high-dimensional regime
\begin{equation}
\phi_{T,p}
:=
\frac{p}{n}
\longrightarrow
\phi_T>1,
\qquad
\phi_{S,p}
:=
\frac{p}{m}
\longrightarrow
\phi_S>1,
\label{eq:powerlaw-proportional-ratios}
\end{equation}
as $p,n,m\to\infty$.
Thus both Stage~I and Stage~II remain overparameterized, with
asymptotically nonzero sample-to-dimension ratios. In this regime, the deterministic-equivalence framework of
Lemma~\ref{lem:strong-deterministic-equivalence} applies under the
corresponding proportional-asymptotic assumptions.
\end{assumption}

The preceding spectral-preference result provides intuition for how Stage~II can modify the Stage-I estimator even in the absence of explicit regularization: the Stage-II row-space projection preferentially retains directions associated with larger population variance.

We now turn from this qualitative interpretation to a quantitative
analysis. Under Assumptions~\ref{ass:powerlaw-covariance} and
\ref{ass:proportional-asymptotics}, we specialize the
deterministic-equivalent expression
\eqref{eq:w2sg-after-stage1-DE} to the power-law covariance model.
\subsection{Proportional high-dimensional asymptotics}
\begin{theorem}[Power-law limit of the deterministic-equivalent risk improvement]
\label{thm:powerlaw-risk-improvement}

Under Assumptions~\ref{ass:powerlaw-covariance} and~\ref{ass:proportional-asymptotics}, let
$\mathsf{W2SG}_p^{\mathrm{DE}}$ denote the finite-$p$
deterministic-equivalent Stage-I-to-Stage-II risk improvement. Then
\begin{align}
\mathsf{W2SG}_p^{\mathrm{DE}}
\longrightarrow{}&
\frac{\sigma^2\phi_Ts_S}
{1-\gamma_T}
\int_0^1
\frac{
x^\alpha
\left[
2+
\left(
2-\frac{1}{1-\gamma_S}
\right)
s_Sx^\alpha
\right]
}{
(1+s_Sx^\alpha)^2
(1+s_Tx^\alpha)^2
}
\,dx.
\label{eq:powerlaw-final-limit-single-integral}
\end{align}
Here $s_T,s_S>0$ are the unique solutions of
\begin{equation}
\int_0^1
\frac{dx}{1+s_ax^\alpha}
=
\frac1{\phi_a},
\qquad
a\in\{T,S\},
\label{eq:powerlaw-final-sj}
\end{equation}
and
\begin{equation}
\gamma_a
=
\phi_a
\int_0^1
\frac{dx}
{(1+s_ax^\alpha)^2},
\qquad
a\in\{T,S\}.
\label{eq:powerlaw-final-gammaj}
\end{equation}
\end{theorem}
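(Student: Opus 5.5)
The plan is to start from the exact finite-$p$ eigenvalue expression \eqref{eq:w2sg-eigenvalue-final} of Theorem~\ref{thm:fully-deterministic-risk-improvement}. Under Assumption~\ref{ass:powerlaw-covariance}, every sum in it is a Riemann sum in the variable $x=i/p$, once the renormalized ridges are rescaled as
\[
s_{a,p}:=\tau_a p^{\alpha},\qquad a\in\{T,S\}.
\]
The key identity is
\[
\frac{\lambda_i}{\lambda_i+\tau_a}=\frac{1}{1+\tau_a i^{\alpha}}=\frac{1}{1+s_{a,p}(i/p)^{\alpha}}.
\]
This makes every factor $\lambda_i/(\lambda_i+\tau_a)$ a bounded continuous function of $x_i=i/p$. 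Every factor $\tau_a/(\lambda_i+\tau_a)$ becomes $s_{a,p}x_i^\alpha/(1+s_{a,p}x_i^\alpha)$, which is also bounded and continuous. I would therefore proceed in three steps: (1) identify the limits of $s_{a,p}$ and $\gamma_a$; (2) take the limit of each of the three sums; (3) combine the surviving terms algebraically.

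\textbf{Step 1 (fixed points).} Dividing the ridgeless fixed-point equation $\sum_i \lambda_i/(\lambda_i+\tau_a)=n$ (respectively $m$) by $p$ gives
\[
F_p(s_{a,p}):=\frac1p\sum_{i=1}^p\frac{1}{1+s_{a,p}(i/p)^\alpha}=\frac{1}{\phi_{a,p}}.
\]
For each fixed $s$, $F_p(s)\to F(s):=\int_0^1 dx/(1+s x^\alpha)$, with error $O(1/p)$ uniformly on compact sets, since the integrand is monotone in $x$ and bounded by $1$. The limit $F$ is continuous and strictly decreasing from $F(0)=1$ to $F(\infty)=0$, and $1/\phi_a\in(0,1)$. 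Hence the equation $F(s)=1/\phi_a$ has a unique root $s_a$. A standard monotonicity argument (each $F_p$ is decreasing and converges locally uniformly) then shows $s_{a,p}\to s_a$, and in particular $s_{a,p}$ stays bounded and away from $0$. The same Riemann-sum argument gives
\[
\gamma_{a}=\frac{p}{n_a}\cdot\frac1p\sum_i\frac{1}{(1+s_{a,p}x_i^\alpha)^2}\to\phi_a\int_0^1\frac{dx}{(1+s_a x^\alpha)^2},
\]
where $n_a$ denotes $n$ or $m$. This limit is strictly less than $\phi_a F(s_a)=1$, so $1-\gamma_a$ stays bounded away from zero, which matches \eqref{eq:powerlaw-final-gammaj}.

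\textbf{Step 2 (the three sums).} For the signal term, write
\[
\frac{\tau_S^2\lambda_i^2}{(\lambda_i+\tau_S)^2(\lambda_i+\tau_T)}=\frac{s_{S,p}^2\,p^{-\alpha}\,x_i^\alpha}{(1+s_{S,p}x_i^\alpha)^2(1+s_{T,p}x_i^\alpha)}.
\]
Summing over $i$ gives $O(p^{1-\alpha})$, which tends to $0$ because $\alpha>1$; so the first term of \eqref{eq:w2sg-eigenvalue-final} vanishes in the limit. For the cross-noise term,
\[
\frac{\tau_S}{n}\sum_i\frac{\lambda_i^2}{(\lambda_i+\tau_S)(\lambda_i+\tau_T)^2}=\frac{p}{n}\cdot\frac1p\sum_i\frac{s_{S,p}x_i^\alpha}{(1+s_{S,p}x_i^\alpha)(1+s_{T,p}x_i^\alpha)^2}\to\phi_T s_S\int_0^1\frac{x^\alpha\,dx}{(1+s_Sx^\alpha)(1+s_Tx^\alpha)^2}.
\]
Similarly, the last noise sum converges to
\[
\phi_T s_S^2\int_0^1\frac{x^{2\alpha}\,dx}{(1+s_Sx^\alpha)^2(1+s_Tx^\alpha)^2}.
\]
All integrands are bounded and Lipschitz in $(x,s_S,s_T)$ on compact sets, so replacing $s_{a,p}$ by $s_a$ and the Riemann sums by integrals is justified by uniform convergence.

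\textbf{Step 3 (combination).} Multiply the two noise limits by $2\sigma^2/(1-\gamma_T)$ and $-\sigma^2/((1-\gamma_T)(1-\gamma_S))$ respectively, and put them over the common denominator $(1+s_Sx^\alpha)^2(1+s_Tx^\alpha)^2$. The numerator becomes
\[
s_Sx^\alpha\Bigl[2(1+s_Sx^\alpha)-\tfrac{s_Sx^\alpha}{1-\gamma_S}\Bigr]=s_Sx^\alpha\Bigl[2+\bigl(2-\tfrac1{1-\gamma_S}\bigr)s_Sx^\alpha\Bigr],
\]
which yields exactly \eqref{eq:powerlaw-final-limit-single-integral}.

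I expect the main obstacle to be Step 1, in particular the uniform control needed to pass from $s_{a,p}$ to $s_a$ and to keep $1-\gamma_{S}$ and $1-\gamma_T$ bounded away from zero along the sequence. The remaining concern is the Riemann-sum error near $x=0$, where the integrands are not uniformly Lipschitz in $x$ when $\alpha$ is non-integer. Since these integrands are bounded monotone functions of $x^\alpha$, the error is still $O(1/p)$ by a monotone-function Riemann bound. That bound is what I would use first.
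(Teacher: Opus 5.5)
Your proposal is correct and follows essentially the same route as the paper: rescale $s_{a,p}=p^\alpha\tau_{a,p}$, pass the fixed-point and $\gamma_a$ equations to their Riemann-integral limits with $1-\gamma_a>0$, show the signal-dependent term is $O(p^{1-\alpha})$, and combine the two order-one noise integrals over the common denominator. The only cosmetic difference is that you start from the already-combined eigenvalue form \eqref{eq:w2sg-eigenvalue-final}, whereas the paper works from the separate $D_S$ and $A_S$ traces and notes that their leading $2s_S$ pieces cancel.

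One small correction: the noise integrands are not monotone in $x$ in general. You do not need a monotone Riemann bound, though. Since $x^\alpha$ is $C^1$ on $[0,1]$ for $\alpha>1$, joint continuity in $(x,s_S,s_T)$ on $[0,1]\times K$ with $K$ compact already gives the uniform Riemann-sum convergence you need. For the same reason, your worry about non-Lipschitz behavior near $x=0$ does not arise.
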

\begin{proof}
For clarity, throughout this proof we use
$
\tau_{a,p},
\gamma_{a,p},
a\in\{T,S\},
$
to denote the corresponding finite-$p$ quantities. Below, we define
the rescaled ridge
$
s_{a,p}:=p^\alpha\tau_{a,p},
$
whose limit is denoted by $s_a$, while $\gamma_a$ denotes the limit
of $\gamma_{a,p}$.
Since the ridgeless fixed-point equation is
\[
\sum_{i=1}^p
\frac{i^{-\alpha}}
{i^{-\alpha}+\tau_{a,p}}
=
\frac{p}{\phi_{a,p}},
\]
using $s_{a,p}=p^\alpha\tau_{a,p}$ and dividing by $p$ gives
\begin{align}
\frac1p
\sum_{i=1}^p
\frac{i^{-\alpha}}
{i^{-\alpha}+\tau_{a,p}}
=
\frac1p
\sum_{i=1}^p
\frac{1}
{1+s_{a,p}(i/p)^\alpha}
=
\frac1{\phi_{a,p}}.
\label{eq:sjp-fixed-point}
\end{align}
Hence
$s_{a,p}\longrightarrow s_a,
~ a\in\{T,S\},$
where $s_a>0$ is the unique solution of
\begin{equation}
\int_0^1
\frac{dx}{1+s_ax^\alpha}
=
\frac1{\phi_a}.
\label{eq:sj-limit-fixed-point}
\end{equation}
Consequently,
\begin{equation}
\tau_{a,p}
=
s_a\,p^{-\alpha}
+
o(p^{-\alpha}).
\label{eq:tauj-asymptotic}
\end{equation}
Similarly,
\begin{align}
\gamma_{a,p}
=
\frac{\phi_{a,p}}{p}
\sum_{i=1}^p
\left(
\frac{i^{-\alpha}}
{i^{-\alpha}+\tau_{a,p}}
\right)^2
=
\phi_{a,p}
\frac1p
\sum_{i=1}^p
\frac{1}
{
\left(
1+s_{a,p}(i/p)^\alpha
\right)^2
}.
\end{align}
Therefore,
\begin{equation}
\gamma_{a,p}
\longrightarrow
\gamma_a
:=
\phi_a
\int_0^1
\frac{dx}
{(1+s_ax^\alpha)^2}.
\label{eq:gammaj-powerlaw-limit}
\end{equation}
Using \eqref{eq:sj-limit-fixed-point},
\begin{align}
1-\gamma_a
=
\phi_a
\int_0^1
\left[
\frac{1}{1+s_ax^\alpha}
-
\frac{1}{(1+s_ax^\alpha)^2}
\right]
dx
=
\phi_as_a
\int_0^1
\frac{x^\alpha}
{(1+s_ax^\alpha)^2}
\,dx
>0.
\label{eq:one-minus-gammaj}
\end{align}
Hence $(1-\gamma_{a,p})^{-1}$ remains bounded for all sufficiently
large $p$. For the remainder of the proof, let
$
x_{i,p}:=\frac{i}{p}.
$
Then
$
i^{-\alpha}
=
p^{-\alpha}x_{i,p}^{-\alpha},
$
and
$
\frac{i^{-\alpha}}
{i^{-\alpha}+\tau_{a,p}}
=
\frac{1}
{1+s_{a,p}x_{i,p}^{\alpha}}.
$
Consider
\[
\Tr
\left[
D_S
\Sigma
(\Sigma+\tau_{T,p}I_p)^{-1}
\right].
\]
Since all matrices are diagonal in the eigenbasis of $\Sigma$,
\begin{align}
&
\Tr
\left[
D_S
\Sigma
(\Sigma+\tau_{T,p}I_p)^{-1}
\right]
=
2\tau_{S,p}
\sum_{i=1}^p
\frac{i^{-\alpha}}
{i^{-\alpha}+\tau_{S,p}}
\frac{i^{-\alpha}}
{i^{-\alpha}+\tau_{T,p}}
\notag\\
-&
\frac{\tau_{S,p}^2}
{1-\gamma_{S,p}}
\sum_{i=1}^p
\frac{i^{-\alpha}}
{(i^{-\alpha}+\tau_{S,p})^2}
\frac{i^{-\alpha}}
{i^{-\alpha}+\tau_{T,p}}.
\label{eq:first-trace-expanded}
\end{align}
The two summands satisfy
\begin{align}
\tau_{S,p}
\frac{i^{-\alpha}}
{i^{-\alpha}+\tau_{S,p}}
\frac{i^{-\alpha}}
{i^{-\alpha}+\tau_{T,p}}
=&
p^{-\alpha}
\frac{s_{S,p}}
{
(1+s_{S,p}x_{i,p}^\alpha)
(1+s_{T,p}x_{i,p}^\alpha)
},
\label{eq:first-trace-first-rescale}
\end{align}
and
\begin{align}
\tau_{S,p}^2
\frac{i^{-\alpha}}
{(i^{-\alpha}+\tau_{S,p})^2}
\frac{i^{-\alpha}}
{i^{-\alpha}+\tau_{T,p}}
=&
p^{-\alpha}
\frac{
s_{S,p}^2x_{i,p}^\alpha
}{
(1+s_{S,p}x_{i,p}^\alpha)^2
(1+s_{T,p}x_{i,p}^\alpha)
}.
\label{eq:first-trace-second-rescale}
\end{align}
Therefore,
\begin{align}
&
\Tr
\left[
D_S
\Sigma
(\Sigma+\tau_{T,p}I_p)^{-1}
\right]
\notag\\
=&
p^{1-\alpha}
\Bigg[
2s_{S,p}
\frac1p
\sum_{i=1}^p
\frac{1}
{
(1+s_{S,p}x_{i,p}^\alpha)
(1+s_{T,p}x_{i,p}^\alpha)
}
-
\frac{s_{S,p}^2}
{1-\gamma_{S,p}}
\frac1p
\sum_{i=1}^p
\frac{
x_{i,p}^\alpha
}{
(1+s_{S,p}x_{i,p}^\alpha)^2
(1+s_{T,p}x_{i,p}^\alpha)
}
\Bigg].
\end{align}
By Riemann-sum convergence,
\begin{align}
=
p^{1-\alpha}
\Bigg[
2s_S
\int_0^1
\frac{dx}
{
(1+s_Sx^\alpha)
(1+s_Tx^\alpha)
}
-
\frac{s_S^2}
{1-\gamma_S}
\int_0^1
\frac{
x^\alpha
}{
(1+s_Sx^\alpha)^2
(1+s_Tx^\alpha)
}
\,dx
\Bigg]
+
o(p^{1-\alpha}).
\label{eq:first-trace-asymptotic}
\end{align}
Since $\alpha>1$, this term vanishes as $p\to\infty$.

We next consider
\[
\frac{\sigma^2}
{n(1-\gamma_{T,p})}
\Tr
\left[
D_S
\Sigma
(\Sigma+\tau_{T,p}I_p)^{-2}
\right].
\]
Expanding $D_S$ gives
\begin{align}
&
\frac{\sigma^2}
{n(1-\gamma_{T,p})}
\Tr
\left[
D_S
\Sigma
(\Sigma+\tau_{T,p}I_p)^{-2}
\right]
=
\frac{2\sigma^2\tau_{S,p}}
{n(1-\gamma_{T,p})}
\sum_{i=1}^p
\frac{i^{-\alpha}}
{i^{-\alpha}+\tau_{S,p}}
\frac{i^{-\alpha}}
{(i^{-\alpha}+\tau_{T,p})^2}
\notag\\
-&
\frac{\sigma^2\tau_{S,p}^2}
{n(1-\gamma_{T,p})(1-\gamma_{S,p})}
\sum_{i=1}^p
\frac{i^{-\alpha}}
{(i^{-\alpha}+\tau_{S,p})^2}
\frac{i^{-\alpha}}
{(i^{-\alpha}+\tau_{T,p})^2}.
\label{eq:noise-trace-expanded}
\end{align}
Since
\begin{equation}
\frac{i^{-\alpha}}
{(i^{-\alpha}+\tau_{T,p})^2}
=
p^\alpha
\frac{x_{i,p}^\alpha}
{
(1+s_{T,p}x_{i,p}^\alpha)^2
},
\label{eq:stage1-noise-factor-rescale}
\end{equation}
we have
\begin{align}
\tau_{S,p}
\frac{i^{-\alpha}}
{i^{-\alpha}+\tau_{S,p}}
\frac{i^{-\alpha}}
{(i^{-\alpha}+\tau_{T,p})^2}
=&
s_{S,p}
\frac{
x_{i,p}^\alpha
}{
(1+s_{S,p}x_{i,p}^\alpha)
(1+s_{T,p}x_{i,p}^\alpha)^2
},
\label{eq:noise-first-rescale}
\end{align}
and
\begin{align}
\tau_{S,p}^2
\frac{i^{-\alpha}}
{(i^{-\alpha}+\tau_{S,p})^2}
\frac{i^{-\alpha}}
{(i^{-\alpha}+\tau_{T,p})^2}
=&
s_{S,p}^2
\frac{
x_{i,p}^{2\alpha}
}{
(1+s_{S,p}x_{i,p}^\alpha)^2
(1+s_{T,p}x_{i,p}^\alpha)^2
}.
\label{eq:noise-second-rescale}
\end{align}

Using
$
\frac1n=\frac{\phi_{T,p}}p,
$
we obtain
\begin{align}
&
\frac{\sigma^2}
{n(1-\gamma_{T,p})}
\Tr
\left[
D_S
\Sigma
(\Sigma+\tau_{T,p}I_p)^{-2}
\right]
=
\frac{2\sigma^2\phi_{T,p}s_{S,p}}
{1-\gamma_{T,p}}
\frac1p
\sum_{i=1}^p
\frac{
x_{i,p}^\alpha
}{
(1+s_{S,p}x_{i,p}^\alpha)
(1+s_{T,p}x_{i,p}^\alpha)^2
}
\notag\\
&\qquad
-
\frac{
\sigma^2\phi_{T,p}s_{S,p}^2
}{
(1-\gamma_{T,p})(1-\gamma_{S,p})
}
\frac1p
\sum_{i=1}^p
\frac{
x_{i,p}^{2\alpha}
}{
(1+s_{S,p}x_{i,p}^\alpha)^2
(1+s_{T,p}x_{i,p}^\alpha)^2
}.
\label{eq:noise-trace-rescaled}
\end{align}
Taking $p\to\infty$ yields
\begin{align}
&
\frac{\sigma^2}
{n(1-\gamma_{T,p})}
\Tr
\left[
D_S
\Sigma
(\Sigma+\tau_{T,p}I_p)^{-2}
\right]
\longrightarrow
\frac{2\sigma^2\phi_Ts_S}
{1-\gamma_T}
\int_0^1
\frac{
x^\alpha
}{
(1+s_Sx^\alpha)
(1+s_Tx^\alpha)^2
}
\,dx
\notag\\
&\qquad
-
\frac{
\sigma^2\phi_Ts_S^2
}{
(1-\gamma_T)(1-\gamma_S)
}
\int_0^1
\frac{
x^{2\alpha}
}{
(1+s_Sx^\alpha)^2
(1+s_Tx^\alpha)^2
}
\,dx.
\label{eq:noise-trace-limit}
\end{align}
Thus, unlike the signal-dependent contribution, the Stage-I noise
contribution remains of order one.

Finally,
\begin{align}
&
-2
\Tr
\left[
A_S
\Sigma
(\Sigma+\tau_{T,p}I_p)^{-1}
\right]
=
-2\tau_{S,p}
\sum_{i=1}^p
\frac{i^{-\alpha}}
{i^{-\alpha}+\tau_{S,p}}
\frac{i^{-\alpha}}
{i^{-\alpha}+\tau_{T,p}}
=
-2p^{1-\alpha}s_{S,p}
\left[
\frac1p
\sum_{i=1}^p
\frac{1}
{
(1+s_{S,p}x_{i,p}^\alpha)
(1+s_{T,p}x_{i,p}^\alpha)
}
\right].
\label{eq:third-trace-rescaled}
\end{align}
Hence
\begin{align}
-2
\Tr
\left[
A_S
\Sigma
(\Sigma+\tau_{T,p}I_p)^{-1}
\right]
=
-2s_Sp^{1-\alpha}
\int_0^1
\frac{dx}
{
(1+s_Sx^\alpha)
(1+s_Tx^\alpha)
}
+
o(p^{1-\alpha}),
\label{eq:third-trace-asymptotic}
\end{align}
and therefore this term also vanishes because $\alpha>1$. Moreover, the leading $2s_S$ contribution in
\eqref{eq:first-trace-asymptotic} cancels with
\eqref{eq:third-trace-asymptotic}. Thus, their combined contribution is
\begin{align}
&
\Tr
\left[
D_S
\Sigma
(\Sigma+\tau_{T,p}I_p)^{-1}
\right]
-
2
\Tr
\left[
A_S
\Sigma
(\Sigma+\tau_{T,p}I_p)^{-1}
\right]
\notag\\
=&
-
\frac{s_S^2}
{1-\gamma_S}
p^{1-\alpha}
\int_0^1
\frac{
x^\alpha
}{
(1+s_Sx^\alpha)^2
(1+s_Tx^\alpha)
}
\,dx
+
o(p^{1-\alpha}).
\label{eq:combined-signal-powerlaw}
\end{align}
Since $\alpha>1$, the entire signal-dependent contribution vanishes in
the proportional limit.

\subsubsection{Limiting Stage-I-to-Stage-II risk improvement}

Combining the preceding calculations gives
\begin{align}
\mathsf{W2SG}_p^{\mathrm{DE}}
={}&
\frac{2\sigma^2\phi_Ts_S}
{1-\gamma_T}
\int_0^1
\frac{
x^\alpha
}{
(1+s_Sx^\alpha)
(1+s_Tx^\alpha)^2
}
\,dx
-
\frac{
\sigma^2\phi_Ts_S^2
}{
(1-\gamma_T)(1-\gamma_S)
}
\int_0^1
\frac{
x^{2\alpha}
}{
(1+s_Sx^\alpha)^2
(1+s_Tx^\alpha)^2
}
\,dx
\notag\\
-&
\frac{s_S^2}
{1-\gamma_S}
p^{1-\alpha}
\int_0^1
\frac{
x^\alpha
}{
(1+s_Sx^\alpha)^2
(1+s_Tx^\alpha)
}
\,dx
+
o(1).
\label{eq:powerlaw-full-asymptotic}
\end{align}
Because $\alpha>1$,
$
p^{1-\alpha}\longrightarrow0,
$
and therefore
\begin{align}
\mathsf{W2SG}_p^{\mathrm{DE}}
\longrightarrow{}&
\frac{2\sigma^2\phi_Ts_S}
{1-\gamma_T}
\int_0^1
\frac{
x^\alpha
}{
(1+s_Sx^\alpha)
(1+s_Tx^\alpha)^2
}
\,dx
-
\frac{
\sigma^2\phi_Ts_S^2
}{
(1-\gamma_T)(1-\gamma_S)
}
\int_0^1
\frac{
x^{2\alpha}
}{
(1+s_Sx^\alpha)^2
(1+s_Tx^\alpha)^2
}
\,dx.
\label{eq:powerlaw-risk-improvement-limit}
\end{align}
Combining the two terms over the common denominator
$(1+s_Sx^\alpha)^2$ gives
\eqref{eq:powerlaw-final-limit-single-integral}.
Define
\begin{equation}
A_y
:=
\int_0^1
\frac{x^{y\alpha}}
{(1+s_Sx^\alpha)^2
(1+s_Tx^\alpha)^2}
\,dx.
\label{eq:Ay-definition}
\end{equation}
Then
\begin{align}
\mathsf{W2SG}_p^{\mathrm{DE}}
\longrightarrow
\mathsf{W2SG}^{\mathrm{DE}}
:={}&
\frac{\sigma^2\phi_Ts_S}
{1-\gamma_T}
\left[
2A_1
+
\left(
2-\frac{1}{1-\gamma_S}
\right)
s_SA_2
\right].
\label{eq:powerlaw-final-limit-Ay}
\end{align}

Define
\begin{equation}
J_\alpha
:=
2A_1
+
\left(
2-\frac{1}{1-\gamma_S}
\right)
s_SA_2.
\label{eq:conditionofw2sg}
\end{equation}
Since
$
\phi_T>1,
s_S>0,
1-\gamma_T>0,
$
the sign of the limiting deterministic-equivalent gain satisfies
$
\operatorname{sgn}
\bigl(
\mathsf{W2SG}^{\mathrm{DE}}
\bigr)
=
\operatorname{sgn}(J_\alpha)
\qquad
\text{whenever }\sigma^2>0.
$
Equivalently, for $\sigma^2>0$,
\[
\mathsf{W2SG}^{\mathrm{DE}}>0
\quad\Longleftrightarrow\quad
J_\alpha>0.
\]
If $\sigma^2=0$, then instead
$
\mathsf{W2SG}^{\mathrm{DE}}=0.
$

Under the power-law covariance model, the proportional limit of the
deterministic-equivalent risk improvement is therefore determined by
$\alpha$, $\phi_T$, $\phi_S$, and $\sigma^2$ through
$s_T$, $s_S$, $\gamma_T$, and $\gamma_S$.
\end{proof}
\begin{lemma}[Monotonicity and limiting values of $\gamma_S$]
\label{lem:threecase-gamma-monotone}
For every $\alpha>1$, the function $\gamma_S$ is strictly increasing as a
function of
$
\phi_S^{-1}\in(0,1).
$
Moreover,
\begin{equation}
\label{eq:threecase-gamma-limits}
\lim_{\phi_S^{-1}\downarrow0}\gamma_S
=
\frac{\alpha-1}{\alpha},
\qquad
\lim_{\phi_S^{-1}\uparrow1}\gamma_S
=
1.
\end{equation}
Consequently:
\begin{enumerate}
\item if $1<\alpha<2$, there exists a unique
$\phi_{S,\gamma}^{-1}(\alpha)\in(0,1)$ such that
\[
\gamma_S<\frac12
\quad\text{for }0<\phi_S^{-1}<\phi_{S,\gamma}^{-1}(\alpha),
\]
and
\[
\gamma_S>\frac12
\quad\text{for }\phi_{S,\gamma}^{-1}(\alpha)<\phi_S^{-1}<1;
\]

\item if $\alpha=2$,
\[
\gamma_S>\frac12
\qquad
\forall\,\phi_S^{-1}\in(0,1),
\]
while
\[
\gamma_S\downarrow\frac12
\qquad
(\phi_S^{-1}\downarrow0);
\]

\item if $\alpha>2$,
\[
\gamma_S>\frac12
\qquad
\forall\,\phi_S^{-1}\in(0,1).
\]
\end{enumerate}
\end{lemma}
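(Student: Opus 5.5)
The approach is to reparametrize everything by the rescaled ridge $s=s_S\in(0,\infty)$ from Theorem~\ref{thm:powerlaw-risk-improvement}, rather than by $\phi_S^{-1}$. By \eqref{eq:powerlaw-final-sj},
\[
\phi_S^{-1}=u(s):=\int_0^1\frac{dx}{1+sx^\alpha}.
\]
The map $u$ is continuous and strictly decreasing, with $u(s)\to1$ as $s\downarrow0$ and $u(s)\to0$ as $s\to\infty$ by dominated convergence. So $u$ is a decreasing bijection $(0,\infty)\to(0,1)$.

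By \eqref{eq:powerlaw-final-gammaj} and the fixed-point equation,
\[
\gamma_S=\frac{\int_0^1(1+sx^\alpha)^{-2}\,dx}{\int_0^1(1+sx^\alpha)^{-1}\,dx}.
\]
It therefore suffices to show that this ratio is strictly \emph{decreasing} in $s$, and to compute its limits as $s\downarrow0$ and $s\to\infty$.

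\textbf{Monotonicity.} Substitute $t=sx^\alpha$ and set $\beta:=1/\alpha\in(0,1)$. The common prefactor $s^{-\beta}/\alpha$ cancels, giving
\[
\gamma_S=g(s):=\frac{\int_0^s t^{\beta-1}(1+t)^{-2}\,dt}{\int_0^s t^{\beta-1}(1+t)^{-1}\,dt}.
\]
This is a ratio of cumulative integrals $\int_0^s a\big/\int_0^s b$ with $a,b>0$ and $a/b=(1+t)^{-1}$ strictly decreasing. Differentiating gives
\[
g'(s)\propto a(s)\int_0^s b-b(s)\int_0^s a=b(s)\int_0^s b(t)\Bigl[\tfrac{a(s)}{b(s)}-\tfrac{a(t)}{b(t)}\Bigr]dt<0 .
\]
So $g$ is strictly decreasing in $s$. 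Composing with the decreasing bijection $u$ shows that $\gamma_S$ is strictly increasing in $\phi_S^{-1}$.

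\textbf{Limits.} As $s\downarrow0$ (i.e.\ $\phi_S^{-1}\uparrow1$), the integrand ratio $(1+t)^{-1}\to1$ uniformly on $[0,s]$, so $g(s)\to1$. As $s\to\infty$ (i.e.\ $\phi_S^{-1}\downarrow0$), both integrals converge because $\beta<1$, and they equal Beta functions:
\[
g(\infty)=\frac{B(\beta,2-\beta)}{B(\beta,1-\beta)}=\frac{\Gamma(2-\beta)}{\Gamma(2)\Gamma(1-\beta)}=1-\beta=\frac{\alpha-1}{\alpha}.
\]
This proves \eqref{eq:threecase-gamma-limits}.

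\textbf{Consequences.} $\gamma_S$ is continuous and strictly increasing from $(\alpha-1)/\alpha$ (not attained) up to $1$ on $(0,1)$.
\begin{enumerate}
\item If $1<\alpha<2$, then $(\alpha-1)/\alpha<1/2<1$. The intermediate value theorem and strict monotonicity give a unique crossing $\phi_{S,\gamma}^{-1}(\alpha)$ with the stated signs on either side.
\item If $\alpha=2$, the infimum equals $1/2$ and is not attained, so $\gamma_S>1/2$ throughout and $\gamma_S\downarrow1/2$ as $\phi_S^{-1}\downarrow0$.
\item If $\alpha>2$, then $\gamma_S>(\alpha-1)/\alpha>1/2$ throughout.
\end{enumerate}

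The only delicate step is the monotonicity of the ratio. Working directly in $x$ it reduces to $2\,\mathbb E_\nu[f(1-f)]>\mathbb E_\nu f\,\mathbb E_\nu(1-f)$ for $f=(1+sx^\alpha)^{-1}$ under $\nu\propto f\,dx$, which Chebyshev's inequality does not settle. The substitution $t=sx^\alpha$ is what turns the problem into the clean monotone-ratio (L'H\^opital-type) argument above, so I would use it.
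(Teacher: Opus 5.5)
Your proof is correct, and it takes a different route from the paper.

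\textbf{The paper's route.} It stays in the $x$-variable and integrates $\frac{d}{dx}\bigl(x/(1+sx^\alpha)\bigr)$ over $[0,1]$. This gives the identity $\int_0^1(1+sx^\alpha)^{-2}dx=\frac{\alpha-1}{\alpha}\int_0^1(1+sx^\alpha)^{-1}dx+\frac{1}{\alpha(1+s)}$, and hence the closed form $\gamma_S=\frac{\alpha-1}{\alpha}+\frac{1}{\alpha}\bigl[(1+s_S)u(s_S)\bigr]^{-1}$ with $u(s)=\int_0^1(1+sx^\alpha)^{-1}dx$. Monotonicity then reduces to showing that $(1+s)u(s)$ is strictly increasing. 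This follows from its log-derivative together with $u(s)-(1+s)\int_0^1x^\alpha(1+sx^\alpha)^{-2}dx=\int_0^1(1-x^\alpha)(1+sx^\alpha)^{-2}dx>0$. The sparse limit follows from $(1+s)u(s)\sim C_\alpha s^{1-1/\alpha}\to\infty$.

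\textbf{Your route.} You substitute $t=sx^\alpha$, which turns $\gamma_S$ into a ratio of incomplete Beta integrals. You then apply the monotone-ratio argument for cumulative integrals, since the density ratio $(1+t)^{-1}$ is decreasing, and evaluate the limit via $B(\beta,2-\beta)/B(\beta,1-\beta)=1-\beta$.

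\textbf{What each buys.}
\begin{itemize}
\item Your monotonicity step is more structural. It applies verbatim to any ratio $\int_0^s a\big/\int_0^s b$ with $a/b$ monotone, and it avoids the Chebyshev obstruction you correctly identified in the $x$-variable.
\item The paper's route gives an explicit additive decomposition. That makes the strict bound $\gamma_S>\frac{\alpha-1}{\alpha}$ and the rate of approach visible at a glance.
\item The paper's identity is reused later: in the scalar positivity lemma for $1<\alpha\le2$ and in the sparse-growing limit.
\end{itemize}
The two limit computations are really the same fact. Your $\Gamma(2-\beta)=(1-\beta)\Gamma(1-\beta)$ is the $t$-variable form of the paper's integration-by-parts identity.
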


\begin{proof}
Recall that $s_S>0$ is determined implicitly by
\begin{equation}
\int_0^1
\frac{dx}{1+s_Sx^\alpha}
=
\frac{1}{\phi_S},
\label{eq:gammaS-fixed-point-proof}
\end{equation}
and
\begin{equation}
\gamma_S
=
\phi_S
\int_0^1
\frac{dx}{(1+s_Sx^\alpha)^2}.
\label{eq:gammaS-definition-proof}
\end{equation}

We first derive a convenient closed-form relation for $\gamma_S$.
Observe that
\[
\frac{d}{dx}
\left(
\frac{x}{1+sx^\alpha}
\right)
=
\frac{1}{1+sx^\alpha}
-
\frac{\alpha sx^\alpha}{(1+sx^\alpha)^2}.
\]
Integrating over $x\in[0,1]$ gives
\[
\frac{1}{1+s}
=
\int_0^1
\frac{dx}{1+sx^\alpha}
-
\alpha
\int_0^1
\frac{sx^\alpha}{(1+sx^\alpha)^2}\,dx.
\]
Using
$
\frac{sx^\alpha}{(1+sx^\alpha)^2}
=
\frac{1}{1+sx^\alpha}
-
\frac{1}{(1+sx^\alpha)^2},
$
we obtain
\begin{equation}
\int_0^1
\frac{dx}{(1+sx^\alpha)^2}
=
\frac{\alpha-1}{\alpha}
\int_0^1
\frac{dx}{1+sx^\alpha}
+
\frac{1}{\alpha(1+s)}.
\label{eq:gammaS-integral-identity}
\end{equation}

Setting $s=s_S$ and using
\eqref{eq:gammaS-fixed-point-proof} and
\eqref{eq:gammaS-definition-proof}, we find
\begin{align}
\gamma_S
=&
\phi_S
\left[
\frac{\alpha-1}{\alpha}
\frac{1}{\phi_S}
+
\frac{1}{\alpha(1+s_S)}
\right]
=
\frac{\alpha-1}{\alpha}
+
\frac{\phi_S}{\alpha(1+s_S)}.
\label{eq:threecase-gamma-closed}
\end{align}
Equivalently, since
$
\phi_S
=
\left(
\int_0^1
\frac{dx}{1+s_Sx^\alpha}
\right)^{-1},
$
we may write
\begin{equation}
\gamma_S
=
\frac{\alpha-1}{\alpha}
+
\frac{1}{\alpha}
\left[
(1+s_S)
\int_0^1
\frac{dx}{1+s_Sx^\alpha}
\right]^{-1}.
\label{eq:gammaS-direct-s}
\end{equation}

We next establish monotonicity. First,
\[
\frac{d}{ds}
\int_0^1
\frac{dx}{1+sx^\alpha}
=
-
\int_0^1
\frac{x^\alpha}{(1+sx^\alpha)^2}\,dx
<0.
\]
Hence
$
\phi_S^{-1}
=
\int_0^1
\frac{dx}{1+s_Sx^\alpha}
$
is a strictly decreasing function of $s_S$.
It therefore remains to show that $\gamma_S$ is also strictly
decreasing as a function of $s_S$. By
\eqref{eq:gammaS-direct-s}, it is enough to show that
\[
\left[
(1+s)
\int_0^1
\frac{dx}{1+sx^\alpha}
\right]^{-1}
\]
is strictly decreasing in $s$.
Its logarithmic derivative is
\begin{align}
&\frac{d}{ds}
\log
\left[
\left(
(1+s)
\int_0^1
\frac{dx}{1+sx^\alpha}
\right)^{-1}
\right]
=
-\frac{1}{1+s}
+
\frac{
\displaystyle
\int_0^1
\frac{x^\alpha}{(1+sx^\alpha)^2}\,dx
}{
\displaystyle
\int_0^1
\frac{dx}{1+sx^\alpha}
}.
\label{eq:gammaS-log-derivative}
\end{align}
Now,
\begin{align}
&
\int_0^1
\frac{dx}{1+sx^\alpha}
-
(1+s)
\int_0^1
\frac{x^\alpha}{(1+sx^\alpha)^2}\,dx
=
\int_0^1
\frac{
1+sx^\alpha-(1+s)x^\alpha
}{
(1+sx^\alpha)^2
}
\,dx
=
\int_0^1
\frac{
1-x^\alpha
}{
(1+sx^\alpha)^2
}
\,dx
>0.
\end{align}
Therefore,
\[
\frac{
\displaystyle
\int_0^1
\frac{x^\alpha}{(1+sx^\alpha)^2}\,dx
}{
\displaystyle
\int_0^1
\frac{dx}{1+sx^\alpha}
}
<
\frac{1}{1+s}.
\]
Substituting this into
\eqref{eq:gammaS-log-derivative} shows that the logarithmic derivative
is strictly negative. Hence $\gamma_S$ is strictly decreasing in
$s_S$.
Since both $\gamma_S$ and $\phi_S^{-1}$ are strictly decreasing
functions of $s_S$, it follows that $\gamma_S$ is a strictly increasing
function of
$
\phi_S^{-1}\in(0,1).
$

We now determine the endpoint limits. As $s_S\downarrow0$,
$
\int_0^1
\frac{dx}{1+s_Sx^\alpha}
\longrightarrow1,
$
so
$
\phi_S^{-1}\uparrow1.
$
Moreover, from \eqref{eq:gammaS-definition-proof},
$
\gamma_S
=
\phi_S
\int_0^1
\frac{dx}{(1+s_Sx^\alpha)^2}
\longrightarrow1.$
Thus,
$
\lim_{\phi_S^{-1}\uparrow1}\gamma_S
=
1.
$
Next consider $s_S\to\infty$. With the substitution
$
u=s_S^{1/\alpha}x,
$
we have
\begin{align}
\int_0^1
\frac{dx}{1+s_Sx^\alpha}
=&
s_S^{-1/\alpha}
\int_0^{s_S^{1/\alpha}}
\frac{du}{1+u^\alpha}.
\end{align}
Since $\alpha>1$, Define
$
C_\alpha
:=
\int_0^\infty
\frac{du}{1+u^\alpha}
<\infty,
$
and therefore
\[
\int_0^1
\frac{dx}{1+s_Sx^\alpha}
\sim
C_\alpha s_S^{-1/\alpha}.
\]
In particular,
$
\phi_S^{-1}\downarrow0.
$
Furthermore,
\[
(1+s_S)
\int_0^1
\frac{dx}{1+s_Sx^\alpha}
\sim
C_\alpha s_S^{\,1-1/\alpha}
\longrightarrow\infty,
\]
because $\alpha>1$. Hence, by
\eqref{eq:gammaS-direct-s},
$
\gamma_S
\longrightarrow
\frac{\alpha-1}{\alpha}.
$
Thus,
$
\lim_{\phi_S^{-1}\downarrow0}\gamma_S
=
\frac{\alpha-1}{\alpha}.
$

It remains to distinguish the three cases.
If $1<\alpha<2$, then
$
\frac{\alpha-1}{\alpha}
<
\frac12
<
1.
$
Since $\gamma_S$ is continuous and strictly increasing in
$\phi_S^{-1}$, there exists a unique
$\phi_{S,\gamma}^{-1}(\alpha)\in(0,1)$ such that
$
\gamma_S=\frac12.
$
Consequently,
\[
\gamma_S<\frac12
\qquad
\text{for }
0<\phi_S^{-1}
<
\phi_{S,\gamma}^{-1}(\alpha),
\]
whereas
\[
\gamma_S>\frac12
\qquad
\text{for }
\phi_{S,\gamma}^{-1}(\alpha)
<
\phi_S^{-1}<1.
\]

If $\alpha=2$, then
$
\lim_{\phi_S^{-1}\downarrow0}\gamma_S
=
\frac12.
$
Strict monotonicity therefore implies
\[
\gamma_S>\frac12
\qquad
\forall\,\phi_S^{-1}\in(0,1),
\]
with
\[
\gamma_S\downarrow\frac12
\qquad
\text{as }\phi_S^{-1}\downarrow0.
\]

Finally, if $\alpha>2$, then
$
\frac{\alpha-1}{\alpha}
>
\frac12.
$
Since this is already the lower endpoint limit of $\gamma_S$, strict
monotonicity gives
\[
\gamma_S>\frac12
\qquad
\forall\,\phi_S^{-1}\in(0,1).
\]

This proves all three cases.
\end{proof}

\begin{lemma}[Removal of the Stage-I weight]
\label{lem:threecase-remove-stage1}
For every $s_T,s_S>0$,
\begin{equation}
\label{eq:threecase-A-ratio-bound}
\frac{A_2}{A_1}
\le
\frac{B_2(s_S)}{B_1(s_S)}.
\end{equation}
where
\begin{equation}
\label{eq:threecase-By-definition}
B_y(s_S)
=
\int_0^1
\frac{x^{y\alpha}}
{\left(1+s_S x^\alpha\right)^2}
\,dx
\end{equation}
\end{lemma}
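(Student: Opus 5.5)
The plan is to read both ratios as expectations of the same increasing function under two probability measures. The measures differ by a reweighting that is monotone in the opposite direction, and Chebyshev's association (rearrangement) inequality then gives the comparison.

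Write $f(x):=x^\alpha/(1+s_Sx^\alpha)^2\ge0$ and $w(x):=(1+s_Tx^\alpha)^{-2}$ on $[0,1]$. With these,
\[
A_y=\int_0^1 x^{(y-1)\alpha} f(x)\,w(x)\,dx,
\qquad
B_y(s_S)=\int_0^1 x^{(y-1)\alpha} f(x)\,dx,
\qquad y\in\{1,2\}.
\]
All four integrals are strictly positive and finite, since $f>0$ on $(0,1]$ and $0<w\le1$. So the claim \eqref{eq:threecase-A-ratio-bound} is equivalent to the cross-multiplied inequality $A_2B_1-A_1B_2\le0$.

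The key step is to write this difference as a double integral and symmetrize:
\[
A_2B_1-A_1B_2
=\int_0^1\!\!\int_0^1 f(x)f(y)\,w(x)\bigl(x^\alpha-y^\alpha\bigr)\,dx\,dy .
\]
Exchanging the roles of $x$ and $y$ and averaging the two expressions gives
\[
A_2B_1-A_1B_2
=\frac12\int_0^1\!\!\int_0^1 f(x)f(y)\bigl(x^\alpha-y^\alpha\bigr)\bigl(w(x)-w(y)\bigr)\,dx\,dy .
\]
The function $x\mapsto x^\alpha$ is increasing on $[0,1]$, and $w$ is decreasing there because $s_T>0$. Hence the product $(x^\alpha-y^\alpha)(w(x)-w(y))$ is nonpositive pointwise. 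Since $f(x)f(y)\ge0$, the integrand is nonpositive everywhere, which gives $A_2B_1\le A_1B_2$. Dividing by $A_1B_1>0$ yields the lemma. Strict inequality in fact holds, because the integrand is strictly negative on a set of positive measure, but this is not needed.

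I do not expect a real obstacle. The only point needing care is choosing the right factorization: the Stage-I factor $w$ must be isolated as the sole "weight" that is monotone in $x$. Everything else, namely $f$ and the extra power $x^\alpha$ distinguishing $y=2$ from $y=1$, sits on the Stage-II side, so the symmetrization argument applies directly. Beyond this choice, only positivity of the denominators has to be checked.
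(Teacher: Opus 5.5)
Your proof is correct and is essentially the paper's argument. The paper uses the same factorization: its weight $w(x)=x^\alpha/(1+s_Sx^\alpha)^2$ is your $f$, and its $h(x)=(1+s_Tx^\alpha)^{-2}$ is your $w$. It normalizes $w\,dx$ to a probability measure $\mu$ and invokes $\operatorname{Cov}_\mu(x^\alpha,h)\le0$ for oppositely monotone functions, which gives $A_2/A_1\le\mathbb E_\mu[x^\alpha]=B_2/B_1$. Your cross-multiplied, symmetrized double integral is simply the standard proof of that Chebyshev covariance inequality, written out inline.
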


\begin{proof}
Define
\[
w(x)
:=
\frac{x^\alpha}{(1+s_Sx^\alpha)^2},
\qquad
h(x)
:=
\frac{1}{(1+s_Tx^\alpha)^2},
\]
and let $\mu$ be the probability measure on $[0,1]$ defined by
\[
d\mu(x)
=
\frac{w(x)\,dx}
{\displaystyle\int_0^1 w(t)\,dt}.
\]
Then, by the definitions of $A_1$ and $A_2$,
\[
\frac{A_2}{A_1}
=
\frac{
\mathbb E_\mu\!\left[x^\alpha h(x)\right]
}{
\mathbb E_\mu\!\left[h(x)\right]
}.
\]

Since $x^\alpha$ is increasing on $[0,1]$, whereas $h(x)$ is
decreasing, the two functions are oppositely monotone. Hence
$
\operatorname{Cov}_\mu
\!\left(
x^\alpha,h(x)
\right)
\le 0,
$
and therefore
$
\mathbb E_\mu
\!\left[
x^\alpha h(x)
\right]
\le
\mathbb E_\mu
\!\left[
x^\alpha
\right]
\mathbb E_\mu
\!\left[
h(x)
\right].
$
Since $\mathbb E_\mu[h(x)]>0$, division gives
\[
\frac{A_2}{A_1}
\le
\mathbb E_\mu[x^\alpha].
\]
Finally,
\[
\mathbb E_\mu[x^\alpha]
=
\frac{
\displaystyle
\int_0^1
\frac{x^{2\alpha}}
{(1+s_Sx^\alpha)^2}\,dx
}{
\displaystyle
\int_0^1
\frac{x^\alpha}
{(1+s_Sx^\alpha)^2}\,dx
}
=
\frac{B_2(s_S)}{B_1(s_S)}.
\]
This proves \eqref{eq:threecase-A-ratio-bound}.
\end{proof}
\begin{lemma}[Scalar positivity up to the critical exponent]
\label{lem:threecase-scalar-positive}
Let
$
1<\alpha\le2.
$
Then, for every $s>0$,
\begin{align}
G_\alpha(s)
:={}&
\int_0^1\frac{dx}{1+sx^\alpha}
-
\left(
\int_0^1\frac{dx}{(1+sx^\alpha)^2}
\right)
\left(
2-
\int_0^1\frac{dx}{1+sx^\alpha}
\right)
>0.
\label{eq:threecase-G-positive}
\end{align}
\end{lemma}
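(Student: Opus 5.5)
The plan is to eliminate $\int_0^1(1+sx^\alpha)^{-2}dx$ using the integration-by-parts identity \eqref{eq:gammaS-integral-identity} from the proof of Lemma~\ref{lem:threecase-gamma-monotone}. That reduces $G_\alpha(s)$ to an explicit expression in the single quantity
\[
u=u(s):=\int_0^1\frac{dx}{1+sx^\alpha}\in\Bigl(\tfrac{1}{1+s},\,1\Bigr).
\]
Substituting $\int_0^1(1+sx^\alpha)^{-2}dx=\frac{\alpha-1}{\alpha}u+\frac{1}{\alpha(1+s)}$ and multiplying by $\alpha(1+s)>0$ gives
\[
\alpha(1+s)\,G_\alpha(s)=H(s):=(1+s)\,u\bigl[(2-\alpha)+(\alpha-1)u\bigr]-(2-u).
\]
The claim is therefore equivalent to $H(s)>0$ for all $s>0$ and $1<\alpha\le 2$. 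Both $u$ and $2-\alpha\ge0$ enter $H$ favourably, so lower bounds on $u$ suffice.

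\textbf{Step 1 (boundary behaviour).} I will check that $H(0)=0$, since $u(0)=1$. Next I will check that $H'(0)>0$ using $u'(0)=-1/(\alpha+1)$. At the other end, as $s\to\infty$, I will use $u\sim C_\alpha s^{-1/\alpha}$ with $C_\alpha=\frac{\pi}{\alpha}\csc\frac{\pi}{\alpha}$:
\begin{itemize}
\item for $\alpha<2$, the term $(2-\alpha)(1+s)u\asymp s^{1-1/\alpha}$ dominates, so $H\to+\infty$;
\item for $\alpha=2$, $H\to C_2^2-2=\pi^2/4-2>0$.
\end{itemize}
This shows the inequality is genuinely tight only at $\alpha=2$, and it tells me the bound on $u$ must be sharp to the constant $\pi/2$ there.

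\textbf{Step 2 (monotonicity in $s$ via an ODE for $u$).} From $\frac{d}{ds}u=-\int_0^1 x^\alpha(1+sx^\alpha)^{-2}dx$ and the same identity \eqref{eq:gammaS-integral-identity}, I get the closed first-order relation
\[
s\,u'(s)=-\frac{1}{\alpha}\Bigl(u-\frac{1}{1+s}\Bigr).
\]
Hence $H'(s)$ is an explicit polynomial in $u$ and $1/(1+s)$. I will then try to show that on the admissible region, $H'(s)\le0$ forces $H(s)>0$. This is a comparison/barrier argument: the curve $\{H=0\}$ in the $(s,u)$ plane is crossed by the trajectory $s\mapsto u(s)$ only from below, so starting from $H(0)=0$ with $H'(0)>0$ the trajectory stays in $\{H>0\}$.

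\textbf{Step 3 (reduction to $\alpha=2$ for the hard end).} If the barrier argument is delicate for general $\alpha$, I will first note two facts. First, $u$ is increasing in $\alpha$, since $x^\alpha$ decreases in $\alpha$ on $[0,1]$. Second, for fixed $u$, the bracket $(2-\alpha)+(\alpha-1)u$ is decreasing in $\alpha$. The plan is to compare with a worst case and treat $\alpha=2$ separately. There $u=\arctan(\sqrt s)/\sqrt s$ is explicit, and $H>0$ becomes the elementary inequality
\[
(1+s)\,\frac{\arctan^2 t}{t^2}+\frac{\arctan t}{t}>2,\qquad t=\sqrt s.
\]
I would prove this by differentiation in $t$, or from the series/continued-fraction bound $\arctan t\ge \frac{t}{\sqrt{1+t^2}}\cdot(\text{correction})$.

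\textbf{Main obstacle.} The main obstacle is Steps 2–3: controlling $H$ uniformly near $\alpha=2$ and large $s$, where the margin is only $\pi^2/4-2$. Crude bounds such as $u\ge 1/(1+s)$ or Jensen fail there. I expect the ODE/barrier route in Step 2 to be the cleanest, with the explicit $\alpha=2$ computation as the fallback for that endpoint.
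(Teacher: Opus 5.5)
Your reduction $\alpha(1+s)G_\alpha(s)=H(s)=(1+s)u[(2-\alpha)+(\alpha-1)u]-(2-u)$ is correct, and so is the closed relation $su'=-\frac1\alpha\bigl(u-\frac1{1+s}\bigr)$. The barrier idea of Step~2 is also the paper's argument: it assumes a first positive zero $s_\star$ of $G_\alpha$ and shows $G_\alpha'(s_\star)>0$. The problem is the initialization in Step~1, which is wrong. In fact $H'(0)=1+(\alpha+1)u'(0)=0$, not $>0$. For every $\alpha$, $G_\alpha$ vanishes to second order at $s=0$. Using $u=1-\frac{s}{\alpha+1}+\frac{s^2}{2\alpha+1}+O(s^3)$ one finds
\[
H(s)=\frac{\alpha\bigl(2-(\alpha-1)^2\bigr)}{(\alpha+1)^2(2\alpha+1)}\,s^2+O(s^3).
\]
It is this second-order coefficient, positive for $\alpha<1+\sqrt2$, that gives $H>0$ on some $(0,\epsilon)$. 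The step cannot be dropped in favour of the barrier. The barrier only shows that every positive zero is an up-crossing, and that is compatible with $H<0$ near $0^+$ followed by a single zero. Your remark that the inequality is ``tight only at $\alpha=2$'' is therefore off: it is tight to second order at $s=0$ for all $\alpha$.

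The second gap is that Step~2, which carries the whole proof, is only announced. What you must check is the sign of $H'$ on the zero set of $H$, not the implication ``$H'\le0\Rightarrow H>0$''. At a zero $s_\star>0$, solve $H=0$ for $\frac{1}{1+s_\star}=\frac{u[(2-\alpha)+(\alpha-1)u]}{2-u}$. Substitute this, together with the ODE, into $H'$; the $s$-dependence drops out. With $u=u(s_\star)\in(0,1)$ and $b:=\frac{(2-\alpha)+(\alpha-1)u}{\alpha}>0$ one obtains
\[
H'(s_\star)=\frac{u(1-u)\,\bigl(\alpha b^2+2-\alpha\bigr)}{(2-u)(1+b)}>0\qquad(1<\alpha\le2).
\]
This is the paper's formula for $G_\alpha'(s_\star)$, multiplied by $\alpha(1+s_\star)$ and written in your variable; the paper works with $1-u$. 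With the second-order start and this formula the proof is complete, and both the large-$s$ asymptotics and Step~3 become unnecessary. Step~3 would not work as a fallback anyway. The two monotonicities you cite push $H$ in opposite directions as $\alpha$ grows, so they give no comparison with $\alpha=2$. And the $\arctan$ inequality at $\alpha=2$ says nothing about $1<\alpha<2$.
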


\begin{proof}
We first derive an identity relating the two integrals appearing in
$G_\alpha$. Since
\[
\frac{d}{dx}
\left(
\frac{x}{1+sx^\alpha}
\right)
=
\frac{1}{1+sx^\alpha}
-
\frac{\alpha sx^\alpha}{(1+sx^\alpha)^2},
\]
integration over $[0,1]$ gives
\[
\frac{1}{1+s}
=
\int_0^1\frac{dx}{1+sx^\alpha}
-
\alpha
\int_0^1
\frac{sx^\alpha}{(1+sx^\alpha)^2}\,dx.
\]
Using
\[
\frac{sx^\alpha}{(1+sx^\alpha)^2}
=
\frac{1}{1+sx^\alpha}
-
\frac{1}{(1+sx^\alpha)^2},
\]
we obtain
\begin{equation}
\int_0^1
\frac{dx}{(1+sx^\alpha)^2}
=
\frac{\alpha-1}{\alpha}
\int_0^1
\frac{dx}{1+sx^\alpha}
+
\frac{1}{\alpha(1+s)}.
\label{eq:threecase-integral-identity}
\end{equation}

Moreover,
\[
\frac{d}{ds}
\int_0^1
\frac{dx}{1+sx^\alpha}
=
-
\int_0^1
\frac{x^\alpha}{(1+sx^\alpha)^2}\,dx,
\]
and hence
\begin{equation}
s\frac{d}{ds}
\int_0^1
\frac{dx}{1+sx^\alpha}
=
\int_0^1
\frac{dx}{(1+sx^\alpha)^2}
-
\int_0^1
\frac{dx}{1+sx^\alpha}.
\label{eq:threecase-integral-derivative}
\end{equation}
Combining this with
\eqref{eq:threecase-integral-identity} gives
\begin{equation}
s\frac{d}{ds}
\int_0^1
\frac{dx}{1+sx^\alpha}
=
\frac1\alpha
\left[
\frac1{1+s}
-
\int_0^1
\frac{dx}{1+sx^\alpha}
\right].
\label{eq:threecase-integral-derivative-closed}
\end{equation}

We first examine $G_\alpha$ near $s=0$. Expanding the integrands gives
\begin{align}
\int_0^1
\frac{dx}{1+sx^\alpha}
=&
1-\frac{s}{\alpha+1}
+\frac{s^2}{2\alpha+1}
+O(s^3),
\label{eq:threecase-first-integral-small}
\\
\int_0^1
\frac{dx}{(1+sx^\alpha)^2}
=&
1-\frac{2s}{\alpha+1}
+\frac{3s^2}{2\alpha+1}
+O(s^3).
\label{eq:threecase-second-integral-small}
\end{align}
Therefore,
\begin{align}
G_\alpha(s)
=&
\left[
\frac{2}{(\alpha+1)^2}
-
\frac{1}{2\alpha+1}
\right]s^2
+O(s^3)
=
\frac{
2-(\alpha-1)^2
}{
(\alpha+1)^2(2\alpha+1)
}
s^2
+O(s^3).
\label{eq:threecase-Gsmall}
\end{align}
For $1<\alpha\le2$,
$
2-(\alpha-1)^2>0,
$
and consequently
\begin{equation}
G_\alpha(s)>0
\qquad
\text{for all sufficiently small }s>0.
\label{eq:threecase-Gsmall-positive}
\end{equation}

Suppose, toward a contradiction, that $G_\alpha$ has a positive zero,
and let $s_\star>0$ denote its first positive zero. Then
\[
G_\alpha(s_\star)=0,
\qquad
G_\alpha(s)>0
\quad
\text{for }0<s<s_\star.
\]
Define
$
u
:=
1-
\int_0^1
\frac{dx}{1+s_\star x^\alpha}.
$
Since $s_\star>0$,
$
0<u<1.
$
Thus,
$
\int_0^1
\frac{dx}{1+s_\star x^\alpha}
=
1-u.
$
The condition $G_\alpha(s_\star)=0$ then implies
\begin{equation}
\int_0^1
\frac{dx}{(1+s_\star x^\alpha)^2}
=
\frac{1-u}{1+u}.
\label{eq:threecase-second-integral-root}
\end{equation}

Define
$
b
:=
\frac{1-(\alpha-1)u}{\alpha}.
$
Combining
\eqref{eq:threecase-integral-identity}
and
\eqref{eq:threecase-second-integral-root}
gives
\begin{equation}
1+s_\star
=
\frac{1+u}
{\alpha(1-u)b}.
\label{eq:threecase-one-plus-s-root}
\end{equation}
Equivalently,
\begin{equation}
s_\star
=
\frac{u(1+b)}
{(1-u)b}.
\label{eq:threecase-s-root}
\end{equation}

We next note that $b>0$. If $1<\alpha<2$, then
\[
b
=
\frac{1-(\alpha-1)u}{\alpha}
>
\frac{2-\alpha}{\alpha}
>0,
\]
whereas, if $\alpha=2$,
$
b
=
\frac{1-u}{2}
>0.
$
Hence
\begin{equation}
b>0
\qquad
\text{for all }1<\alpha\le2.
\label{eq:threecase-b-positive}
\end{equation}

Differentiating the definition of $G_\alpha$ and using
\eqref{eq:threecase-integral-identity},
\eqref{eq:threecase-integral-derivative-closed},
\eqref{eq:threecase-second-integral-root},
\eqref{eq:threecase-one-plus-s-root}, and
\eqref{eq:threecase-s-root}, a direct simplification yields
\begin{equation}
G_\alpha'(s_\star)
=
\frac{
u(1-u)^2
b
\bigl(\alpha b^2+2-\alpha\bigr)
}{
(1+u)^2(1+b)
}.
\label{eq:threecase-Gprime-root}
\end{equation}

If $1<\alpha<2$, then
$
\alpha b^2+2-\alpha>0.
$
If $\alpha=2$, then
$
\alpha b^2+2-\alpha
=
2b^2>0.
$
Together with
$
0<u<1,
b>0,
$
this implies
$
G_\alpha'(s_\star)>0.
$
On the other hand, $s_\star$ is the first zero reached from a region
where $G_\alpha(s)>0$. Since $G_\alpha$ is differentiable, this requires
$
G_\alpha'(s_\star)\le0,
$
which is a contradiction.

Therefore $G_\alpha$ has no positive zero. Since it is positive for all
sufficiently small $s>0$, we conclude that
\[
G_\alpha(s)>0
\qquad
\forall\,s>0.
\]
\end{proof}
\begin{theorem}[Global proportional positivity for $1<\alpha\le2$]
\label{thm:threecase-global-positive}
Let
$
1<\alpha\le2,
\phi_T>1,
\phi_S>1,
\sigma^2>0.
$
Then
\begin{equation}
\label{eq:threecase-global-positive-result}
\mathsf{W2SG}^{\mathrm{DE}}>0.
\end{equation}
for every nontrivial proportional Stage-II sampling fraction
$
0<\phi_S^{-1}<1.
$
\end{theorem}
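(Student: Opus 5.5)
By Theorem~\ref{thm:powerlaw-risk-improvement}, the sign of $\mathsf{W2SG}^{\mathrm{DE}}$ for $\sigma^2>0$ is the sign of
\[
J_\alpha=2A_1+\Bigl(2-\tfrac{1}{1-\gamma_S}\Bigr)s_SA_2 .
\]
The plan is therefore to prove $J_\alpha>0$ for every $s_T,s_S>0$ when $1<\alpha\le2$. We split according to the sign of the coefficient of $A_2$. Recall $0<\gamma_S<1$, by \eqref{eq:one-minus-gammaj} together with the positivity of $\gamma_S$.

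\emph{Case $\gamma_S\le\frac12$.} Then $2-1/(1-\gamma_S)\ge0$. Since $A_1,A_2>0$, we get $J_\alpha\ge2A_1>0$ immediately. By Lemma~\ref{lem:threecase-gamma-monotone}, this case occurs only when $\alpha<2$ and $\phi_S^{-1}$ is small; we do not need to use that.

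\emph{Case $\gamma_S>\frac12$.} Now the coefficient of $A_2$ is negative, so we need an upper bound on $A_2$. Lemma~\ref{lem:threecase-remove-stage1} gives $A_2\le A_1B_2(s_S)/B_1(s_S)$, which yields
\[
J_\alpha\ \ge\ \frac{A_1}{B_1}\Bigl[2B_1+\Bigl(2-\tfrac{1}{1-\gamma_S}\Bigr)s_SB_2\Bigr].
\]
This removes all dependence on $s_T$. It then suffices to show that the bracket is positive, which is a statement about $s=s_S$ alone.

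To reduce the bracket to Lemma~\ref{lem:threecase-scalar-positive}, write $I_1=\int_0^1(1+sx^\alpha)^{-1}dx$ and $I_2=\int_0^1(1+sx^\alpha)^{-2}dx$. We will use three elementary identities:
\begin{itemize}
\item $sB_1=I_1-I_2$, from $sx^\alpha/(1+sx^\alpha)^2=(1+sx^\alpha)^{-1}-(1+sx^\alpha)^{-2}$;
\item $s^2B_2=1-2I_1+I_2$, from squaring $sx^\alpha/(1+sx^\alpha)=1-(1+sx^\alpha)^{-1}$;
\item $\gamma_S=\phi_SI_2=I_2/I_1$, using $\phi_S^{-1}=I_1$. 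Hence $1/(1-\gamma_S)=I_1/(I_1-I_2)$.
\end{itemize}

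Multiply the bracket by $s(I_1-I_2)>0$ and substitute. The quantity becomes
\[
2(I_1-I_2)^2+(I_1-2I_2)(1-2I_1+I_2).
\]
Expanding, the quadratic terms $2I_1^2$ and $2I_2^2$ cancel, as does $-4I_1I_2$ against $+4I_1I_2$. What remains is $I_1+I_1I_2-2I_2=I_1-I_2(2-I_1)$, which is exactly $G_\alpha(s_S)$. Lemma~\ref{lem:threecase-scalar-positive} gives $G_\alpha(s_S)>0$ for $1<\alpha\le2$, so the bracket is positive and $J_\alpha>0$.

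In both cases, multiplying by the positive prefactor $\sigma^2\phi_Ts_S/(1-\gamma_T)$ gives $\mathsf{W2SG}^{\mathrm{DE}}>0$ for every $0<\phi_S^{-1}<1$.

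The only delicate step is the algebraic reduction of the $s_T$-free bracket to $G_\alpha$. I expect this to be the main obstacle in the sense of needing careful bookkeeping, because the conclusion depends entirely on the exact cancellations above. The conceptual ingredients, decoupling $s_T$ and the positivity of the scalar function, are already supplied by Lemmas~\ref{lem:threecase-remove-stage1} and~\ref{lem:threecase-scalar-positive}. The case split is not strictly needed, since the bound also works when $\gamma_S\le\frac12$, but it makes the role of each lemma transparent.
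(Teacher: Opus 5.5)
Your proof is correct and follows the paper's own argument essentially step for step: the same case split on $\gamma_S$, Lemma~\ref{lem:threecase-remove-stage1} to eliminate $s_T$, and the same algebraic identity reducing the $s_T$-free expression to $G_\alpha(s_S)$, so that Lemma~\ref{lem:threecase-scalar-positive} applies. One small correction to your closing remark: the case split is in fact needed for this route, because when $2-1/(1-\gamma_S)>0$, multiplying $A_2\le A_1B_2/B_1$ by that coefficient gives an upper bound on $J_\alpha$ rather than a lower one, so that case must be handled by the trivial bound $J_\alpha\ge 2A_1$, as you did.
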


\begin{proof}
We distinguish the two possible signs of the Stage-II coefficient.

If $\gamma_S\le1/2$, then
$
\frac{1-2\gamma_S}{1-\gamma_S}\ge0.
$
Since $A_1>0$ and $A_2>0$, equation
\eqref{eq:conditionofw2sg} immediately gives
$
\mathsf{W2SG}^{\mathrm{DE}}>0
$

It remains to consider the case $\gamma_S>1/2$. In this case,
\eqref{eq:conditionofw2sg} shows that
$\mathsf{W2SG}^{\mathrm{DE}}(\phi_T,\phi_S)>0$ is equivalent to
\begin{equation}
\label{eq:threecase-theorem-target}
\frac{2\gamma_S-1}{1-\gamma_S}
\,s_S\frac{A_2}{A_1}
<2.
\end{equation}
By Lemma~\ref{lem:threecase-remove-stage1},
\[
\frac{A_2}{A_1}
\le
\frac{B_2(s_S)}{B_1(s_S)}.
\]

Recall that
$
\phi_S^{-1}
=
\int_0^1
\frac{dx}{1+s_Sx^\alpha},
$
and
$
\gamma_S
=
\phi_S
\int_0^1
\frac{dx}{(1+s_Sx^\alpha)^2}.
$
Therefore,
\begin{equation}
\gamma_S
=
\frac{
\displaystyle
\int_0^1
\frac{dx}{(1+s_Sx^\alpha)^2}
}{
\displaystyle
\int_0^1
\frac{dx}{1+s_Sx^\alpha}
}.
\label{eq:threecase-gamma-integral-ratio}
\end{equation}
It follows that
\begin{align}
\frac{2\gamma_S-1}{1-\gamma_S}
=
\frac{
\displaystyle
2\int_0^1
\frac{dx}{(1+s_Sx^\alpha)^2}
-
\int_0^1
\frac{dx}{1+s_Sx^\alpha}
}{
\displaystyle
\int_0^1
\frac{dx}{1+s_Sx^\alpha}
-
\int_0^1
\frac{dx}{(1+s_Sx^\alpha)^2}
}.
\label{eq:threecase-theorem-gamma-ratio}
\end{align}

Next, directly from the definitions of $B_1$ and $B_2$,
\begin{align}
\int_0^1
\frac{dx}{1+s_Sx^\alpha}
-
\int_0^1
\frac{dx}{(1+s_Sx^\alpha)^2}
=&
s_SB_1(s_S),
\label{eq:threecase-theorem-B0}
\\
1
-
2\int_0^1
\frac{dx}{1+s_Sx^\alpha}
+
\int_0^1
\frac{dx}{(1+s_Sx^\alpha)^2}
=&
s_S^2B_2(s_S).
\label{eq:threecase-theorem-B1}
\end{align}
Hence
\begin{align}
s_S\frac{B_2(s_S)}{B_1(s_S)}
=
\frac{
\displaystyle
1
-
2\int_0^1
\frac{dx}{1+s_Sx^\alpha}
+
\int_0^1
\frac{dx}{(1+s_Sx^\alpha)^2}
}{
\displaystyle
\int_0^1
\frac{dx}{1+s_Sx^\alpha}
-
\int_0^1
\frac{dx}{(1+s_Sx^\alpha)^2}
}.
\label{eq:threecase-B-ratio-integral}
\end{align}

Combining
\eqref{eq:threecase-theorem-gamma-ratio},
\eqref{eq:threecase-B-ratio-integral},
and Lemma~\ref{lem:threecase-remove-stage1}, we obtain
\begin{align}
&
\frac{2\gamma_S-1}{1-\gamma_S}
\,s_S\frac{A_2}{A_1}
\nonumber\\
&\qquad\le
\frac{
\left(
2\displaystyle\int_0^1
\frac{dx}{(1+s_Sx^\alpha)^2}
-
\displaystyle\int_0^1
\frac{dx}{1+s_Sx^\alpha}
\right)
\left(
1
-
2\displaystyle\int_0^1
\frac{dx}{1+s_Sx^\alpha}
+
\displaystyle\int_0^1
\frac{dx}{(1+s_Sx^\alpha)^2}
\right)
}{
\left(
\displaystyle\int_0^1
\frac{dx}{1+s_Sx^\alpha}
-
\displaystyle\int_0^1
\frac{dx}{(1+s_Sx^\alpha)^2}
\right)^2
}.
\label{eq:threecase-theorem-upper-bound}
\end{align}

We now apply Lemma~\ref{lem:threecase-scalar-positive}. A direct
algebraic expansion gives
\begin{align}
&2
\left(
\int_0^1
\frac{dx}{1+s_Sx^\alpha}
-
\int_0^1
\frac{dx}{(1+s_Sx^\alpha)^2}
\right)^2
\nonumber\\
&\quad-
\left(
2\int_0^1
\frac{dx}{(1+s_Sx^\alpha)^2}
-
\int_0^1
\frac{dx}{1+s_Sx^\alpha}
\right)
\left(
1
-
2\int_0^1
\frac{dx}{1+s_Sx^\alpha}
+
\int_0^1
\frac{dx}{(1+s_Sx^\alpha)^2}
\right)
\nonumber\\
=&
\int_0^1
\frac{dx}{1+s_Sx^\alpha}
-
\left(
\int_0^1
\frac{dx}{(1+s_Sx^\alpha)^2}
\right)
\left(
2-
\int_0^1
\frac{dx}{1+s_Sx^\alpha}
\right).
\label{eq:threecase-theorem-identity}
\end{align}
By Lemma~\ref{lem:threecase-scalar-positive}, applied at $s=s_S$,
the right-hand side is strictly positive. Hence
\begin{align}
&
\frac{
\left(
2\displaystyle\int_0^1
\frac{dx}{(1+s_Sx^\alpha)^2}
-
\displaystyle\int_0^1
\frac{dx}{1+s_Sx^\alpha}
\right)
\left(
1
-
2\displaystyle\int_0^1
\frac{dx}{1+s_Sx^\alpha}
+
\displaystyle\int_0^1
\frac{dx}{(1+s_Sx^\alpha)^2}
\right)
}{
\left(
\displaystyle\int_0^1
\frac{dx}{1+s_Sx^\alpha}
-
\displaystyle\int_0^1
\frac{dx}{(1+s_Sx^\alpha)^2}
\right)^2
}
<2.
\end{align}
Together with
\eqref{eq:threecase-theorem-upper-bound}, this proves
\eqref{eq:threecase-theorem-target}. Therefore,
$
\mathsf{W2SG}^{\mathrm{DE}}(\phi_T,\phi_S)>0
$
also in the case $\gamma_S>1/2$.
Thus,
\[
\mathsf{W2SG}^{\mathrm{DE}}>0
\qquad
\text{for all }
1<\alpha\le2,
\quad
\phi_T>1,
\quad
\phi_S>1.
\]
\end{proof}
% ============================================================

The situation changes qualitatively when $\alpha>2$. By
Lemma~\ref{lem:threecase-gamma-monotone},
\[
\gamma_S>\frac12
\qquad
\forall\,\phi_S^{-1}\in(0,1),
\]
and hence
\begin{equation}
\label{eq:alpha-gt2-condition}
\mathsf{W2SG}^{\mathrm{DE}}>0
\iff
2A_1
>
\frac{2\gamma_S-1}{1-\gamma_S}
s_SA_2.
\end{equation}
Unlike the case $1<\alpha\le2$, this inequality is not automatically
satisfied.

We first prove that it necessarily fails near the sparse edge.

\begin{proposition}[Sparse proportional negativity for $\alpha>2$]
\label{prop:alpha-gt2-sparse-negative}

Let
$
\alpha>2,
\phi_T>1,
\sigma^2>0.
$
Then there exists
$
\varepsilon_0
=
\varepsilon_0(\alpha,\phi_T)>0
$
such that
\begin{equation}
0<\phi_S^{-1}<\varepsilon_0
\quad\Longrightarrow\quad
\mathsf{W2SG}^{\mathrm{DE}}(\phi_T,\phi_S)<0.
\label{eq:alpha-gt2-small-inverse-phi-negative}
\end{equation}
\end{proposition}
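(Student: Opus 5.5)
By \eqref{eq:alpha-gt2-condition}, the sign of $\mathsf{W2SG}^{\mathrm{DE}}$ for $\alpha>2$ equals the sign of $2A_1-\frac{2\gamma_S-1}{1-\gamma_S}s_SA_2$. My plan is to study this expression in the sparse edge $\phi_S^{-1}\downarrow0$, which by the proof of Lemma~\ref{lem:threecase-gamma-monotone} is the same as $s_S\to\infty$. Throughout, $s_T$ is held fixed, since it depends only on $\alpha$ and $\phi_T$. By Lemma~\ref{lem:threecase-gamma-monotone}, $\frac{2\gamma_S-1}{1-\gamma_S}\to\frac{(\alpha-2)/\alpha}{1/\alpha}=\alpha-2>0$. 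It therefore suffices to show
\[
\frac{s_SA_2}{A_1}\longrightarrow L
\qquad\text{with}\qquad
(\alpha-2)L>2 .
\]
Continuity then gives a threshold $\varepsilon_0(\alpha,\phi_T)$ below which the gain is strictly negative.

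To compute $L$, I will substitute $u=s_S^{1/\alpha}x$ in
\[
A_y=\int_0^1\frac{x^{y\alpha}}{(1+s_Sx^\alpha)^2(1+s_Tx^\alpha)^2}\,dx .
\]
After the substitution, $x^\alpha=u^\alpha/s_S\to0$ on any bounded $u$-range, so the factor $(1+s_Tx^\alpha)^{-2}$ tends to $1$ there. This gives
\[
A_1\sim s_S^{-1-1/\alpha}\int_0^\infty\frac{u^\alpha\,du}{(1+u^\alpha)^2},
\qquad
s_SA_2\sim s_S^{-1-1/\alpha}\int_0^{s_S^{1/\alpha}}\frac{u^{2\alpha}\,du}{(1+u^\alpha)^2(1+s_Tu^\alpha/s_S)^2}.
\]
The first integral converges because $\alpha>1$. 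The second does not stay bounded: its integrand tends to $1$ for large $u$, and the upper limit $s_S^{1/\alpha}$ grows. More precisely, in the original variable the second integral is $s_S^{1/\alpha}\int_0^1(1+s_Tx^\alpha)^{-2}\,dx\,(1+o(1))$.

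Consequently $s_SA_2/A_1\to\infty$, at rate $s_S^{1/\alpha}$. Since $\alpha-2>0$, we get $\frac{2\gamma_S-1}{1-\gamma_S}s_SA_2-2A_1\to+\infty$ relative to $A_1$, so $J_\alpha<0$ for all $s_S$ large enough. The comparison is thus not a delicate one between constants.

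The main obstacle is making the two asymptotics rigorous by dominated convergence. For $A_1$ in the $u$-variable, the integrable dominator is $u^\alpha/(1+u^\alpha)^2$. For $A_2$, I will work in $x$ and bound $s_S^2x^{2\alpha}/(1+s_Sx^\alpha)^2\le1$, with pointwise limit $1$ for every $x>0$. Together with the limit of $\gamma_S$ from Lemma~\ref{lem:threecase-gamma-monotone}, these bounds give $J_\alpha/A_1\to-\infty$. Finally, because $\phi_S^{-1}$ is a strictly decreasing continuous bijection of $s_S$, a threshold $s_S>s_0$ translates into $\phi_S^{-1}<\varepsilon_0$, which yields \eqref{eq:alpha-gt2-small-inverse-phi-negative}.
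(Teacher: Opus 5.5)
Your proof is correct and follows the paper's argument: both use $\frac{2\gamma_S-1}{1-\gamma_S}\to\alpha-2>0$ from Lemma~\ref{lem:threecase-gamma-monotone} and the dominated-convergence limit $s_S^2A_2\to\int_0^1(1+s_Tx^\alpha)^{-2}\,dx$ to show that the $A_2$ term dominates $2A_1$ as $s_S\to\infty$. The only difference is the normalization. The paper multiplies $J_\alpha$ by $s_S$ and needs only $s_SA_1\to0$, since the integrand is bounded by $1/4$ with pointwise limit $0$. You divide by $A_1$ and compute its exact rate $s_S^{-1-1/\alpha}$ via $u=s_S^{1/\alpha}x$, which is valid but more than is needed.
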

\begin{proof}
The sparse proportional limit corresponds to
\[
\phi_S^{-1}\downarrow0
\qquad\Longleftrightarrow\qquad
s_S\to\infty.
\]
From Lemma~\ref{lem:threecase-gamma-monotone},
$
\gamma_S
\longrightarrow
\frac{\alpha-1}{\alpha},
$
and hence
\begin{equation}
\label{eq:stage2-coefficient-sparse-limit}
\frac{1-2\gamma_S}{1-\gamma_S}
\longrightarrow
2-\alpha<0.
\end{equation}

Write
$
h_1(x)
:=
\frac1{(1+s_Tx^\alpha)^2}.
$
Then
\[
A_1
=
\int_0^1
\frac{x^\alpha}{(1+s_Sx^\alpha)^2}
h_1(x)\,dx.
\]
Multiplying by $s_S$,
\[
s_SA_1
=
\int_0^1
\frac{s_Sx^\alpha}
{(1+s_Sx^\alpha)^2}
h_1(x)\,dx.
\]
For every $x>0$,
$
\frac{s_Sx^\alpha}
{(1+s_Sx^\alpha)^2}
\longrightarrow0,
$
and
$
0
\le
\frac{s_Sx^\alpha}
{(1+s_Sx^\alpha)^2}
\le
\frac14.
$
Thus dominated convergence yields
\begin{equation}
\label{eq:sA0-zero}
s_SA_1\longrightarrow0.
\end{equation}

Similarly,
\[
s_S^2A_2
=
\int_0^1
\frac{(s_Sx^\alpha)^2}
{(1+s_Sx^\alpha)^2}
h_1(x)\,dx.
\]
For every $x>0$,
$
\frac{(s_Sx^\alpha)^2}
{(1+s_Sx^\alpha)^2}
\longrightarrow1,
$
while the ratio is bounded by $1$. Therefore
\begin{equation}
\label{eq:s2A2-limit}
s_S^2A_2
\longrightarrow
C_1(s_T)
:=
\int_0^1
\frac{dx}{(1+s_Tx^\alpha)^2}
>
0.
\end{equation}

Multiplying by $s_S$,
\[
s_SJ_\alpha
=
2s_SA_1
+
\frac{1-2\gamma_S}{1-\gamma_S}
\,s_S^2A_2.
\]
Using
\eqref{eq:stage2-coefficient-sparse-limit},
\eqref{eq:sA0-zero}, and
\eqref{eq:s2A2-limit},
\[
s_SJ_\alpha
\longrightarrow
(2-\alpha)C_1(s_T)<0.
\]
Therefore $J_\alpha<0$ for all sufficiently large $s_S$.
Since $\phi_S^{-1}$ is a strictly decreasing function of $s_S$,
there exists $\varepsilon_0=\varepsilon_0(\alpha,\phi_T)>0$ such that
\[
0<\phi_S^{-1}<\varepsilon_0
\quad\Longrightarrow\quad
J_\alpha<0.
\]
Because $\sigma^2>0$, the sign relation
\eqref{eq:conditionofw2sg} then gives
$
\mathsf{W2SG}^{\mathrm{DE}}(\phi_T,\phi_S)<0.
$
\end{proof}

Thus for every $\alpha>2$, there always exist proportional ratios
$p/m$ sufficiently large for which the W2SG condition fails.

We next determine when a positive region necessarily occurs near the
opposite boundary $\phi_S^{-1}\uparrow1$.

Define the near-square quantities
\begin{align}
B_y(s_T)
:=&
\int_0^1
\frac{x^{y\alpha}}{(1+s_Tx^\alpha)^2}\,dx,
\label{eq:threecase-By-square}
\end{align}
and
\begin{equation}
\label{eq:threecase-L-def}
L_\alpha(\phi_T)
:=
2B_1(s_T)
-
(\alpha+1)B_2(s_T).
\end{equation}

\begin{proposition}[Near-square sign for $\alpha>2$]
\label{prop:alpha-gt2-near-square}
Let
$
\alpha>2,
\phi_T>1.
$
Then
\begin{equation}
\label{eq:J-near-square-limit}
\lim_{\phi_S^{-1}\uparrow1}
J_\alpha(\phi_T,\phi_S)
=
L_\alpha(\phi_T).
\end{equation}
Consequently:
\begin{enumerate}
\item if $L_\alpha(\phi_T)>0$, then
$
J_\alpha>0
$
for all $\phi_S^{-1}<1$ sufficiently close to $1$;
\item if $L_\alpha(\phi_T)<0$, then
$
J_\alpha<0
$
for all $\phi_S^{-1}<1$ sufficiently close to $1$;

\item if $L_\alpha(\phi_T)=0$, the leading near-square limit is
degenerate and a higher-order expansion is required.
\end{enumerate}
\end{proposition}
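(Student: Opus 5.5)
The plan is to pass to the limit term by term in $J_\alpha=2A_1+\frac{1-2\gamma_S}{1-\gamma_S}\,s_SA_2$ from \eqref{eq:conditionofw2sg}, using that $\phi_S^{-1}\uparrow1$ is equivalent to $s_S\downarrow0$. This equivalence holds because $\phi_S^{-1}=\int_0^1\frac{dx}{1+s_Sx^\alpha}$ is continuous and strictly decreasing in $s_S$, with value $1$ at $s_S=0$, as shown in the proof of Lemma~\ref{lem:threecase-gamma-monotone}. Throughout, $s_T$ is fixed, since it depends only on $\phi_T$ and $\alpha$.

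\textbf{Step 1: the first term.} The integrand of $A_y$ in \eqref{eq:Ay-definition} is bounded by $x^{y\alpha}/(1+s_Tx^\alpha)^2\le1$. By dominated convergence, as $s_S\downarrow0$,
\[
A_y\to B_y(s_T),\qquad y=1,2,
\]
with $B_y(s_T)$ as in \eqref{eq:threecase-By-square}. Hence $2A_1\to2B_1(s_T)$.

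\textbf{Step 2: the second term.} Here $\gamma_S\to1$ by Lemma~\ref{lem:threecase-gamma-monotone}, so $\frac{1}{1-\gamma_S}$ blows up while $s_S\to0$. I need the precise rate of $1-\gamma_S$, and I will obtain it from the closed form \eqref{eq:threecase-gamma-closed}:
\[
1-\gamma_S=\frac1\alpha\Bigl[1-\frac{\phi_S}{1+s_S}\Bigr].
\]
The small-$s$ expansion \eqref{eq:threecase-first-integral-small} gives $\phi_S^{-1}=1-\frac{s_S}{\alpha+1}+O(s_S^2)$. Therefore
\[
\frac{\phi_S}{1+s_S}=\Bigl(1+\frac{s_S}{\alpha+1}\Bigr)(1-s_S)+O(s_S^2)=1-\frac{\alpha s_S}{\alpha+1}+O(s_S^2),
\]
and so $1-\gamma_S=\frac{s_S}{\alpha+1}+O(s_S^2)$. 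It follows that $\frac{s_S}{1-\gamma_S}\to\alpha+1$. Combined with $1-2\gamma_S\to-1$ and Step~1, the second term of $J_\alpha$ converges to $-(\alpha+1)B_2(s_T)$.

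Adding the two limits gives \eqref{eq:J-near-square-limit}, namely $\lim J_\alpha=2B_1(s_T)-(\alpha+1)B_2(s_T)=L_\alpha(\phi_T)$. Consequences (1) and (2) then follow from the definition of a limit: if $L_\alpha\neq0$, then $J_\alpha$ has the sign of $L_\alpha$ for all $\phi_S^{-1}$ in some left neighbourhood of $1$. Consequence (3) only records that the leading-order limit gives no sign information when $L_\alpha=0$.

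The main obstacle is Step~2, which requires the exact first-order rate of $1-\gamma_S$; a crude bound would not identify the constant $\alpha+1$. The closed form \eqref{eq:threecase-gamma-closed} reduces this to a one-term Taylor expansion. I will confirm the $O(s^2)$ remainders by dominated differentiation of $\int_0^1(1+sx^\alpha)^{-1}dx$ at $s=0$. As an independent cross-check, I will also use \eqref{eq:one-minus-gammaj}, which gives $1-\gamma_S=\phi_Ss_S\int_0^1\frac{x^\alpha}{(1+s_Sx^\alpha)^2}\,dx\sim\frac{s_S}{\alpha+1}$ directly.
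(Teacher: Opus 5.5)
Your proof is correct and follows the paper's argument: dominated convergence gives $A_y\to B_y(s_T)$, and the first-order rate $1-\gamma_S=\frac{s_S}{\alpha+1}+O(s_S^2)$ gives $s_S/(1-\gamma_S)\to\alpha+1$, so $J_\alpha\to 2B_1(s_T)-(\alpha+1)B_2(s_T)=L_\alpha(\phi_T)$. The only difference is cosmetic: the paper gets the rate by expanding the ratio $\gamma_S=\int_0^1(1+s_Sx^\alpha)^{-2}dx\big/\int_0^1(1+s_Sx^\alpha)^{-1}dx$, whereas you use the closed form \eqref{eq:threecase-gamma-closed}, cross-checked with \eqref{eq:one-minus-gammaj}, which is equally valid.
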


\begin{proof}
As
$
\phi_S^{-1}\uparrow1,
$
the Stage-II fixed point gives
$
s_S\downarrow0.
$
From
\[
F_\alpha(s)
=
1-\frac{s}{\alpha+1}+O(s^2),
\qquad
H_\alpha(s)
=
1-\frac{2s}{\alpha+1}+O(s^2),
\]
we obtain
\[
\gamma_S
=
\frac{H_\alpha(s_S)}{F_\alpha(s_S)}
=
1-\frac{s_S}{\alpha+1}
+O(s_S^2).
\]
Thus
\begin{equation}
\label{eq:near-square-coeff-limit}
s_S
\frac{2\gamma_S-1}{1-\gamma_S}
\longrightarrow
\alpha+1.
\end{equation}

Moreover, dominated convergence gives
$
A_1\longrightarrow B_1(s_T)
,
A_2\longrightarrow B_2(s_T)
.
$
Since $\gamma_S>1/2$ for $\alpha>2$,
$
J_\alpha
=
2A_1
-
\frac{2\gamma_S-1}{1-\gamma_S}
s_SA_2.
$
Hence
\[
J_\alpha
\longrightarrow
2B_1(s_T)
-
(\alpha+1)B_2(s_T)
=
L_\alpha(\phi_T).
\]
The sign conclusions follow by continuity.
\end{proof}

Combining the sparse and near-square results immediately gives a crossing
criterion.

\begin{theorem}[Existence of both W2SG and non-W2SG proportional ratios for $\alpha>2$]
\label{thm:alpha-gt2-both-signs}
Assume
$
\alpha>2,
\phi_T>1,
\sigma^2>0,
$
and suppose
$
L_\alpha(\phi_T)>0.
$
Then there exist
$
0<\phi_{S,-}^{-1}<\phi_{S,+}^{-1}<1
$
such that
\[
\mathsf{W2SG}^{\mathrm{DE}}(\phi_T,\phi_{S,-})<0,
\qquad
\mathsf{W2SG}^{\mathrm{DE}}(\phi_T,\phi_{S,+})>0.
\]
Equivalently, there exists a proportional ratio $p/m$ for which the
W2SG condition fails and another proportional ratio $p/m$ for which it
holds.
Moreover, by continuity, there exists at least one
$
\phi_{S,\star}^{-1}\in(0,1)
$
such that
$
\mathsf{W2SG}^{\mathrm{DE}}(\phi_T,\phi_{S,\star})=0.
$
\end{theorem}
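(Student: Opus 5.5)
The argument combines Proposition~\ref{prop:alpha-gt2-sparse-negative} and Proposition~\ref{prop:alpha-gt2-near-square} with the sign relation established in the proof of Theorem~\ref{thm:powerlaw-risk-improvement}. Because $\sigma^2>0$, $\phi_T>1$, $s_S>0$ and $1-\gamma_T>0$, we have $\operatorname{sgn}(\mathsf{W2SG}^{\mathrm{DE}})=\operatorname{sgn}(J_\alpha)$. It therefore suffices to work throughout with $J_\alpha$, viewed as a function of $t:=\phi_S^{-1}\in(0,1)$ with $\phi_T$ held fixed.

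\textbf{Step 1: negative point.} Proposition~\ref{prop:alpha-gt2-sparse-negative} provides $\varepsilon_0>0$ such that $J_\alpha<0$ whenever $0<t<\varepsilon_0$. Pick any $\phi_{S,-}^{-1}\in(0,\varepsilon_0)$.

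\textbf{Step 2: positive point.} Since $L_\alpha(\phi_T)>0$, part~1 of Proposition~\ref{prop:alpha-gt2-near-square} provides $\delta>0$ such that $J_\alpha>0$ whenever $1-\delta<t<1$. Pick $\phi_{S,+}^{-1}\in(\max\{1-\delta,\phi_{S,-}^{-1}\},1)$. This choice guarantees the ordering $0<\phi_{S,-}^{-1}<\phi_{S,+}^{-1}<1$, and by the sign relation both displayed inequalities for $\mathsf{W2SG}^{\mathrm{DE}}$ follow.

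\textbf{Step 3: zero crossing.} Apply the intermediate value theorem to $t\mapsto J_\alpha$ on $[\phi_{S,-}^{-1},\phi_{S,+}^{-1}]$. This requires continuity of $J_\alpha$ in $t$, which I will verify in three parts.
\begin{itemize}
\item[(a)] The map $s\mapsto\int_0^1(1+sx^\alpha)^{-1}dx$ is continuous and strictly decreasing from $1$ to $0$ on $(0,\infty)$, as shown in the proof of Lemma~\ref{lem:threecase-gamma-monotone}. Its inverse $t\mapsto s_S(t)$ is therefore continuous on $(0,1)$.
\item[(b)] $\gamma_S$ is given by the continuous closed form \eqref{eq:threecase-gamma-closed} in $s_S$ and $\phi_S$. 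It satisfies $\gamma_S<1$, since $1-\gamma_S=\phi_S s_S B_1(s_S)>0$ by \eqref{eq:one-minus-gammaj}. Consequently $(1-\gamma_S)^{-1}$ is continuous on the compact interval.
\item[(c)] $A_1$ and $A_2$ have integrands that are bounded and continuous in $s_S$ for $x\in[0,1]$, so they are continuous in $s_S$ by dominated convergence.
\end{itemize}
Hence $J_\alpha$ is continuous on $[\phi_{S,-}^{-1},\phi_{S,+}^{-1}]$, and the intermediate value theorem yields some $\phi_{S,\star}^{-1}$ in this interval with $J_\alpha=0$. At that point $\mathsf{W2SG}^{\mathrm{DE}}=0$, because the prefactor $\sigma^2\phi_T s_S/(1-\gamma_T)$ is finite.

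The only step needing any care is the continuity argument in Step 3, specifically uniform control of $(1-\gamma_S)^{-1}$. This is handled by working on the compact subinterval, which stays away from both endpoints $t=0$ and $t=1$; there $s_S$ is bounded away from both $0$ and $\infty$.
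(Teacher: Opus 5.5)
Your proposal is correct and follows the paper's proof. It gets sparse-edge negativity from Proposition~\ref{prop:alpha-gt2-sparse-negative}, near-square positivity from Proposition~\ref{prop:alpha-gt2-near-square} under $L_\alpha(\phi_T)>0$, and then applies the intermediate value theorem to $J_\alpha$. The only difference is that you explicitly check continuity of $J_\alpha$ in $\phi_S^{-1}$ (continuity of $s_S$, the bound $\gamma_S<1$, and dominated convergence for $A_1,A_2$), which the paper simply asserts; since continuity is a local property, restricting to a compact subinterval is not needed for that step.
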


\begin{proof}
By Proposition~\ref{prop:alpha-gt2-sparse-negative},
$
\mathsf{W2SG}^{\mathrm{DE}}(\phi_T,\phi_S)<0
$
for all sufficiently small $\phi_S^{-1}>0$.
By Proposition~\ref{prop:alpha-gt2-near-square} and
$L_\alpha(\phi_T)>0$,
$
\mathsf{W2SG}^{\mathrm{DE}}(\phi_T,\phi_S)>0
$
for all $\phi_S^{-1}<1$ sufficiently close to $1$.

Choose one point from each region. Since
$\mathsf{W2SG}^{\mathrm{DE}}(\phi_T,\phi_S)$ is continuous as a
function of $\phi_S^{-1}\in(0,1)$, the intermediate value theorem
gives at least one zero between them.
\end{proof}

The condition $L_\alpha(\phi_T)>0$ is automatically satisfied for a
nontrivial range above the critical exponent.
\begin{corollary}[An unconditional crossing range above $\alpha=2$]
\label{cor:alpha-gt2-unconditional-range}

If
$
2<\alpha\le1+\sqrt2,$
then
$
L_\alpha(\phi_T)>0
$
for every $\phi_T>1$.
Consequently, if in addition $\sigma^2>0$, then for every
$
2<\alpha\le1+\sqrt2
$
and every $\phi_T>1$, there exist both positive-W2SG and
negative-W2SG proportional Stage-II ratios.
\end{corollary}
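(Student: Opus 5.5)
The proof has two parts. The second claim (existence of both signs) follows from Theorem~\ref{thm:alpha-gt2-both-signs} once $L_\alpha(\phi_T)>0$ is established. So everything reduces to a scalar inequality: $L_\alpha(\phi_T)=2B_1(s_T)-(\alpha+1)B_2(s_T)>0$. Since $\phi_T>1$ ranges over all values exactly when $s_T$ ranges over all of $(0,\infty)$, we must show $\ell(s):=2B_1(s)-(\alpha+1)B_2(s)>0$ for every $s>0$ and every $2<\alpha\le1+\sqrt2$.

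\textbf{Step 1: reduce to one transcendental function.} Write $F(s)=\int_0^1(1+sx^\alpha)^{-1}dx$ and $H(s)=\int_0^1(1+sx^\alpha)^{-2}dx$. As in the proof of Theorem~\ref{thm:threecase-global-positive}, $sB_1=F-H$ and $s^2B_2=1-2F+H$. Using the identity \eqref{eq:threecase-integral-identity}, $H=\frac{\alpha-1}{\alpha}F+\frac{1}{\alpha(1+s)}$, we eliminate $H$. Multiplying by $\alpha s^2>0$, the condition $\ell(s)>0$ becomes a linear inequality in $F$:
\[
F(s)>R(s):=\frac{2s+(\alpha+1)\bigl(\alpha+1+\alpha s\bigr)}{(1+s)\bigl(2s+(\alpha+1)^2\bigr)}.
\]
So the task is to show $\Phi(s):=F(s)-R(s)>0$ on $(0,\infty)$.

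\textbf{Step 2: boundary behaviour.} As $s\to\infty$ we have $F\sim C_\alpha s^{-1/\alpha}$, while $R\sim(\alpha^2+\alpha+2)/(2s)$. Since $\alpha>1$, $F$ decays more slowly, so $\Phi>0$ for large $s$.

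Near $s=0$, directly $\ell(0)=\frac{2}{\alpha+1}-\frac{\alpha+1}{2\alpha+1}=\frac{2-(\alpha-1)^2}{(\alpha+1)(2\alpha+1)}$. This is positive for $\alpha<1+\sqrt2$ and vanishes at $\alpha=1+\sqrt2$. In the endpoint case I will expand one more order, using $B_y(s)=\frac{1}{y\alpha+1}-\frac{2s}{(y+1)\alpha+1}+O(s^2)$, and check that the linear coefficient of $\ell$ is positive at $\alpha=1+\sqrt2$. Equivalently, this is the third-order term of $\Phi$, since $\Phi=O(s^2)\cdot\ell$ after the rescaling.

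\textbf{Step 3: exclude interior zeros by a first-zero argument,} exactly as in Lemma~\ref{lem:threecase-scalar-positive}. Suppose $s_\star$ is the first positive zero of $\Phi$. Then $\Phi'(s_\star)\le0$. On the other hand, by \eqref{eq:threecase-integral-derivative-closed}, $F'(s)=\frac{1}{\alpha s}\bigl(\frac1{1+s}-F\bigr)$. At $s_\star$ we may substitute $F=R$, which gives
\[
\Phi'(s_\star)=\frac{1}{\alpha s_\star}\Bigl(\frac{1}{1+s_\star}-R(s_\star)\Bigr)-R'(s_\star).
\]
This is an explicit rational function of $(s_\star,\alpha)$. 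After clearing the positive denominators $\alpha s(1+s)^2(2s+(\alpha+1)^2)^2$, positivity of $\Phi'(s_\star)$ reduces to a polynomial $P_\alpha(s)$ being positive for $s>0$. I expect its coefficients in $s$ to be polynomials in $\alpha$ that are nonnegative on $2<\alpha\le1+\sqrt2$, presumably through factors like $2-(\alpha-1)^2$ or $\alpha^2-2\alpha-1\le0$. That would give $\Phi'(s_\star)>0$, a contradiction, so $\Phi>0$ everywhere.

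\textbf{Main obstacle.} The obstacle is this final sign check of $P_\alpha$. If some coefficient changes sign, I would fall back on two options. The first is to use the extra information available at the zero, namely $R(s_\star)=F(s_\star)\in\bigl(C_\alpha' s_\star^{-1/\alpha},1\bigr)$, to restrict $s_\star$. The second is to reparametrize by $u=1-F$, as in Lemma~\ref{lem:threecase-scalar-positive}, and show positivity in the $(u,\alpha)$ variables. The restriction $\alpha\le1+\sqrt2$ should enter precisely through the constant term of $P_\alpha$, mirroring the small-$s$ computation in Step 2.
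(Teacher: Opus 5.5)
Your route is sound, and the step you flagged as the obstacle closes with no case analysis. Since $\frac{1}{1+s}-R(s)=-\frac{\alpha(\alpha+1)s}{(1+s)(2s+(\alpha+1)^2)}$, the comparison defect is
\[
\Psi(s):=\frac{1}{\alpha s}\Bigl(\frac{1}{1+s}-R(s)\Bigr)-R'(s)
=\frac{s\bigl[(\alpha+1)\bigl(2-(\alpha-1)^2\bigr)+2(\alpha^2+1)s\bigr]}{(1+s)^2\bigl(2s+(\alpha+1)^2\bigr)^2}.
\]
So your $P_\alpha$ is, up to a positive factor, $s\bigl[(\alpha+1)(2-(\alpha-1)^2)+2(\alpha^2+1)s\bigr]$. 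This is positive for $s>0$ whenever $1<\alpha\le1+\sqrt2$, and the restriction enters through the linear coefficient, as you guessed.

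Your endpoint check also holds. One has $\ell'(0)=2(2\alpha^2-3\alpha-1)/\bigl((2\alpha+1)(3\alpha+1)\bigr)$, which equals $2(\alpha+1)/\bigl((2\alpha+1)(3\alpha+1)\bigr)>0$ at $\alpha=1+\sqrt2$.

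You can in fact skip both the boundary analysis and the first-zero contradiction. Since $\Phi'=-\Phi/(\alpha s)+\Psi$ and $\Phi(0^+)=0$, an integrating factor gives $\Phi(s)=s^{-1/\alpha}\int_0^s t^{1/\alpha}\Psi(t)\,dt$ directly.

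The paper argues differently. It takes $d\nu\propto x^\alpha\,dx$ and the decreasing function $h=(1+s_Tx^\alpha)^{-2}$, and writes $B_2/B_1=\mathbb E_\nu[x^\alpha h]/\mathbb E_\nu[h]$. Chebyshev's covariance inequality, the same device as in Lemma~\ref{lem:threecase-remove-stage1}, then gives $B_2/B_1<\mathbb E_\nu[x^\alpha]=(\alpha+1)/(2\alpha+1)$, which is the $s_T=0$ value. Hence $L_\alpha/B_1>(2-(\alpha-1)^2)/(2\alpha+1)\ge0$, with strictness at the endpoint coming from strict monotonicity of $h$. That is two lines and needs no ODE.

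Your reduction costs more algebra but yields more. For $\alpha>1+\sqrt2$ the integrand $t^{1/\alpha}\Psi(t)$ is negative and then positive, and $s^{1/\alpha}\Phi(s)\to C_\alpha>0$ as $s\to\infty$. Therefore $L_\alpha(\phi_T)>0$ holds if and only if $s_T$ (equivalently $\phi_T$) exceeds a single threshold. This is a precise answer in the regime the paper leaves as ``depending on $\phi_T$''.
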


\begin{proof}
Consider the probability measure
\[
d\nu(x)
=
\frac{x^\alpha\,dx}
{\int_0^1t^\alpha\,dt}.
\]
Then
\[
\frac{B_2(s_T)
}{B_1(s_T)}
=
\frac{
\mathbb E_\nu
\left[
x^\alpha(1+s_Tx^\alpha)^{-2}
\right]
}{
\mathbb E_\nu
\left[
(1+s_Tx^\alpha)^{-2}
\right]
}.
\]
Since $x^\alpha$ is increasing and
$
(1+s_Tx^\alpha)^{-2}
$ is strictly decreasing for $s_T>0$, the same covariance argument as in
Lemma~\ref{lem:threecase-remove-stage1} gives the strict inequality
\begin{equation}
\label{eq:square-ratio-bound}
\frac{B_2(s_T)}{B_1(s_T)}
<
\frac{
\int_0^1x^{2\alpha}\,dx
}{
\int_0^1x^\alpha\,dx
}
=
\frac{\alpha+1}{2\alpha+1}.
\end{equation}
Therefore
\begin{align}
\frac{L_\alpha(\phi_T)}{B_1(s_T)}
=&
2
-
(\alpha+1)
\frac{B_2(s_T)}{B_1(s_T)}
>
2
-
\frac{(\alpha+1)^2}{2\alpha+1}
=
\frac{
2-(\alpha-1)^2
}{
2\alpha+1}.
\label{eq:L-lower-bound}
\end{align}
For
$
2<\alpha<1+\sqrt2,
$
the right-hand side is strictly positive.
At
$
\alpha=1+\sqrt2,
$
the final non-strict lower bound equals zero, but
\eqref{eq:square-ratio-bound} is strict because
$s_T>0$. Hence $L_\alpha(\phi_T)>0$ also at the endpoint.
\end{proof}
For
$
\alpha>1+\sqrt2,
$
the condition $L_\alpha(\phi_T)>0$ may still hold, depending on the
Stage-I aspect ratio $\phi_T$. The preceding theorem therefore remains
applicable, but positivity near the square boundary is no longer implied
by $\alpha$ alone. If
$
L_\alpha(\phi_T)\le0,
$
the sparse region remains rigorously negative, but the present argument
does not rule out the possibility of a positive component at an
intermediate proportional ratio.

% ============================================================
We next strengthen the preceding existence result by proving uniqueness
of the proportional W2SG transition whenever
$L_\alpha(\phi_T)>0$. Specifically, we show that the gain crosses zero
exactly once as the Stage-II sampling fraction varies from the sparse
boundary to the near-square boundary.

The argument relies on two auxiliary results: a Stieltjes
representation for the squared Stage-II fixed-point map and a strict
curvature property for a corresponding Stieltjes quotient. These are
established in Lemmas~\ref{lem:fp-stieltjes-square} and
\ref{lem:fp-stieltjes-curvature}, respectively, and are then used to
prove Theorem~\ref{thm:fp-unique-transition}.
\begin{lemma}[Stieltjes representation of the squared Stage-II fixed-point map]
\label{lem:fp-stieltjes-square}
Let
$
0<\ell<\frac12,
$
and define
\[
F_\ell(s)
:=
\ell\int_0^1\frac{y^{\ell-1}}{1+sy}\,dy,
\qquad s>0.
\]
Set
\begin{align}
C_\ell
:=&
\int_0^1
\frac{u^{\ell-1}-u^{-\ell}}{1-u}\,du,
\label{eq:fp-Cell}
\\
q_\ell(y)
:=&
C_\ell y^\ell
+
\sum_{k=0}^{\infty}
\frac{y^{k+1}}{k+1-\ell},
\qquad 0<y<1,
\label{eq:fp-qell}
\\
w_\ell(y)
:=&
2\ell^2 y^{\ell-1}q_\ell(y).
\label{eq:fp-well}
\end{align}
Then
\[
C_\ell>0,
\qquad
w_\ell(y)>0
\quad (0<y<1),
\]
and
\[
\int_0^1 w_\ell(y)\,dy=1.
\]
Moreover,
\begin{equation}
\label{eq:fp-square-representation}
F_\ell(s)^2
=
\int_0^1
\frac{w_\ell(y)}{1+sy}\,dy,
\qquad s>0.
\end{equation}
Thus \(F_\ell^2\) is itself a Stieltjes transform of a probability density on \((0,1)\).
\end{lemma}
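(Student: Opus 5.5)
The plan is to obtain \eqref{eq:fp-square-representation} by writing $F_\ell(s)^2$ as a double integral and collapsing it with a partial-fraction identity. The weight $w_\ell$ then appears as a principal-value Hilbert-type transform of $y^{\ell-1}$, and the remaining work is to evaluate that transform in closed form.

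\emph{Step 1 (reduction to a Hilbert transform).} Write
\[
F_\ell(s)^2=\ell^2\int_0^1\!\!\int_0^1\frac{y^{\ell-1}z^{\ell-1}}{(1+sy)(1+sz)}\,dy\,dz ,
\]
and use the identity
\[
\frac{1}{(1+sy)(1+sz)}=\frac{1}{y-z}\Big(\frac{y}{1+sy}-\frac{z}{1+sz}\Big).
\]
The bracket divided by $y-z$ is a difference quotient of the smooth function $t\mapsto t/(1+st)$, so it is bounded. Since $\ell>0$, the integrand is therefore absolutely integrable on $(0,1)^2$.

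To split it, I would excise the strip $|y-z|<\epsilon$. On the complement, the two pieces are separately integrable. By the symmetry $y\leftrightarrow z$, they contribute equally, which gives
\[
F_\ell(s)^2=\lim_{\epsilon\downarrow0}2\ell^2\int_0^1\frac{y^{\ell}}{1+sy}\int_{|z-y|>\epsilon}\frac{z^{\ell-1}}{y-z}\,dz\,dy .
\]
Passing the limit inside, using dominated convergence with locally uniform control of the truncated principal value, yields \eqref{eq:fp-square-representation} with
\[
w_\ell(y)=2\ell^2y^\ell\,H(y),\qquad H(y):=\mathrm{PV}\!\int_0^1\frac{z^{\ell-1}}{y-z}\,dz .
\]

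\emph{Step 2 (closed form of $H$).} Substitute $z=yu$ to get
\[
H(y)=y^{\ell-1}\Big[\mathrm{PV}\!\int_0^1\frac{u^{\ell-1}}{1-u}\,du+\int_1^{1/y}\frac{u^{\ell-1}}{1-u}\,du\Big]
\]
(the second integral is also a principal value at $u=1$). In the second integral, substitute $u=1/v$. This turns it into $-\int_y^1 v^{-\ell}/(v(1-v))\,dv$-type pieces.

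Combining the two parts, the singularities at $u=1$ cancel, leaving $C_\ell=\int_0^1(u^{\ell-1}-u^{-\ell})/(1-u)\,du$ together with the tail $\int_0^y v^{-\ell}/(1-v)\,dv$. Expanding $1/(1-v)=\sum_k v^k$ and integrating term by term gives
\[
y^{1-\ell}\sum_{k\ge0}\frac{y^{k}}{k+1-\ell}.
\]
Multiplying through by $y^{\ell}$ should reproduce $y^\ell H(y)=y^{\ell-1}q_\ell(y)$, with $q_\ell$ exactly as in \eqref{eq:fp-qell}. I expect this bookkeeping to be the main obstacle: the powers of $y$ and the signs must be tracked carefully so that the $C_\ell y^\ell$ term and the series come out with the stated exponents. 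As a check, $C_\ell=\psi(1-\ell)-\psi(\ell)=\pi\cot(\pi\ell)$.

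\emph{Step 3 (positivity and normalization).} For $0<\ell<\tfrac12$ and $u\in(0,1)$ we have $\ell-1<-\ell$, so $u^{\ell-1}>u^{-\ell}$. Hence the integrand of $C_\ell$ is positive, and it is integrable at $u=1$ because numerator and denominator both vanish linearly there; thus $C_\ell>0$. Every term of the series in $q_\ell$ is positive, so $q_\ell>0$ and hence $w_\ell>0$ on $(0,1)$.

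Finally, let $s\downarrow0$ in \eqref{eq:fp-square-representation}, using monotone convergence on both sides. Since $F_\ell(0)=\ell\int_0^1y^{\ell-1}\,dy=1$, this gives $\int_0^1w_\ell=1$, so $F_\ell^2$ is the Stieltjes transform of a probability density on $(0,1)$.
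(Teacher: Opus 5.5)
Your proof is correct, and it reaches the identity by a genuinely different route from the paper. Your positivity step is the same as the paper's. The representation is where you diverge.

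\textbf{The paper's route.} It takes $w_\ell$ as given and verifies the representation through moments:
\begin{itemize}
\item It integrates the series for $q_\ell$ term by term.
\item It uses a partial-fraction identity and the series form $C_\ell=\sum_{k\ge0}\bigl(\frac1{k+\ell}-\frac1{k+1-\ell}\bigr)$ to telescope $M_n=\int_0^1y^nw_\ell(y)\,dy$ into $\ell^2\sum_{k=0}^n\frac{1}{(k+\ell)(n-k+\ell)}$.
\item It recognizes $(-s)^nM_n$ as the Cauchy-product terms of $\bigl(\sum_{n}\frac{\ell(-s)^n}{n+\ell}\bigr)^2=F_\ell(s)^2$, so the identity holds for $|s|<1$.
\item It extends to all $s>0$ by real-analytic continuation; normalization is just $M_0=1$.
\end{itemize}

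\textbf{Your route.} You derive $w_\ell$ instead of verifying it. The kernel identity $\frac{1}{(1+sy)(1+sz)}=\frac{1}{y-z}\bigl(\frac{y}{1+sy}-\frac{z}{1+sz}\bigr)$ plus symmetrization exhibits $w_\ell$ as $2\ell^2y^\ell$ times the finite Hilbert transform of $z^{\ell-1}$ on $(0,1)$. This explains where $C_\ell=\pi\cot(\pi\ell)$ and the series in $q_\ell$ come from. It also gives the representation for every $s>0$ at once, with no continuation step. Normalization then follows from monotone convergence as $s\downarrow0$.

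The paper's argument is purely algebraic (Tonelli, telescoping, Cauchy products) but the formula for $w_\ell$ arrives unmotivated. Yours is more illuminating but moves the burden into principal-value analysis.

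\textbf{Two points to tighten.}

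First, the substitution $u=1/v$ gives
\[
\int_1^{1/y}\frac{u^{\ell-1}}{1-u}\,du=-\int_y^1\frac{v^{-\ell}}{1-v}\,dv,
\]
without the extra factor $1/v$ you wrote; that version would diverge at $v=0$. The tail $\int_0^y\frac{v^{-\ell}}{1-v}\,dv$ you actually use afterward is the correct one.

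Second, exchanging $\epsilon\downarrow0$ with the $y$-integral needs more than locally uniform control. You need a majorant for $y^\ell H_\epsilon(y)$ that is uniform in $\epsilon$ and integrable on all of $(0,1)$. The reason is that $H_\epsilon$ behaves like $y^{\ell-1}$ near $y=0$ and like $|\log(1-y)|$ near $y=1$. Treat $y\le\epsilon$ directly (there $|H_\epsilon(y)|\le C\,y^{\ell-1}$). For $y>\epsilon$, split $z^{\ell-1}=y^{\ell-1}+(z^{\ell-1}-y^{\ell-1})$. Together these give
\[
y^\ell|H_\epsilon(y)|\le C\,y^{2\ell-1}\bigl(1+|\log y|+|\log(1-y)|\bigr),
\]
which is integrable because $\ell>0$. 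So dominated convergence applies and your Step~1 goes through.
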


\begin{proof}
The proof has three steps: first we establish positivity of the density,
then we compute all moments of \(w_\ell\), and finally we compare those
moments with the power-series coefficients of \(F_\ell^2\).

\medskip
\noindent
\textbf{Step 1: Positivity of \(C_\ell\) and \(w_\ell\).}
Since \(0<\ell<1/2\), we have \(1-2\ell>0\). For \(0<u<1\),
\[
u^{\ell-1}-u^{-\ell}
=
u^{\ell-1}\bigl(1-u^{1-2\ell}\bigr)>0.
\]
Hence the integrand in \eqref{eq:fp-Cell} is positive on \((0,1)\).

It is also integrable at both endpoints. Therefore
$
C_\ell>0.
$
Using
$
\frac1{1-u}=\sum_{k=0}^\infty u^k,
\qquad 0<u<1,
$
and monotone convergence, we obtain the useful series representation
\begin{equation}
\label{eq:fp-Cell-series}
C_\ell
=
\sum_{k=0}^{\infty}
\left(
\frac{1}{k+\ell}
-
\frac{1}{k+1-\ell}
\right)
>0.
\end{equation}
Every term in the definition of \(q_\ell(y)\) is therefore positive,
so
\[
q_\ell(y)>0,
\qquad 0<y<1.
\]
Since \(2\ell^2y^{\ell-1}>0\), it follows that
\[
w_\ell(y)>0,
\qquad 0<y<1.
\]

\medskip
\noindent
\textbf{Step 2: Moments of \(w_\ell\).}
For \(n\ge 0\), define
$
M_n
:=
\int_0^1 y^n w_\ell(y)\,dy.
$
By \eqref{eq:fp-well} and Tonelli's theorem,
\begin{align}
M_n
=&
2\ell^2
\int_0^1
y^{n+\ell-1}
\left(
C_\ell y^\ell
+
\sum_{k=0}^{\infty}
\frac{y^{k+1}}{k+1-\ell}
\right)\,dy=
2\ell^2
\left[
\frac{C_\ell}{n+2\ell}
+
\sum_{k=0}^{\infty}
\frac{1}
{(k+1-\ell)(n+k+\ell+1)}
\right].
\label{eq:fp-well-moment-start}
\end{align}
Now use the partial-fraction identity
\begin{equation}
\label{eq:fp-partial-fraction}
\frac{1}
{(k+1-\ell)(n+k+\ell+1)}
=
\frac{1}{n+2\ell}
\left(
\frac{1}{k+1-\ell}
-
\frac{1}{n+k+\ell+1}
\right).
\end{equation}
Substituting \eqref{eq:fp-partial-fraction} into
\eqref{eq:fp-well-moment-start} gives
\begin{align}
M_n
=&
\frac{2\ell^2}{n+2\ell}
\left[
C_\ell
+
\sum_{k=0}^{\infty}
\left(
\frac{1}{k+1-\ell}
-
\frac{1}{n+k+\ell+1}
\right)
\right].
\end{align}
Using \eqref{eq:fp-Cell-series}, the terms involving
\((k+1-\ell)^{-1}\) cancel, so
\begin{align}
M_n
=&
\frac{2\ell^2}{n+2\ell}
\sum_{k=0}^{\infty}
\left(
\frac{1}{k+\ell}
-
\frac{1}{n+k+\ell+1}
\right)
=
\frac{2\ell^2}{n+2\ell}
\sum_{k=0}^{n}
\frac{1}{k+\ell}.
\label{eq:fp-well-moment-telescope}
\end{align}
The last expression can be symmetrized by observing that
\[
\frac{1}{(k+\ell)(n-k+\ell)}
=
\frac{1}{n+2\ell}
\left(
\frac{1}{k+\ell}
+
\frac{1}{n-k+\ell}
\right).
\]
Summing over \(k=0,\dots,n\) yields
\[
\sum_{k=0}^{n}
\frac{1}{(k+\ell)(n-k+\ell)}
=
\frac{2}{n+2\ell}
\sum_{k=0}^{n}
\frac{1}{k+\ell}.
\]
Hence
\begin{equation}
\label{eq:fp-well-moments}
M_n
=
\ell^2
\sum_{k=0}^{n}
\frac{1}
{(k+\ell)(n-k+\ell)}.
\end{equation}
In particular, setting \(n=0\) gives
$
\int_0^1 w_\ell(y)\,dy
=
M_0
=
\ell^2\frac{1}{\ell^2}
=
1.
$
Thus \(w_\ell\) is a probability density on \((0,1)\).

\medskip
\noindent
\textbf{Step 3: Matching the power series.}
For \(|s|<1\),
$
\frac{1}{1+sy}
=
\sum_{n=0}^{\infty}(-s)^n y^n.
$
Therefore
\begin{align}
F_\ell(s)
=&
\ell
\int_0^1
y^{\ell-1}
\sum_{n=0}^{\infty}(-s)^n y^n\,dy
=
\sum_{n=0}^{\infty}
\frac{\ell(-s)^n}{n+\ell}.
\label{eq:fp-Fell-series}
\end{align}
Squaring \eqref{eq:fp-Fell-series} and using the Cauchy product gives
\begin{align}
F_\ell(s)^2
=&
\sum_{n=0}^{\infty}
(-s)^n
\left[
\ell^2
\sum_{k=0}^{n}
\frac{1}
{(k+\ell)(n-k+\ell)}
\right]
=
\sum_{n=0}^{\infty}
(-s)^n M_n,
\end{align}
where the last equality follows from \eqref{eq:fp-well-moments}.
Since
$
M_n=\int_0^1 y^n w_\ell(y)\,dy,
$
we obtain
\begin{align}
F_\ell(s)^2
=&
\int_0^1
w_\ell(y)
\sum_{n=0}^{\infty}
(-sy)^n\,dy
=
\int_0^1
\frac{w_\ell(y)}{1+sy}\,dy,
\qquad |s|<1.
\end{align}

Finally, both sides of \eqref{eq:fp-square-representation} are real
analytic for \(s>-1\). Since they agree on the nonempty interval
\((-1,1)\), the identity theorem extends the equality to the whole
connected domain \(s>-1\), and hence in particular to every \(s>0\).
\end{proof}

\begin{lemma}[Strict curvature at stationary points of a Stieltjes quotient]
\label{lem:fp-stieltjes-curvature}
Let $w$ be integrable and positive almost everywhere on $(0,1)$.
Suppose $r$ is continuous on $[0,1]$, satisfies
\[
r(0)=r(1)=0,
\qquad
r(y)>0
\quad
(0<y<1),
\]
and is strictly increasing up to a unique point in $(0,1)$ and strictly
decreasing afterward. Define
\[
M(s)
:=
\int_0^1
\frac{w(y)}{1+sy}\,dy,
\qquad
N(s)
:=
\int_0^1
\frac{r(y)w(y)}{1+sy}\,dy,
\]
and
$
Q(s):=\frac{N(s)}{M(s)}.
$
Then every stationary point $s_0>0$ of $Q$ satisfies
$
Q''(s_0)<0.
$
Consequently, $Q$ has at most one stationary point on $(0,\infty)$.
\end{lemma}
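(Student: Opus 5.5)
The plan is a direct second-derivative computation at a stationary point, followed by a sign-change (Chebyshev-system) argument.

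\textbf{Step 1: reduce $Q''(s_0)$ to one signed integral.} For $s>0$ the integrands $y^k w(y)/(1+sy)^{k+1}$ and $r(y)y^k w(y)/(1+sy)^{k+1}$ are dominated by $|w|$ and $\|r\|_\infty|w|$, which are integrable. So $M$ and $N$ are $C^2$ (indeed smooth) on $(0,\infty)$ and may be differentiated under the integral sign. From $QM=N$ we get $Q'M+QM'=N'$ and $Q''M+2Q'M'+QM''=N''$. Let $s_0$ be a stationary point and set $c:=Q(s_0)$. Since $Q'(s_0)=0$ and $M>0$,
\[
Q''(s_0)=\frac{N''(s_0)-c\,M''(s_0)}{M(s_0)}=\frac{2}{M(s_0)}\int_0^1 \frac{(r(y)-c)\,y^2\,w(y)}{(1+s_0y)^3}\,dy .
\]

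\textbf{Step 2: record two orthogonality relations.} Introduce the positive measure $d\mu(y):=w(y)(1+s_0y)^{-1}dy$, the function $g:=r-c$, and the variable $t(y):=y/(1+s_0y)$, which is strictly increasing in $y$. The definition $c=N(s_0)/M(s_0)$ gives $\int g\,d\mu=0$. Since $N'-cM'=-\int g(y)\,y\,w(y)(1+s_0y)^{-2}\,dy$, the condition $Q'(s_0)=0$ gives $\int g\,t\,d\mu=0$. The target becomes $\int g\,t^2\,d\mu<0$.

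\textbf{Step 3: count sign changes of $g$ (the main obstacle).}
\begin{itemize}
\item $c>0$, because $c$ is a $\mu$-average of $r>0$.
\item $c<\max r$, because $w>0$ almost everywhere and $r$ is nonconstant.
\item By the unimodality hypothesis, $g$ is strictly increasing from $g(0)=-c<0$ up to the peak, where it is positive, and then strictly decreasing to $g(1)=-c<0$.
\end{itemize}
Hence $g$ has exactly two zeros $y_1<y_2$, with $g<0$ on $[0,y_1)\cup(y_2,1]$ and $g>0$ on $(y_1,y_2)$. This is the step where the hypotheses enter essentially. Unimodality and the vanishing of $r$ at the endpoints are exactly what force the sign pattern $-,+,-$ with only two crossings.

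\textbf{Step 4: conclude strict concavity.} Let $t_j:=t(y_j)$ and $\pi(t):=(t-t_1)(t-t_2)$. Because $t(\cdot)$ is increasing, $\pi(t(y))$ is negative exactly on $(y_1,y_2)$ and positive outside $[y_1,y_2]$. Therefore $g\,\pi(t)\le0$ on $[0,1]$, with strict inequality off the two zeros, i.e.\ almost everywhere. Since $\mu$ has an a.e.\ positive density,
\[
\int g\,t^2\,d\mu=\int g\,\pi(t)\,d\mu+(t_1+t_2)\int g\,t\,d\mu-t_1t_2\int g\,d\mu=\int g\,\pi(t)\,d\mu<0,
\]
where the last equality uses the two orthogonality relations from Step~2. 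This gives $Q''(s_0)<0$.

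\textbf{Step 5: at most one stationary point.} Suppose $Q$ had two stationary points $a<b$ in $(0,\infty)$. Both would be strict local maxima, so the minimum of $Q$ on $[a,b]$ is attained at an interior point $s_1$. That point is stationary with $Q''(s_1)\ge0$, which contradicts Step~4.
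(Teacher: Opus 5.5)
Your proof is correct and is essentially the paper's argument: the same two orthogonality relations coming from $Q(s_0)=c$ and $Q'(s_0)=0$, the same $-,+,-$ sign pattern of $r-c$, the same quadratic test function, and the same ``two strict local maxima force an interior minimum'' uniqueness step. The only difference is cosmetic. Your substitution $t=y/(1+s_0y)$ replaces the paper's explicit partial-fraction identity for $(y-u)(y-v)/(1+s_0y)^3$ with a trivial polynomial expansion, because $(t-t_1)(t-t_2)\,d\mu$ equals the paper's integrand $(y-y_1)(y-y_2)w(y)(1+s_0y)^{-3}\,dy$ divided by the positive constant $(1+s_0y_1)(1+s_0y_2)$.
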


\begin{proof}
Suppose $Q'(s_0)=0$, and set $c:=Q(s_0)$. Since $c$ is a weighted
average of $r$ with a positive weight,
$
0<c<\max_{[0,1]}r.
$
By strict unimodality of $r$, there are exactly two points
$0<u<v<1$ such that $r(u)=r(v)=c$, and $r-c$ has sign pattern
\[
-\,,\quad+\,,\quad-
\]
on the three intervals determined by $u$ and $v$.

Put
\[
g(y):=(r(y)-c)w(y),
\qquad
T(s):=N(s)-cM(s).
\]
Since $T=M(Q-c)$ and $Q'(s_0)=0$,
$
T(s_0)=T'(s_0)=0.
$
Hence
\begin{equation}
\label{eq:fp-orthogonality}
\int_0^1
\frac{g(y)}{1+s_0y}\,dy
=
0,
\qquad
\int_0^1
\frac{yg(y)}{(1+s_0y)^2}\,dy
=
0.
\end{equation}

The sign pattern of $g$ implies
\begin{equation}
\label{eq:fp-sign-integral}
\int_0^1
\frac{
g(y)(y-u)(y-v)
}{
(1+s_0y)^3
}\,dy
<0.
\end{equation}
Using the identity
\begin{align*}
\frac{(y-u)(y-v)}{(1+s_0y)^3}
={}&
\frac{uv}{1+s_0y}
-
\frac{
(u+v+2s_0uv)y
}{
(1+s_0y)^2
}
+
\frac{
(1+s_0u)(1+s_0v)y^2
}{
(1+s_0y)^3
},
\end{align*}
the first two terms vanish after integration by
\eqref{eq:fp-orthogonality}. Since
$(1+s_0u)(1+s_0v)>0$, \eqref{eq:fp-sign-integral} therefore gives
\[
T''(s_0)
=
2\int_0^1
\frac{y^2g(y)}{(1+s_0y)^3}\,dy
<0.
\]
Differentiating $T=M(Q-c)$ twice at $s_0$ yields
$
M(s_0)Q''(s_0)=T''(s_0)<0.
$
Thus $Q''(s_0)<0$.

If $Q$ had two stationary points, both would be strict local maxima.
The minimum of $Q$ on the closed interval joining them would then be
attained at an interior point, producing a stationary point with
nonnegative second derivative, a contradiction. Hence $Q$ has at most
one stationary point.
\end{proof}

\begin{theorem}[Uniqueness of the proportional transition for $\alpha>2$]
\label{thm:fp-unique-transition}
Assume
$
\alpha>2,
\phi_T>1,
\sigma^2>0,
L_\alpha(\phi_T)>0.
$
Let $s_T>0$ be the Stage-I proportional fixed-point parameter. Then
there exists a unique
$
s_{S,\mathrm{crit}}>0
$
such that
\begin{equation}
\label{eq:fp-unique-scrit}
J_\alpha(\phi_T,\phi_S)=0
\qquad\Longleftrightarrow\qquad
s_S=s_{S,\mathrm{crit}}.
\end{equation}

Equivalently, there exists a unique proportional critical ratio
\begin{equation}
\label{eq:fp-phimcrit-definition}
\phi_{m,\mathrm{crit}}
:=
\left[
\int_0^1
\frac{dx}{1+s_{S,\mathrm{crit}}x^\alpha}
\right]^{-1}
>1
\end{equation}
such that
\begin{equation}
\label{eq:fp-transition-sign}
\mathsf{W2SG}^{\mathrm{DE}}(\phi_T,\phi_S)
\begin{cases}
>0,
&
1<\phi_S<\phi_{m,\mathrm{crit}},
\\[1mm]
=0,
&
\phi_S=\phi_{m,\mathrm{crit}},
\\[1mm]
<0,
&
\phi_S>\phi_{m,\mathrm{crit}}.
\end{cases}
\end{equation}

Equivalently, in terms of the Stage-II sampling fraction,
\begin{equation}
\label{eq:fp-transition-sign-inverse}
\mathsf{W2SG}^{\mathrm{DE}}(\phi_T,\phi_S)>0
\quad\Longleftrightarrow\quad
\phi_S^{-1}>
\phi_{m,\mathrm{crit}}^{-1}.
\end{equation}
\end{theorem}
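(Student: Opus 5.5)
The plan is to reduce the sign of $J_\alpha$, viewed as a function of the single variable $s_S\in(0,\infty)$ with $s_T$ held fixed, to the sign of $c-Q(s_S)$, where $c$ is a constant and $Q=N/M$ is a Stieltjes quotient to which Lemma~\ref{lem:fp-stieltjes-curvature} applies. Uniqueness will then follow from a unimodality argument combined with the two boundary signs already proved. The map $s_S\mapsto\phi_S^{-1}=\int_0^1(1+s_Sx^\alpha)^{-1}dx$ is strictly decreasing (shown in Lemma~\ref{lem:threecase-gamma-monotone}). So it suffices to prove that $J_\alpha$, as a function of $s_S$, has exactly one zero and changes sign from $+$ to $-$ there. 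The statement \eqref{eq:fp-transition-sign} and the definition \eqref{eq:fp-phimcrit-definition} of $\phi_{m,\mathrm{crit}}>1$ then follow by composing with this bijection. Finally, $\operatorname{sgn}\mathsf{W2SG}^{\mathrm{DE}}=\operatorname{sgn}J_\alpha$ when $\sigma^2>0$.

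Step 1 (change of variables). Substitute $y=x^\alpha$ and set $\ell=1/\alpha\in(0,1/2)$, which holds because $\alpha>2$. Then $\phi_S^{-1}=F_\ell(s_S)$, and by \eqref{eq:threecase-integral-derivative}, $H(s):=\int_0^1(1+sx^\alpha)^{-2}dx=F_\ell+sF_\ell'$. Hence $\gamma_S=1+s_SF_\ell'/F_\ell$, and the coefficient $s_S(2\gamma_S-1)/(1-\gamma_S)$ is an explicit rational expression in $F_\ell$, $F_\ell'$ and $s_S$. The Stage-I weight $h(y)=(1+s_Ty)^{-2}$ is a fixed positive, decreasing function, and both $A_1$ and $A_2$ are integrals of $\ell y^{\ell}h(y)\,y^{k}/(1+s_Sy)^2$ with $k\in\{0,1\}$. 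Multiplying $J_\alpha=0$ by the positive quantity $-s_S F_\ell'\,F_\ell$ (or something similar) should turn the equation into a linear relation among Stieltjes-type integrals in $s_S$. In that relation, $F_\ell^2$ appears through Lemma~\ref{lem:fp-stieltjes-square} as $M(s)=\int_0^1 w_\ell(y)(1+sy)^{-1}dy$. The goal of this step is to rewrite the zero set of $J_\alpha$ as $\{s:Q(s)=c\}$, where $Q=N/M$, $N(s)=\int r w_\ell/(1+sy)$, and $r$ is continuous, vanishes at $0$ and $1$, is positive in between, and is strictly unimodal. The vanishing of $r$ at $y=0$ should come from the factor $y$ produced by $x^\alpha$; its vanishing at $y=1$ should come from the same $1-y$ type cancellation used in Lemma~\ref{lem:threecase-gamma-monotone}. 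Unimodality should follow because $r$ is a product of the increasing factor $y^{\ell}$ (or $y$) with a decreasing factor built from $h$.

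Step 2 (conclusion). By Lemma~\ref{lem:fp-stieltjes-curvature}, $Q$ has at most one stationary point, and any such point is a strict local maximum. Therefore the superlevel set $\{Q>c\}$ is an interval, or equivalently the set on which $J_\alpha$ has a given sign is an interval, whose endpoints may lie at the boundary. The boundary signs are known. Proposition~\ref{prop:alpha-gt2-sparse-negative} gives $J_\alpha<0$ as $s_S\to\infty$, and Proposition~\ref{prop:alpha-gt2-near-square} with $L_\alpha(\phi_T)>0$ gives $J_\alpha>0$ as $s_S\downarrow0$. A sign-interval structure with opposite signs at the two ends admits exactly one crossing. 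This crossing is $s_{S,\mathrm{crit}}$; $J_\alpha>0$ on $(0,s_{S,\mathrm{crit}})$ and $J_\alpha<0$ on $(s_{S,\mathrm{crit}},\infty)$. Translating back through $\phi_S^{-1}=F_\ell(s_S)$ gives \eqref{eq:fp-transition-sign} and \eqref{eq:fp-transition-sign-inverse}.

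The main obstacle is Step 1. $A_1$ and $A_2$ carry both the $s_T$ weight and a squared resolvent $(1+s_Sy)^{-2}$ rather than a simple Stieltjes kernel, so it is not obvious that $J_\alpha$ can be cleared into a single quotient with a unimodal $r$ against the density $w_\ell$. I would first try to write $(1+s_Sy)^{-2}=-\partial_s[y^{-1}(1+s_Sy)^{-1}]$. This converts the condition into a ratio whose stationary-point equation can be matched with $T(s_0)=T'(s_0)=0$ in the proof of Lemma~\ref{lem:fp-stieltjes-curvature}. I would then absorb $h$ and the powers of $y$ into $r$. If the exact form of $r$ fails to be unimodal, I would fall back on applying the orthogonality and sign-pattern argument of that lemma directly to $J_\alpha$, showing that every zero of $J_\alpha$ is a strict down-crossing, which again yields uniqueness.
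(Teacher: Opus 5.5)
Your scaffolding matches the paper's: the substitution $y=x^\alpha$ with $\ell=1/\alpha<1/2$, the representation of $F_\ell^2$ via Lemma~\ref{lem:fp-stieltjes-square}, the use of Lemma~\ref{lem:fp-stieltjes-curvature}, and the closing step via Propositions~\ref{prop:alpha-gt2-sparse-negative} and~\ref{prop:alpha-gt2-near-square} together with the monotone map $s_S\mapsto\phi_S^{-1}$. The gap is that Step~1, which carries the proof, is left undone and is aimed at the wrong target.

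The zero set of $J_\alpha$ is not a level set $\{Q=c\}$ of a Stieltjes quotient for a fixed $c$. It is the set of \emph{stationary points} of such a quotient. The constant $c$ in Lemma~\ref{lem:fp-stieltjes-curvature} is $Q(s_0)$ evaluated at a stationary point; it is not a level supplied by $J_\alpha$.

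The missing idea is the single-kernel function $B_t(s):=\int_0^1 x^\alpha(1+sx^\alpha)^{-1}(1+tx^\alpha)^{-2}\,dx$ with $t=s_T$. It satisfies $B_t'=-A_2$ and $B_t+sB_t'=A_1$, which is how the squared resolvent $(1+s_Sy)^{-2}$ you were worried about gets absorbed. Combined with $1-\gamma_S=-sF'/F$ (where $F=F_\ell$), this gives $J_\alpha=2B_t-(F/F')B_t'=-(F^3/F')\,R_t'(s)$ with $R_t:=B_t/F^2$. Hence $\operatorname{sgn}J_\alpha=\operatorname{sgn}R_t'$.

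You can see this directly from your own suggestion: multiplying $J_\alpha$ by $-FF'>0$ yields $F^2B_t'-B_t(F^2)'$. That is the numerator of $(B_t/F^2)'$, not a linear relation of the form $N-cM=0$. With this identity in hand, Lemma~\ref{lem:fp-stieltjes-curvature} gives at most one stationary point of $R_t$, hence at most one zero of $J_\alpha$. Your superlevel-set argument in Step~2 is then not what is needed.

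The second unspecified ingredient is the weight $r$, and your heuristics for it are off. Matching $B_t=\ell\int_0^1 y^\ell(1+ty)^{-2}(1+sy)^{-1}dy$ against $F^2=\int_0^1 w_\ell(y)(1+sy)^{-1}dy$ forces $r_t(y)=y/\bigl(2\ell(1+ty)^2q_\ell(y)\bigr)$.

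\begin{itemize}
\item This $r_t$ contains the factor $1/q_\ell$ coming from the density $w_\ell$. It is not just an increasing power of $y$ times a decreasing function built from $h$, and such a product need not be unimodal anyway.
\item It vanishes at $y=1$ because the series defining $q_\ell$ diverges there, not because of a $1-y$ cancellation.
\item It vanishes at $y=0$ because $q_\ell(y)\sim C_\ell y^\ell$ with $\ell<1$.
\item Strict unimodality needs a real argument. In $v=\log y$, $\log q_\ell(e^v)$ is strictly convex, being the logarithm of a positive exponential sum with distinct exponents $\ell,1,2,\ldots$. Also $-2\log(1+te^v)$ is concave. So $\log r_t(e^v)$ is strictly concave.
\end{itemize}

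Your fallback of running the orthogonality argument directly on $J_\alpha$ does not bypass any of this. The conditions $T(s_0)=T'(s_0)=0$ in that lemma encode exactly $Q'(s_0)=0$, so the fallback still requires the identity $J_\alpha\propto R_t'$ and this particular $r_t$.
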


\begin{proof}
Fix $\alpha$ and $\phi_T$, and write
$
t:=s_T,
s:=s_S.
$
Define
$
F(s)
:=
\int_0^1
\frac{dx}{1+sx^\alpha}
$
and
\begin{equation}
\label{eq:fp-B-def}
B_t(s)
:=
\int_0^1
\frac{x^\alpha}
{(1+sx^\alpha)(1+tx^\alpha)^2}\,dx.
\end{equation}
By the Stage-II fixed-point equation,
\begin{equation}
\label{eq:fp-phi-F}
F(s)=\phi_S^{-1}.
\end{equation}
Moreover,
\[
F(s)+sF'(s)
=
\int_0^1
\frac{dx}{(1+sx^\alpha)^2},
\]
and therefore
\begin{equation}
\label{eq:fp-gamma-F}
\gamma_S
=
\frac{F+sF'}{F},
\qquad
1-\gamma_S
=
-\frac{sF'}{F}.
\end{equation}
In particular, $F'(s)<0$.
Differentiating \eqref{eq:fp-B-def} under the integral sign gives
\begin{equation}
\label{eq:fp-B-identities}
B_t'(s)=-A_2,
\qquad
B_t(s)+sB_t'(s)=A_1.
\end{equation}
Using \eqref{eq:conditionofw2sg},
\eqref{eq:fp-gamma-F}, and \eqref{eq:fp-B-identities}, we obtain
\begin{align}
J_\alpha =
2B_t
+
\frac{s}{1-\gamma_S}B_t'
=
2B_t
-
\frac{F}{F'}B_t'.
\label{eq:fp-J-BF}
\end{align}
Now define
\begin{equation}
\label{eq:fp-R-def}
R_t(s)
:=
\frac{B_t(s)}{F(s)^2}.
\end{equation}
A direct differentiation of \eqref{eq:fp-R-def} and comparison with
\eqref{eq:fp-J-BF} gives
\begin{equation}
\label{eq:fp-J-Rprime}
J_\alpha
=
-\frac{F(s)^3}{F'(s)}R_t'(s).
\end{equation}
Since $F>0$ and $F'<0$, the prefactor in
\eqref{eq:fp-J-Rprime} is strictly positive. Thus
\begin{equation}
\label{eq:fp-J-R-sign}
\operatorname{sign}J_\alpha
=
\operatorname{sign}R_t'.
\end{equation}

It remains to prove that $R_t'$ has at most one zero. Put
$
\ell:=\frac1\alpha\in(0,1/2).
$
The change of variables $y=x^\alpha$ gives
\begin{equation}
\label{eq:fp-F-y}
F(s)
=
\ell\int_0^1
\frac{y^{\ell-1}}{1+sy}\,dy
\end{equation}
and
\begin{equation}
\label{eq:fp-B-y}
B_t(s)
=
\ell\int_0^1
\frac{y^\ell}
{(1+ty)^2(1+sy)}\,dy.
\end{equation}
Let $q_\ell$ and $w_\ell$ be defined by
\eqref{eq:fp-qell}--\eqref{eq:fp-well}. By
Lemma~\ref{lem:fp-stieltjes-square},
\[
F(s)^2
=
\int_0^1
\frac{w_\ell(y)}{1+sy}\,dy.
\]
Define
\begin{equation}
\label{eq:fp-r-def}
r_t(y)
:=
\frac{
y
}{
2\ell(1+ty)^2q_\ell(y)
},
\qquad 0<y<1.
\end{equation}
Then \eqref{eq:fp-B-y} becomes
\[
B_t(s)
=
\int_0^1
\frac{r_t(y)w_\ell(y)}{1+sy}\,dy,
\]
and hence
\begin{equation}
\label{eq:fp-R-stieltjes}
R_t(s)
=
\frac{
\displaystyle
\int_0^1
\frac{r_t(y)w_\ell(y)}{1+sy}\,dy
}{
\displaystyle
\int_0^1
\frac{w_\ell(y)}{1+sy}\,dy
}.
\end{equation}

We next show that $r_t$ is strictly unimodal. From
\eqref{eq:fp-qell},
\[
q_\ell(y)\sim C_\ell y^\ell
\qquad
(y\downarrow0),
\]
while
\[
q_\ell(y)\longrightarrow\infty
\qquad
(y\uparrow1),
\]
because the series in \eqref{eq:fp-qell} diverges at $y=1$.
Thus $r_t$ extends continuously to $[0,1]$ with
$
r_t(0)=r_t(1)=0,
r_t(y)>0
(0<y<1).
$
Write $y=e^v$, $v<0$. Then
\[
q_\ell(e^v)
=
C_\ell e^{\ell v}
+
\sum_{k=0}^{\infty}
\frac{e^{(k+1)v}}{k+1-\ell}.
\]
The series and its derivatives converge locally uniformly for $v<0$.
Writing its positive terms as $a_je^{\lambda_jv}$ and normalizing them
to probabilities
$
\pi_j(v)
=
\frac{
a_je^{\lambda_jv}
}{
q_\ell(e^v)
},
$
we obtain
\[
\frac{d^2}{dv^2}
\log q_\ell(e^v)
=
\sum_j\pi_j(v)\lambda_j^2
-
\left(
\sum_j\pi_j(v)\lambda_j
\right)^2
>0,
\]
because the exponents $\ell,1,2,\ldots$ are distinct. Consequently,
\begin{equation}
\label{eq:fp-log-r-concave}
\frac{d^2}{dv^2}
\log r_t(e^v)
=
-\frac{2te^v}{(1+te^v)^2}
-
\frac{d^2}{dv^2}
\log q_\ell(e^v)
<0.
\end{equation}
Hence $\log r_t(e^v)$ is strictly concave. Since $r_t$ is positive on
$(0,1)$ and vanishes at both endpoints, it has a unique interior
maximum and is strictly increasing before that point and strictly
decreasing afterward.

Lemma~\ref{lem:fp-stieltjes-curvature}, applied to
\eqref{eq:fp-R-stieltjes}, now implies
\begin{equation}
\label{eq:fp-R-curvature}
R_t'(s_0)=0
\quad\Longrightarrow\quad
R_t''(s_0)<0.
\end{equation}
In particular, $R_t$ has at most one stationary point, and by
\eqref{eq:fp-J-R-sign}, $J_\alpha$ has at most one zero as a function
of $s_S$.

It remains only to establish existence. By the assumption
$
L_\alpha(\phi_T)>0,
$
Proposition~\ref{prop:alpha-gt2-near-square} gives
$
J_\alpha>0
$
for $s_S>0$ sufficiently small. On the other hand,
Proposition~\ref{prop:alpha-gt2-sparse-negative} gives
$
J_\alpha<0
$
for $s_S$ sufficiently large. Continuity therefore gives at least one
zero $s_{S,\mathrm{crit}}$, and the preceding argument shows that this
zero is unique. Moreover,
\begin{equation}
\label{eq:fp-J-s-sign}
J_\alpha
\begin{cases}
>0,
&
0<s_S<s_{S,\mathrm{crit}},
\\[1mm]
=0,
&
s_S=s_{S,\mathrm{crit}},
\\[1mm]
<0,
&
s_S>s_{S,\mathrm{crit}}.
\end{cases}
\end{equation}

Finally, by \eqref{eq:fp-phi-F} and $F'(s)<0$,
$\phi_S^{-1}$ is strictly decreasing in $s_S$, equivalently
$\phi_S$ is strictly increasing in $s_S$. Thus
\eqref{eq:fp-J-s-sign} is exactly the sign pattern in
\eqref{eq:fp-transition-sign}. The prefactor relating
$\mathsf{W2SG}^{\mathrm{DE}}(\phi_T,\phi_S)$ and $J_\alpha$ in
\eqref{eq:conditionofw2sg} is strictly positive for $\sigma^2>0$, so the
same sign pattern holds for the proportional deterministic-equivalent
W2SG improvement.
\end{proof}

% ============================================================
\begin{theorem}[Proportional W2SG phase diagram] \label{thm:three-regime-phase-diagram} Assume
$
\alpha>1, \phi_T>1, \sigma^2>0, 0<\phi_S^{-1}<1.
$
Then the proportional deterministic-equivalent W2SG limit has the following behavior.

\begin{enumerate}

\item[\textnormal{(i)}]
\textbf{Subcritical and critical decay regime: $1<\alpha\le2$.}

If $1<\alpha<2$, there exists a unique
$
\phi_{S,\gamma}^{-1}(\alpha)\in(0,1)
$
such that
$
\gamma_S=\frac12.
$
For
$
0<\phi_S^{-1}
\le
\phi_{S,\gamma}^{-1}(\alpha),
$
the coefficient
\[
\frac{1-2\gamma_S}{1-\gamma_S}
\]
is nonnegative, so positivity follows directly from
\eqref{eq:conditionofw2sg}. For
$
\phi_{S,\gamma}^{-1}(\alpha)
<
\phi_S^{-1}
<
1,
$
one has $\gamma_S>1/2$, but
Theorem~\ref{thm:threecase-global-positive} still gives
\[
2A_1
>
\frac{2\gamma_S-1}{1-\gamma_S}
s_SA_2.
\]

At the critical exponent $\alpha=2$, there is no interior
$\gamma_S=1/2$ crossing. Instead,
\[
\gamma_S>\frac12
\qquad
\forall\,\phi_S^{-1}\in(0,1),
\]
with
\[
\gamma_S\downarrow\frac12
\qquad
\text{as }\phi_S^{-1}\downarrow0.
\]
Nevertheless, Theorem~\ref{thm:threecase-global-positive} again yields
\[
2A_1
>
\frac{2\gamma_S-1}{1-\gamma_S}
s_SA_2.
\]

Consequently,
\begin{equation}
\label{eq:final-alpha-less2}
1<\alpha\le2
\quad\Longrightarrow\quad
\mathsf{W2SG}^{\mathrm{DE}}(\phi_T,\phi_S)>0
\qquad
\forall\,\phi_S^{-1}\in(0,1).
\end{equation}

\item[\textnormal{(ii)}]
\textbf{Supercritical unconditional transition regime:
$2<\alpha\le1+\sqrt2$.}

In this regime,
\[
\gamma_S>\frac12
\qquad
\forall\,\phi_S^{-1}\in(0,1),
\]
and W2SG is governed by
\[
2A_1
>
\frac{2\gamma_S-1}{1-\gamma_S}
s_SA_2.
\]

By Proposition~\ref{prop:alpha-gt2-sparse-negative},
$
\mathsf{W2SG}^{\mathrm{DE}}(\phi_T,\phi_S)<0
$
for all sufficiently small $\phi_S^{-1}>0$.
By Corollary~\ref{cor:alpha-gt2-unconditional-range},
\[
L_\alpha(\phi_T)>0
\qquad
\text{for every }\phi_T>1,
\]
including the endpoint $\alpha=1+\sqrt2$. Hence
Proposition~\ref{prop:alpha-gt2-near-square} gives
$
\mathsf{W2SG}^{\mathrm{DE}}(\phi_T,\phi_S)>0
$
for $\phi_S^{-1}$ sufficiently close to $1$.

Finally, Theorem~\ref{thm:fp-unique-transition} implies that there is a
unique proportional critical ratio
$
\phi_{m,\mathrm{crit}}>1
$
such that
\[
\mathsf{W2SG}^{\mathrm{DE}}(\phi_T,\phi_S)
\begin{cases}
>0,
&
1<\phi_S<\phi_{m,\mathrm{crit}},
\\[1mm]
=0,
&
\phi_S=\phi_{m,\mathrm{crit}},
\\[1mm]
<0,
&
\phi_S>\phi_{m,\mathrm{crit}}.
\end{cases}
\]

Equivalently,
\[
\mathsf{W2SG}^{\mathrm{DE}}(\phi_T,\phi_S)>0
\quad\Longleftrightarrow\quad
\phi_S^{-1}>
\phi_{m,\mathrm{crit}}^{-1}.
\]

\item[\textnormal{(iii)}]
\textbf{Conditional supercritical regime:
$\alpha>1+\sqrt2$.}

The sparse proportional region remains negative. If
$
L_\alpha(\phi_T)>0,
$
then Theorem~\ref{thm:fp-unique-transition} again yields a unique
negative-to-positive proportional transition.
If
$
L_\alpha(\phi_T)\le0,
$
the present analysis proves negativity near the sparse proportional
boundary but does not determine whether a positive W2SG region exists
at an intermediate proportional sampling fraction.

\end{enumerate}

\end{theorem}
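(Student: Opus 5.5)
This theorem is a synthesis of results already established, so the plan is to assemble them part by part. Throughout I use the sign reduction of Theorem~\ref{thm:powerlaw-risk-improvement}. For $\sigma^2>0$, $\phi_T>1$, $s_S>0$ and $1-\gamma_T>0$ (the latter from \eqref{eq:one-minus-gammaj}), the prefactor of \eqref{eq:powerlaw-final-limit-Ay} is strictly positive. Hence $\operatorname{sgn}\mathsf{W2SG}^{\mathrm{DE}}=\operatorname{sgn}J_\alpha$, and every claim reduces to a sign statement about $J_\alpha=2A_1+\frac{1-2\gamma_S}{1-\gamma_S}s_SA_2$, viewed as a function of $\phi_S^{-1}\in(0,1)$, equivalently of $s_S\in(0,\infty)$. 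The map $s_S\mapsto\phi_S^{-1}$ is strictly decreasing, as shown in the proof of Lemma~\ref{lem:threecase-gamma-monotone}.

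\emph{Part (i).} I would invoke Lemma~\ref{lem:threecase-gamma-monotone} for the location of $\gamma_S$ relative to $1/2$. When $\gamma_S\le 1/2$, both summands of $J_\alpha$ are nonnegative and $A_1>0$, so $J_\alpha>0$. When $\gamma_S>1/2$, the required inequality $2A_1>\frac{2\gamma_S-1}{1-\gamma_S}s_SA_2$ is exactly what Theorem~\ref{thm:threecase-global-positive} establishes. That proof chains Lemma~\ref{lem:threecase-remove-stage1}, which strips the Stage-I weight, with the scalar positivity of Lemma~\ref{lem:threecase-scalar-positive}. At $\alpha=2$ only the second case occurs, with $\gamma_S\downarrow1/2$ at the sparse edge, and the same theorem applies. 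Together these give \eqref{eq:final-alpha-less2}.

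\emph{Part (ii).} Since $\alpha>2$, Lemma~\ref{lem:threecase-gamma-monotone} gives $\gamma_S>1/2$ throughout, so the criterion takes the displayed form. Proposition~\ref{prop:alpha-gt2-sparse-negative} gives negativity for small $\phi_S^{-1}$. Corollary~\ref{cor:alpha-gt2-unconditional-range} gives $L_\alpha(\phi_T)>0$ for every $\phi_T>1$, including the endpoint $\alpha=1+\sqrt2$, where the strict covariance inequality \eqref{eq:square-ratio-bound} is needed. Then Proposition~\ref{prop:alpha-gt2-near-square} gives positivity near $\phi_S^{-1}=1$. Finally, Theorem~\ref{thm:fp-unique-transition} supplies uniqueness of the zero $s_{S,\mathrm{crit}}$. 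I would translate its sign pattern from $s_S$ to $\phi_S$ by monotonicity of $F$, which yields the three-case display and the equivalent statement $\phi_S^{-1}>\phi_{m,\mathrm{crit}}^{-1}$.

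\emph{Part (iii).} The hypotheses of Proposition~\ref{prop:alpha-gt2-sparse-negative} require only $\alpha>2$, so sparse negativity holds for all $\alpha>1+\sqrt2$. If $L_\alpha(\phi_T)>0$, Theorem~\ref{thm:fp-unique-transition} again applies verbatim. If $L_\alpha\le0$, nothing further is claimed beyond sparse negativity, so no argument is needed.

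I expect no genuine obstacle, since the hard analytic step is the uniqueness of the transition and it is already contained in Theorem~\ref{thm:fp-unique-transition}. The only points needing care are checking that the hypotheses of each cited result hold in each regime, especially the endpoint cases $\alpha=2$ and $\alpha=1+\sqrt2$, and keeping the direction of the $s_S\leftrightarrow\phi_S$ correspondence straight.
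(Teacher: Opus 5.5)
Your proposal is correct and takes the same route as the paper. The paper gives no separate proof: the theorem's own text assembles Lemma~\ref{lem:threecase-gamma-monotone}, Theorem~\ref{thm:threecase-global-positive}, Propositions~\ref{prop:alpha-gt2-sparse-negative} and~\ref{prop:alpha-gt2-near-square}, Corollary~\ref{cor:alpha-gt2-unconditional-range}, and Theorem~\ref{thm:fp-unique-transition}, using the positive prefactor in \eqref{eq:powerlaw-final-limit-Ay} and the monotone $s_S\leftrightarrow\phi_S$ correspondence, exactly as you do.
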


\subsection{Sparse-growing Stage-II sample size}
\label{subsec:sparse-growing-extension}

\begin{assumption}[Sparse-growing Stage-II DE continuation]
\label{ass:adapted-sparse-DE}

Assume the power-law covariance model of
Assumption~\ref{ass:powerlaw-covariance}. Stage~I remains in the
proportional overparameterized regime,
$
\frac{p}{n}\longrightarrow\phi_T>1,~
\gamma_{T,p}\longrightarrow\gamma_T\in(0,1).
$
Stage~II follows the sparse-growing regime
$
p\to\infty,~
m=m_p\to\infty,~
\frac{m_p}{p}\to0.
$
Along this sequence, we assume that the finite-$p$
deterministic-equivalent expression
\eqref{eq:w2sg-fully-deterministic} continues to apply, with
$\tau_{S,p}>0$ determined by
\begin{equation}
\sum_{i=1}^p
\frac{\lambda_i}{\lambda_i+\tau_{S,p}}
=
m_p,
\label{eq:sparse-growing-fixed-point}
\end{equation}
and
\[
\gamma_{S,p}
=
\frac1{m_p}
\sum_{i=1}^p
\left(
\frac{\lambda_i}{\lambda_i+\tau_{S,p}}
\right)^2.
\]

This continuation assumption is not implied by the standard
proportional deterministic-equivalence theorem, because
$m_p/p\to0$.
\end{assumption}
\begin{theorem}[Sparse-growing DE W2SG limit]
\label{thm:adapted-sparse-DE}

Under Assumption~\ref{ass:adapted-sparse-DE},
\begin{equation}
\mathsf{W2SG}_p^{\mathrm{DE}}
\longrightarrow
(2-\alpha)
\sigma^2
\frac{\gamma_T}{1-\gamma_T}.
\label{eq:adapted-sparse-DE-limit}
\end{equation}

Consequently, if $\sigma^2>0$, the limiting gain is positive for
$1<\alpha<2$, zero for $\alpha=2$, and negative for $\alpha>2$.
If $\sigma^2=0$, the limiting gain is zero for every $\alpha>1$.
\end{theorem}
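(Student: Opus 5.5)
Starting point: the finite-$p$ formula \eqref{eq:w2sg-eigenvalue-final}, which under Assumption~\ref{ass:adapted-sparse-DE} we take as given, with $\tau_{S,p}$ and $\gamma_{S,p}$ defined from $m_p$. The approach is a two-scale asymptotic analysis. Stage~I still lives at the scale $\tau_{T,p}=s_Tp^{-\alpha}(1+o(1))$, as in the proof of Theorem~\ref{thm:powerlaw-risk-improvement}. Stage~II, by contrast, sits at the much larger scale fixed by \eqref{eq:sparse-growing-fixed-point}. Set $\tau_{S,p}=\kappa_p^{-\alpha}$ and compare with the integral $\int_0^\infty \frac{di}{1+(i/\kappa_p)^\alpha}$. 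This gives $m_p\sim C_\alpha\kappa_p$, with $C_\alpha=\int_0^\infty du/(1+u^\alpha)$, so $\kappa_p\asymp m_p\to\infty$ and $\kappa_p/p\to0$. Thus $p^{-\alpha}\ll\tau_{S,p}\to0$.

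Step 1 is the limit of $\gamma_{S,p}$. Using the same rescaling $i=\kappa_p u$, and the integration-by-parts identity \eqref{eq:gammaS-integral-identity} carried over to $[0,\infty)$ (the boundary term $\kappa_p/(1+(p/\kappa_p)^{\alpha})\cdot$ vanishes), I will show $\gamma_{S,p}\to(\alpha-1)/\alpha$. Hence $1/(1-\gamma_S)\to\alpha$ and $2-1/(1-\gamma_S)\to 2-\alpha$, consistent with Lemma~\ref{lem:threecase-gamma-monotone} at $s_S\to\infty$.

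Step 2 shows the signal term vanishes. The first sum in \eqref{eq:w2sg-eigenvalue-final} equals $\tau_S^2\sum_i\lambda_i^2/((\lambda_i+\tau_S)^2(\lambda_i+\tau_T))$. For $i\lesssim\kappa_p$ the summand is at most $\tau_S^2/\lambda_i\le \tau_S^2 i^{\alpha}$, and summing gives $O(\tau_S^2\kappa_p^{\alpha+1})=O(\kappa_p^{1-\alpha})$. For $i\gtrsim\kappa_p$ the summand is at most $\lambda_i^2/(\lambda_i+\tau_T)\le\lambda_i$, with tail sum $O(\kappa_p^{1-\alpha})$. Both pieces tend to $0$ since $\alpha>1$.

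Step 3, the main step, handles the noise terms. Write $1/n=\phi_{T,p}/p$ and use $\lambda_i/(\lambda_i+\tau_T)=1/(1+s_Tx^\alpha)$ with $x=i/p$. This turns the two noise terms into
\[
\frac{\sigma^2\phi_T}{1-\gamma_T}\,\frac1p\sum_{i}\frac{x^{\alpha}\,g_p(i)}{(1+s_Tx^\alpha)^2}\,p^{\alpha}\cdot p^{-\alpha},
\qquad g_p(i)=\frac{2\tau_S}{\lambda_i+\tau_S}-\frac{1}{1-\gamma_S}\frac{\tau_S^2}{(\lambda_i+\tau_S)^2},
\]
i.e.\ exactly the integrand of \eqref{eq:powerlaw-final-limit-single-integral} with $s_S$ replaced by $s_{S,p}=p^\alpha\tau_{S,p}\to\infty$. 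For every fixed $x>0$, $\tau_S/(\lambda_i+\tau_S)\to1$, so $g_p\to 2-\alpha$ pointwise, and $|g_p|$ is uniformly bounded. Dominated convergence, taken on the Riemann sums or equivalently on their integral limits with a uniform error bound, then gives the limit
\[
\frac{\sigma^2\phi_T(2-\alpha)}{1-\gamma_T}\int_0^1\frac{x^\alpha}{(1+s_Tx^\alpha)^2}\,dx .
\]
This is the same mechanism as in Proposition~\ref{prop:alpha-gt2-sparse-negative}, where $s_S^2A_2\to C_1$ and $s_SA_1\to0$.

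Finally, \eqref{eq:one-minus-gammaj} gives $\phi_Ts_T\int_0^1x^\alpha(1+s_Tx^\alpha)^{-2}dx=1-\gamma_T$. I will also check that $\phi_T\int x^\alpha(1+s_Tx^\alpha)^{-2}$ equals $\gamma_T/(1-\gamma_T)\cdot(1-\gamma_T)$ in the appropriate normalization; note that $\tau_T$ is not rescaled in the noise term, since $1/(\lambda_i+\tau_T)^2$ carries $p^{2\alpha}$. The identification I expect is that $\frac{\sigma^2}{n}\sum_i\lambda_i^2/(\lambda_i+\tau_T)^2\cdot\lambda_i^{-1}\cdot(\ldots)$ collapses to $\sigma^2\gamma_T/(1-\gamma_T)$ via $\gamma_T=\frac1n\sum(\lambda_i/(\lambda_i+\tau_T))^2$. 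Concretely, the noise sums reduce to $\frac{\sigma^2}{n(1-\gamma_T)}\sum_i\frac{\lambda_i^2}{(\lambda_i+\tau_T)^2}g_p(i)\to(2-\alpha)\frac{\sigma^2\gamma_T}{1-\gamma_T}$, by dominated convergence against the finite measure with weights $\frac1n(\lambda_i/(\lambda_i+\tau_T))^2$ of total mass $\gamma_{T,p}$.

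The main obstacle is making this dominated convergence uniform over the $i\lesssim\kappa_p$ head, where $g_p\not\to2-\alpha$. That head carries weight $\frac1n\sum_{i\le K\kappa_p}1=O(\kappa_p/p)\to0$, and $g_p$ is bounded, so it contributes $o(1)$. The sign statements then follow immediately.
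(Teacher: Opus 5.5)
Your proposal is correct and follows essentially the paper's proof: the same scaling $m_p^\alpha\tau_{S,p}\to C_\alpha^\alpha$ and limit $\gamma_{S,p}\to(\alpha-1)/\alpha$, a vanishing signal term, and the noise term written as $\sum_i g_p(i)\,w_{i,p}$ with $w_{i,p}=\frac{\sigma^2}{n(1-\gamma_{T,p})}\bigl(\frac{\lambda_i}{\lambda_i+\tau_{T,p}}\bigr)^2$ of total mass $\sigma^2\gamma_{T,p}/(1-\gamma_{T,p})$. For the signal term you split at $i\asymp\kappa_p$, whereas the paper uses the identity $m_p(1-\gamma_{S,p})=\tau_{S,p}\sum_i\lambda_i/(\lambda_i+\tau_{S,p})^2$ to bound it by $m_p\tau_{S,p}$. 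For the noise term your head/tail split at $K\kappa_p$ (head mass $O(K\kappa_p/p)$) is an equally valid variant of the paper's split at $\delta p$.

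One thing to delete is the intermediate Riemann-sum display in Step~3, which carries a spurious factor $x^\alpha$. Its stated limit $\frac{\sigma^2\phi_T(2-\alpha)}{1-\gamma_T}\int_0^1\frac{x^\alpha\,dx}{(1+s_Tx^\alpha)^2}$ equals $(2-\alpha)\sigma^2/s_T$. The correct integrand has no $x^\alpha$, and $\phi_T\int_0^1(1+s_Tx^\alpha)^{-2}\,dx=\gamma_T$, which is exactly what your final direct reduction uses.
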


\begin{proof}
Let
\[
C_\alpha
:=
\int_0^\infty\frac{du}{1+u^\alpha},
\qquad
c_\alpha:=C_\alpha^\alpha.
\]
Rescaling \eqref{eq:sparse-growing-fixed-point} by $i=m_px$ gives
\begin{equation}
m_p^\alpha\tau_{S,p}\longrightarrow c_\alpha.
\label{eq:sparse-growing-tau-scaling}
\end{equation}
In particular,
$
\tau_{S,p}\sim c_\alpha m_p^{-\alpha}\longrightarrow0,
$
not $\infty$. The same Riemann-sum argument gives
\begin{equation}
\gamma_{S,p}\longrightarrow
\int_0^\infty
\frac{dx}{(1+c_\alpha x^\alpha)^2}
=
\frac{\alpha-1}{\alpha}.
\label{eq:sparse-growing-gamma-limit}
\end{equation}
The last equality follows from the identity used in
Lemma~\ref{lem:threecase-gamma-monotone}, after rescaling the integral
to $[0,\infty)$.

For the remainder of the proof, write
\[
r_{i,p}
:=
2\frac{\tau_{S,p}}{\lambda_i+\tau_{S,p}}
-
\frac{1}{1-\gamma_{S,p}}
\left(
\frac{\tau_{S,p}}{\lambda_i+\tau_{S,p}}
\right)^2
\]
and define the Stage-I noise weights
\begin{equation}
w_{i,p}
:=
\frac{\sigma^2}{n(1-\gamma_{T,p})}
\left(
\frac{\lambda_i}{\lambda_i+\tau_{T,p}}
\right)^2.
\label{eq:stage1-noise-weights}
\end{equation}
Define the limiting Stage-I noise mass by
$
V_T^{\mathrm{noise}}
:=
\sigma^2\frac{\gamma_T}{1-\gamma_T}.
$
Then
\begin{equation}
\sum_{i=1}^p w_{i,p}
=
\sigma^2\frac{\gamma_{T,p}}{1-\gamma_{T,p}}
\longrightarrow
V_T^{\mathrm{noise}}.
\label{eq:stage1-noise-weight-mass}
\end{equation}
Using \eqref{eq:w2sg-fully-deterministic}, the gain can be written as
\begin{equation}
\mathsf{W2SG}_p^{\mathrm{DE}}
=T_p^{\mathrm{teach}}
+
\sum_{i=1}^p r_{i,p}w_{i,p},
\label{eq:sparse-growing-decomposition}
\end{equation}
where
\[
T_p^{\mathrm{teach}}
=
-\frac{\tau_{S,p}^2}{1-\gamma_{S,p}}
\sum_{i=1}^p
\frac{\lambda_i^2}
{(\lambda_i+\tau_{S,p})^2(\lambda_i+\tau_{T,p})}.
\]

The teacher term vanishes. Indeed,
\[
0\le -T_p^{\mathrm{teach}}
\le
\frac{\tau_{S,p}^2}{1-\gamma_{S,p}}
\sum_{i=1}^p
\frac{\lambda_i}{(\lambda_i+\tau_{S,p})^2}.
\]
If
$a_{i,p}=\lambda_i/(\lambda_i+\tau_{S,p})$, then the fixed-point
equation implies
\[
m_p(1-\gamma_{S,p})
=
\tau_{S,p}
\sum_{i=1}^p
\frac{\lambda_i}{(\lambda_i+\tau_{S,p})^2}.
\]
Consequently,
\begin{equation}
0\le -T_p^{\mathrm{teach}}
\le m_p\tau_{S,p}
=O(m_p^{1-\alpha})
\longrightarrow0.
\label{eq:sparse-growing-teacher-vanishes}
\end{equation}

It remains to treat the noise term. The pointwise limit at a fixed
index is not useful here: because $\tau_{S,p}\to0$, it would give
$\tau_{S,p}/(\lambda_i+\tau_{S,p})\to0$. Instead, the weights
$w_{i,p}$ are asymptotically carried by indices of order $p$. More precisely,
\[
\lim_{\delta\downarrow0}
\limsup_{p\to\infty}
\sum_{i\le\delta p}w_{i,p}
=
0,
\]
by the Stage-I Riemann-sum limit. On the complementary set
$i\ge\delta p$, \eqref{eq:sparse-growing-tau-scaling} gives
\[
\sup_{i\ge\delta p}
\frac{\lambda_i}{\tau_{S,p}}
\le
O\!\left(
\left(\frac{m_p}{p}\right)^\alpha
\right)
\longrightarrow0.
\]
Hence, uniformly for $i\ge\delta p$,
\[
r_{i,p}
\longrightarrow
2-
\frac1{1-(\alpha-1)/\alpha}
=2-\alpha.
\]
The coefficients $r_{i,p}$ are uniformly bounded for all sufficiently
large $p$ because \eqref{eq:sparse-growing-gamma-limit} keeps
$1-\gamma_{S,p}$ bounded away from zero. Splitting the weighted sum at
$\delta p$, then sending $p\to\infty$ and $\delta\downarrow0$, yields
\begin{equation}
\sum_{i=1}^p r_{i,p}w_{i,p}
\longrightarrow
(2-\alpha)V_T^{\mathrm{noise}}.
\label{eq:sparse-growing-noise-transfer}
\end{equation}
Combining \eqref{eq:sparse-growing-decomposition},
\eqref{eq:sparse-growing-teacher-vanishes}, and
\eqref{eq:sparse-growing-noise-transfer} proves
\eqref{eq:adapted-sparse-DE-limit}.
\end{proof}
\subsection{Ultra-sparse Stage-II extensions}

We next consider the fixed-Stage-II-sample regime, which lies outside
the standard proportional asymptotic setting.
\begin{assumption}[Fixed-$m$ deterministic-equivalent continuation]
\label{ass:adapted-fixed-DE}

Assume the power-law covariance model of
Assumption~\ref{ass:powerlaw-covariance}. Stage~I remains in the
proportional overparameterized regime,
\[
\frac{p}{n}\longrightarrow\phi_T>1,
\qquad
\gamma_{T,p}\longrightarrow\gamma_T\in(0,1),
\]
while the Stage~II sample size $m\ge1$ is fixed as $p\to\infty$.

Along this sequence, we assume that the finite-$p$
deterministic-equivalent expression
\eqref{eq:w2sg-fully-deterministic} continues to apply, with
$\tau_{m,S,p}>0$ determined by
\begin{equation}
\sum_{i=1}^p
\frac{\lambda_i}{\lambda_i+\tau_{m,S,p}}
=
m,
\label{eq:fixed-m-finite-p-fixed-point}
\end{equation}
and
\[
\gamma_{m,S,p}
=
\frac1m
\sum_{i=1}^p
\left(
\frac{\lambda_i}{\lambda_i+\tau_{m,S,p}}
\right)^2.
\]

This continuation assumption is not implied by the proportional
deterministic-equivalence theorem, which requires a nonvanishing
Stage~II sample-to-dimension ratio.
\end{assumption}
\subsubsection{Fixed Stage-II sample count}
\label{subsec:fixed-m-extension}
\begin{lemma}[Fixed-$m$ spectral parameters]
\label{lem:fixed-m-spectral-parameters}
For every fixed $m\ge1$, there is a unique $\tau_{m,S}>0$ satisfying
\begin{equation}
\sum_{i=1}^{\infty}
\frac{\lambda_i}{\lambda_i+\tau_{m,S}}
=m.
\label{eq:fixed-m-infinite-fixed-point}
\end{equation}
Moreover,
\[
\tau_{m,S,p}\longrightarrow\tau_{m,S},
\qquad
\gamma_{m,S,p}\longrightarrow
\gamma_{m,S}
:=
\frac1m
\sum_{i=1}^{\infty}
\left(
\frac{\lambda_i}{\lambda_i+\tau_{m,S}}
\right)^2,
\]
where $0<\gamma_{m,S}<1$. Define
\begin{equation}
q_{m,S}
:=
2-\frac1{1-\gamma_{m,S}}
=
\frac{1-2\gamma_{m,S}}{1-\gamma_{m,S}}.
\label{eq:fixed-m-qm}
\end{equation}
\end{lemma}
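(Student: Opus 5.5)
The plan is to treat the infinite fixed-point equation \eqref{eq:fixed-m-infinite-fixed-point} as a monotone scalar problem. Define $f(\tau):=\sum_{i\ge1}\lambda_i/(\lambda_i+\tau)$ for $\tau>0$, with $\lambda_i=i^{-\alpha}$. Since $\lambda_i/(\lambda_i+\tau)\le\lambda_i/\tau$ and $\sum_i i^{-\alpha}<\infty$ for $\alpha>1$, the series converges locally uniformly on $(0,\infty)$. Hence $f$ is continuous, and each term is strictly decreasing, so $f$ is strictly decreasing. For the limits: $f(\tau)\le\tau^{-1}\sum_i\lambda_i\to0$ as $\tau\to\infty$, while $f(\tau)\to\infty$ as $\tau\downarrow0$ by monotone convergence, since every term tends to $1$. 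The intermediate value theorem then gives a unique $\tau_{m,S}>0$ with $f(\tau_{m,S})=m$ for every $m\ge1$.

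Next I will show $\tau_{m,S,p}\to\tau_{m,S}$. Write $f_p(\tau):=\sum_{i\le p}\lambda_i/(\lambda_i+\tau)$, so that $f_p\uparrow f$ pointwise and $f_p$ is strictly decreasing. Solvability of $f_p=m$ for $p>m$ follows as above, because $f_p(0^+)=p>m$. Since $f_p\le f$, we get $f(\tau_{m,S,p})\ge m$, hence $\tau_{m,S,p}\le\tau_{m,S}$. Similarly $f_p\le f_{p+1}$ shows that $\tau_{m,S,p}$ is nondecreasing in $p$, so it has a limit $\bar\tau\le\tau_{m,S}$, and $\bar\tau>0$ because $\tau_{m,S,p}\ge\tau_{m,S,p_0}>0$. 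Fix $\varepsilon>0$. On $[\tau_{m,S,p_0},\infty)$ the tail $\sum_{i>p}\lambda_i/(\lambda_i+\tau)\le\tau_{m,S,p_0}^{-1}\sum_{i>p}\lambda_i$ tends to $0$ uniformly, so $f_p\to f$ uniformly there. Passing to the limit in $f_p(\tau_{m,S,p})=m$ using continuity of $f$ gives $f(\bar\tau)=m$, and uniqueness yields $\bar\tau=\tau_{m,S}$.

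For $\gamma_{m,S,p}$, the same tail bound, now applied to $(\lambda_i/(\lambda_i+\tau))^2\le\lambda_i/\tau$, gives uniform convergence on $[\tau_{m,S,p_0},\infty)$ of the partial sums to $g(\tau):=\sum_i(\lambda_i/(\lambda_i+\tau))^2$. This function is continuous, so $\gamma_{m,S,p}=g_p(\tau_{m,S,p})/m\to g(\tau_{m,S})/m=\gamma_{m,S}$. Positivity is immediate. For $\gamma_{m,S}<1$, note that each $a_i:=\lambda_i/(\lambda_i+\tau_{m,S})\in(0,1)$ strictly, so $a_i^2<a_i$ and $\sum_i a_i^2<\sum_i a_i=m$. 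The definition of $q_{m,S}$ is then well posed, and the identity $2-1/(1-\gamma)=(1-2\gamma)/(1-\gamma)$ is algebra.

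I expect the main obstacle to be the interchange of limits, that is, passing $p\to\infty$ inside the fixed-point equation. This is exactly where the uniform lower bound $\tau_{m,S,p}\ge\tau_{m,S,p_0}>0$ is essential, because it supplies the uniform tail control. The monotonicity of $\tau_{m,S,p}$ in $p$ is what provides this bound cheaply.
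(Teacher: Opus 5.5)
Your proof is correct and follows the paper's route. Existence and uniqueness come from strict monotonicity and the intermediate value theorem. Convergence $\tau_{m,S,p}\to\tau_{m,S}$ comes from monotone convergence of the partial sums plus uniqueness of the limiting root. A domination argument handles $\gamma_{m,S,p}$, and $a_i^2<a_i$ gives $\gamma_{m,S}<1$. You supply details the paper leaves implicit, chiefly the monotonicity of $\tau_{m,S,p}$ in $p$; this yields the uniform lower bound $\tau_{m,S,p}\ge\tau_{m,S,p_0}>0$ that the paper's ``dominated convergence'' step tacitly needs.
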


\begin{proof}
Let
\[
f(\tau)
:=
\sum_{i=1}^{\infty}
\frac{\lambda_i}{\lambda_i+\tau}.
\]
Because $\alpha>1$, this series is finite for every $\tau>0$.
It is continuous and strictly decreasing, with
$f(\tau)\to\infty$ as $\tau\downarrow0$ and $f(\tau)\to0$ as
$\tau\to\infty$. Thus \eqref{eq:fixed-m-infinite-fixed-point} has a
unique solution. Monotone convergence of the finite partial sums,
together with uniqueness of the limiting root, gives
$\tau_{m,S,p}\to\tau_{m,S}$. The convergence of $\gamma_{m,S,p}$ follows by
dominated convergence.

Finally, with
$a_i=\lambda_i/(\lambda_i+\tau_{m,S})$, one has $0<a_i<1$ and
$\sum_i a_i=m$. Therefore
\[
0<\sum_i a_i^2<\sum_i a_i=m,
\]
which proves $0<\gamma_{m,S}<1$.
\end{proof}

\begin{theorem}[Fixed-$m$ deterministic-equivalent W2SG gain]
\label{thm:fixed-m-DE-gain}
Under Assumption~\ref{ass:adapted-fixed-DE}, for every fixed integer
$m\ge1$,
\begin{equation}
\mathsf{W2SG}_p^{\mathrm{DE}} \longrightarrow G_m := -m\tau_{m,S} + q_{m,S} \sigma^2\frac{\gamma_T}{1-\gamma_T}.
\label{eq:fixed-m-DE-gain}
\end{equation}
\end{theorem}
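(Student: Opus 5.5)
We mirror the proof of Theorem~\ref{thm:adapted-sparse-DE}, starting from the eigenvalue form \eqref{eq:w2sg-eigenvalue-final} with $\tau_S=\tau_{m,S,p}$ and $\gamma_S=\gamma_{m,S,p}$. We split it as
\[
\mathsf{W2SG}_p^{\mathrm{DE}}=T_p^{\mathrm{teach}}+\sum_{i=1}^p r_{i,p}w_{i,p}.
\]
Here $T_p^{\mathrm{teach}}$ is the first (noise-free) term, $w_{i,p}$ are the Stage-I noise weights \eqref{eq:stage1-noise-weights}, and $r_{i,p}=2\tau_S/(\lambda_i+\tau_S)-(1-\gamma_S)^{-1}\tau_S^2/(\lambda_i+\tau_S)^2$. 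By Lemma~\ref{lem:fixed-m-spectral-parameters}, $\tau_{m,S,p}\to\tau_{m,S}>0$ and $\gamma_{m,S,p}\to\gamma_{m,S}\in(0,1)$. Hence $1-\gamma_S$ is bounded away from zero and the $r_{i,p}$ are uniformly bounded.

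\textbf{Teacher term.} Unlike the sparse-growing case, $\tau_{m,S}$ stays positive, so this term no longer vanishes. Stage~I is proportional, so $\tau_{T,p}=s_Tp^{-\alpha}(1+o(1))\to0$. For each fixed $i$, $\lambda_i/(\lambda_i+\tau_{T,p})\to1$, and the summand
\[
\frac{\tau_S^2\lambda_i^2}{(\lambda_i+\tau_S)^2(\lambda_i+\tau_T)}
\]
tends to $\tau_{m,S}^2\lambda_i/(\lambda_i+\tau_{m,S})^2$. It is dominated by $\tau_S^2\lambda_i/(\lambda_i+\tau_S)^2\le C\lambda_i$, which is summable since $\alpha>1$. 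Dominated convergence therefore gives
\[
T_p^{\mathrm{teach}}\to-\frac{\tau_{m,S}}{1-\gamma_{m,S}}\sum_{i\ge1}\frac{\tau_{m,S}\lambda_i}{(\lambda_i+\tau_{m,S})^2}.
\]
The identity used in the proof of Theorem~\ref{thm:adapted-sparse-DE}, $m(1-\gamma_S)=\tau_S\sum_i\lambda_i/(\lambda_i+\tau_S)^2$, applied at the limit via \eqref{eq:fixed-m-infinite-fixed-point}, turns this into exactly $-m\tau_{m,S}$.

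\textbf{Noise term.} By \eqref{eq:stage1-noise-weight-mass}, $\sum_i w_{i,p}\to\sigma^2\gamma_T/(1-\gamma_T)$. We then check, via the Stage-I Riemann-sum limit, that this mass concentrates on indices $i\ge\delta p$:
\[
\lim_{\delta\downarrow0}\limsup_p\sum_{i\le\delta p}w_{i,p}=0.
\]
Indeed, $w_{i,p}\le\sigma^2/(n(1-\gamma_{T,p}))$, and so the sum over $i\le\delta p$ is $O(\delta\phi_T)$. On $i\ge\delta p$ we have $\lambda_i\le(\delta p)^{-\alpha}\to0$ while $\tau_{m,S,p}\to\tau_{m,S}>0$, so $\tau_S/(\lambda_i+\tau_S)\to1$ uniformly. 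Thus $r_{i,p}\to2-1/(1-\gamma_{m,S})=q_{m,S}$ uniformly there. Splitting the sum at $\delta p$, letting $p\to\infty$ and then $\delta\downarrow0$, and using boundedness of $r_{i,p}$ on the small-index part, we get $\sum_i r_{i,p}w_{i,p}\to q_{m,S}\sigma^2\gamma_T/(1-\gamma_T)$. Adding the two limits gives \eqref{eq:fixed-m-DE-gain}.

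\textbf{Main obstacle.} The delicate step is the teacher term. One must justify the dominated convergence with $\tau_{T,p}\to0$ inside the denominator, which is handled by the bound $\lambda_i/(\lambda_i+\tau_T)\le1$. One must also verify the fixed-point identity $m(1-\gamma_{m,S})=\tau_{m,S}\sum_i\lambda_i/(\lambda_i+\tau_{m,S})^2$ for the infinite series. This follows from $\lambda/(\lambda+\tau)-(\lambda/(\lambda+\tau))^2=\tau\lambda/(\lambda+\tau)^2$, summed using \eqref{eq:fixed-m-infinite-fixed-point}. Everything else is a routine adaptation of the sparse-growing argument.
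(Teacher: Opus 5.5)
Your proposal is correct and follows the paper's proof essentially step for step. It uses the same split into the teacher term and the weighted noise sum, handles the teacher term by dominated convergence plus the identity $m(1-\gamma_{m,S})=\tau_{m,S}\sum_i\lambda_i/(\lambda_i+\tau_{m,S})^2$, and the only difference is that you split the noise sum at $\delta p$ (mirroring the sparse-growing proof) while the paper splits at a fixed index $K$, using $\sum_{i\le K}w_{i,p}\to0$ and $r_{m,i,p}\to q_{m,S}$ uniformly once $\lambda_i\ll\tau_{m,S}$; both splits are valid here.
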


\begin{proof}
Write
\[
\mathsf{W2SG}_p^{\mathrm{DE}}
=T_{m,p}^{\mathrm{teach}}
+
\sum_{i=1}^p r_{m,i,p}w_{i,p},
\]
where
\begin{align*}
w_{i,p}
:=&
\frac{\sigma^2}
{n\left(1-\gamma_{T,p}\right)}
\left(
\frac{\lambda_i}
{\lambda_i+\tau_{T,p}}
\right)^2,
\\
T_{m,p}^{\mathrm{teach}}
=&
-\frac{\tau_{m,S,p}^2}{1-\gamma_{m,S,p}}
\sum_{i=1}^p
\frac{\lambda_i^2}
{(\lambda_i+\tau_{m,S,p})^2(\lambda_i+\tau_{T,p})},
\\
r_{m,i,p}
=&
2\frac{\tau_{m,S,p}}{\lambda_i+\tau_{m,S,p}}
-
\frac1{1-\gamma_{m,S,p}}
\left(
\frac{\tau_{m,S,p}}{\lambda_i+\tau_{m,S,p}}
\right)^2.
\end{align*}

Since $\tau_{T,p}\to0$ under the proportional Stage-I scaling
\eqref{eq:tauj-asymptotic}, Lemma~\ref{lem:fixed-m-spectral-parameters}
and dominated convergence give
\begin{align}
T_{m,p}^{\mathrm{teach}}
&\longrightarrow
-\frac{\tau_{m,S}^2}{1-\gamma_{m,S}}
\sum_{i=1}^{\infty}
\frac{\lambda_i}{(\lambda_i+\tau_{m,S})^2}.
\label{eq:fixed-m-teacher-preidentity}
\end{align}
The fixed-point identity implies
\begin{align}
m(1-\gamma_{m,S})
=&
\sum_{i=1}^{\infty}
\left[
\frac{\lambda_i}{\lambda_i+\tau_{m,S}}
-
\left(
\frac{\lambda_i}{\lambda_i+\tau_{m,S}}
\right)^2
\right]
=
\tau_{m,S}
\sum_{i=1}^{\infty}
\frac{\lambda_i}{(\lambda_i+\tau_{m,S})^2}.
\label{eq:fixed-m-identity}
\end{align}
Substitution into \eqref{eq:fixed-m-teacher-preidentity} yields
\begin{equation}
T_{m,p}^{\mathrm{teach}}
\longrightarrow -m\tau_{m,S}.
\label{eq:fixed-m-teacher-limit}
\end{equation}

For the noise term, $r_{m,i,p}$ is uniformly bounded for large $p$,
and Lemma~\ref{lem:fixed-m-spectral-parameters} gives
\[
\lim_{K\to\infty}
\limsup_{p\to\infty}
\sup_{K<i\le p}|r_{m,i,p}-q_{m,S}|=0,
\]
because $\lambda_i\to0$. For each fixed $K$,
\[
\sum_{i=1}^K w_{i,p}\longrightarrow0,
\qquad
\sum_{i=1}^{p} w_{i,p}
=
\sigma^2
\frac{\gamma_{T,p}}{1-\gamma_{T,p}}
\longrightarrow
V_T^{\mathrm{noise}}.
\] 
Splitting the sum at $K$
therefore yields
\begin{equation}
\sum_{i=1}^p r_{m,i,p}w_{i,p}
\longrightarrow
q_{m,S}V_T^{\mathrm{noise}}.
\label{eq:fixed-m-noise-limit}
\end{equation}
Combining \eqref{eq:fixed-m-teacher-limit} and
\eqref{eq:fixed-m-noise-limit} proves
\eqref{eq:fixed-m-DE-gain}.
\end{proof}

\subsubsection{Analytic sign results for the \texorpdfstring{fixed-$m$}{fixed-m} gain}
\label{fmphase:subsec:analytic-sign-results}

We now analyze the sign of the fixed-$m$ limiting gain as a continuous
function of the real variable $m>0$. Throughout this subsection, assume
\[
\alpha>1,
\qquad
\lambda_i=i^{-\alpha},
\qquad
i\in\mathbb N:=\{1,2,\ldots\}.
\]

Let
\begin{equation}
\label{fmphase:eq:teacher-factor}
0<\gamma_T<1,
\qquad
c_T:=\frac{\gamma_T}{1-\gamma_T}>0,
\qquad
V:=c_T\sigma^2,
\qquad
\sigma^2\ge0.
\end{equation}

For every real $m>0$, consider the fixed-point equation
\begin{equation}
\label{fmphase:eq:fixed-point}
f_\alpha(\tau)
:=
\sum_{i=1}^{\infty}
\frac{1}{1+\tau i^\alpha}
=
m.
\end{equation}
Lemma~\ref{fmphase:lem:regularity} below shows that this equation has a
unique solution $\tau>0$; we denote that solution by $\tau_m$.
Set
\begin{align}
a_i(m)
:=&
\frac{1}{1+\tau_m i^\alpha},
\nonumber\\
\gamma_m
:=&
\frac1m\sum_{i=1}^{\infty}a_i(m)^2,
&
\eta_m
:=&
\frac1m\sum_{i=1}^{\infty}a_i(m)^3,
\label{fmphase:eq:moments}\\
D_m
:=&
m(1-\gamma_m),
&
B_m
:=&
m\tau_m,
\label{fmphase:eq:D-B}\\
q_m
:=&
\frac{1-2\gamma_m}{1-\gamma_m}
=
2-\frac{m}{D_m},
&
G_m
:=&
-B_m+Vq_m.
\label{fmphase:eq:gain}
\end{align}

For integer $m$, uniqueness of the fixed-point solution implies
$
\tau_m=\tau_{m,S},~
\gamma_m=\gamma_{m,S},~
q_m=q_{m,S}.
$
Consequently, for integer $m$, $G_m$ coincides with the limiting
deterministic-equivalent gain in
Theorem~\ref{thm:fixed-m-DE-gain}.

\paragraph{Fixed-point regularity and exact identities}

\begin{lemma}[Existence, uniqueness, and differentiability]\label{fmphase:lem:regularity}
For every real $m>0$, \eqref{fmphase:eq:fixed-point} has a unique solution
$\tau_m>0$. The functions used in \eqref{fmphase:eq:moments}--\eqref{fmphase:eq:gain}
are continuously differentiable, and
\begin{equation}\label{fmphase:eq:gamma-range}
0<\gamma_m<1,\qquad D_m>0,\qquad q_m<1.
\end{equation}
Moreover,
\begin{align}
\tau_m'=&-\frac{\tau_m}{D_m},\label{fmphase:eq:tau-prime}\\
a_i'(m)=&\frac{a_i(m)(1-a_i(m))}{D_m},\label{fmphase:eq:a-prime}\\
B_m'=&-\frac{\tau_m\gamma_m}{1-\gamma_m}<0,\label{fmphase:eq:B-prime}\\
m\gamma_m'=&\frac{\gamma_m+\gamma_m^2-2\eta_m}{1-\gamma_m}.
\label{fmphase:eq:gamma-prime}
\end{align}
\end{lemma}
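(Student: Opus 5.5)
Existence, uniqueness and smoothness come from the map $f_\alpha(\tau)=\sum_{i\ge1}(1+\tau i^\alpha)^{-1}$ on $(0,\infty)$.

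\textbf{Properties of $f_\alpha$.} Because $\alpha>1$, each term is at most $\tau^{-1}i^{-\alpha}$, so the series converges locally uniformly on $(0,\infty)$. The same holds for the termwise derivative $-\sum_i i^\alpha(1+\tau i^\alpha)^{-2}$, which is bounded by $\tau^{-2}\sum_i i^{-\alpha}$ on compact subsets. Hence $f_\alpha$ is $C^1$ and strictly decreasing. Monotone convergence gives $f_\alpha(\tau)\to\infty$ as $\tau\downarrow0$, and domination gives $f_\alpha(\tau)\to0$ as $\tau\to\infty$. So for each real $m>0$ there is a unique root $\tau_m$, and by the implicit function theorem $m\mapsto\tau_m$ is $C^1$.

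Each $a_i(m)$ is then $C^1$ in $m$. The sums $\sum_i a_i^2$ and $\sum_i a_i^3$ are $C^1$ by the same locally uniform domination, now uniform over $m$ in a compact set, where $\tau_m$ stays in a compact subinterval of $(0,\infty)$.

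\textbf{The range bounds.} Since $0<a_i<1$, we get $0<\sum_i a_i^2<\sum_i a_i=m$. This gives $0<\gamma_m<1$ and $D_m>0$, exactly as in Lemma~\ref{lem:fixed-m-spectral-parameters}. Then $q_m=1-\gamma_m/(1-\gamma_m)<1$.

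\textbf{The derivative identities.} The key algebraic fact is that
\[
\tau\partial_\tau a_i=-\frac{\tau i^\alpha}{(1+\tau i^\alpha)^2}=-a_i(1-a_i).
\]
\begin{itemize}
\item For $\tau_m'$: summing over $i$ gives $\tau f_\alpha'(\tau)=-\sum_i(a_i-a_i^2)=-D_m$. Differentiating $f_\alpha(\tau_m)=m$ gives $f_\alpha'(\tau_m)\tau_m'=1$, hence $\tau_m'=-\tau_m/D_m$.
\item For $a_i'$: the chain rule gives $a_i'=\partial_\tau a_i\cdot\tau_m'=a_i(1-a_i)/D_m$.
\item For $B_m'$: we have $B_m'=\tau_m+m\tau_m'=\tau_m(1-m/D_m)=\tau_m(D_m-m)/D_m$. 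Since $D_m-m=-m\gamma_m$, this gives $B_m'=-\tau_m\gamma_m/(1-\gamma_m)<0$.
\item For $\gamma_m'$: differentiate $m\gamma_m=\sum_i a_i^2$ to get $\gamma_m+m\gamma_m'=\sum_i 2a_ia_i'=\frac{2}{D_m}\sum_i(a_i^2-a_i^3)=\frac{2(\gamma_m-\eta_m)}{1-\gamma_m}$. Then
\[
m\gamma_m'=\frac{2\gamma_m-2\eta_m-\gamma_m+\gamma_m^2}{1-\gamma_m}=\frac{\gamma_m+\gamma_m^2-2\eta_m}{1-\gamma_m}.
\]
\end{itemize}

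\textbf{Main obstacle.} The only delicate step is justifying termwise differentiation of the infinite series in $m$, which requires uniform summability. This follows from $a_i(1-a_i)\le a_i\le\tau^{-1}i^{-\alpha}$ with $\tau$ bounded below on compact $m$-sets. The remaining steps are direct algebra.
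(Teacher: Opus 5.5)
Your proposal is correct and matches the paper's proof essentially step for step. Both arguments get existence and uniqueness from the strict monotonicity and limiting behavior of $f_\alpha$, justify termwise differentiation by domination with multiples of $i^{-\alpha}$ on compact sets, use the identity $-\tau_m f_\alpha'(\tau_m)=\sum_i a_i(1-a_i)=D_m$, and then apply the chain rule to obtain $a_i'$, $B_m'$, and $m\gamma_m'$. Your explicit rewriting $q_m=1-\gamma_m/(1-\gamma_m)$ fills in the one step, $q_m<1$, that the paper leaves implicit.
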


\begin{proof}
For each $\tau>0$, the defining series is bounded above by
$\tau^{-1}\sum_i i^{-\alpha}<\infty$.
It is continuous and strictly decreasing in $\tau$, tends to zero as
$\tau\to\infty$, and tends to infinity as $\tau\downarrow0$.
The last claim follows by retaining an arbitrarily large finite number
of terms, each tending to one.
These properties prove existence and uniqueness.

On any compact subinterval of $(0,\infty)$, the summands and their
$\tau$-derivatives are dominated by summable constant multiples of
$i^{-\alpha}$. Termwise differentiation and the inverse function theorem
therefore apply. Since $0<a_i<1$ and $\sum_i a_i=m$,
\[
0<\sum_i a_i^2<\sum_i a_i=m,
\]
which proves \eqref{fmphase:eq:gamma-range}. Also,
\[
-\tau_m f_\alpha'(\tau_m)
=\sum_i a_i(1-a_i)=m-\sum_i a_i^2=D_m.
\]
Differentiating \eqref{fmphase:eq:fixed-point} yields \eqref{fmphase:eq:tau-prime};
then the chain rule gives \eqref{fmphase:eq:a-prime} and
\[
B_m'=\tau_m+m\tau_m'
=\tau_m-\frac{\tau_m}{1-\gamma_m},
\]
which is \eqref{fmphase:eq:B-prime}.
Finally,
\[
\frac{\mathrm d}{\mathrm dm}\sum_i a_i^2
=\frac{2}{D_m}\sum_i a_i^2(1-a_i)
=\frac{2(\gamma_m-\eta_m)}{1-\gamma_m}.
\]
Differentiating $m\gamma_m=\sum_i a_i^2$ and rearranging proves
\eqref{fmphase:eq:gamma-prime}.
\end{proof}

\begin{definition}[Moment expression and noise thresholds]
Define
\begin{equation}\label{fmphase:eq:K}
K_m:=\gamma_m^2(2-\gamma_m)-\eta_m.
\end{equation}
On the set where $q_m>0$, define thresholds in $V$ and in $\sigma^2$ by
\begin{equation}\label{fmphase:eq:thresholds}
T_\alpha(m):=\frac{B_m}{q_m},\qquad
H_\alpha(m):=\frac{B_m}{c_T q_m}
=\frac{T_\alpha(m)}{c_T}.
\end{equation}
\end{definition}

\begin{proposition}[Exact sign criterion and threshold derivative]
\label{fmphase:prop:threshold}
For every $m\ge1$,
\begin{align}
q_m\le0&\quad\Longrightarrow\quad G_m<0
\quad\text{for every }\sigma^2\ge0,\label{fmphase:eq:qnegative}\\
q_m>0&\quad\Longrightarrow\quad
\operatorname{sgn} G_m=\operatorname{sgn}\bigl(\sigma^2-H_\alpha(m)\bigr).
\label{fmphase:eq:pointwise-sign}
\end{align}
Where $q_m>0$, one has
\begin{equation}\label{fmphase:eq:H-prime}
\frac{H_\alpha'(m)}{H_\alpha(m)}
=\frac{T_\alpha'(m)}{T_\alpha(m)}
=\frac{2K_m}{m(1-\gamma_m)^2(1-2\gamma_m)}.
\end{equation}
\end{proposition}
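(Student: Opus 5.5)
The sign criterion follows from the definition $G_m=-B_m+Vq_m$ with $V=c_T\sigma^2\ge0$ and $B_m=m\tau_m>0$ (Lemma~\ref{fmphase:lem:regularity}).

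\emph{Case $q_m\le0$.} Here $Vq_m\le0$, so $G_m\le -B_m<0$ for every $\sigma^2\ge0$. This gives \eqref{fmphase:eq:qnegative}.

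\emph{Case $q_m>0$.} We can write
\[
G_m=c_Tq_m\Bigl(\sigma^2-\tfrac{B_m}{c_Tq_m}\Bigr)=c_Tq_m\bigl(\sigma^2-H_\alpha(m)\bigr).
\]
Since $c_Tq_m>0$, this gives \eqref{fmphase:eq:pointwise-sign}. The ratio $H_\alpha/T_\alpha=1/c_T$ is constant in $m$, so the two logarithmic derivatives in \eqref{fmphase:eq:H-prime} agree. It therefore suffices to compute $(\log B_m)'-(\log q_m)'$.

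\emph{First log derivative.} By \eqref{fmphase:eq:B-prime},
\[
\frac{B_m'}{B_m}=-\frac{\gamma_m}{m(1-\gamma_m)}.
\]

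\emph{Second log derivative.} Write $q=(1-2\gamma)/(1-\gamma)$. Then
\[
\frac{dq}{d\gamma}=\frac{-2(1-\gamma)+(1-2\gamma)}{(1-\gamma)^2}=-\frac{1}{(1-\gamma)^2},
\]
so
\[
\frac{q_m'}{q_m}=-\frac{\gamma_m'}{(1-\gamma_m)(1-2\gamma_m)}.
\]
Substituting $m\gamma_m'$ from \eqref{fmphase:eq:gamma-prime} gives
\[
\frac{q_m'}{q_m}=-\frac{\gamma_m+\gamma_m^2-2\eta_m}{m(1-\gamma_m)^2(1-2\gamma_m)}.
\]

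\emph{Combining.} Put both terms over the common denominator $m(1-\gamma_m)^2(1-2\gamma_m)$:
\[
\frac{T_\alpha'}{T_\alpha}=\frac{-\gamma_m(1-\gamma_m)(1-2\gamma_m)+\gamma_m+\gamma_m^2-2\eta_m}{m(1-\gamma_m)^2(1-2\gamma_m)}.
\]
Expand $\gamma(1-\gamma)(1-2\gamma)=\gamma-3\gamma^2+2\gamma^3$. The numerator becomes
\[
4\gamma_m^2-2\gamma_m^3-2\eta_m=2\bigl(\gamma_m^2(2-\gamma_m)-\eta_m\bigr)=2K_m,
\]
which is \eqref{fmphase:eq:H-prime}.

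The only delicate point is differentiability, together with $1-2\gamma_m\neq0$ on the set where $q_m>0$. Both hold: $q_m>0$ with $\gamma_m<1$ forces $\gamma_m<1/2$, and Lemma~\ref{fmphase:lem:regularity} supplies the $C^1$ regularity. The rest is algebra.
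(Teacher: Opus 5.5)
Your proof is correct and follows the paper's argument: the sign criterion comes from $B_m>0$ and the factorization $G_m=c_Tq_m\bigl(\sigma^2-H_\alpha(m)\bigr)$, and the derivative identity comes from $(\log B_m)'-(\log q_m)'$ using \eqref{fmphase:eq:B-prime} and \eqref{fmphase:eq:gamma-prime}. The only cosmetic difference is in the algebra: the paper first rewrites $m\gamma_m'=\gamma_m(1-2\gamma_m)+2K_m/(1-\gamma_m)$, which makes the $\gamma_m/(m(1-\gamma_m))$ terms cancel visibly, whereas you expand the combined numerator directly.
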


\begin{proof}
Since $B_m>0$, \eqref{fmphase:eq:qnegative} follows directly from
\eqref{fmphase:eq:gain}. If $q_m>0$, factor the gain as
$
G_m=c_T q_m\bigl(\sigma^2-H_\alpha(m)\bigr).
$
For the derivative identity, first rewrite \eqref{fmphase:eq:gamma-prime} as
\begin{equation}\label{fmphase:eq:gamma-K}
m\gamma_m'
=\gamma_m(1-2\gamma_m)+\frac{2K_m}{1-\gamma_m}.
\end{equation}
By \eqref{fmphase:eq:B-prime},
\[
\frac{B_m'}{B_m}=-\frac{\gamma_m}{m(1-\gamma_m)},
\qquad
\frac{q_m'}{q_m}
=-\frac{\gamma_m'}{(1-\gamma_m)(1-2\gamma_m)}.
\]
Subtracting the latter identity from the former and using
\eqref{fmphase:eq:gamma-K} proves \eqref{fmphase:eq:H-prime}.
\end{proof}

\paragraph{Integral comparisons and asymptotics}

Define
\begin{equation}\label{fmphase:eq:C-alpha}
C_\alpha:=\int_0^\infty\frac{\,\mathrm{d} x}{1+x^\alpha}
=\frac{\pi}{\alpha\sin(\pi/\alpha)}.
\end{equation}
The integral is evaluated by $u=x^\alpha$ and Euler's beta integral.

\begin{lemma}[Explicit fixed-point bounds]
\label{fmphase:lem:bounds}
For every $m>0$,
\begin{equation}
\label{fmphase:eq:tau-bounds}
\left(\frac{C_\alpha}{m+1}\right)^\alpha
<
\tau_m
<
\left(\frac{C_\alpha}{m}\right)^\alpha,
\end{equation}
and
\begin{equation}
\label{fmphase:eq:D-bounds}
\frac{m}{\alpha}-\frac14
<
D_m
<
\frac{m+1}{\alpha}+\frac14.
\end{equation}
Consequently,
\begin{equation}
\label{fmphase:eq:B-bounds}
C_\alpha^\alpha\frac{m}{(m+1)^\alpha}
<
B_m
<
C_\alpha^\alpha m^{1-\alpha}.
\end{equation}
\end{lemma}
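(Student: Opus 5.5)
The plan is to compare the series $f_\alpha(\tau)=\sum_{i\ge1}(1+\tau i^\alpha)^{-1}$ with integrals, using that $g(x):=(1+\tau x^\alpha)^{-1}$ is strictly decreasing on $[0,\infty)$.

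\textbf{Step 1: two-sided integral sandwich.} Monotonicity gives
\[
\int_1^\infty g\,dx<\sum_{i\ge1}g(i)<\int_0^\infty g\,dx .
\]
Substituting $u=\tau^{1/\alpha}x$ evaluates $\int_0^\infty g\,dx=\tau^{-1/\alpha}C_\alpha$. For the lower side,
\[
\int_1^\infty g\,dx\;\ge\;\tau^{-1/\alpha}C_\alpha-1,
\]
since $g\le1$ on $[0,1]$. Hence
\[
\tau_m^{-1/\alpha}C_\alpha-1\;<\;m\;<\;\tau_m^{-1/\alpha}C_\alpha .
\]
Rearranging gives $(C_\alpha/(m+1))^\alpha<\tau_m<(C_\alpha/m)^\alpha$. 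Multiplying by $m$ gives the bounds on $B_m=m\tau_m$ immediately.

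\textbf{Step 2: bound $D_m$.} Write
\[
D_m=\sum_i a_i(1-a_i)=\sum_i h(i),\qquad h(x)=\frac{\tau x^\alpha}{(1+\tau x^\alpha)^2}.
\]
The function $h$ is unimodal: it increases while $\tau x^\alpha<1$ and decreases afterwards, with maximum value $1/4$. For a unimodal nonnegative function, the sum over integers and the integral over $(0,\infty)$ differ by at most the maximum. So
\[
\Bigl|\sum_{i\ge1}h(i)-\int_0^\infty h\,dx\Bigr|\le\tfrac14 ,
\]
and this should be strict here because $h$ is not constant on any interval.

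Compute the integral by the same substitution:
\[
\int_0^\infty h\,dx=\tau^{-1/\alpha}\int_0^\infty\frac{u^\alpha}{(1+u^\alpha)^2}\,du .
\]
The identity already used in Lemma~\ref{lem:threecase-gamma-monotone} (integrating $\frac{d}{du}\frac{u}{1+u^\alpha}$, now over $[0,\infty)$ where the boundary term vanishes) shows that the last integral equals $C_\alpha/\alpha$. So $\int_0^\infty h=\tau_m^{-1/\alpha}C_\alpha/\alpha$.

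\textbf{Step 3: combine.} Step 1 gives $m<\tau_m^{-1/\alpha}C_\alpha<m+1$. Plugging this into Step 2 yields
\[
\frac m\alpha-\frac14<D_m<\frac{m+1}\alpha+\frac14 .
\]

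\textbf{Main obstacle.} The delicate point is the sum-versus-integral comparison for the unimodal $h$ in Step 2, in particular getting the constant $1/4$ with strict inequalities. I would handle it by splitting the index set at the mode $x_0=\tau^{-1/\alpha}$. On each monotone side, a sum over integers is bounded by the integral up to one boundary term. The two boundary terms together contribute at most $\sup h=1/4$: the two integers adjacent to $x_0$ share a single unit interval, so they cannot both pick up a full $1/4$ of error. Strictness then follows from $h<1/4$ away from $x_0$ and strict monotonicity on each side.
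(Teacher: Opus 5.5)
Your proposal is correct and matches the paper's proof step for step: the same integral sandwich for $g$ giving $m<C_\alpha\tau_m^{-1/\alpha}<m+1$, the same unimodal sum-versus-integral bound with constant $1/4$ for $D_m=\sum_i h(i)$, and the same evaluation $\int_0^\infty h=C_\alpha\tau_m^{-1/\alpha}/\alpha$. The only difference is how the $1/4$ bound is justified. The paper uses the identity $\sum_{i\ge1}h(i)-\int_0^\infty h=\int_0^\infty\{x\}h'(x)\,dx$ instead of splitting at the mode, and it only needs the non-strict bound there, because strictness already comes from $m<C_\alpha\tau_m^{-1/\alpha}<m+1$; your extra strictness argument is therefore harmless but unnecessary.
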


\begin{proof}
Let
\[
g(x):=\frac{1}{1+\tau_m x^\alpha}.
\]
Since $g$ is strictly decreasing on $(0,\infty)$, the standard integral
comparison is strict:
\[
\sum_{i\ge1}g(i)
<
\int_0^\infty g(x)\,\mathrm{d}x
<
g(0)+\sum_{i\ge1}g(i).
\]
Using
\[
\sum_{i\ge1}g(i)=m,
\qquad
g(0)=1,
\qquad
\int_0^\infty g(x)\,\mathrm{d}x
=
C_\alpha\tau_m^{-1/\alpha},
\]
we obtain
\begin{equation}
\label{fmphase:eq:integral-fixedpoint}
m
<
C_\alpha\tau_m^{-1/\alpha}
<
m+1.
\end{equation}
Solving these inequalities for $\tau_m$ gives
\eqref{fmphase:eq:tau-bounds}. Since $B_m=m\tau_m$, multiplication
by $m>0$ gives \eqref{fmphase:eq:B-bounds}.

For the bounds on $D_m$, define
\[
h(x):=
\frac{\tau_m x^\alpha}
     {(1+\tau_m x^\alpha)^2}.
\]
The function $h$ increases from zero to its maximum $1/4$ and then
decreases to zero. For such an absolutely continuous unimodal function,
\begin{equation}
\label{fmphase:eq:unimodal-error}
\left|
\sum_{i\ge1}h(i)
-
\int_0^\infty h(x)\,\mathrm{d}x
\right|
\le
\max h
=
\frac14.
\end{equation}
To justify this bound, writing
$
\{x\}:=x-\lfloor x\rfloor
$
gives
\[
\sum_{i\ge1}h(i)
-
\int_0^\infty h(x)\,\mathrm{d}x
=
\int_0^\infty \{x\}h'(x)\,\mathrm{d}x.
\]
On the increasing part, the last integral contributes between zero and
$\max h$, while on the decreasing part it contributes between
$-\max h$ and zero. This proves
\eqref{fmphase:eq:unimodal-error}. Moreover,
$
h(x)=-\frac{xg'(x)}{\alpha}.
$
Hence integration by parts gives
\[
\int_0^\infty h(x)\,\mathrm{d}x
=
\frac{1}{\alpha}
\int_0^\infty g(x)\,\mathrm{d}x
=
\frac{C_\alpha}{\alpha}\tau_m^{-1/\alpha}.
\]
Since
\[
D_m
=
\sum_{i\ge1}h(i),
\]
equation \eqref{fmphase:eq:unimodal-error} yields
\[
\frac{C_\alpha}{\alpha}\tau_m^{-1/\alpha}-\frac14
\le
D_m
\le
\frac{C_\alpha}{\alpha}\tau_m^{-1/\alpha}+\frac14.
\]
Combining these inequalities with the strict comparison
\eqref{fmphase:eq:integral-fixedpoint},
\[
\frac{m}{\alpha}
<
\frac{C_\alpha}{\alpha}\tau_m^{-1/\alpha}
<
\frac{m+1}{\alpha},
\]
gives
\[
\frac{m}{\alpha}-\frac14
<
D_m
<
\frac{m+1}{\alpha}+\frac14,
\]
which proves \eqref{fmphase:eq:D-bounds}.
\end{proof}
\begin{lemma}[Moment and gain asymptotics]\label{fmphase:lem:asymptotics}
Fix $\alpha>1$ and write $b=1-1/\alpha$. As $m\to\infty$,
\begin{align}
\tau_m&\sim C_\alpha^\alpha m^{-\alpha},
&B_m&\sim C_\alpha^\alpha m^{1-\alpha},\label{fmphase:eq:tau-asymp}\\
\gamma_m&\longrightarrow b,
&\eta_m&\longrightarrow\frac{b(1+b)}2,\label{fmphase:eq:mom-asymp}\\
q_m&\longrightarrow2-\alpha,
&K_m&\longrightarrow\frac{b(1-b)(2b-1)}2.\label{fmphase:eq:q-K-asymp}
\end{align}
In particular,
\begin{equation}\label{fmphase:eq:G-limit}
\lim_{m\to\infty}G_m=(2-\alpha)V.
\end{equation}
\end{lemma}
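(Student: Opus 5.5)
The plan is to deduce everything from the explicit fixed-point bounds of Lemma~\ref{fmphase:lem:bounds} together with a Riemann-sum rescaling of the moment series at scale $i\asymp\tau_m^{-1/\alpha}\asymp m$.

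\textbf{Step 1: the fixed point.} By \eqref{fmphase:eq:tau-bounds}, $(m/(m+1))^\alpha C_\alpha^\alpha m^{-\alpha}<\tau_m<C_\alpha^\alpha m^{-\alpha}$. Hence $\tau_m\sim C_\alpha^\alpha m^{-\alpha}$, and multiplying by $m$ gives $B_m\sim C_\alpha^\alpha m^{1-\alpha}$.

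\textbf{Step 2: the moment limits.} Set $\varepsilon:=\tau_m^{1/\alpha}\to0$ and write $a_i=g(\varepsilon i)$ with $g(x)=(1+x^\alpha)^{-1}$. For $k\ge1$, $g^k$ is monotone decreasing and integrable on $(0,\infty)$. The same strict integral comparison used in Lemma~\ref{fmphase:lem:bounds} therefore gives
\[
\Bigl|\sum_{i\ge1}a_i^k-\varepsilon^{-1}\int_0^\infty g^k\Bigr|\le1 .
\]
Dividing by $m\sim C_\alpha\varepsilon^{-1}$ yields $\frac1m\sum_i a_i^k\to I_k/I_1$, where $I_k:=\int_0^\infty(1+x^\alpha)^{-k}\,dx$ and $I_1=C_\alpha$.

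The ratios $I_k/I_1$ follow from the recursion obtained by integrating $\frac{d}{dx}\bigl[x(1+x^\alpha)^{-k}\bigr]$ over $(0,\infty)$; the boundary terms vanish. This gives $I_k=k\alpha(I_k-I_{k+1})$, that is, $I_{k+1}=\bigl(1-\tfrac{1}{k\alpha}\bigr)I_k$. Hence
\[
\gamma_m\to 1-\tfrac1\alpha=b,
\qquad
\eta_m\to b\Bigl(1-\tfrac1{2\alpha}\Bigr)=b\,\frac{1+b}{2}.
\]
Here $1-\tfrac1{2\alpha}=\tfrac{1+b}{2}$, consistent with the identity in Lemma~\ref{lem:threecase-gamma-monotone}.

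\textbf{Step 3: $q_m$, $K_m$ and $G_m$.} By continuity,
\[
q_m\to\frac{1-2b}{1-b}=\alpha\Bigl(\tfrac2\alpha-1\Bigr)=2-\alpha .
\]
Similarly,
\[
K_m\to b^2(2-b)-\frac{b(1+b)}2=\frac b2\bigl(4b-2b^2-1-b\bigr)=\frac b2\bigl(-2b^2+3b-1\bigr)=\frac{b(1-b)(2b-1)}2 .
\]
Finally, since $\alpha>1$, $B_m\to0$, so $G_m=-B_m+Vq_m\to(2-\alpha)V$, which is \eqref{fmphase:eq:G-limit}.

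\textbf{Main obstacle.} The only non-mechanical point is justifying the Riemann-sum limit uniformly, because the sum runs over all $i\ge1$ with a mesh $\varepsilon\to0$. The monotone-integral comparison handles this with an $O(1)$ error, which is negligible against $m\to\infty$. No dominated-convergence argument on a growing index set is needed.
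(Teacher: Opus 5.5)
Your proposal is correct and follows the paper's proof essentially step for step. You squeeze the fixed-point bounds to get the asymptotics of $\tau_m$ and $B_m$. You then control $\sum_i a_i^k$ at scale $\tau_m^{-1/\alpha}$ by the $O(1)$ monotone integral comparison, obtain the limits of $\gamma_m$ and $\eta_m$ from the recursion $I_{k+1}=\bigl(1-\tfrac{1}{k\alpha}\bigr)I_k$, and finish by direct substitution into $q_m$, $K_m$ and $G_m$. The only cosmetic difference is that the paper forms exact ratios using $S_1=m$, whereas you divide by $m\sim C_\alpha\tau_m^{-1/\alpha}$; the two are equivalent.
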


\begin{proof}
The first asymptotic in \eqref{fmphase:eq:tau-asymp} follows by squeezing in
\eqref{fmphase:eq:tau-bounds}; the second follows by multiplication by $m$.
For the remaining moments, let $z_m=\tau_m^{-1/\alpha}\to\infty$ and
\[
S_k(\alpha,z)=\sum_{i\ge1}\frac1{(1+(i/z)^\alpha)^k},\qquad
C_{\alpha,k}=\int_0^\infty\frac{\,\mathrm{d} x}{(1+x^\alpha)^k}.
\]
Integral comparison gives
$
0\le zC_{\alpha,k}-S_k(\alpha,z)\le1.$
Integration of the derivative of $x(1+x^\alpha)^{-k}$ over $(0,\infty)$
yields
\[
C_{\alpha,k+1}=\left(1-\frac1{k\alpha}\right)C_{\alpha,k}.
\]
Since $C_{\alpha,1}=C_\alpha$, this gives
\[
C_{\alpha,2}=bC_\alpha,
\qquad
C_{\alpha,3}=\frac{b(1+b)}2C_\alpha.
\]
At $z=z_m$, $S_1=m$, $\gamma_m=S_2/S_1$, and $\eta_m=S_3/S_1$.
Taking ratios proves \eqref{fmphase:eq:mom-asymp}.
Substitution in the definitions of $q_m$ and $K_m$ gives
\eqref{fmphase:eq:q-K-asymp}. Finally, $B_m\to0$, proving \eqref{fmphase:eq:G-limit}.
\end{proof}

\paragraph{The initial noise threshold}

\begin{lemma}[Positivity of $q_1$ in the exponent range of interest]
\label{fmphase:lem:q1}
For $1<\alpha\le5/2$, one has $q_1>0$.
\end{lemma}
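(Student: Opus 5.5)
Since $q_1=(1-2\gamma_1)/(1-\gamma_1)$ and $0<\gamma_1<1$ by Lemma~\ref{fmphase:lem:regularity}, the claim is equivalent to $\gamma_1<\tfrac12$. With $m=1$ the weights $a_i=a_i(1)=1/(1+\tau_1 i^\alpha)$ satisfy $\sum_i a_i=1$ and are decreasing in $i$. So $\gamma_1=\sum_i a_i^2$ is the second moment of a probability vector concentrated on small indices. The plan is to bound this sum by quantities depending only on $a_1$ and $a_2$. We then control these through lower bounds on $\tau_1$, in the same spirit as Lemma~\ref{fmphase:lem:bounds}.

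\textbf{Step 1 (crude bound, small $\alpha$).} Since $a_i\le a_1$ and $\sum_i a_i=1$, we have $\gamma_1\le a_1=1/(1+\tau_1)$. Hence $\gamma_1<\tfrac12$ whenever $\tau_1>1$. By \eqref{fmphase:eq:tau-bounds}, $\tau_1>(C_\alpha/2)^\alpha$, so $\tau_1>1$ as soon as $C_\alpha=\pi/(\alpha\sin(\pi/\alpha))>2$. Because $C_\alpha$ is decreasing in $\alpha$ and blows up as $\alpha\downarrow1$, this covers $1<\alpha\le\alpha_0$ for an explicit $\alpha_0\approx1.6$.

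\textbf{Step 2 (moderate $\alpha$).} On $[\alpha_0,5/2]$ Step~1 is too crude: for example, $f_\alpha(1)=\sum_i 1/(1+i^\alpha)$ already exceeds $1$ at $\alpha=1.6$, so $\tau_1<1$ there. I would sharpen the argument in two ways.
\begin{itemize}
\item Separate the first index: since $a_i\le a_2$ for $i\ge2$,
\[
\gamma_1\le a_1^2+a_2(1-a_1).
\]
The right-hand side is increasing in $\tau_1^{-1}$, so it suffices to have a good lower bound on $\tau_1$.
\item Sharpen the lower bound on $\tau_1$ by comparing with the integral only from $1$ onward:
\[
1=\sum_{i\ge1}g(i)<g(1)+\int_1^\infty g(x)\,\mathrm dx,
\qquad g(x)=\frac{1}{1+\tau_1x^\alpha}.
\]
I would possibly also keep $g(2)$ explicitly before passing to the integral.
\end{itemize}
This gives an explicit transcendental inequality $\Phi(\tau_1,\alpha)\ge1$, where $\Phi$ is decreasing in $\tau_1$. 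Solving it yields a lower bound $\underline\tau(\alpha)$, and it then remains to check
\[
\frac{1}{(1+\underline\tau)^2}+\frac{1}{1+2^\alpha\underline\tau}\Bigl(1-\frac{1}{1+\underline\tau}\Bigr)<\frac12 .
\]

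\textbf{Step 3 (reduction to finitely many checks).} This is the main obstacle: turning the verification on $[\alpha_0,5/2]$ into something rigorous. I would use monotonicity in $\alpha$. For fixed $\tau$, $f_\alpha(\tau)$ is decreasing in $\alpha$, so $\tau_1(\alpha)$ is decreasing in $\alpha$. The bound from Step~2 is monotone in $\tau_1$ and in the factor $2^\alpha$. On a short subinterval $[\alpha',\alpha'']$ I can therefore combine the worst-case values: the smallest $\tau_1$, attained at $\alpha''$, together with the smallest factor $2^{\alpha'}$. This reduces the claim to a finite grid of explicit numerical inequalities.

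The tightest point should be the right endpoint $\alpha=5/2$, where the weights concentrate most on $i=1$. There $f_{5/2}(1)\approx0.8<1$ suggests $\tau_1$ is moderately above $1$, so some margin remains. If the two-term bound is too weak near $5/2$, I would fall back to keeping $a_1,a_2,a_3$ exactly and bounding the tail $\sum_{i\ge3}a_i^2\le a_3\sum_{i\ge3}a_i$.
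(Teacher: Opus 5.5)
Your architecture matches the paper's: bound $\gamma_1$ by $a_1$ or by $a_1^2+a_2(1-a_1)$, then feed in a lower bound on $\tau_1$. The gap is that the estimate meant to supply that lower bound points the wrong way. In Step~2 you use $1=\sum_{i\ge1}g(i)<g(1)+\int_1^\infty g(x)\,\mathrm dx$, i.e.\ $\Phi(\tau_1)>1$ with $\Phi$ decreasing in $\tau$. This gives $\tau_1<\Phi^{-1}(1)$, an \emph{upper} bound on $\tau_1$. That is useless, because your bound on $\gamma_1$ decreases in $\tau_1$. You need a \emph{lower} bound on the series instead. Since $g$ is strictly decreasing, $g(i)>\int_i^{i+1}g$, so
\[
1=f_\alpha(\tau_1)>\sum_{i=1}^{K}\frac{1}{1+\tau_1 i^\alpha}+\int_{K+1}^\infty\frac{\mathrm dx}{1+\tau_1x^\alpha}=:\Psi_K(\tau_1),
\]
and since $\Psi_K$ is decreasing, $\tau_1>\Psi_K^{-1}(1)$. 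Equivalently, exhibit some $\underline\tau$ with $f_\alpha(\underline\tau)>1$. The paper does the latter. A ten-term rational partial sum gives $f_{5/2}(2/3)>101/100$, hence $\tau_1>2/3$ on $(2,5/2]$, because $f_\alpha$ decreases in $\alpha$. It follows that $a_1<3/5$ and $a_2<3/11$ (using $2^\alpha>4$), so $\gamma_1\le a_1^2+\tfrac{3}{11}(1-a_1)\le 129/275<\tfrac12$.

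The same reversal runs through your numerical remarks, and correcting it changes the plan.
\begin{itemize}
\item $f_\alpha(1)>1$ means $\tau_1>1$, not $\tau_1<1$. In fact, for every $\alpha\le2$,
\[
f_\alpha(1)\ge\sum_i \frac{1}{1+i^2}>\sum_i \frac{1}{i(i+1)}=1 .
\]
So your Step~1 bound $\gamma_1\le a_1<\tfrac12$ already covers all of $(1,2]$. This is the paper's first case. It stopped at $\alpha_0$ only because the integral bound $(C_\alpha/2)^\alpha$ is weak.
\item Conversely, $f_{5/2}(1)\approx0.81<1$ means $\tau_1<1$ at $\alpha=5/2$; numerically $\tau_1\approx0.72$. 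So the margin at the right endpoint is thinner than you expect.
\item With the correctly oriented bound and $K=1$, you get only $\tau_1\gtrsim0.58$. Combined with $2^\alpha\ge4$ on $(2,5/2]$, your two-term certificate then evaluates to about $0.51$, which does not close.
\end{itemize}
To finish, any of the following works: keep $g(2)$ explicitly ($K=2$, giving $\tau_1\gtrsim0.66$ and a certificate of about $0.47$); use your subinterval grid; or use a longer exact partial sum as the paper does.
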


\begin{proof}
If $1<\alpha\le2$, then
\[
f_\alpha(1)=\sum_{i\ge1}\frac1{1+i^\alpha}
\ge\sum_{i\ge1}\frac1{1+i^2}
>\sum_{i\ge1}\frac1{i(i+1)}=1.
\]
The strict inequality holds termwise for $i\ge2$.
Hence $\tau_1>1$. It follows that $a_i(1)\le a_1(1)<1/2$, so
\[
\gamma_1=\sum_i a_i(1)^2\le a_1(1)\sum_i a_i(1)=a_1(1)<1/2.
\]

For $2<\alpha\le5/2$, we first prove $\tau_1>2/3$ using a finite
rational lower bound. Let
\[
(k_1,\ldots,k_{10})=(100,142,174,200,224,245,265,283,300,317).
\]
Each $k_i^2\ge10000i$, so $\sqrt{i}\le k_i/100$. Therefore
\[
f_\alpha(2/3)
\ge\sum_{i=1}^{10}\frac1{1+(2/3)i^{5/2}}
\ge\sum_{i=1}^{10}\frac{150}{150+i^2k_i}
>\frac{101}{100}>1.
\]
The penultimate inequality is an inequality between the displayed finite
sum of rational numbers and $101/100$, verified by bringing the fractions
to a common denominator. Thus $\tau_1>2/3$.
Consequently, $a_1(1)<3/5$ and $a_i(1)\le a_2(1)<3/11$ for $i\ge2$.
Using $\sum_i a_i(1)=1$ gives
\[
\gamma_1\le a_1(1)^2+\frac3{11}\bigl(1-a_1(1)\bigr)
\le\frac{129}{275}<\frac12.
\]
Here the quadratic is bounded by its maximum at the endpoints of
$[0,3/5]$. In both cases $\gamma_1<1/2$, hence $q_1>0$.
\end{proof}

\begin{definition}[Initial critical noise]\label{fmphase:def:sigc}
For $1<\alpha\le5/2$, define
\begin{equation}\label{fmphase:eq:sigc}
\sigma_c^2:=H_\alpha(1)=\frac{\tau_1}{c_T q_1}>0.
\end{equation}
This is the noise level at which $G_1=0$. It is not, by definition, a
threshold guaranteeing the same sign for every $m$.
\end{definition}

\begin{corollary}\label{fmphase:cor:initial-sign}
For $1<\alpha\le5/2$,
\[
G_1\begin{cases}
<0,&0\le\sigma^2<\sigma_c^2,\\
=0,&\sigma^2=\sigma_c^2,\\
>0,&\sigma^2>\sigma_c^2.
\end{cases}
\]
For $\sigma^2<\sigma_c^2$, continuity gives $G_m<0$ on some real interval
$[1,1+\delta)$, with $\delta>0$.
\end{corollary}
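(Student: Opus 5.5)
The statement to prove is Corollary~\ref{fmphase:cor:initial-sign}. The plan is to read it off Proposition~\ref{fmphase:prop:threshold} at $m=1$, with Lemma~\ref{fmphase:lem:q1} supplying the needed sign of $q_1$.

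\textbf{Step 1: sign of $G_1$.} For $1<\alpha\le5/2$, Lemma~\ref{fmphase:lem:q1} gives $q_1>0$. Since $c_T>0$ and $B_1=\tau_1>0$, the factorization in the proof of Proposition~\ref{fmphase:prop:threshold} gives
\[
G_1=-B_1+c_Tq_1\sigma^2=c_Tq_1\bigl(\sigma^2-H_\alpha(1)\bigr),
\qquad
H_\alpha(1)=\frac{\tau_1}{c_Tq_1}=\sigma_c^2 .
\]
The prefactor $c_Tq_1$ is strictly positive. Hence $\operatorname{sgn}G_1=\operatorname{sgn}(\sigma^2-\sigma_c^2)$, which yields the three cases at once. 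The same identity shows $\sigma_c^2>0$, since $\tau_1>0$.

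\textbf{Step 2: extension to $[1,1+\delta)$.} By Lemma~\ref{fmphase:lem:regularity}, $\tau_m$, $\gamma_m$ and $D_m$ are $C^1$ in the real variable $m$, and $D_m>0$. Therefore $q_m=2-m/D_m$ and $B_m=m\tau_m$ are continuous, so $G_m=-B_m+Vq_m$ is continuous in $m$ on $(0,\infty)$. If $\sigma^2<\sigma_c^2$, then $G_1<0$ by Step~1. Continuity of $m\mapsto G_m$ at $m=1$ then gives $\delta>0$ with $G_m<0$ for all $m\in[1,1+\delta)$.

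There is no real obstacle. The only point needing care is that Proposition~\ref{fmphase:prop:threshold} is stated for $m\ge1$, whereas the continuity argument needs a neighbourhood of $m=1$. This is harmless: Lemma~\ref{fmphase:lem:regularity} gives regularity for every real $m>0$, so $G_m$ is continuous on an open interval containing $1$, and one-sided continuity at $m=1$ already suffices for $[1,1+\delta)$.
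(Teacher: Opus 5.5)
Your proposal is correct and matches the paper's proof, which is the factorization $G_1=c_Tq_1(\sigma^2-\sigma_c^2)$ with $c_Tq_1>0$ supplied by Lemma~\ref{fmphase:lem:q1}. Your appeal to Lemma~\ref{fmphase:lem:regularity} for the continuity of $m\mapsto G_m$ makes explicit the step that the paper leaves inside the statement of the corollary.
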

\begin{proof}
Use $G_1=c_T q_1(\sigma^2-\sigma_c^2)$ and $c_T q_1>0$.
\end{proof}

\paragraph{The regime $1<\alpha<2$}

\begin{theorem}[Negative initial gain and positive tail]
\label{fmphase:thm:subcritical-endpoints}
Suppose $1<\alpha<2$ and $0<\sigma^2<\sigma_c^2$.
Then $G_m<0$ near $m=1$, and $G_m>0$ for every sufficiently large $m$.
In particular, the continuous extension has at least one zero in
$(1,\infty)$.
\end{theorem}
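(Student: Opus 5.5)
The argument combines the exact small-$m$ sign information from Corollary~\ref{fmphase:cor:initial-sign} with the large-$m$ asymptotics of Lemma~\ref{fmphase:lem:asymptotics}, and then uses continuity of $m\mapsto G_m$ on $(0,\infty)$, which is supplied by Lemma~\ref{fmphase:lem:regularity}. Since $1<\alpha<2\le 5/2$, Lemma~\ref{fmphase:lem:q1} gives $q_1>0$, so $\sigma_c^2=H_\alpha(1)$ is well defined and positive. The hypothesis $0<\sigma^2<\sigma_c^2$ then yields $G_1=c_Tq_1(\sigma^2-\sigma_c^2)<0$. Because $\tau_m$, $\gamma_m$, and hence $q_m$ and $B_m$ are continuously differentiable in the real variable $m$, we get $G_m<0$ on some interval $[1,1+\delta)$; this is exactly the second clause of Corollary~\ref{fmphase:cor:initial-sign}.

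For the tail, we invoke \eqref{fmphase:eq:G-limit}: $G_m\to(2-\alpha)V$ with $V=c_T\sigma^2$. Here $c_T>0$ by \eqref{fmphase:eq:teacher-factor}, $\sigma^2>0$ by hypothesis, and $2-\alpha>0$ since $\alpha<2$, so the limit is strictly positive. Taking $\varepsilon=(2-\alpha)V/2$ in the definition of the limit gives an $M$ with $G_m>(2-\alpha)V/2>0$ for all real $m\ge M$, and in particular for all sufficiently large integers. One can also see this directly from the two pieces of $G_m$: the term $B_m\sim C_\alpha^\alpha m^{1-\alpha}\to0$ by \eqref{fmphase:eq:tau-asymp}, while $q_m\to 2-\alpha$ by \eqref{fmphase:eq:q-K-asymp}.

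Finally, $G$ is continuous on $[1,M]$, negative at $m=1$, and positive at $m=M$. The intermediate value theorem therefore produces a zero $m_*\in(1,M)$, which gives the claimed zero of the continuous extension in $(1,\infty)$.

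There is no real obstacle. The only care needed is to make sure continuity holds for the real-variable extension, and this is exactly the content of Lemma~\ref{fmphase:lem:regularity}: the implicit function theorem applied to $f_\alpha(\tau)=m$ with $f_\alpha'<0$, together with dominated termwise differentiation. It is also worth stating explicitly that positivity of the limit uses $\alpha<2$ strictly, since at $\alpha=2$ the limit is $0$ and no sign conclusion for the tail would follow.
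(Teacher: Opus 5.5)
Your proposal is correct and follows the paper's proof: the initial sign comes from Corollary~\ref{fmphase:cor:initial-sign}, the positive limit $(2-\alpha)V$ comes from Lemma~\ref{fmphase:lem:asymptotics}, and the intermediate value theorem is applied to the continuous real-variable extension. Your added remarks — that Lemma~\ref{fmphase:lem:q1} makes $\sigma_c^2$ well defined, and that Lemma~\ref{fmphase:lem:regularity} supplies the continuity — only make explicit what the paper's three-line proof leaves implicit.
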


\begin{proof}
The initial sign follows from Corollary~\ref{fmphase:cor:initial-sign}.
Since $V>0$ and $2-\alpha>0$, Lemma~\ref{fmphase:lem:asymptotics} gives
$G_m\to(2-\alpha)V>0$. The intermediate value theorem proves the
existence of a zero.
\end{proof}

\begin{remark}
Theorem~\ref{fmphase:thm:subcritical-endpoints} does not prove uniqueness of the
zero. It does not exclude additional changes of sign at intermediate $m$.
Positive noise is essential: for $\sigma^2=0$, $G_m=-B_m<0$ for every $m$.
\end{remark}

\begin{corollary}[An explicit sufficient positive region]
\label{fmphase:cor:explicit-positive}
Suppose $1<\alpha<2$ and $V>0$. If
\begin{equation}\label{fmphase:eq:positive-region}
m\ge\frac{\alpha(\alpha+2)}{4(2-\alpha)}
\quad\text{and}\quad
m>\left[\frac{2C_\alpha^\alpha}{(2-\alpha)V}\right]^{1/(\alpha-1)},
\end{equation}
then $G_m>0$.
\end{corollary}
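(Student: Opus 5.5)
We write $G_m=-B_m+Vq_m$ and bound each piece separately. The upper bound $B_m<C_\alpha^\alpha m^{1-\alpha}$ comes from Lemma~\ref{fmphase:lem:bounds}. The lower bound $q_m\ge(2-\alpha)/2$ should follow from the first condition on $m$. Once both are in hand, $G_m>V(2-\alpha)/2-C_\alpha^\alpha m^{1-\alpha}$. The second condition in \eqref{fmphase:eq:positive-region} says exactly that $m^{\alpha-1}>2C_\alpha^\alpha/((2-\alpha)V)$, i.e. $C_\alpha^\alpha m^{1-\alpha}<(2-\alpha)V/2$. Hence $G_m>0$.

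The lower bound on $q_m$ reduces to a lower bound on $D_m$. Since $q_m=2-m/D_m$, the inequality $q_m\ge(2-\alpha)/2$ is equivalent to $m/D_m\le(2+\alpha)/2$, i.e. $D_m\ge 2m/(\alpha+2)$. Lemma~\ref{fmphase:lem:bounds} gives $D_m>m/\alpha-1/4$. It therefore suffices that
\[
\frac m\alpha-\frac14\ge\frac{2m}{\alpha+2},
\qquad\text{i.e.}\qquad
m\Bigl(\frac1\alpha-\frac2{\alpha+2}\Bigr)\ge\frac14 .
\]
Now $\frac1\alpha-\frac2{\alpha+2}=\frac{2-\alpha}{\alpha(\alpha+2)}>0$ for $1<\alpha<2$. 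So the requirement is $m\ge\alpha(\alpha+2)/(4(2-\alpha))$, which is precisely the first condition. Because the bound on $D_m$ is strict, we in fact get $q_m>(2-\alpha)/2>0$ strictly. This step also confirms that $q_m>0$ on the region, so no case split on the sign of $q_m$ is needed.

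Finally, we combine the two bounds. Since $V>0$ and $q_m>(2-\alpha)/2$, we have $Vq_m>V(2-\alpha)/2$. Together with $B_m<C_\alpha^\alpha m^{1-\alpha}<V(2-\alpha)/2$, this gives $G_m>0$.

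The only step that needs real care is the $D_m$ lower bound and the exact algebra matching the first threshold; everything else is direct substitution from Lemma~\ref{fmphase:lem:bounds}. The bound holds for every real $m>0$, so the argument applies to the continuous extension $G_m$ as well as to integer $m$.
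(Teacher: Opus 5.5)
Your proof is correct and follows the paper's argument. Both proofs lower-bound $q_m$ by $(2-\alpha)/2$ using the bound $D_m>m/\alpha-1/4$ from Lemma~\ref{fmphase:lem:bounds}; you rearrange this as $D_m\ge 2m/(\alpha+2)$, while the paper writes $m/D_m\le\alpha m/(m-\alpha/4)$, which is the same algebra. Both then combine this with $B_m<C_\alpha^\alpha m^{1-\alpha}$ and the second condition on $m$.
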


\begin{proof}
The first inequality implies $m>\alpha/4$.
By \eqref{fmphase:eq:D-bounds},
\[
q_m=2-\frac{m}{D_m}
\ge2-\frac{\alpha m}{m-\alpha/4}
\ge\frac{2-\alpha}{2}.
\]
The last inequality is algebraically equivalent to the first condition
in \eqref{fmphase:eq:positive-region}. Using \eqref{fmphase:eq:B-bounds},
\[
G_m\ge-C_\alpha^\alpha m^{1-\alpha}
+\frac{2-\alpha}{2}V>0
\]
under the second condition.
\end{proof}

\begin{theorem}[Unique transition for sufficiently small positive noise]
\label{fmphase:thm:subcritical-unique}
Fix $1<\alpha<2$ and $c_T>0$.
There exists $0<\sigma_0^2<\sigma_c^2$ such that, for every
$0<\sigma^2\le\sigma_0^2$, there is a unique real $m_*>1$ with
\begin{equation}\label{fmphase:eq:subcritical-sign-pattern}
G_m\begin{cases}
<0,&1\le m<m_*,\\
=0,&m=m_*,\\
>0,&m>m_*.
\end{cases}
\end{equation}
A valid choice is
\begin{equation}\label{fmphase:eq:sigsmall}
\sigma_0^2=\frac{B_M}{c_T},
\end{equation}
where $M\ge1$ is any fixed number such that $q_m>0$ and $K_m<0$ for all
$m\ge M$. Such an $M$ exists and depends only on $\alpha$.
\end{theorem}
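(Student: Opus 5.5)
The plan has three parts: produce a tail region $[M,\infty)$ on which the threshold $H_\alpha$ is strictly decreasing, choose $\sigma_0^2$ so that no sign change can occur before $M$, and then read off the unique crossing from monotonicity of $H_\alpha$ on the tail.

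\textbf{Step 1: existence of $M$.} By Lemma~\ref{fmphase:lem:asymptotics}, with $b=1-1/\alpha\in(0,1/2)$ for $1<\alpha<2$, we have $q_m\to2-\alpha>0$. We also have $K_m\to b(1-b)(2b-1)/2<0$, since $2b-1<0$. By continuity of $q_m$ and $K_m$ in $m$ (Lemma~\ref{fmphase:lem:regularity}), there is an $M\ge1$, depending only on $\alpha$, with $q_m>0$ and $K_m<0$ for all $m\ge M$.

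On $[M,\infty)$ we have $q_m>0$, which means $\gamma_m<1/2$, so $1-2\gamma_m>0$. Proposition~\ref{fmphase:prop:threshold} then gives $H_\alpha'(m)/H_\alpha(m)=2K_m/[m(1-\gamma_m)^2(1-2\gamma_m)]<0$. Hence $H_\alpha$ is strictly decreasing on $[M,\infty)$. Moreover $H_\alpha(m)=B_m/(c_Tq_m)\to0$, because $B_m\to0$ and $q_m\to2-\alpha$.

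\textbf{Step 2: the region $[1,M]$.} We need $G_m<0$ for all $m\in[1,M]$ whenever $\sigma^2\le\sigma_0^2$.
\begin{itemize}
\item Where $q_m\le0$, this holds by \eqref{fmphase:eq:qnegative}.
\item Where $q_m>0$, we have $q_m<1$ by \eqref{fmphase:eq:gamma-range}. So $H_\alpha(m)=B_m/(c_Tq_m)>B_m/c_T$.
\item $B_m$ is strictly decreasing by \eqref{fmphase:eq:B-prime}, so for $m\le M$ we get $B_m\ge B_M$, hence $H_\alpha(m)>B_M/c_T=\sigma_0^2\ge\sigma^2$. Then $G_m<0$ by \eqref{fmphase:eq:pointwise-sign}.
\end{itemize}
The same chain at $m=1$ gives $\sigma_c^2=H_\alpha(1)>B_1/c_T\ge B_M/c_T=\sigma_0^2$. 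This uses $q_1>0$ from Lemma~\ref{fmphase:lem:q1}, and yields $\sigma_0^2<\sigma_c^2$.

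\textbf{Step 3: the tail.} On $[M,\infty)$, $H_\alpha$ is continuous, strictly decreasing, and tends to $0$. Step 2 gives $H_\alpha(M)>\sigma_0^2\ge\sigma^2>0$. So there is a unique $m_*>M\ge1$ with $H_\alpha(m_*)=\sigma^2$, and $H_\alpha(m)>\sigma^2$ on $[M,m_*)$ while $H_\alpha(m)<\sigma^2$ on $(m_*,\infty)$. By \eqref{fmphase:eq:pointwise-sign} this gives exactly the claimed sign pattern on $[M,\infty)$. Combined with Step 2 on $[1,M]$, the sign pattern \eqref{fmphase:eq:subcritical-sign-pattern} holds on all of $[1,\infty)$, and $m_*$ is unique.

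\textbf{Main obstacle.} The delicate point is Step 2: we must exclude sign changes of $G_m$ at intermediate $m\le M$, where $K_m$ may be positive and $H_\alpha$ need not be monotone. The crude bound $q_m<1$, together with monotonicity of $B_m$, is what handles this, and it is also what forces the specific (not necessarily sharp) choice $\sigma_0^2=B_M/c_T$. The only other point to check is that the limit of $K_m$ is strictly negative for $\alpha<2$, which is immediate from $b<1/2$.
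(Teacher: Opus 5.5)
Your proof is correct and follows the paper's argument essentially step for step. You use the same choice of $M$ from the limits of $q_m$ and $K_m$, and the same crude bound $q_m<1$ together with monotonicity of $B_m$ to rule out sign changes on $[1,M]$. You also rely, as the paper does, on strict decrease of the threshold to zero on the tail. The only cosmetic difference is that the paper handles $[1,M]$ in one line, $G_m=-B_m+Vq_m<-B_m+V\le V-B_M\le 0$, which avoids your case split on the sign of $q_m$.
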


\begin{proof}
Let $b=1-1/\alpha\in(0,1/2)$.
By Lemma~\ref{fmphase:lem:asymptotics},
\[
q_m\to2-\alpha>0,\qquad
K_m\to\frac{b(1-b)(2b-1)}2<0.
\]
Thus $M$ with the stated properties exists.
On $[M,\infty)$, Proposition~\ref{fmphase:prop:threshold} implies
$T_\alpha'(m)<0$. Also,
\begin{equation}\label{fmphase:eq:T-tail}
T_\alpha(m)\sim
\frac{C_\alpha^\alpha}{2-\alpha}m^{1-\alpha}\longrightarrow0.
\end{equation}
Now take $0<V\le B_M$. Because $B_m$ is decreasing and $q_m<1$, for
$1\le m\le M$ one has
\[
G_m=-B_m+Vq_m<-B_m+V\le-B_M+V\le0.
\]
At $M$, $0<q_M<1$ gives $T_\alpha(M)>B_M\ge V$.
The continuous strictly decreasing function $T_\alpha$ therefore takes
the value $V$ exactly once on $(M,\infty)$.
Factoring $G_m=q_m[V-T_\alpha(m)]$ on this tail proves
\eqref{fmphase:eq:subcritical-sign-pattern}.
Finally, since $B_M\le B_1$ and $0<q_1<1$,
\[
\sigma_0^2=\frac{B_M}{c_T}\le\frac{B_1}{c_T}
<\frac{B_1}{c_T q_1}=\sigma_c^2.
\]
\end{proof}

\begin{corollary}[Small-noise location of the unique transition]
\label{fmphase:cor:crossing-asymptotic}
Under Theorem~\ref{fmphase:thm:subcritical-unique}, with $\alpha$ and $c_T$ fixed,
\begin{equation}\label{fmphase:eq:crossing-asymptotic}
m_*\sim
\left[\frac{C_\alpha^\alpha}
{(2-\alpha)c_T\sigma^2}\right]^{1/(\alpha-1)}
\qquad\text{as }\sigma^2\downarrow0.
\end{equation}
\end{corollary}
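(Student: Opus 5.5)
The plan is to use the tail factorization from the proof of Theorem~\ref{fmphase:thm:subcritical-unique}. Write $V=c_T\sigma^2$. For $V\le B_M$, that proof shows $G_m<0$ on $[1,M]$, and that $m_*$ is the unique point of $(M,\infty)$ with $T_\alpha(m_*)=V$, where $T_\alpha=B_m/q_m$ is continuous and strictly decreasing on $[M,\infty)$. As $\sigma^2\downarrow0$ we have $V\downarrow0$, and since $T_\alpha(m)\to0$ only as $m\to\infty$ (it is strictly positive and decreasing on the tail), the defining equation forces $m_*\to\infty$. This first step is immediate but essential: it lets us substitute the large-$m$ asymptotics of Lemma~\ref{fmphase:lem:asymptotics} evaluated at $m=m_*$.

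The second step combines $B_m\sim C_\alpha^\alpha m^{1-\alpha}$ from \eqref{fmphase:eq:tau-asymp} with $q_m\to2-\alpha>0$ from \eqref{fmphase:eq:q-K-asymp}. Together they give \eqref{fmphase:eq:T-tail}, namely $T_\alpha(m)=\frac{C_\alpha^\alpha}{2-\alpha}m^{1-\alpha}(1+o(1))$ as $m\to\infty$. Evaluating at $m_*$ yields
\[
c_T\sigma^2=\frac{C_\alpha^\alpha}{2-\alpha}\,m_*^{1-\alpha}\bigl(1+o(1)\bigr),
\qquad \sigma^2\downarrow0.
\]
Solving for $m_*$ gives $m_*^{\alpha-1}=\frac{C_\alpha^\alpha}{(2-\alpha)c_T\sigma^2}(1+o(1))$. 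Raising to the power $1/(\alpha-1)$ preserves the $\sim$ relation, because $(1+o(1))^{1/(\alpha-1)}=1+o(1)$ for fixed $\alpha>1$. This is exactly \eqref{fmphase:eq:crossing-asymptotic}.

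The only delicate point is justifying that $m_*\to\infty$, i.e.\ that the $o(1)$ in $T_\alpha$ may legitimately be evaluated along $m_*$. To see it, fix any $m_0\ge M$. For $V<T_\alpha(m_0)$, strict monotonicity of $T_\alpha$ on the tail forces $m_*>m_0$. Since $m_0$ is arbitrary, $m_*\to\infty$. No uniformity issue arises, because $\alpha$ and $c_T$ are fixed and $M$ depends only on $\alpha$.

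Finally, we connect to the main-text statement: $C_\alpha^\alpha$ equals $c_\alpha$ in \eqref{eq:mcrit-HP}, and $c_T=\gamma_T/(1-\gamma_T)$.
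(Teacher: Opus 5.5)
Your proposal is correct and follows the paper's proof essentially verbatim: you use the tail equation $T_\alpha(m_*)=V$, together with strict monotonicity and positivity of $T_\alpha$ on $[M,\infty)$, to force $m_*\to\infty$, and then substitute the tail asymptotic \eqref{fmphase:eq:T-tail} and solve for $m_*$. Your explicit argument that $V<T_\alpha(m_0)$ forces $m_*>m_0$ spells out the step the paper states in a single line.
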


\begin{proof}
The equation $T_\alpha(m_*)=V$ and strict positivity of $T_\alpha$ at every
finite point on $[M,\infty)$ imply $m_*\to\infty$ as $V\downarrow0$.
Substitute $m=m_*$ into \eqref{fmphase:eq:T-tail} and solve the resulting
asymptotic relation for $m_*$.
\end{proof}
\paragraph{The critical fixed-$m$ regime at $\alpha=2$}
The case $\alpha=2$ is critical because the leading large-$m$ limit
$
G_m\longrightarrow(2-\alpha)V
$
vanishes at $\alpha=2$. Consequently, this leading-order limit does
not determine the eventual sign of $G_m$, and a higher-order analysis
is required. At $\alpha=2$, we can carry out this analysis explicitly
using closed-form summation identities.

Here, ``fixed-$m$'' refers to the order of limits: we first take
$p\to\infty$ with $m$ fixed and then analyze the resulting limiting
family as $m\to\infty$. At $\alpha=2$, retain the definitions
$
G_m=-B_m+Vq_m,
~
B_m=m\tau_m,
~
q_m=\frac{1-2\gamma_m}{1-\gamma_m},
$
where $\tau_m>0$ is determined by
\begin{equation}
\sum_{i=1}^{\infty}
\frac{1}{1+\tau_m i^2}
=
m.
\label{fmphase:eq:alpha2-fixed-point}
\end{equation}

\begin{lemma}[Exact hyperbolic representation at $\alpha=2$]
\label{fmphase:lem:alpha2-hyperbolic}

Let $\alpha=2$ and $m>0$. Define
$
z_m
:=
\frac{\pi}{\sqrt{\tau_m}},
s_m:=2m+1.
$
Then $z_m>0$ is the unique solution of
\begin{equation}
z_m\coth z_m=s_m.
\label{fmphase:eq:alpha2-coth-equation}
\end{equation}
Moreover, defining
\begin{equation}
\delta_m
:=
s_m^2-z_m^2
=
z_m^2\operatorname{csch}^2 z_m,
\label{fmphase:eq:alpha2-delta}
\end{equation}
one has
$
0<\delta_m<1,
$
and
\begin{align}
D_m
=&
m(1-\gamma_m)
=
\frac{s_m-\delta_m}{4},
\label{fmphase:eq:alpha2-D-exact}
\\
q_m
=&
\frac{2(1-\delta_m)}
{s_m-\delta_m},
\label{fmphase:eq:alpha2-q-exact}
\\
B_m
=&
\frac{\pi^2(s_m-1)}
{2(s_m^2-\delta_m)}.
\label{fmphase:eq:alpha2-B-exact}
\end{align}
In particular,
$
q_m>0
$
for every $m>0$.
\end{lemma}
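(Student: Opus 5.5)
The natural route is the Mittag-Leffler expansion
\[
\pi z\coth(\pi z)=1+2\sum_{i\ge1}\frac{z^2}{z^2+i^2}.
\]
Writing $\frac{1}{1+\tau i^2}=\frac{1/\tau}{1/\tau+i^2}$ and taking $z=1/\sqrt{\tau}$, the fixed-point equation \eqref{fmphase:eq:alpha2-fixed-point} becomes
\[
\pi z\coth(\pi z)=2m+1.
\]
Relabelling $z_m:=\pi/\sqrt{\tau_m}$ gives exactly $z_m\coth z_m=s_m$. For uniqueness, note that $z\coth z$ increases strictly from $1$ to $\infty$ on $(0,\infty)$ and $s_m>1$; this is consistent with Lemma~\ref{fmphase:lem:regularity}. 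For $\delta_m$, the identity $\coth^2-1=\operatorname{csch}^2$ gives $s_m^2-z_m^2=z_m^2\operatorname{csch}^2z_m$, and $0<z\operatorname{csch}z<1$ for $z>0$ (since $\sinh z>z$) yields $0<\delta_m<1$.

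For $D_m$, I would use the identity already derived in the proof of Lemma~\ref{fmphase:lem:regularity},
\[
D_m=-\tau_m f_2'(\tau_m).
\]
In the variable $w=\pi/\sqrt{\tau}$ we have $f_2=(w\coth w-1)/2$. Since $\tau\,d/d\tau=-\tfrac12\,w\,d/dw$, it follows that $D_m=\tfrac14\,w\,\frac{d}{dw}(w\coth w)$ evaluated at $w=z_m$. Now compute $\frac{d}{dw}(w\coth w)=\coth w-w\operatorname{csch}^2w$, so
\[
w\,\frac{d}{dw}(w\coth w)=z\coth z-z^2\operatorname{csch}^2z=s_m-\delta_m .
\]
This gives \eqref{fmphase:eq:alpha2-D-exact}.

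Then $q_m=2-m/D_m$ with $m=(s_m-1)/2$ gives
\[
q_m=2-\frac{2(s_m-1)}{s_m-\delta_m}=\frac{2(1-\delta_m)}{s_m-\delta_m},
\]
which is positive because $\delta_m<1<s_m$. For $B_m$, use $\tau_m=\pi^2/z_m^2$ and $z_m^2=s_m^2-\delta_m$, so
\[
B_m=m\tau_m=\frac{\pi^2(s_m-1)}{2(s_m^2-\delta_m)}.
\]

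The only delicate point is the derivative bookkeeping for $D_m$, namely the chain-rule factor $-1/2$ and the sign conventions. I would double-check it by verifying directly that $\sum_i a_i(1-a_i)$ matches the derivative of the coth expansion.
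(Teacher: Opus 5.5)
Your proposal is correct and follows essentially the same route as the paper: the partial-fraction expansion of $\coth$ turns the fixed-point sum into $(z_m\coth z_m-1)/2$, monotonicity of $z\coth z$ gives uniqueness, $\sinh z>z$ gives $0<\delta_m<1$, and $D_m=-\tau_m f_2'(\tau_m)$ with $dz/d\tau=-z/(2\tau)$ gives $D_m=(s_m-\delta_m)/4$, after which $q_m$ and $B_m$ follow by algebra. Your chain-rule bookkeeping ($\tau\,d/d\tau=-\tfrac12 w\,d/dw$) is right and matches the paper's computation.
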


\begin{proof}
For $a>0$, the classical identity
\begin{equation}
\sum_{i=1}^{\infty}
\frac{1}{i^2+a^2}
=
\frac{\pi a\coth(\pi a)-1}
{2a^2}
\label{fmphase:eq:mittag-leffler}
\end{equation}
implies, with $a=z_m/\pi$,
\begin{align}
\sum_{i=1}^{\infty}
\frac{1}{1+\tau_m i^2}
=
\frac{z_m^2}{\pi^2}
\sum_{i=1}^{\infty}
\frac{1}{i^2+(z_m/\pi)^2}
=
\frac{z_m\coth z_m-1}{2}.
\end{align}
Using \eqref{fmphase:eq:alpha2-fixed-point} therefore yields
$
z_m\coth z_m
=
2m+1
=
s_m,
$
which proves \eqref{fmphase:eq:alpha2-coth-equation}.

To establish uniqueness, define
$
f(z):=z\coth z,
~z>0.
$
Then
\begin{align}
f'(z)
=
\coth z
-
z\operatorname{csch}^2z
=
\frac{\sinh z\cosh z-z}{\sinh^2z}
>0,
\end{align}
because
$
\sinh(2z)>2z
$
for every
$z>0$.
Furthermore,
\[
\lim_{z\downarrow0}f(z)=1,
\qquad
\lim_{z\to\infty}f(z)=\infty.
\]
Since $s_m=2m+1>1$, equation
\eqref{fmphase:eq:alpha2-coth-equation}
therefore has a unique positive solution. Next, since
$
\coth^2z-1
=
\operatorname{csch}^2z,
$
we obtain
$
s_m^2-z_m^2
=
z_m^2\operatorname{csch}^2z_m
=
\delta_m.
$
Moreover, $\sinh z>z$ for every $z>0$, so
\[
0<
z^2\operatorname{csch}^2z
=
\frac{z^2}{\sinh^2z}
<1.
\]
Hence
$
0<\delta_m<1.
$
To obtain $D_m$, define
$
M(\tau)
:=
\sum_{i=1}^{\infty}
\frac{1}{1+\tau i^2}.
$
Then
\[
D_m
=
\sum_{i=1}^{\infty}
a_i(1-a_i)
=
-\tau_mM'(\tau_m),
\qquad
a_i:=\frac{1}{1+\tau_mi^2}.
\]
Since
$
M(\tau)
=
\dfrac{z\coth z-1}{2},
~
z=\pi\tau^{-1/2},
$
and
$
\dfrac{dz}{d\tau}
=
-\dfrac{z}{2\tau},
$
we obtain
\begin{align}
-\tau M'(\tau)
=&
\frac{z}{4}
\left(
\coth z-z\operatorname{csch}^2z
\right).
\end{align}
Evaluating at $z=z_m$ gives
\[
D_m
=
\frac{
z_m\coth z_m
-
z_m^2\operatorname{csch}^2z_m
}{4}
=
\frac{s_m-\delta_m}{4}.
\]
Since
$
q_m
=
2-\frac{m}{D_m},~
m=\frac{s_m-1}{2},
$
we obtain
\[
q_m
=
2-
\frac{2(s_m-1)}{s_m-\delta_m}
=
\frac{2(1-\delta_m)}
{s_m-\delta_m}.
\]
Because $0<\delta_m<1$, this also proves
$
q_m>0.
$
Finally,
\[
\tau_m
=
\frac{\pi^2}{z_m^2}
=
\frac{\pi^2}{s_m^2-\delta_m},
\]
and therefore
\[
B_m
=
m\tau_m
=
\frac{\pi^2(s_m-1)}
{2(s_m^2-\delta_m)}.
\]
\end{proof}

\begin{lemma}[Second-order large-$m$ expansion at $\alpha=2$]
\label{fmphase:lem:alpha2-second-order}

Under the assumptions of
Lemma~\ref{fmphase:lem:alpha2-hyperbolic},
as $m\to\infty$,
\begin{align}
q_m
=&
\frac1m
-
\frac{1}{2m^2}
+
O(m^{-3}),
\label{fmphase:eq:alpha2-q-expansion}
\\
B_m
=&
\frac{\pi^2}{4m}
-
\frac{\pi^2}{4m^2}
+
O(m^{-3}),
\label{fmphase:eq:alpha2-B-expansion}
\end{align}
and consequently
\begin{equation}
G_m
=
\frac{V-\pi^2/4}{m}
+
\frac{\pi^2/4-V/2}{m^2}
+
O(m^{-3}).
\label{fmphase:eq:alpha2-G-expansion}
\end{equation}
\end{lemma}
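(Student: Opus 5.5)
The plan is to work entirely from the exact hyperbolic representation of Lemma~\ref{fmphase:lem:alpha2-hyperbolic}. As $m\to\infty$ we have $s_m=2m+1\to\infty$, and $z_m$ is the root of $z\coth z=s_m$. The key observation is that $z_m\ge s_m-1$, since $\coth z<1+2e^{-2z}/(1-e^{-2z})$. Hence $\delta_m=z_m^2\operatorname{csch}^2 z_m\le 4z_m^2e^{-2z_m}(1+o(1))$ is exponentially small in $m$, so $\delta_m=O(e^{-cm})$. In particular $\delta_m=O(m^{-k})$ for every $k$. This makes every $\delta_m$ correction negligible at the polynomial orders we need.

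\textbf{Expansion of $q_m$.} From \eqref{fmphase:eq:alpha2-q-exact},
\[
q_m=\frac{2(1-\delta_m)}{s_m-\delta_m}=\frac{2}{s_m}\bigl(1+O(e^{-cm})\bigr)=\frac{2}{2m+1}+O(e^{-cm}).
\]
Expanding $\frac{2}{2m+1}=\frac1m\bigl(1+\frac1{2m}\bigr)^{-1}$ gives
\[
q_m=\frac1m-\frac1{2m^2}+O(m^{-3}),
\]
which is \eqref{fmphase:eq:alpha2-q-expansion}.

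\textbf{Expansion of $B_m$.} From \eqref{fmphase:eq:alpha2-B-exact},
\[
B_m=\frac{\pi^2(s_m-1)}{2(s_m^2-\delta_m)}=\frac{\pi^2\cdot 2m}{2(2m+1)^2}\bigl(1+O(e^{-cm})\bigr)=\frac{\pi^2 m}{(2m+1)^2}+O(e^{-cm}).
\]
Writing $\frac{m}{(2m+1)^2}=\frac1{4m}\bigl(1+\frac1{2m}\bigr)^{-2}=\frac1{4m}\bigl(1-\frac1m+O(m^{-2})\bigr)$ yields \eqref{fmphase:eq:alpha2-B-expansion}.

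\textbf{Expansion of $G_m$.} Substituting both expansions into $G_m=-B_m+Vq_m$ gives
\[
G_m=\frac{V-\pi^2/4}{m}+\frac{\pi^2/4-V/2}{m^2}+O(m^{-3}),
\]
which is \eqref{fmphase:eq:alpha2-G-expansion}.

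The only genuinely delicate step is the exponential smallness of $\delta_m$. This needs a lower bound on $z_m$ that grows linearly in $m$. I would prove it by monotonicity of $f(z)=z\coth z$ (already shown in the lemma) together with the elementary bound $z\coth z\le z+1$ for $z>0$, which is equivalent to $z(\coth z-1)\le1$ and follows from $2z/(e^{2z}-1)\le1$. This gives $z_m\ge s_m-1=2m$, and therefore $\delta_m\le 4z_m^2/(e^{z_m}-e^{-z_m})^2=O(m^2e^{-4m})$. Everything else is routine Taylor expansion of rational functions in $1/m$.
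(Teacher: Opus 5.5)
Your proof is correct and is essentially the paper's argument. Both use the exact formulas for $q_m$ and $B_m$ from Lemma~\ref{fmphase:lem:alpha2-hyperbolic}, show that $\delta_m=O(m^2e^{-4m})$ is negligible at every polynomial order, and then Taylor-expand $2/(2m+1)$ and $m/(2m+1)^2$. The only difference is cosmetic: you get the explicit bound $z_m\ge 2m$ from $z\coth z\le z+1$, whereas the paper argues $z_m\to\infty$ qualitatively and bounds $\delta_m=(s_m-z_m)(s_m+z_m)$ via $s_m-z_m=2z_m/(e^{2z_m}-1)$. One small slip: $\coth z=1+2e^{-2z}/(1-e^{-2z})$ is an identity, not a strict inequality, though your later justification of the bound is correct.
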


\begin{proof}
From
\eqref{fmphase:eq:alpha2-coth-equation},
\[
s_m-z_m
=
z_m(\coth z_m-1)
=
\frac{2z_m}{e^{2z_m}-1}.
\]
Since $s_m\to\infty$ and
$z\mapsto z\coth z$ is strictly increasing and unbounded,
we have $z_m\to\infty$. Hence
$
s_m-z_m\to0,
$
so
$
z_m\sim s_m\sim2m.
$
It follows that
$
s_m-z_m
=
O\!\left(m e^{-4m}\right),
$
and therefore
$
\delta_m
=
(s_m-z_m)(s_m+z_m)
=
O\!\left(m^2e^{-4m}\right).
$
In particular,
$\delta_m=o(m^{-k})$
for every fixed $k>0$.
Using
\eqref{fmphase:eq:alpha2-q-exact},
\begin{align}
q_m
=
\frac{2(1-\delta_m)}
{2m+1-\delta_m}
=
\frac{2}{2m+1}
+
o(m^{-3}).
\end{align}
Expanding $(2m+1)^{-1}$ gives
\[
q_m
=
\frac1m
-
\frac{1}{2m^2}
+
O(m^{-3}).
\]

Similarly, from
\eqref{fmphase:eq:alpha2-B-exact},
\begin{align}
B_m
=
\frac{\pi^2(2m)}
{2\bigl((2m+1)^2-\delta_m\bigr)}
=
\frac{\pi^2m}{(2m+1)^2}
+
o(m^{-3})
=
\frac{\pi^2}{4m}
-
\frac{\pi^2}{4m^2}
+
O(m^{-3}).
\end{align}
Finally, substituting these expansions into
$
G_m=-B_m+Vq_m
$
gives
\[
G_m
=
\frac{V-\pi^2/4}{m}
+
\frac{\pi^2/4-V/2}{m^2}
+
O(m^{-3}),
\]
as claimed.
\end{proof}

\begin{corollary}[Critical tail-noise threshold at $\alpha=2$]
\label{fmphase:cor:alpha2-critical-noise}
Let
$
V
=
\sigma^2\frac{\gamma_T}{1-\gamma_T},
$
and define
\begin{equation}
V_c
:=
\frac{\pi^2}{4},
\qquad
\sigma_{\mathrm{tail},c}^2
:=
\frac{\pi^2}{4}
\frac{1-\gamma_T}{\gamma_T}.
\label{fmphase:eq:alpha2-critical-noise}
\end{equation}
Then:
\begin{enumerate}
\item if
$
\sigma^2<\sigma_{\mathrm{tail},c}^2,
$
then
$
G_m<0
$
for all sufficiently large $m$;

\item if
$
\sigma^2>\sigma_{\mathrm{tail},c}^2,
$
then
$
G_m>0
$
for all sufficiently large $m$;

\item at the critical value
$
\sigma^2=\sigma_{\mathrm{tail},c}^2,
$
the first-order term vanishes and
\begin{equation}
G_m
=
\frac{\pi^2}{8m^2}
+
O(m^{-3}).
\label{fmphase:eq:alpha2-critical-positive}
\end{equation}
Consequently,
$
G_m>0
$
for all sufficiently large $m$.
\end{enumerate}
\end{corollary}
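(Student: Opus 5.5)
The plan is to read all three claims off the second-order expansion in Lemma~\ref{fmphase:lem:alpha2-second-order},
\[
G_m=\frac{V-\pi^2/4}{m}+\frac{\pi^2/4-V/2}{m^2}+O(m^{-3}),
\]
where $V=\sigma^2\gamma_T/(1-\gamma_T)$. Since $\gamma_T\in(0,1)$, we have $c_T=\gamma_T/(1-\gamma_T)>0$. Hence $V$ is a strictly increasing linear function of $\sigma^2$, and $V\gtrless V_c=\pi^2/4$ holds exactly when $\sigma^2\gtrless\sigma_{\mathrm{tail},c}^2$. This translates the noise conditions into conditions on the sign of the leading coefficient $V-\pi^2/4$.

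For items 1 and 2, I will multiply the expansion by $m$. This gives $mG_m=(V-\pi^2/4)+O(m^{-1})$, so $mG_m\to V-\pi^2/4$ as $m\to\infty$. If $\sigma^2<\sigma_{\mathrm{tail},c}^2$, the limit is strictly negative, so there is an $m_0$ with $mG_m<0$ for all $m\ge m_0$, and therefore $G_m<0$ there. The argument for $\sigma^2>\sigma_{\mathrm{tail},c}^2$ is symmetric and yields eventual positivity. Nothing beyond the existence of the limit is needed. The $O(m^{-3})$ constant depends on $V$, but $V$ is fixed here, so this causes no trouble.

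For item 3, I will substitute $V=\pi^2/4$. The $1/m$ coefficient then vanishes, and the $1/m^2$ coefficient becomes $\pi^2/4-\pi^2/8=\pi^2/8$. This gives $G_m=\pi^2/(8m^2)+O(m^{-3})$. Multiplying by $m^2$ gives $m^2G_m\to\pi^2/8>0$, so $G_m>0$ for all sufficiently large $m$.

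The only step needing care is whether the remainder in Lemma~\ref{fmphase:lem:alpha2-second-order} really is $O(m^{-3})$ uniformly at the fixed value of $V$. The remainder comes from the expansions of $q_m$ and $B_m$, which do not depend on $V$. The error in $G_m=-B_m+Vq_m$ is therefore bounded by $(1+V)\,O(m^{-3})$. Those expansions rest on the exponentially small $\delta_m=O(m^2e^{-4m})$, so I expect this step to go through without obstruction.
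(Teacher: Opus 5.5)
Your proposal is correct and follows the paper's proof essentially verbatim. Items 1 and 2 come from the sign of the $m^{-1}$ coefficient $V-\pi^2/4$ in Lemma~\ref{fmphase:lem:alpha2-second-order}, translated to $\sigma^2$ via $c_T=\gamma_T/(1-\gamma_T)>0$; item 3 comes from the surviving $m^{-2}$ coefficient $\pi^2/4-V/2=\pi^2/8$ at $V=\pi^2/4$. Your observation that the remainder is at most $(1+V)\,O(m^{-3})$ because $q_m$ and $B_m$ do not depend on $V$ is correct, and the paper leaves it implicit.
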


\begin{proof}
The first two claims follow immediately from the coefficient of
$m^{-1}$ in
\eqref{fmphase:eq:alpha2-G-expansion}. At the critical value
$
V=\frac{\pi^2}{4},
$
the coefficient of $m^{-1}$ vanishes, while the coefficient of
$m^{-2}$ becomes
\[
\frac{\pi^2}{4}
-
\frac12\frac{\pi^2}{4}
=
\frac{\pi^2}{8}>0.
\]
Hence
\[
G_m
=
\frac{\pi^2}{8m^2}
+
O(m^{-3}),
\]
and therefore
$
G_m>0
$
for all sufficiently large $m$.
\end{proof}

The preceding results show why $\alpha=2$ is a genuine critical
exponent. For every fixed positive proportional Stage-II sampling
fraction and $\sigma^2>0$, the proportional deterministic-equivalent
gain remains strictly positive. By contrast, in the sparse-growing
regime the order-one limit vanishes at $\alpha=2$, while in the
fixed-$m$ continuation the large-$m$ behavior is controlled by the
critical tail-noise level
$
\sigma_{\mathrm{tail},c}^2
=
\frac{\pi^2}{4}
\frac{1-\gamma_T}{\gamma_T}.
$
Thus, at the critical exponent the sign is determined by a
higher-order balance that is invisible in the leading sparse-growing
limit.
\paragraph{The regime $2<\alpha\le5/2$}

For $\alpha>2$, the large-$m$ behavior is qualitatively different from the
case $1<\alpha<2$.  Indeed, since
\[
q_m\longrightarrow 2-\alpha<0,
\]
the gain is eventually negative independently of the noise level.  The next
lemma gives an explicit finite point beyond which this occurs.

\begin{lemma}[Noise-independent negative tail]
\label{fmphase:lem:negative-tail}
For every $\alpha>2$, define
\begin{equation}
\label{fmphase:eq:M-alpha}
M_\alpha
:=
\frac{\alpha+4}{2(\alpha-2)}
=
\frac{2+\alpha/2}{\alpha-2}.
\end{equation}
Then, for every real $m\ge M_\alpha$,
\begin{equation}
\label{fmphase:eq:negative-tail}
q_m\le0,
\qquad
G_m<0
\quad
\text{for every }\sigma^2\ge0.
\end{equation}
\end{lemma}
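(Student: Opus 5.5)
The plan is to reduce the sign of $q_m$ to a comparison between $D_m$ and $m/2$, and then close that comparison with the explicit upper bound on $D_m$ from Lemma~\ref{fmphase:lem:bounds}. By \eqref{fmphase:eq:gain}, $q_m=2-m/D_m$. Lemma~\ref{fmphase:lem:regularity} gives $D_m>0$, so
\[
q_m\le0\iff D_m\le \frac m2 .
\]
No asymptotics are needed, because the bound in \eqref{fmphase:eq:D-bounds} is non-asymptotic and holds for every real $m>0$.

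\textbf{Step 1.} Invoke the strict upper bound $D_m<\frac{m+1}{\alpha}+\frac14$ from \eqref{fmphase:eq:D-bounds}. It therefore suffices to show
\[
\frac{m+1}{\alpha}+\frac14\le\frac m2 .
\]
Multiplying by $4\alpha>0$ turns this into $4(m+1)+\alpha\le 2\alpha m$, that is,
\[
m\,(2\alpha-4)\ge \alpha+4 .
\]
Since $\alpha>2$, we may divide by $2\alpha-4>0$ to get exactly $m\ge\frac{\alpha+4}{2(\alpha-2)}=M_\alpha$. Hence for $m\ge M_\alpha$ we have $D_m<m/2$, so in fact $q_m<0$, which is stronger than the claimed $q_m\le0$.

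\textbf{Step 2.} With $q_m\le0$ in hand, apply \eqref{fmphase:eq:qnegative} of Proposition~\ref{fmphase:prop:threshold}. Directly, $G_m=-B_m+Vq_m$ with $B_m=m\tau_m>0$ and $V=c_T\sigma^2\ge0$, so $Vq_m\le0$ and therefore $G_m\le -B_m<0$ for every $\sigma^2\ge0$.

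There is no real obstacle here. The only point to check is that the strict bound in \eqref{fmphase:eq:D-bounds} is valid for all real $m>0$, not only integers or large $m$. Lemma~\ref{fmphase:lem:bounds} states it for every $m>0$, and the arithmetic is arranged so that the threshold $M_\alpha$ is exactly where the bound crosses $m/2$. I would also remark that $M_\alpha\to\infty$ as $\alpha\downarrow2$, consistent with $q_m\to2-\alpha$ from Lemma~\ref{fmphase:lem:asymptotics}.
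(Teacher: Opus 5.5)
Your argument is correct and is essentially the paper's proof. The strict bound $D_m<\frac{m+1}{\alpha}+\frac14$ from Lemma~\ref{fmphase:lem:bounds} is at most $m/2$ precisely when $m\ge M_\alpha$, which gives $q_m=2-m/D_m\le 0$ (indeed $<0$), and then $G_m=-B_m+Vq_m\le -B_m<0$.

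Your direct check in Step 2 is slightly cleaner than citing \eqref{fmphase:eq:qnegative}. Proposition~\ref{fmphase:prop:threshold} is stated only for $m\ge 1$, while $M_\alpha<1$ once $\alpha>8$.
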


\begin{proof}
From \eqref{fmphase:eq:D-bounds},
\[
D_m
\le
\frac{m+1}{\alpha}+\frac14
=
\frac{4m+\alpha+4}{4\alpha}.
\]
If $m\ge M_\alpha$, then
\[
\frac{4m+\alpha+4}{4\alpha}\le\frac m2,
\]
and therefore $D_m\le m/2$.  Hence
\[
q_m
=
2-\frac{m}{D_m}
\le0.
\]
The conclusion $G_m<0$ now follows from
\eqref{fmphase:eq:qnegative}.
\end{proof}

We next obtain an explicit lower bound for the threshold $T_\alpha(m)$
at the integer points for which $q_m>0$.

\begin{lemma}[Explicit integer threshold bound]
\label{fmphase:lem:integer-threshold-bound}
Suppose $2<\alpha\le5/2$. For every integer $m\ge2$ such that $q_m>0$,
\begin{equation}
\label{fmphase:eq:integer-T-bound}
T_\alpha(m)
>
C_\alpha^\alpha
\frac{\alpha+12}
     {3^{\alpha+1}(4-\alpha)}.
\end{equation}
\end{lemma}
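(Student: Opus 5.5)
The plan is to bound the numerator $B_m$ of $T_\alpha(m)=B_m/q_m$ from below and the denominator $q_m$ from above, using only the explicit bounds of Lemma~\ref{fmphase:lem:bounds}. Then I would show that the resulting explicit lower bound, viewed as a function of $m$, is smallest at $m=2$ on the admissible range.

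\textbf{Step 1 (two-sided reduction).} By \eqref{fmphase:eq:B-bounds}, $B_m>C_\alpha^\alpha\, m/(m+1)^\alpha$. By \eqref{fmphase:eq:D-bounds}, $D_m<\frac{m+1}{\alpha}+\frac14=\frac{4m+4+\alpha}{4\alpha}$. Since $q_m=2-m/D_m$ is increasing in $D_m$, this gives
\[
0<q_m<2-\frac{4\alpha m}{4m+4+\alpha}=\frac{8+2\alpha-(4\alpha-8)m}{4m+4+\alpha}.
\]
The hypothesis $q_m>0$ forces the numerator $N(m):=8+2\alpha-(4\alpha-8)m$ to be positive. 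Hence the admissible integers satisfy $2\le m<(8+2\alpha)/(4\alpha-8)$, consistent with Lemma~\ref{fmphase:lem:negative-tail}. Combining the two strict bounds gives
\[
T_\alpha(m)>C_\alpha^\alpha\,\varphi(m),\qquad \varphi(m):=\frac{m(4m+4+\alpha)}{(m+1)^\alpha\,N(m)}.
\]

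\textbf{Step 2 (value at $m=2$).} Direct substitution gives $N(2)=24-6\alpha=6(4-\alpha)$. Therefore
\[
\varphi(2)=\frac{2(12+\alpha)}{3^\alpha\cdot 6(4-\alpha)}=\frac{\alpha+12}{3^{\alpha+1}(4-\alpha)},
\]
which is exactly the constant in \eqref{fmphase:eq:integer-T-bound}. It therefore suffices to show $\varphi(m)\ge\varphi(2)$ for every real $m\in[2,(8+2\alpha)/(4\alpha-8))$.

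\textbf{Step 3 (monotonicity; main obstacle).} I would show that $\varphi$ is increasing on this interval by examining the sign of
\[
(\log\varphi)'(m)=\frac1m+\frac{4}{4m+4+\alpha}-\frac{\alpha}{m+1}+\frac{4\alpha-8}{N(m)}.
\]
The first three terms behave like $(2-\alpha)/m$ and can be negative. The last term is positive and increases in $m$ (it blows up as $N(m)\downarrow0$), so the delicate region is small $m$ with $\alpha$ close to $5/2$.

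First I would use $N(m)\le 8+2\alpha-(4\alpha-8)\cdot 2=24-6\alpha$ on the admissible range. This gives $\frac{4\alpha-8}{N(m)}\ge\frac{4\alpha-8}{24-6\alpha}$. Then I would compare this with the negative part $\frac{\alpha}{m+1}-\frac1m-\frac4{4m+4+\alpha}$. Setting $x=\alpha-2\in(0,1/2]$ and clearing the positive denominators $m(m+1)(4m+4+\alpha)N(m)$, the claim becomes a polynomial inequality in $(m,x)$ on the region $m\ge2$, $N(m)>0$. I expect this to be checkable by grouping terms with the constraint $(4\alpha-8)m<8+2\alpha$.

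If the derivative route turns out to be messy, the fallback is to treat only integers. The inequality $\varphi(m+1)\ge\varphi(m)$ reduces to
\[
\left(\tfrac{m+1}{m+2}\right)^\alpha\ \ge\ \tfrac{m\,(4m+4+\alpha)\,N(m+1)}{(m+1)(4m+8+\alpha)\,N(m)},
\]
and I would bound the left side below by $1-\alpha/(m+2)$ via Bernoulli's inequality.

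Strictness of the final inequality comes from the strict bounds in Lemma~\ref{fmphase:lem:bounds}.
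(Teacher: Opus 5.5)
Your proposal is correct and follows the paper's proof essentially step for step: the same reduction $T_\alpha(m)>C_\alpha^\alpha\varphi(m)$ (your $\varphi$ is exactly the paper's $L_\alpha(m)/C_\alpha^\alpha$, since $N(m)=4(\alpha-2)(M_\alpha-m)$), the same evaluation at $m=2$, and monotonicity through the log-derivative; the only difference is that the paper certifies $(\log\varphi)'>0$ by shifting to $t=m-m_{0,\alpha}$ and exhibiting a cubic with positive coefficients. Your alternative of freezing $\frac{4\alpha-8}{N(m)}$ at its value at $m=2$ also works, and it closes the step you left as "expected to be checkable": with $d=\alpha-2$ and $\frac{4\alpha-8}{24-6\alpha}\ge d/3$, the remaining inequality reduces to $\frac{dm}{3}\bigl(4m^2+(d-2)m-2d-15\bigr)+2m+d+6>0$, which holds for all $m\ge2$ because the bracket is increasing there and equals $-3$ at $m=2$.
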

\begin{proof}
By Lemma~\ref{fmphase:lem:negative-tail}, $q_m>0$ implies
$m<M_\alpha$. Moreover, the upper bound on $D_m$ gives
\[
q_m
=
2-\frac{m}{D_m}
\le
2-\frac{4\alpha m}{4m+\alpha+4}
=:
\overline q_\alpha(m).
\]
Thus
\begin{equation}
\label{fmphase:eq:qbar-integer}
0<q_m\le\overline q_\alpha(m),
\qquad
\overline q_\alpha(m)
=
\frac{2(\alpha+4)-4(\alpha-2)m}
     {4m+\alpha+4}.
\end{equation}
By the strict lower bound in \eqref{fmphase:eq:B-bounds},
$
B_m>
C_\alpha^\alpha\frac{m}{(m+1)^\alpha}.
$
Using also $0<q_m\le\overline q_\alpha(m)$, we obtain
\[
T_\alpha(m)
=
\frac{B_m}{q_m}
>
C_\alpha^\alpha
\frac{m}{(m+1)^\alpha}
\frac{1}{\overline q_\alpha(m)}
=:L_\alpha(m),
\]
where
\begin{equation}
\label{fmphase:eq:L-integer}
L_\alpha(m)
=
\frac{C_\alpha^\alpha}{4(\alpha-2)}
\frac{m(4m+\alpha+4)}
     {(m+1)^\alpha(M_\alpha-m)}.
\end{equation}

To locate the smallest value relevant to integer $m\ge2$, define
$
m_{0,\alpha}
:=
\frac{\alpha+4}{4(\alpha-1)}.
$
For $2<\alpha\le5/2$,
$
1<m_{0,\alpha}<\frac32<2.
$
Furthermore,
\begin{equation}
\label{fmphase:eq:L-log-derivative-integer}
\frac{L_\alpha'(m)}{L_\alpha(m)}
=
\frac1m
+\frac4{4m+\alpha+4}
+\frac1{M_\alpha-m}
-\frac{\alpha}{m+1}.
\end{equation}
Writing
$
d:=\alpha-2\in(0,1/2],
t:=m-m_{0,\alpha}\ge0,
$
and putting \eqref{fmphase:eq:L-log-derivative-integer} over its
positive common denominator, its numerator is, up to a positive factor,
\[
\begin{aligned}
P_d(t)
={}&64d(d+1)^3t^3
+16d(d+1)^2(d+2)(d+7)t^2
+4(2-d)(d+1)(d+2)(d+6)t
+(3-d)(d+2)^2(d+6)^2.
\end{aligned}
\]
Every coefficient is strictly positive for $0<d\le1/2$. Hence
$
L_\alpha'(m)>0
$ for
$m_{0,\alpha}<m<M_\alpha.$
Since every integer under consideration satisfies
$
2\le m<M_\alpha
,
2>m_{0,\alpha},
$
it follows that
$
L_\alpha(m)\ge L_\alpha(2).
$

Finally,
\[
\overline q_\alpha(2)
=
\frac{6(4-\alpha)}{\alpha+12},
\]
so
\[
L_\alpha(2)
=
C_\alpha^\alpha
\frac{2}{3^\alpha}
\frac{\alpha+12}{6(4-\alpha)}
=
C_\alpha^\alpha
\frac{\alpha+12}
     {3^{\alpha+1}(4-\alpha)}.
\]
Therefore
\[
T_\alpha(m)
>
L_\alpha(m)
\ge
L_\alpha(2)
=
C_\alpha^\alpha
\frac{\alpha+12}
     {3^{\alpha+1}(4-\alpha)},
\]
which proves \eqref{fmphase:eq:integer-T-bound}.
\end{proof}
\begin{theorem}[Explicit no-W2SG bound for integer sample counts]
\label{fmphase:thm:supercritical-integer}
Suppose $2<\alpha\le5/2$ and $c_T>0$. Define
\begin{equation}
\label{fmphase:eq:sigsafe}
\sigma_{\mathrm{safe}}^2
:=
\min\left\{
\sigma_c^2,\,
\frac{C_\alpha^\alpha}{c_T}
\frac{\alpha+12}
     {3^{\alpha+1}(4-\alpha)}
\right\}.
\end{equation}
Then
\begin{equation}
\label{fmphase:eq:global-nonpositive-integer}
0\le\sigma^2\le\sigma_{\mathrm{safe}}^2
\quad\Longrightarrow\quad
G_m\le0
\quad\text{for every }m\in\mathbb N.
\end{equation}
More precisely,
\begin{equation}
\label{fmphase:eq:global-strict-mge2}
G_m<0
\qquad
\text{for every integer }m\ge2
\end{equation}
throughout the whole range
$0\le\sigma^2\le\sigma_{\mathrm{safe}}^2$.
Furthermore,
\begin{equation}
\label{fmphase:eq:global-strict-interior}
0\le\sigma^2<\sigma_{\mathrm{safe}}^2
\quad\Longrightarrow\quad
G_m<0
\quad\text{for every }m\in\mathbb N.
\end{equation}
At the endpoint $\sigma^2=\sigma_{\mathrm{safe}}^2$, equality
$G_m=0$ can occur only at $m=1$, and only when
$\sigma_{\mathrm{safe}}^2=\sigma_c^2$.
\end{theorem}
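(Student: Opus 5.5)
The plan is a case split over integer $m$, using the factorization $G_m=-B_m+Vq_m$ with $V=c_T\sigma^2$ and the exact sign criterion of Proposition~\ref{fmphase:prop:threshold}. Fix $0\le\sigma^2\le\sigma_{\mathrm{safe}}^2$, so that $V\le c_T\sigma_{\mathrm{safe}}^2$. I treat three groups of integers separately: $m=1$, integers $m\ge2$ with $q_m\le0$, and integers $m\ge2$ with $q_m>0$.

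\textbf{The case $m=1$.} By Lemma~\ref{fmphase:lem:q1}, $q_1>0$ for $2<\alpha\le5/2$. Corollary~\ref{fmphase:cor:initial-sign} then gives the sign of $G_1$ directly.
\begin{itemize}
\item Since $\sigma^2\le\sigma_{\mathrm{safe}}^2\le\sigma_c^2$, we get $G_1\le0$.
\item The inequality is strict whenever $\sigma^2<\sigma_c^2$. This holds in particular when $\sigma^2<\sigma_{\mathrm{safe}}^2$.
\item Equality $G_1=0$ requires $\sigma^2=\sigma_c^2$. Combined with $\sigma^2\le\sigma_{\mathrm{safe}}^2\le\sigma_c^2$, this forces $\sigma^2=\sigma_{\mathrm{safe}}^2=\sigma_c^2$, which is exactly the endpoint clause of the statement.
\end{itemize}

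\textbf{Integers $m\ge2$.} If $q_m\le0$, then \eqref{fmphase:eq:qnegative} gives $G_m<0$ for every $\sigma^2\ge0$. This covers in particular every $m\ge M_\alpha$, by Lemma~\ref{fmphase:lem:negative-tail}.

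If instead $q_m>0$, I write $G_m=q_m\bigl(V-T_\alpha(m)\bigr)$. Lemma~\ref{fmphase:lem:integer-threshold-bound} gives the strict bound
\[
T_\alpha(m)>C_\alpha^\alpha\frac{\alpha+12}{3^{\alpha+1}(4-\alpha)}.
\]
By the definition \eqref{fmphase:eq:sigsafe}, the right-hand side is at least $c_T\sigma_{\mathrm{safe}}^2\ge V$. Hence $T_\alpha(m)>V$, and since $q_m>0$ we get $G_m<0$. Because the bound on $T_\alpha(m)$ is strict, this holds on the closed range, including the endpoint $\sigma^2=\sigma_{\mathrm{safe}}^2$. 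This establishes \eqref{fmphase:eq:global-strict-mge2}.

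\textbf{Assembling the claims.}
\begin{itemize}
\item The two cases together give \eqref{fmphase:eq:global-nonpositive-integer}.
\item The strict case at $m=1$ combined with \eqref{fmphase:eq:global-strict-mge2} gives \eqref{fmphase:eq:global-strict-interior}.
\item The endpoint characterization follows because $G_m<0$ for all $m\ge2$ on the whole closed range, so the only possible zero is at $m=1$, under the condition identified in the first case.
\end{itemize}

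\textbf{Main obstacle.} Essentially all of the difficulty is already absorbed by Lemma~\ref{fmphase:lem:integer-threshold-bound}, namely the monotonicity of $L_\alpha$ on $m\ge2$, and by Lemma~\ref{fmphase:lem:q1}. What remains is bookkeeping. The one point needing care is that strictness at the endpoint depends on the strict inequality $T_\alpha(m)>L_\alpha(2)$. That strictness comes from the strict lower bound on $B_m$ in \eqref{fmphase:eq:B-bounds}, so I would verify it is genuinely strict for integer $m\ge2$.
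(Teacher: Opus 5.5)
Your proof is correct and follows the paper's argument essentially step for step. The case $m=1$ goes through the initial-sign characterization together with $\sigma_{\mathrm{safe}}^2\le\sigma_c^2$, and integers $m\ge2$ split into $q_m\le0$ (automatic negativity) versus $q_m>0$ (where the strict integer threshold bound gives $T_\alpha(m)>c_T\sigma_{\mathrm{safe}}^2\ge V$); the endpoint strictness you flag holds exactly as you suspect, because $B_m>C_\alpha^\alpha m/(m+1)^\alpha$ comes from a strict integral comparison for the strictly decreasing summand.
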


\begin{proof}
First consider $m=1$. Since
$
\sigma_{\mathrm{safe}}^2\le\sigma_c^2,
$
the initial sign characterization gives
\[
G_1\le0
\qquad
\text{whenever }
0\le\sigma^2\le\sigma_{\mathrm{safe}}^2.
\]
Moreover,
$
G_1<0
$
if
$
\sigma^2<\sigma_{\mathrm{safe}}^2,
$
while at the endpoint
$\sigma^2=\sigma_{\mathrm{safe}}^2$ equality can occur only if
$
\sigma_{\mathrm{safe}}^2=\sigma_c^2.
$
Now let $m\ge2$ be an integer. If $q_m\le0$, then
$
G_m<0
$
for every $\sigma^2\ge0$ by
\eqref{fmphase:eq:qnegative}. It remains to consider $q_m>0$. By
Lemma~\ref{fmphase:lem:integer-threshold-bound},
\[
H_\alpha(m)
=
\frac{T_\alpha(m)}{c_T}
>
\frac{C_\alpha^\alpha}{c_T}
\frac{\alpha+12}
     {3^{\alpha+1}(4-\alpha)}.
\]
Since
\[
\sigma_{\mathrm{safe}}^2
\le
\frac{C_\alpha^\alpha}{c_T}
\frac{\alpha+12}
     {3^{\alpha+1}(4-\alpha)},
\]
we obtain the strict inequality
$
\sigma_{\mathrm{safe}}^2<H_\alpha(m).
$
Hence, for every
$0\le\sigma^2\le\sigma_{\mathrm{safe}}^2$,
$
\sigma^2<H_\alpha(m).
$
The pointwise sign criterion
\eqref{fmphase:eq:pointwise-sign} therefore yields
$
G_m<0.$ Thus $G_m<0$ for every integer $m\ge2$, while $G_1\le0$.
This proves
\eqref{fmphase:eq:global-nonpositive-integer} and
\eqref{fmphase:eq:global-strict-mge2}.
For
$\sigma^2<\sigma_{\mathrm{safe}}^2$, one also has $G_1<0$,
which proves
\eqref{fmphase:eq:global-strict-interior}.
The final endpoint statement follows from the defining property
of $\sigma_c^2$ at $m=1$.
\end{proof}

\begin{remark}
The bound $\sigma_{\mathrm{safe}}^2$ is an explicit sufficient
noise threshold that rules out positive fixed-$m$ W2SG gain for
every integer Stage~II sample count. At the endpoint
$\sigma^2=\sigma_{\mathrm{safe}}^2$, the gain remains strictly negative
for every $m\ge2$; the only possible zero is at $m=1$, when
$\sigma_{\mathrm{safe}}^2=\sigma_c^2$. No claim is made that
$\sigma_{\mathrm{safe}}^2$ is the largest possible threshold with this
property.
\end{remark}
\section{Two-Stage Random-Feature Interpolation Theory}
\label{app:B}

This section develops the two-stage random-feature theory for Case~II.
The teacher and student operate through independent random linear
feature maps, so the Stage-II predictor is constrained simultaneously
by the pseudo-labeled sample size and by the dimension of the student
feature space. The resulting weak-to-strong generalization behavior is
therefore governed not only by the quality of the Stage-I teacher, but
also by the interaction among the signal dimension, the teacher width,
the number of Stage-II samples, and the student width.

The analysis proceeds in two layers. We first derive a master
deterministic-equivalent description of the two-stage procedure. After
recording the two-point structured-Wishart deterministic equivalent
needed for second-order risk calculations, we formulate the
random-feature model and analyze Stage~I through its exact projection
geometry and deterministic-equivalent strong- and weak-block energies.
We then condition on the Stage-I teacher and derive the Stage-II
random-feature regression problem. Averaging successively over the
Stage-II samples and the student features yields the corresponding
effective-resolution equations, mimic and cross terms, and ultimately
a master deterministic equivalent for the teacher-minus-student
prediction-risk gap.

We next specialize this master expression to the positive-signal-
fraction regime. The resulting phase diagram separates the
sample-bottleneck, feature-bottleneck, and no-bottleneck branches and
identifies the parameter regions in which the deterministic-equivalent
weak-to-strong gain is positive or negative. In particular, the
analysis isolates the role of the Stage-I mismatch through a single
effective quantity and shows how the location of the active Stage-II
bottleneck relative to the signal dimension determines the form of the
phase boundary.

Finally, we examine the principal regimes excluded from the regular
fixed-ratio phase diagram. We give an exact finite-dimensional
classification of the Stage-II interpolation boundary, including its
integrability threshold and its termination at the ambient
no-bottleneck boundary. We then describe the $\sqrt p$ critical window
around the signal dimension. Within the master deterministic
equivalent, this analysis also reveals an asymmetry under joint
teacher--student criticality: the sample-bottleneck branch remains
unfavorable, whereas a positive weak-to-strong region may survive on
the feature-bottleneck branch.
\subsection{Deterministic equivalence}

In addition to Lemma~\ref{lem:strong-deterministic-equivalence} from
Appendix~\ref{app:A}, the derivation of the two-random-feature
ridgeless-regression results requires the following two-point
deterministic-equivalence lemma, introduced in
\cite{atanasov2025two}.

\begin{lemma}[Two-point deterministic equivalence for structured Wishart matrices]
\label{lem:two-point-deterministic-equivalence}
Let \(A_p\succeq0\) be deterministic with uniformly bounded operator
norm, let \(Z_p\in\mathbb R^{n\times p}\) have independent standard
Gaussian entries, and define
\[
q_p:=\frac{p}{n},
\qquad
W_p:=\frac1n Z_p^\top Z_p,
\qquad
\widehat A_p:=A_p^{1/2}W_pA_p^{1/2}.
\]
Assume that \(q_p\to q\in(0,\infty)\).
For \(\lambda,\lambda'>0\), let
\(\kappa_{p,\lambda},\kappa_{p,\lambda'}>0\) denote the
renormalized ridges determined by
\[
\kappa_{p,\lambda}
\left[
1-q_p\operatorname{tr}
\left(
A_p(A_p+\kappa_{p,\lambda}I_p)^{-1}
\right)
\right]
=\lambda,
\]
and the analogous equation with \(\lambda'\).  Define
\[
\widehat T_{p,\lambda}
:=\widehat A_p(\widehat A_p+\lambda I_p)^{-1},
\qquad
G_{p,\lambda}
:=(A_p+\kappa_{p,\lambda}I_p)^{-1},
\qquad
T_{p,\lambda}:=A_pG_{p,\lambda},
\]
with \(G_{p,\lambda'}\) and \(T_{p,\lambda'}\) defined similarly.
Then, for every deterministic matrix \(M_p\) satisfying
\(\sup_p\|M_p\|_{\mathrm{op}}<\infty\), the two-point
deterministic equivalent of
\cite[Eq.~(A.3) and the subsequent free-product specialization] {atanasov2025two}
specializes to
\begin{align}
\widehat T_{p,\lambda}M_p\widehat T_{p,\lambda'}
&\simeq
T_{p,\lambda}M_pT_{p,\lambda'}
+
\kappa_{p,\lambda}\kappa_{p,\lambda'}
G_{p,\lambda}A_pG_{p,\lambda'}
\frac{
q_p\operatorname{tr}
\left[
A_pG_{p,\lambda}M_pG_{p,\lambda'}
\right]
}{
1-q_p\operatorname{tr}
\left[
A_p^2G_{p,\lambda}G_{p,\lambda'}
\right]
}.
\label{eq:two-point-DE-general}
\end{align}
Here \(\simeq\) has the strong test-matrix meaning of
Lemma~\ref{lem:strong-deterministic-equivalence}, and the denominator
is assumed to remain bounded away from zero.

In particular, for \(\lambda'=\lambda\),
\begin{align}
\widehat T_{p,\lambda}M_p\widehat T_{p,\lambda}
&\simeq
T_{p,\lambda}M_pT_{p,\lambda}
+
\kappa_{p,\lambda}^{2}
G_{p,\lambda}A_pG_{p,\lambda}
\frac{
q_p\operatorname{tr}
\left[
A_pG_{p,\lambda}M_pG_{p,\lambda}
\right]
}{
1-q_p\operatorname{tr}
\left[
A_p^2G_{p,\lambda}^{2}
\right]
}.
\label{eq:two-point-DE-equal-ridge}
\end{align}
Since \(0\preceq\widehat T_{p,\lambda}\preceq I_p\), the same
normalized-trace deterministic equivalent remains valid after taking
expectation over \(W_p\).
\end{lemma}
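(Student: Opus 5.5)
The plan is to reduce the two-point statement to a two-resolvent equivalent and then derive that equivalent through a self-consistent (leave-one-out) equation, matching the specialization of \cite[Eq.~(A.3)]{atanasov2025two}.

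\textbf{Step 1: algebraic reduction.} Write $\widehat R_\lambda:=(\widehat A_p+\lambda I_p)^{-1}$, so that $\widehat T_{p,\lambda}=I_p-\lambda\widehat R_\lambda$. Expanding gives
\[
\widehat T_{p,\lambda}M_p\widehat T_{p,\lambda'}=M_p-\lambda\widehat R_\lambda M_p-\lambda'M_p\widehat R_{\lambda'}+\lambda\lambda'\widehat R_\lambda M_p\widehat R_{\lambda'}.
\]
The two linear terms are handled by \eqref{eq:resolvent-DE} of Lemma~\ref{lem:strong-deterministic-equivalence}, applied with $\Sigma_p=A_p$ (the structured Wishart $\widehat A_p$ is exactly a sample covariance with population covariance $A_p$), and with $\tau=\kappa_{p,\lambda}$. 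This gives $\lambda\widehat R_\lambda\simeq\kappa_{p,\lambda}G_{p,\lambda}$. Everything therefore reduces to a deterministic equivalent for the genuinely second-order object $\lambda\lambda'\widehat R_\lambda M_p\widehat R_{\lambda'}$.

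\textbf{Step 2: self-consistent equation for the two-resolvent term.} Write $\widehat A_p=\frac1n\sum_{j}x_jx_j^\top$ with $x_j=A_p^{1/2}z_j$. Starting from
\[
\lambda\widehat R_\lambda=I_p-\frac1n\sum_j\widehat R_\lambda x_jx_j^\top,
\]
I would apply Sherman--Morrison to remove $x_j$ from $\widehat R_\lambda$ and from $\widehat R_{\lambda'}$. Concentration of the quadratic forms then gives
\[
\tfrac1n x_j^\top\widehat R_{-j}x_j\approx q_p\operatorname{tr}(A_p\widehat R).
\]
This produces an approximate linear equation for $X:=\lambda\lambda'\widehat R_\lambda M_p\widehat R_{\lambda'}$ of the schematic form
\[
X\approx \kappa\kappa' G M G'+\kappa\kappa' G A_pG'\cdot q_p\operatorname{tr}\bigl(A_p\,\tfrac{1}{\lambda\lambda'}X\bigr)\cdot(\text{scalar factors}),
\]
using that the one-point limits of $\lambda/(1+\frac1n x^\top\widehat R x)$ reproduce $\kappa$ through the fixed-point equation. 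Taking the trace of this equation against $A_p$ gives a scalar linear equation. Its solution has exactly the denominator $1-q_p\operatorname{tr}[A_p^2G_{p,\lambda}G_{p,\lambda'}]$, which is assumed bounded away from zero, so the scalar equation can be inverted stably. Substituting back yields the rank-structured correction term.

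\textbf{Step 3: reassembly.} I would recombine the pieces using $I-\kappa G=T$, that is,
\[
M-\kappa GM-\kappa' MG'+\kappa\kappa' GMG'=T M T'.
\]
This gives \eqref{eq:two-point-DE-general}, and setting $\lambda'=\lambda$ gives \eqref{eq:two-point-DE-equal-ridge}. For the final sentence of the lemma, the bounds $0\preceq\widehat T\preceq I$ and $\|M_p\|_{\mathrm{op}}$ bounded make normalized traces uniformly bounded. Convergence in probability therefore upgrades to convergence in expectation by dominated convergence.

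\textbf{Main obstacle.} The hard part is Step~2: controlling the leave-one-out error terms uniformly enough that the approximate linear equation holds in the strong test-matrix sense, with $M_p$ arbitrary and possibly non-commuting with $A_p$. The first thing I would try is to avoid the full entrywise analysis by citing \cite{atanasov2025two} for the general two-point free-product formula. The work would then be only to verify the specialization, namely the identification of their renormalized ridges with $\kappa_{p,\lambda},\kappa_{p,\lambda'}$ and the algebra of Step~3. The leave-one-out derivation would serve as a consistency check on the denominator.
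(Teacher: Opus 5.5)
Your proposal is correct and matches what the paper does. The paper gives no independent derivation of this lemma: it quotes the two-point formula of \cite{atanasov2025two} and specializes it to $\widehat A_p=A_p^{1/2}W_pA_p^{1/2}$ with renormalized ridges $\kappa_{p,\lambda},\kappa_{p,\lambda'}$, which is your fallback route. Your Step~1/Step~3 algebra via $I_p-\kappa_{p,\lambda}G_{p,\lambda}=T_{p,\lambda}$, together with the traced self-consistent equation, reproduces exactly the stated correction term and the denominator $1-q_p\operatorname{tr}[A_p^2G_{p,\lambda}G_{p,\lambda'}]$.

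Two minor points on the heuristic leave-one-out check. First, the scalar that reproduces $\kappa_{p,\lambda}$ is $\lambda\bigl(1+\tfrac1n x_j^\top\widehat R_{-j}x_j\bigr)$, equivalently $\lambda/\bigl(1-\tfrac1n x_j^\top\widehat R_\lambda x_j\bigr)$, not $\lambda/\bigl(1+\tfrac1n x_j^\top\widehat R x_j\bigr)$. Second, your reassembly adds equivalents that partially cancel. That step is immediate for the additive notion $\operatorname{tr}[(A_p-B_p)N_p]\to0$, but it is not automatic for the ratio notion of $\simeq$ that the paper adopts.
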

\subsection{Defining Random-Feature Model}
\label{sec:problem2Def}

In the second setting, we consider a linear random-feature regression
model in which the teacher and student operate through random feature
representations of possibly different dimensions. Unlike Case I, where the
teacher and student share the same hypothesis space and the difference
arises only from the two-stage training procedure, the weak-to-strong
generalization phenomenon in this setting is affected by the interaction
between the random feature representations and the Stage-II training
procedure.

Let the input distribution in the preliminaries be
\(x\sim P_x=\mathcal N(0,\Sigma_p)\), where

\[
\Sigma_p
=
\operatorname{diag}
\left(
\underbrace{1,\ldots,1}_{k},
\underbrace{\delta_p,\ldots,\delta_p}_{p-k}
\right),
\qquad
0<\delta_p<1 .
\]

Let $\beta^\star\in\mathbb R^p$ be supported on the first $k$
coordinates, so \(\beta_i^\star=0\) for \(i>k\).

The target function and noiseless observation model are
\(\psi^\star(x)=x^\top\beta^\star\) and \(y=\psi^\star(x)\).
Thus, for the labeled dataset
$\mathcal D_n=\{(x_i,y_i)\}_{i=1}^n$ introduced in the preliminaries,
$y_i=x_i^\top\beta^\star$. We normalize the target so that
\(\mathbb E[y^2]=(\beta^\star)^\top\Sigma_p\beta^\star=1\).
The teacher and student feature extractors are random linear maps, which
are defined through independent random matrices $U_T$ and $U_S$,
respectively. Case II studies its population-limit analogue:
conditional on $U_T$, the teacher minimizes population risk. Consequently,
$n$ does not enter the asymptotic analysis below.

\noindent
\textbf{Stage I (teacher).}
Let
\[
U_T
:=
[u_{T,1}\vert\ldots\vert u_{T,N_T}]^\top
\in\mathbb{R}^{N_T\times p},
\]
where
\(u_{T,1},\ldots,u_{T,N_T}\iid\mathcal{N}(\mathbf{0},I_p)\).
The matrix $U_T$ is randomly drawn and is not learnable. Conditional on
$U_T$, the teacher feature extractor is
\(f_T(x)=U_Tx\in\mathbb{R}^{N_T}\), and the conditional teacher
hypothesis class is
\[
    \mathcal H_T(U_T)
    :=
    \left\{
        h_{T,w}:x\mapsto\langle w,f_T(x)\rangle
        =x^\top U_T^\top w
        :w\in\mathbb R^{N_T}
    \right\}.
\]
The population version of the teacher learning rule $\mathcal A_T$
returns the minimum-Euclidean-norm population-risk minimizer:
\[
w_T^\star
=
\argmin_{w\in\mathbb{R}^{N_T}}\|w\|_2
\qquad
\mathrm{subject~to}
\qquad
w\in
\argmin_{v\in\mathbb{R}^{N_T}}
\mathbb{E}_{x}
\!\left[
    \left(x^\top U_T^\top v-x^\top\beta^\star\right)^2
\right].
\]
Define the induced coefficient in the original input space by
\(\widehat{\beta}_T=U_T^\top w_T^\star\). The resulting teacher predictor
is \(\widehat h_T=h_{T,w_T^\star}\), with
\(\widehat h_T(x)=x^\top\widehat\beta_T\).

\noindent
\textbf{Stage II (student).}
Independently of $U_T$, generate fresh feature vectors
\(\widetilde{x}_1,\ldots,\widetilde{x}_m
\iid\mathcal{N}(\mathbf{0},\Sigma_p)\), and assign pseudo-labels using the
teacher predictor:
\[
\widetilde{y}_i
\triangleq
\widehat h_T(\widetilde x_i)
=
\widetilde{x}_i^\top\widehat{\beta}_T,
\qquad
i\in[m].
\]

The resulting Stage-II dataset is the pseudo-labeled sample from the
preliminaries,
\(\widetilde{\mathcal D}_m
=\{(\widetilde{x}_i,\widetilde{y}_i)\}_{i\in[m]}\). Define the
corresponding input matrix by
\(\widetilde{X}\triangleq
[\widetilde{x}_1\vert\ldots\vert\widetilde{x}_m]^\top
\in\mathbb{R}^{m\times p}\) and label vector
\(\widetilde{y}\triangleq
[\widetilde{y}_1,\ldots,\widetilde{y}_m]^\top\in\mathbb{R}^m\).

The student uses an independent random feature matrix
\[
U_S
\triangleq
[u_{S,1}\vert\ldots\vert u_{S,N_S}]^\top
\in\mathbb{R}^{N_S\times p},
\]
where
\(u_{S,1},\ldots,u_{S,N_S}\iid\mathcal{N}(\mathbf{0},I_p)\).
The matrix $U_S$ is independent of $U_T$ and the Stage-II samples.
Conditional on $U_S$, the student feature extractor is
\(f_S(x)=U_Sx\in\mathbb{R}^{N_S}\).
The conditional student hypothesis class is
\[
    \mathcal H_S(U_S)
    :=
    \left\{
        h_{S,w}:x\mapsto\langle w,f_S(x)\rangle
        =x^\top U_S^\top w
        :w\in\mathbb R^{N_S}
    \right\}.
\]

The Stage-II training rule $\mathcal{A}_S$ corresponds to ridgeless
least-squares regression in the random-feature space. We take the
minimum-norm least-squares solution
$
\widehat w_S
=
\left(\widetilde{X}U_S^\top\right)^\dagger
\widetilde{y}.
$
Equivalently,
\[
\widehat w_S
=
\argmin_{w\in\mathbb{R}^{N_S}}
\|w\|_2
\qquad
\mathrm{subject~to}
\qquad
w\in
\argmin_{v\in\mathbb{R}^{N_S}}
\left\|
\widetilde{X}U_S^\top v-\widetilde{y}
\right\|_2^2.
\]
No direct interpolation constraint is imposed here, since
$\widetilde y$ need not lie in
$\operatorname{range}(\widetilde XU_S^\top)$. The induced coefficient and student predictor are
\(\widehat{\beta}_S=U_S^\top \widehat w_S\),
\(\widehat h_S=h_{S,\widehat w_S}\), and
\(\widehat h_S(x)=x^\top\widehat\beta_S\). For a linear predictor $h_\beta(x):=x^\top\beta$, the squared-loss
population risk in \eqref{eq:general-population-risk} specializes to

\[
\mathcal R(h_\beta)
=
\mathbb E_x
\left[
(x^\top\beta-x^\top\beta^\star)^2
\right]
=
(\beta-\beta^\star)^\top
\Sigma_p
(\beta-\beta^\star).
\]

Accordingly, in the notation of the preliminaries,
\(\overline{\mathcal R}_{T,p}
=\mathbb E\!\left[\mathcal R(\widehat h_T)\right]\) and
\(\overline{\mathcal R}_{S,p}
=\mathbb E\!\left[\mathcal R(\widehat h_S)\right]\). The expectation
in \(\overline{\mathcal R}_{T,p}\) is over \(U_T\), whereas the
expectation in \(\overline{\mathcal R}_{S,p}\) is over \(U_T\),
\(U_S\), and \(\widetilde{\mathcal D}_m\). The finite-dimensional W2SG
gain is
\begin{equation}
\label{eq:case-II-w2sg}
    \mathsf{W2SG}_p
    :=
    \overline{\mathcal R}_{T,p}-
    \overline{\mathcal R}_{S,p}.
\end{equation}
Thus, W2SG occurs whenever \(\mathsf{W2SG}_p>0\).
\subsection{Theory of DE for stage I}

\begin{lemma}[Exact Stage-I estimator and projection geometry]
\label{lem:case-II-teacher-exact}
Let
\[
A_{T,p}:=U_T\Sigma_p^{1/2},
\qquad
P_{T,p}:=A_{T,p}^\top(A_{T,p}A_{T,p}^\top)^\dagger A_{T,p}.
\]
The minimum-norm population estimator and its predictor are
\[
\widehat\beta_T
=U_T^\top(U_T\Sigma_pU_T^\top)^\dagger
U_T\Sigma_p\beta^\star,
\qquad
\widehat h_T(x):=x^\top\widehat\beta_T.
\]
In whitened coordinates,
\[
\theta_T:=\Sigma_p^{1/2}\widehat\beta_T
=P_{T,p}\theta^\star,
\qquad
\theta^\star:=\Sigma_p^{1/2}\beta^\star,
\]
where \(P_{T,p}\) is the Euclidean orthogonal projector onto
\(\operatorname{row}(A_{T,p})\). Consequently,
$
\overline{\mathcal R}_{T,p}
=1-\mathbb E\|\theta_T\|_2^2
=1-S_{T,p}-V_{T,p},
$
where
$
S_{T,p}:=\mathbb E\|\theta_{T,\mathrm{str}}\|_2^2,
V_{T,p}:=\mathbb E\|\theta_{T,\mathrm{wk}}\|_2^2.
$
Moreover,
\[
\mathbb E[(\beta^\star_{\mathrm{str}})^\top
\theta_{T,\mathrm{str}}]=S_{T,p}+V_{T,p}.
\]
\end{lemma}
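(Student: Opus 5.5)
The plan is to solve the population problem in closed form, change to whitened coordinates to expose a projector, and then read off the risk and the cross-moment identity from orthogonality of that projector.

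\emph{Closed form.} The population objective is
\[
\mathbb E_x\bigl[(x^\top U_T^\top v-x^\top\beta^\star)^2\bigr]
=(U_T^\top v-\beta^\star)^\top\Sigma_p(U_T^\top v-\beta^\star),
\]
a convex quadratic in $v$. Its normal equations are $U_T\Sigma_pU_T^\top v=U_T\Sigma_p\beta^\star$, which matches \eqref{eq:case-II-teacher-solution}. The right-hand side lies in $\operatorname{range}(U_T\Sigma_pU_T^\top)=\operatorname{range}(U_T\Sigma_p^{1/2})$, so the system is consistent. Its minimum-norm solution is therefore $w_T^\star=(U_T\Sigma_pU_T^\top)^\dagger U_T\Sigma_p\beta^\star$, and $\widehat\beta_T=U_T^\top w_T^\star$.

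\emph{Whitening.} Set $A:=A_{T,p}=U_T\Sigma_p^{1/2}$ and $\theta^\star:=\Sigma_p^{1/2}\beta^\star$. Then
\[
\theta_T=\Sigma_p^{1/2}U_T^\top(AA^\top)^\dagger A\theta^\star=A^\top(AA^\top)^\dagger A\theta^\star=P_{T,p}\theta^\star .
\]
The identity $A^\top(AA^\top)^\dagger A=A^\dagger A$ shows that $P_{T,p}$ is the orthogonal projector onto $\operatorname{row}(A)$. This can be checked via the SVD of $A$ or via the Moore--Penrose identity $A^\dagger=A^\top(AA^\top)^\dagger$.

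\emph{Risk.} The risk is $\|\theta_T-\theta^\star\|_2^2=\|(I-P_{T,p})\theta^\star\|_2^2$. Since $P_{T,p}$ is an orthogonal projector, this equals $\|\theta^\star\|^2-\|P_{T,p}\theta^\star\|^2=1-\|\theta_T\|^2$, using the normalization $\|\theta^\star\|^2=\beta^{\star\top}\Sigma_p\beta^\star=1$. Taking expectation over $U_T$ and splitting $\theta_T$ into its strong and weak blocks gives $1-S_{T,p}-V_{T,p}$.

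\emph{Cross moment.} The strong block of $\Sigma_p$ is the identity, so $\theta^\star_{\mathrm{str}}=\beta^\star_{\mathrm{str}}$, and $\theta^\star_{\mathrm{wk}}=0$ because $\beta^\star$ is supported on the strong block. Hence
\[
(\beta^\star_{\mathrm{str}})^\top\theta_{T,\mathrm{str}}=\theta^{\star\top}\theta_T=\theta^{\star\top}P_{T,p}\theta^\star=\|P_{T,p}\theta^\star\|^2=\|\theta_T\|^2,
\]
by idempotence and symmetry of $P_{T,p}$. Taking expectations yields $S_{T,p}+V_{T,p}$.

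There is no real obstacle here. The only points needing care are the consistency and minimum-norm claims for the pseudoinverse solution when $U_T\Sigma_pU_T^\top$ is singular (when $N_T>p$), and the pseudoinverse identity; both are standard facts about $A^\dagger$.
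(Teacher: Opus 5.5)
Your proposal is correct and follows the paper's proof essentially step for step: the normal equations with the pseudoinverse minimum-norm solution, whitening to obtain $\theta_T=P_{T,p}\theta^\star$ with $P_{T,p}$ the orthogonal projector onto $\operatorname{row}(A_{T,p})$, Pythagoras for the risk, and the projector's orthogonality for the cross moment. The only differences are cosmetic: the paper expands $(\theta^\star-\theta_T)^\top\theta_T=0$ blockwise where you write $\theta^{\star\top}P_{T,p}\theta^\star=\|P_{T,p}\theta^\star\|_2^2$, and your explicit check that the normal equations are consistent is a step the paper leaves implicit.
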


\begin{proof}

% ============================================================
\medskip\noindent\textbf{Exact Stage-I population estimator.}
\label{subsec:master-stage1-exact}
% ============================================================

For a candidate coefficient
$
\beta=U_T^\top w,
$
the population squared loss is
\begin{align}
\mathcal L_T(w)
=
\mathbb E_x
\left[
\left(
x^\top U_T^\top w-x^\top\beta^\star
\right)^2
\right]
=
\mathbb E_x
\left[
\left(
x^\top(U_T^\top w-\beta^\star)
\right)^2
\right]
=
(U_T^\top w-\beta^\star)^\top
\Sigma_p
(U_T^\top w-\beta^\star),
\label{eq:master-stage1-pop-objective}
\end{align}
where we used
$
\mathbb E[xx^\top]=\Sigma_p.
$
Differentiating \eqref{eq:master-stage1-pop-objective} with respect to
\(w\) gives
$
\nabla_w\mathcal L_T(w)
=
2U_T\Sigma_p
(U_T^\top w-\beta^\star).
$
Hence every minimizer satisfies the normal equation
\begin{equation}
\label{eq:master-stage1-normal}
U_T\Sigma_pU_T^\top w
=
U_T\Sigma_p\beta^\star.
\end{equation}

Choosing the minimum-Euclidean-norm solution gives
$
w_T^\star
=
(U_T\Sigma_pU_T^\top)^\dagger
U_T\Sigma_p\beta^\star.
$
Therefore the exact Stage-I coefficient in the original input
coordinates is
\begin{equation}
\label{eq:master-beta1}
\widehat\beta_T
=
U_T^\top
(U_T\Sigma_pU_T^\top)^\dagger
U_T\Sigma_p\beta^\star.
\end{equation}

Equation \eqref{eq:master-beta1} is a finite-\(p\) identity; no
deterministic-equivalent approximation has yet been used.
Introduce whitened coordinates
$
\theta
:=
\Sigma_p^{1/2}\beta,
\theta^\star
:=
\Sigma_p^{1/2}\beta^\star.
$
Since the teacher is supported on the strong block,
\[
\theta^\star
=
\begin{pmatrix}
\beta^\star_{\mathrm{str}}\\
0
\end{pmatrix},
\qquad
\|\theta^\star\|_2^2=1.
\]

The population prediction risk becomes ordinary Euclidean squared
distance:
\begin{align}
\mathcal R(h_\beta)
:=&
(\beta-\beta^\star)^\top
\Sigma_p
(\beta-\beta^\star)
=
\left\|
\Sigma_p^{1/2}(\beta-\beta^\star)
\right\|_2^2
=
\|\theta-\theta^\star\|_2^2.
\label{eq:master-risk-whitened}
\end{align}

Define the whitened Stage-I feature matrix
\begin{equation}
\label{eq:master-AT}
A_{T,p}
:=
U_T\Sigma_p^{1/2}
\in\mathbb R^{N_T\times p}.
\end{equation}
Multiplying \eqref{eq:master-beta1} by \(\Sigma_p^{1/2}\) gives
\begin{align}
\theta_T
:=
\Sigma_p^{1/2}\widehat\beta_T
=&
\Sigma_p^{1/2}
U_T^\top
(U_T\Sigma_pU_T^\top)^\dagger
U_T\Sigma_p\beta^\star
=
A_{T,p}^\top
(A_{T,p}A_{T,p}^\top)^\dagger
A_{T,p}\theta^\star.
\end{align}
Thus
\begin{equation}
\label{eq:master-theta1-proj}
\theta_T=P_{T,p}\theta^\star,
\qquad
P_{T,p}
:=
A_{T,p}^\top
(A_{T,p}A_{T,p}^\top)^\dagger
A_{T,p}.
\end{equation}

The matrix \(P_{T,p}\) is exactly the Euclidean orthogonal projector
onto
$
\operatorname{row}(A_{T,p})
\subseteq\mathbb R^p.
$
Indeed,
$
P_{T,p}^\top=P_{T,p},
P_{T,p}^2=P_{T,p},
$
by the Moore--Penrose identities. Consequently,
$
\theta^\star-\theta_T
\perp
\theta_T,
$
and therefore
\begin{equation}
\label{eq:master-stage1-projection-identity}
(\theta^\star-\theta_T)^\top\theta_T=0.
\end{equation}
By Pythagoras,
\begin{align}
\|\theta^\star-\theta_T\|_2^2
=&
\|\theta^\star\|_2^2-\|\theta_T\|_2^2
=
1-\|\theta_T\|_2^2.
\label{eq:master-stage1-pythagoras}
\end{align}
Taking expectation over \(U_T\), the exact Stage-I risk is therefore
\begin{equation}
\label{eq:master-stage1-risk-exact}
\overline{\mathcal R}_{T,p}
=
1-\mathbb E\|\theta_T\|_2^2.
\end{equation}

This exact projection identity will later allow us to determine the total Stage-I second moment without a separate random matrix calculation. Split the whitened Stage-I estimator according to the two covariance
blocks:
$$
\theta_T
=
\begin{pmatrix}
\theta_{T,\mathrm{str}}\\
\theta_{T,\mathrm{wk}}
\end{pmatrix},
\theta_{T,\mathrm{str}}\in\mathbb R^k,
\theta_{T,\mathrm{wk}}\in\mathbb R^{p-k}.
$$ 
Define the aggregate strong and weak prediction energies
\begin{equation}
\label{eq:master-SV-def}
S_{T,p}
:=
\mathbb E\|\theta_{T,\mathrm{str}}\|_2^2,
\qquad
V_{T,p}
:=
\mathbb E\|\theta_{T,\mathrm{wk}}\|_2^2.
\end{equation}

Because
$
\|\theta_T\|_2^2
=
\|\theta_{T,\mathrm{str}}\|_2^2+\|\theta_{T,\mathrm{wk}}\|_2^2,
$
equation \eqref{eq:master-stage1-risk-exact} yields the exact
finite-\(p\) identity
\begin{equation}
\label{eq:master-R1-SV}
\overline{\mathcal R}_{T,p}
=
1-S_{T,p}-V_{T,p}.
\end{equation}

Moreover, expanding
\eqref{eq:master-stage1-projection-identity} blockwise gives
$
(\beta^\star_{\mathrm{str}}-\theta_{T,\mathrm{str}})^\top \theta_{T,\mathrm{str}}
-
\|\theta_{T,\mathrm{wk}}\|_2^2
=
0.
$
Hence
\begin{equation}
\label{eq:master-stage1-block-proj-id}
(\beta^\star_{\mathrm{str}}-\theta_{T,\mathrm{str}})^\top \theta_{T,\mathrm{str}}
=
\|\theta_{T,\mathrm{wk}}\|_2^2.
\end{equation}

Equivalently,
$
(\beta^\star_{\mathrm{str}})^\top \theta_{T,\mathrm{str}}
=
\|\theta_{T,\mathrm{str}}\|_2^2+\|\theta_{T,\mathrm{wk}}\|_2^2.
$
Taking expectation gives
\begin{equation}
\label{eq:master-stage1-cross-exact}
\mathbb E[(\beta^\star_{\mathrm{str}})^\top \theta_{T,\mathrm{str}}]
=
S_{T,p}+V_{T,p}.
\end{equation}

The identities
\eqref{eq:master-R1-SV} and
\eqref{eq:master-stage1-cross-exact}
are exact and will be repeatedly used in the Stage-II analysis.

\end{proof}

\begin{theorem}[Stage-I deterministic equivalents]
\label{thm:case-II-teacher-DE}
Assume the proportional high-dimensional regime specified in the
preliminaries and the regularity conditions required by the one-point
and two-point structured-Wishart deterministic-equivalent results.
Let \(\tau_{T,p}=0\) when \(N_T\ge p\); when \(N_T<p\), let
\(\tau_{T,p}>0\) be the unique solution of
\[
\operatorname{df}_p(\tau_{T,p})=\frac{N_T}{p},
\qquad
\operatorname{df}_p(\tau)
:=\frac1p\left[
\frac{k}{1+\tau}
+\frac{(p-k)\delta_p}{\delta_p+\tau}
\right].
\]
Define
$
a_{T,p}:=\frac1{1+\tau_{T,p}},
c_{T,p}:=a_{T,p}-a_{T,p}^2,
$
and
\[
\Theta_{T,p}
:=
\left[
1+
\frac{k(\delta_p+\tau_{T,p})^2}
{(p-k)\delta_p(1+\tau_{T,p})^2}
\right]^{-1}.
\]
Then
\[
\overline{\mathcal R}_{T,p}
\overset{\mathrm{DE}}{\simeq}
\overline{\mathcal R}_{T,p}^{\mathrm{DE}}
:=\frac{\tau_{T,p}}{1+\tau_{T,p}},
\]
\[
\mathbb E\widehat\beta_T
\overset{\mathrm{DE}}{\simeq}
\begin{pmatrix}
a_{T,p}\beta^\star_{\mathrm{str}}\\
0
\end{pmatrix},
\]
and
\[
V_{T,p}\overset{\mathrm{DE}}{\simeq}
V_{T,p}^{\mathrm{DE}}:=c_{T,p}\Theta_{T,p},
\qquad
S_{T,p}\overset{\mathrm{DE}}{\simeq}
S_{T,p}^{\mathrm{DE}}:=a_{T,p}-c_{T,p}\Theta_{T,p}.
\]
\end{theorem}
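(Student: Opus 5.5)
The plan is to reduce everything to structured-Wishart resolvents in the whitened coordinates of Lemma~\ref{lem:case-II-teacher-exact}. Write $W_p:=\frac1{N_T}U_T^\top U_T$ and $\widehat A_p:=\Sigma_p^{1/2}W_p\Sigma_p^{1/2}=\frac1{N_T}A_{T,p}^\top A_{T,p}$, with $q_p=p/N_T$. Then $P_{T,p}=\lim_{\lambda\downarrow0}\widehat T_{p,\lambda}$, where $\widehat T_{p,\lambda}=\widehat A_p(\widehat A_p+\lambda I_p)^{-1}$. This is exactly the setting of Lemma~\ref{lem:two-point-deterministic-equivalence} with $A_p=\Sigma_p$. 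The renormalized-ridge equation $\kappa[1-q_p\operatorname{tr}(\Sigma_p(\Sigma_p+\kappa)^{-1})]=\lambda$ becomes $\operatorname{df}_p(\kappa)=N_T/p$ as $\lambda\downarrow0$ when $N_T<p$, so $\kappa_{p,\lambda}\to\tau_{T,p}$. When $N_T\ge p$, $P_{T,p}=I_p$ almost surely, and every claim holds trivially with $\tau_{T,p}=0$, $a_{T,p}=1$, $c_{T,p}=0$.

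\textbf{First-order quantities.} By the one-point equivalent (DE2), $\mathbb E P_{T,p}\simeq\Sigma_p(\Sigma_p+\tau_{T,p}I)^{-1}=\operatorname{diag}(a_{T,p}I_k,\tfrac{\delta_p}{\delta_p+\tau_{T,p}}I_{p-k})$. Three consequences follow.
\begin{itemize}
\item Since $\theta^\star$ lives on the strong block, $\mathbb E\theta_T\simeq a_{T,p}\theta^\star$. Applying $\Sigma_p^{-1/2}$, which is the identity on the strong block, gives $\mathbb E\widehat\beta_T\simeq(a_{T,p}\beta^\star_{\mathrm{str}},0)$.
\item Because $P_{T,p}$ is a projector, $\mathbb E\|\theta_T\|^2=\theta^{\star\top}\mathbb EP_{T,p}\theta^\star\simeq a_{T,p}$.
\item The exact identity \eqref{eq:master-stage1-risk-exact} then yields $\overline{\mathcal R}_{T,p}^{\mathrm{DE}}=1-a_{T,p}=\tau_{T,p}/(1+\tau_{T,p})$, and $S_{T,p}^{\mathrm{DE}}=a_{T,p}-V_{T,p}^{\mathrm{DE}}$.
\end{itemize}
So only $V_{T,p}$ requires a genuine second-order computation.

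\textbf{Second-order computation of $V_{T,p}$.} Write $V_{T,p}=\mathbb E\,\theta^{\star\top}P_{T,p}E_{\mathrm{wk}}P_{T,p}\theta^\star$, where $E_{\mathrm{wk}}$ is the coordinate projector onto the weak block. Apply the equal-ridge formula \eqref{eq:two-point-DE-equal-ridge} with $M_p=E_{\mathrm{wk}}$, then test against $\theta^\star\theta^{\star\top}$ and let $\lambda\downarrow0$.
\begin{itemize}
\item The deterministic term $\theta^{\star\top}T E_{\mathrm{wk}}T\theta^\star$ vanishes, because $T$ is diagonal and $\theta^\star$ is strong.
\item In the correction term, $\theta^{\star\top}G\Sigma_pG\theta^\star=(1+\tau)^{-2}$ and $q_p\operatorname{tr}(\Sigma_pGE_{\mathrm{wk}}G)=\frac{(p-k)\delta_p}{N_T(\delta_p+\tau)^2}$.
\item In the denominator $1-\frac1{N_T}\bigl[\frac{k}{(1+\tau)^2}+\frac{(p-k)\delta_p^2}{(\delta_p+\tau)^2}\bigr]$, substitute the fixed point $N_T=\frac{k}{1+\tau}+\frac{(p-k)\delta_p}{\delta_p+\tau}$. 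It becomes $\frac{\tau}{N_T}\bigl[\frac{k}{(1+\tau)^2}+\frac{(p-k)\delta_p}{(\delta_p+\tau)^2}\bigr]$.
\end{itemize}
Multiplying by $\kappa^2\to\tau^2$ and simplifying gives
\[
V_{T,p}^{\mathrm{DE}}=\frac{\tau}{(1+\tau)^2}\cdot\frac{(p-k)\delta_p/(\delta_p+\tau)^2}{k/(1+\tau)^2+(p-k)\delta_p/(\delta_p+\tau)^2}=c_{T,p}\Theta_{T,p},
\]
since $c_{T,p}=a_{T,p}-a_{T,p}^2=\tau/(1+\tau)^2$.

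\textbf{Main obstacle.} The delicate step is justification, not algebra. The test matrix $\theta^\star\theta^{\star\top}$ is rank one, so a normalized-trace equivalence does not directly control it. I would first exploit rotational invariance of $W_p$ within the strong block to replace $\theta^\star\theta^{\star\top}$ by its average $\frac1k E_{\mathrm{str}}$. This expresses each quadratic form as $\frac pk$ times a normalized trace with a bounded test matrix, and $\frac pk\to1/\vartheta$ is bounded.

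Second, the limit $\lambda\downarrow0$ must be exchanged with $p\to\infty$. I would handle this using $0\preceq\widehat T_{p,\lambda}\preceq I$, monotonicity in $\lambda$, and the fact that the denominator stays bounded away from zero at $\tau_{T,p}>0$. Together these give uniform convergence in $\lambda$ near zero, following the route used for $P_T$ in the proof of Theorem~\ref{thm:fully-deterministic-risk-improvement}.
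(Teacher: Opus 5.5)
Your proposal is correct and is essentially the paper's proof. It uses the same equal-ridge two-point computation with the weak-block weight, the same fixed-point simplification $1-\gamma_{T,p}=q_{T,p}\tau_{T,p}\operatorname{tr}[\Sigma_pG_{T,p}^2]$ of the stability denominator, and the same first-order quantities from \eqref{eq:shrinkage-DE} together with the projection identity. The differences are cosmetic: you evaluate the uncentered form $\mathbb E\,\theta^{\star\top}P_{T,p}E_{\mathrm{wk}}P_{T,p}\theta^\star$ directly and get $S_{T,p}$ by subtraction, whereas the paper computes the centered fluctuation $\mathcal F_{D,p}$ for a general weight and fixes the form of $\mathbb E\widehat\beta_T$ and $\mathbb E[\widehat\beta_T\widehat\beta_T^\top]$ by exact symmetry. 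Your strong-block rotational averaging is a sound way to handle the rank-one test $\theta^\star\theta^{\star\top}$.

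One caution concerns the uniform-in-$\lambda$ argument you sketch for the ridgeless limit: it fails at the relative precision the statement needs when $\rho_T>\vartheta$. In that regime $\tau_{T,p}$, $\overline{\mathcal R}_{T,p}^{\mathrm{DE}}$, $V_{T,p}^{\mathrm{DE}}$, and $N_T-k$ nonzero eigenvalues of $\widehat A_p$ are all of order $\delta_p\asymp1/p$. The two-point formula evaluated at ridge $\lambda$ then gives a ridge-$\lambda$ version of $V_{T,p}/\delta_p$ that tends to $\frac{1-\rho_T}{\rho_T-\vartheta}$ for $\lambda\ll\delta_p$, but to $\frac{1-\vartheta}{\rho_T-\vartheta}$ for $\delta_p\ll\lambda\to0$. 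So the two limits do not commute, and monotonicity plus a bounded denominator cannot rescue this. You must instead, as the paper does under the theorem's assumed regularity conditions, apply the finite-$p$ equivalent directly at $\lambda\downarrow0$ (i.e.\ at $\tau_{T,p}$).
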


\begin{proof}

% ============================================================
\medskip\noindent\textbf{Structured-Wishart resolvent and effective ridge.}
\label{subsec:master-stage1-DE}
% ============================================================

We now pass from the exact projector representation to its
high-dimensional deterministic equivalent.

For \(j=1,\ldots,N_T\), define the effective teacher-feature vector
\[
z_{T,j}:=\Sigma_p^{1/2}u_{T,j}.
\]
Since \(u_{T,j}\sim\mathcal N(0,I_p)\),
$
z_{T,j}\sim\mathcal N(0,\Sigma_p).
$
Moreover, the \(j\)-th row of
\(A_{T,p}=U_T\Sigma_p^{1/2}\) is \(z_{T,j}^{\top}\). Therefore,
\begin{equation}
\label{eq:master-Sigmahat-T}
\widehat\Sigma_{T,p}
:=
\frac1{N_T}A_{T,p}^{\top}A_{T,p}
=
\frac1{N_T}\sum_{j=1}^{N_T}z_{T,j}z_{T,j}^{\top}
=
\Sigma_p^{1/2}
\left(
\frac1{N_T}U_T^{\top}U_T
\right)
\Sigma_p^{1/2}.
\end{equation}
Thus \(\widehat\Sigma_{T,p}\) is the empirical covariance of
\(N_T\) independent samples from \(\mathcal N(0,\Sigma_p)\). In
particular,
$
\mathbb E\widehat\Sigma_{T,p}=\Sigma_p,
$
and \(\widehat\Sigma_{T,p}\) is a structured Wishart matrix with
population covariance \(\Sigma_p\). To verify the projector identity algebraically, let
$
A_{T,p}=Q_{T,p}D_{T,p}R_{T,p}^{\top}
$
be a thin singular-value decomposition, where
$
D_{T,p}=\operatorname{diag}(d_1,\ldots,d_r),
r=\operatorname{rank}(A_{T,p}),
$
and every \(d_j>0\). Then
\begin{align}
P_{T,p}
=&
A_{T,p}^{\top}(A_{T,p}A_{T,p}^{\top})^{\dagger}A_{T,p}
=
R_{T,p}D_{T,p}Q_{T,p}^{\top}
\left(Q_{T,p}D_{T,p}^{2}Q_{T,p}^{\top}\right)^{\dagger}
Q_{T,p}D_{T,p}R_{T,p}^{\top}
\nonumber\\
=&
R_{T,p}D_{T,p}D_{T,p}^{-2}D_{T,p}R_{T,p}^{\top}
=
R_{T,p}R_{T,p}^{\top}.
\label{eq:master-projector-svd}
\end{align}
Moreover,
$
\widehat\Sigma_{T,p}
=
R_{T,p}\left(\frac{D_{T,p}^{2}}{N_T}\right)R_{T,p}^{\top},
$
so for every fixed bare ridge \(\lambda>0\),
\begin{align}
\widehat\Sigma_{T,p}
(\widehat\Sigma_{T,p}+\lambda I_p)^{-1}
=&
R_{T,p}
\operatorname{diag}
\left(
\frac{d_1^2}{d_1^2+N_T\lambda},\ldots,
\frac{d_r^2}{d_r^2+N_T\lambda}
\right)
R_{T,p}^{\top}.
\label{eq:master-regularized-projector-svd}
\end{align}
Since
\[
\lim_{\lambda\downarrow0}
\frac{d_j^2}{d_j^2+N_T\lambda}=1,
\qquad j=1,\ldots,r,
\]
equations \eqref{eq:master-projector-svd} and
\eqref{eq:master-regularized-projector-svd} give
\begin{equation}
\label{eq:master-projector-ridge-limit}
P_{T,p}
=
\lim_{\lambda\downarrow0}
\widehat\Sigma_{T,p}
(\widehat\Sigma_{T,p}+\lambda I_p)^{-1}.
\end{equation}

We now apply
Lemma~\ref{lem:strong-deterministic-equivalence}(equation~\ref{eq:shrinkage-DE}).
The population covariance in this application is \(\Sigma_p\).
Consequently, the empirical teacher-feature covariance resolvent has
the deterministic equivalent
\begin{align}
\widehat\Sigma_{T,p}
(\widehat\Sigma_{T,p}+\lambda I_p)^{-1}
&\overset{\mathrm{DE}}{\simeq}
\Sigma_p
(\Sigma_p+\kappa_{T,\lambda}I_p)^{-1}.
\label{eq:master-Sigmahat-T-DE}
\end{align}
where the renormalized ridge
\(\kappa_{T,\lambda}>0\) is determined by
\begin{equation}
\label{eq:master-ridge-fixed-point}
\kappa_{T,\lambda}
\left[
1-
\frac{p}{N_T}
\operatorname{df}_p(\kappa_{T,\lambda})
\right]
=
\lambda.
\end{equation}
Here
\begin{equation}
\label{eq:master-df1-general}
\operatorname{df}_p(\tau)
:=
\operatorname{tr}
\left[
\Sigma_p(\Sigma_p+\tau I_p)^{-1}
\right],
\end{equation}

For the two-block covariance,
$
\Sigma_p
=
\operatorname{diag}
\left(
I_k,\delta_pI_{p-k}
\right),
$
so
\begin{equation}
\label{eq:master-df1}
\operatorname{df}_p(\tau)
=
\frac1p
\left[
\frac{k}{1+\tau}
+
\frac{(p-k)\delta_p}{\delta_p+\tau}
\right].
\end{equation}

We will also need the squared shrinkage trace
\begin{equation}
\label{eq:master-df2-general}
\operatorname{df}_p^{(2)}(\tau)
:=
\operatorname{tr}
\left[
\Sigma_p^2(\Sigma_p+\tau I_p)^{-2}
\right],
\end{equation}
which in the present two-block model becomes
\begin{equation}
\label{eq:master-df2}
\operatorname{df}_p^{(2)}(\tau)
=
\frac1p
\left[
\frac{k}{(1+\tau)^2}
+
\frac{(p-k)\delta_p^2}{(\delta_p+\tau)^2}
\right].
\end{equation}

% ============================================================
Suppose first that
$
N_T<p.
$
Then the ridgeless limit remains on the positive effective-ridge
branch.  Sending \(\lambda\downarrow0\) in
\eqref{eq:master-ridge-fixed-point} gives
\[
\kappa_{T,\lambda}
\left[
1-
\frac{p}{N_T}
\operatorname{df}_p(\kappa_{T,\lambda})
\right]
\longrightarrow0.
\]
Since the limiting effective ridge is positive in the
underparameterized branch, its limit \(\tau_{T,p}>0\) must satisfy
\[
1-
\frac{p}{N_T}
\operatorname{df}_p(\tau_{T,p})
=
0.
\]
Therefore,
\begin{equation}
\label{eq:master-tau1-two-block}
\frac{k}{1+\tau_{T,p}}
+
\frac{(p-k)\delta_p}
{\delta_p+\tau_{T,p}}
=
N_T, \qquad N_T<p
\end{equation}

Because \(\operatorname{df}_p(\tau)\) is continuous and strictly
decreasing from \(1\) to \(0\), equation
\eqref{eq:master-tau1-two-block} has a unique positive solution.

If instead
$
N_T\ge p,
$
then \(A_{T,p}\) has full column rank almost surely, so
$
P_{T,p}=I_p
$
and Stage I recovers the target coefficient \(\beta^\star\) exactly.
In this branch we set
$\tau_{T,p}=0.$ From the exact projection representation,
\[
\overline{\mathcal R}_{T,p}
=
1-
\mathbb E
\left[
(\theta^\star)^\top
P_{T,p}
\theta^\star
\right].
\]

Using
\eqref{eq:master-projector-ridge-limit} and the structured-Wishart
deterministic equivalent, the ridgeless projector is replaced at
leading order by
$
\Sigma_p
(\Sigma_p+\tau_{T,p}I_p)^{-1}.
$ Therefore
\begin{align}
\overline{\mathcal R}_{T,p}
&\overset{\mathrm{DE}}{\simeq}
1-
(\theta^\star)^\top
\Sigma_p
(\Sigma_p+\tau_{T,p}I_p)^{-1}
\theta^\star.
\label{eq:master-R1-DE-step}
\end{align}

The target coefficient lies entirely in the unit-eigenvalue block, hence
$
(\theta^\star)^\top
\Sigma_p
(\Sigma_p+\tau I_p)^{-1}
\theta^\star
=
\frac{1}{1+\tau}
\|\theta^\star\|_2^2
=
\frac1{1+\tau}.
$

So we have
\begin{equation}
\label{eq:master-stage1-risk-DE}
\overline{\mathcal R}_{T,p}
\overset{\mathrm{DE}}{\simeq}
\overline{\mathcal R}_{T,p}^{\mathrm{DE}}
:=
\frac{\tau_{T,p}}
{1+\tau_{T,p}}.
\end{equation}

It is convenient to define
$
a_{T,p}
:=
\frac1{1+\tau_{T,p}}.
$
Then
\begin{equation}
\label{eq:master-R1-a}
\overline{\mathcal R}_{T,p}^{\mathrm{DE}}
=
1-a_{T,p}.
\end{equation}

Combining this with the exact identity
$
S_{T,p}+V_{T,p}=1-\overline{\mathcal R}_{T,p}
$
already shows that the total Stage-I prediction energy satisfies
\begin{equation}
\label{eq:master-total-energy-DE}
S_{T,p}+V_{T,p}
\overset{\mathrm{DE}}{\simeq}
a_{T,p}.
\end{equation}

We first determine the form of the mean exactly from the rotational
symmetries of the model.  The deterministic equivalent will only be
needed afterward to determine the remaining scalar coefficient. For clarity, write the Stage-I estimator as a function of the random
feature matrix:
\[
\widehat\beta_T(U_T)
=
U_T^\top
(U_T\Sigma_pU_T^\top)^\dagger
U_T\Sigma_p\beta^\star.
\]
Let \(D\in O(p)\) be any orthogonal matrix satisfying
$
D\Sigma_pD^\top=\Sigma_p,
D\beta^\star=\beta^\star.
$
The first identity implies that \(D\Sigma_p=\Sigma_pD\).  Evaluating
the estimator at \(U_TD\) gives
\begin{align}
\widehat\beta_T(U_TD)
=&
D^\top U_T^\top
\left(
U_TD\Sigma_pD^\top U_T^\top
\right)^\dagger
U_TD\Sigma_p\beta^\star
\nonumber\\
=&
D^\top U_T^\top
(U_T\Sigma_pU_T^\top)^\dagger
U_T\Sigma_pD\beta^\star
\nonumber\\
=&
D^\top U_T^\top
(U_T\Sigma_pU_T^\top)^\dagger
U_T\Sigma_p\beta^\star
\nonumber\\
=&
D^\top\widehat\beta_T(U_T).
\label{eq:master-estimator-equivariance}
\end{align}
Because \(U_T\) has independent isotropic Gaussian rows,
$
U_TD\overset{d}=U_T.
$
Combining this distributional invariance with
\eqref{eq:master-estimator-equivariance} yields
\begin{equation}
\label{eq:master-estimator-distributional-invariance}
\widehat\beta_T(U_T)
\overset{d}=
D^\top\widehat\beta_T(U_T).
\end{equation}

We now apply this identity to rotations of the weak block.  For any
\(O\in O(p-k)\), let
$
D_O
:=
\begin{pmatrix}
I_k&0\\
0&O
\end{pmatrix}.
$
Since
$
\Sigma_p
=
\begin{pmatrix}
I_k&0\\
0&\delta_pI_{p-k}
\end{pmatrix},
\beta^\star
=
\begin{pmatrix}
\beta^\star_{\mathrm{str}}\\
0
\end{pmatrix},
$
we have
$
D_O\Sigma_pD_O^\top=\Sigma_p,
D_O\beta^\star=\beta^\star.
$
Define
$
\mu_{T,p}
:=
\mathbb E\widehat\beta_T
=
\begin{pmatrix}
\mu_{T,\mathrm{str},p}\\
\mu_{T,\mathrm{wk},p}
\end{pmatrix}.
$
Taking expectations in
\eqref{eq:master-estimator-distributional-invariance} gives
$
\mu_{T,p}=D_O^\top\mu_{T,p},
$
and therefore
\[
\mu_{T,\mathrm{wk},p}
=
O^\top\mu_{T,\mathrm{wk},p}
\qquad
\text{for every }O\in O(p-k).
\]
Choosing \(O=-I_{p-k}\) gives
$
\mu_{T,\mathrm{wk},p}
=
-\mu_{T,\mathrm{wk},p},
$
so the weak component of the mean vanishes exactly:
\begin{equation}
\label{eq:master-weak-mean-zero}
\mathbb E[\widehat\beta_{T,\mathrm{wk}}]
=
\mu_{T,\mathrm{wk},p}
=
0.
\end{equation}

To determine the direction of the strong mean, consider
$
D_R
:=
\begin{pmatrix}
R&0\\
0&I_{p-k}
\end{pmatrix},
$
where \(R\in O(k)\) satisfies
$
R\beta^\star_{\mathrm{str}}
=
\beta^\star_{\mathrm{str}}.
$
Because the strong covariance block equals \(I_k\), we again have
\[
D_R\Sigma_pD_R^\top=\Sigma_p,
\qquad
D_R\beta^\star=\beta^\star.
\]
Thus
$
\mu_{T,\mathrm{str},p}
=
R^\top\mu_{T,\mathrm{str},p}
$
for every strong-block rotation that fixes
\(\beta^\star_{\mathrm{str}}\).  Such rotations act arbitrarily on the
orthogonal complement of \(\beta^\star_{\mathrm{str}}\).  Hence their
common fixed subspace is
$
\operatorname{span}\{\beta^\star_{\mathrm{str}}\}.
$
It follows that, for some scalar \(b_{T,p}\),
\begin{equation}
\label{eq:master-mean-exact-form}
\mathbb E\widehat\beta_T
=
\begin{pmatrix}
b_{T,p}\beta^\star_{\mathrm{str}}\\
0
\end{pmatrix}.
\end{equation}
This conclusion is exact for every finite \(p\): symmetry determines
the direction of the mean, but not yet the scalar \(b_{T,p}\). We next determine \(b_{T,p}\).  Since
$
\|\beta^\star_{\mathrm{str}}\|_2^2
=
(\beta^\star)^\top\Sigma_p\beta^\star
=
1,
$
equation \eqref{eq:master-mean-exact-form} implies
\begin{align}
b_{T,p}
=&
(\beta^\star_{\mathrm{str}})^\top
\mathbb E[\widehat\beta_{T,\mathrm{str}}]
=
\mathbb E
\left[
(\theta^\star)^\top\theta_T
\right]
=
\mathbb E
\left[
(\theta^\star)^\top
P_{T,p}\theta^\star
\right].
\label{eq:master-b-projector-form}
\end{align}
Following the ridgeless branch in
\eqref{eq:master-projector-ridge-limit} and
\eqref{eq:master-Sigmahat-T-DE} gives
\[
P_{T,p}
\overset{\mathrm{DE}}{\simeq}
\Sigma_p(\Sigma_p+\tau_{T,p}I_p)^{-1}.
\]
In the strong deterministic-equivalence definition, choose the
deterministic test matrix
$
M_p:=\theta^\star(\theta^\star)^\top.
$
Then
$
\operatorname{Tr}(P_{T,p}M_p)
=
(\theta^\star)^\top P_{T,p}\theta^\star.
$
Because \(P_{T,p}\) is an orthogonal projector,
$
0
\leq
(\theta^\star)^\top P_{T,p}\theta^\star
\leq
\|\theta^\star\|_2^2
=
1,
$
so the corresponding convergence in probability also passes to the
expectation.  Therefore
\begin{align}
b_{T,p}
&\overset{\mathrm{DE}}{\simeq}
(\theta^\star)^\top
\Sigma_p(\Sigma_p+\tau_{T,p}I_p)^{-1}
\theta^\star
=
\frac1{1+\tau_{T,p}}
\|\theta^\star\|_2^2
=
\frac1{1+\tau_{T,p}}
=
a_{T,p}.
\label{eq:master-b-DE}
\end{align}
Equivalently, the same conclusion follows entirely from the exact
cross identity:
\[
b_{T,p}
=
\mathbb E[
(\beta^\star_{\mathrm{str}})^\top
\theta_{T,\mathrm{str}}]
=
S_{T,p}+V_{T,p}
\overset{\mathrm{DE}}{\simeq}
a_{T,p}.
\]

Consequently,
\begin{equation}
\label{eq:master-stage1-mean-DE}
\mathbb E\widehat\beta_T
\overset{\mathrm{DE}}{\simeq}
\begin{pmatrix}
a_{T,p}\beta^\star_{\mathrm{str}}\\
0
\end{pmatrix}.
\end{equation}

Finally, using the exact mean form,
\begin{align}
(\mathbb E\widehat\beta_T)^\top
\Sigma_p
(\mathbb E\widehat\beta_T)
=&
b_{T,p}^2
(\beta^\star_{\mathrm{str}})^\top
I_k
\beta^\star_{\mathrm{str}}
=
b_{T,p}^2.
\end{align}
Together with \(b_{T,p}\overset{\mathrm{DE}}{\simeq}a_{T,p}\), this
gives
\begin{equation}
\label{eq:master-mean-energy}
(\mathbb E\widehat\beta_T)^\top
\Sigma_p
(\mathbb E\widehat\beta_T)
\overset{\mathrm{DE}}{\simeq}
a_{T,p}^2.
\end{equation}

Define the exact second-moment matrix
$
\mathsf M_{T,p}
:=
\mathbb E
\left[
\widehat\beta_T\widehat\beta_T^\top
\right].
$
Partition it according to the strong and weak covariance blocks:
\[
\mathsf M_{T,p}
=
\begin{pmatrix}
\mathsf M_{\mathrm{ss},p}
&
\mathsf M_{\mathrm{sw},p}
\\
\mathsf M_{\mathrm{ws},p}
&
\mathsf M_{\mathrm{ww},p}
\end{pmatrix}.
\]

For every \(O\in O(p-k)\), the matrix \(D_O\) defined above satisfies
$
\widehat\beta_T(U_TD_O)
=
D_O^\top\widehat\beta_T(U_T),
U_TD_O\overset{d}=U_T.
$
Therefore
\begin{align}
\mathsf M_{T,p}
=&
\mathbb E
\left[
\widehat\beta_T(U_TD_O)
\widehat\beta_T(U_TD_O)^\top
\right]
=
D_O^\top
\mathbb E
\left[
\widehat\beta_T(U_T)
\widehat\beta_T(U_T)^\top
\right]
D_O=
D_O^\top\mathsf M_{T,p}D_O.
\label{eq:master-second-moment-invariance}
\end{align}
Expanding the right-hand side blockwise gives
\[
D_O^\top\mathsf M_{T,p}D_O
=
\begin{pmatrix}
\mathsf M_{\mathrm{ss},p}
&
\mathsf M_{\mathrm{sw},p}O
\\
O^\top\mathsf M_{\mathrm{ws},p}
&
O^\top\mathsf M_{\mathrm{ww},p}O
\end{pmatrix}.
\]
Comparison with \(\mathsf M_{T,p}\) yields
\[
\mathsf M_{\mathrm{sw},p}O
=
\mathsf M_{\mathrm{sw},p},
\qquad
O^\top\mathsf M_{\mathrm{ws},p}
=
\mathsf M_{\mathrm{ws},p},
\]
and
\[
O^\top\mathsf M_{\mathrm{ww},p}O
=
\mathsf M_{\mathrm{ww},p}
\qquad
\text{for every }O\in O(p-k).
\]
Taking \(O=-I_{p-k}\) in the first two identities gives
$
\mathsf M_{\mathrm{sw},p}
=
-\mathsf M_{\mathrm{sw},p},
\mathsf M_{\mathrm{ws},p}
=
-\mathsf M_{\mathrm{ws},p},
$
so
$
\mathsf M_{\mathrm{sw},p}
=
\mathsf M_{\mathrm{ws},p}
=
0.
$

The last invariance identity states that the weak--weak block is
unchanged by conjugation with every orthogonal matrix.  Sign-flip
matrices force all off-diagonal entries to vanish, and permutation
matrices force all diagonal entries to be equal.  Therefore there is a scalar \(\nu_{T,p}\geq0\) such that
$
\mathsf M_{\mathrm{ww},p}
=
\nu_{T,p}I_{p-k}.
$
It remains to determine \(\nu_{T,p}\).  On the weak block,
$
\theta_{T,\mathrm{wk}}
=
\delta_p^{1/2}\widehat\beta_{T,\mathrm{wk}},
$
and hence
\begin{align}
V_{T,p}
=&
\mathbb E\|\theta_{T,\mathrm{wk}}\|_2^2
=
\delta_p
\mathbb E\|\widehat\beta_{T,\mathrm{wk}}\|_2^2
=
\delta_p\operatorname{Tr}
(\mathsf M_{\mathrm{ww},p})
=
\delta_p\nu_{T,p}(p-k).
\end{align}
Thus
$
\nu_{T,p}
=
\frac{V_{T,p}}{(p-k)\delta_p}.
$ On the strong block, whitening has no effect because the corresponding
eigenvalue of \(\Sigma_p\) is \(1\):
\[
\theta_{T,\mathrm{str}}
=
\widehat\beta_{T,\mathrm{str}}.
\]
Therefore
$
S_{T,p}
=
\mathbb E\|\widehat\beta_{T,\mathrm{str}}\|_2^2
=
\operatorname{Tr}(\mathsf M_{\mathrm{ss},p}).
$ Writing
$
\mathsf M_{T,\mathrm{str},p}
:=
\mathsf M_{\mathrm{ss},p},
$
we arrive at the exact finite-\(p\) block structure
\begin{equation}
\label{eq:master-M1-block}
\mathsf M_{T,p}
=
\begin{pmatrix}
\mathsf M_{T,\mathrm{str},p}&0\\[1mm]
0&
\dfrac{V_{T,p}}{(p-k)\delta_p}
I_{p-k}
\end{pmatrix},
\qquad
\operatorname{Tr}\mathsf M_{T,\mathrm{str},p}
=
S_{T,p}.
\end{equation}

No assumption such as \(k=o(p)\) has been used.  The equivariance,
vanishing cross blocks, weak-block isotropy, and normalization above
are exact for every finite \(p\).

Let
$
\mu_{T,p}
:=
\mathbb E\widehat\beta_T
$
and define the covariance of the Stage-I estimator by
$
\Gamma_{T,p}
:=
\mathsf M_{T,p}
-
\mu_{T,p}\mu_{T,p}^\top.
$
Its total prediction-norm trace is
\begin{align}
\operatorname{Tr}
(\Sigma_p\Gamma_{T,p})
=&
\mathbb E[
\widehat\beta_T^\top
\Sigma_p
\widehat\beta_T]
-
\mu_{T,p}^\top
\Sigma_p
\mu_{T,p}.
\label{eq:master-fluctuation-total-exact}
\end{align}

The first term equals
\[
S_{T,p}+V_{T,p}
=
1-\overline{\mathcal R}_{T,p}
\]
exactly.  Using
\eqref{eq:master-total-energy-DE} and
\eqref{eq:master-mean-energy}, we obtain
\[
\operatorname{Tr}
(\Sigma_p\Gamma_{T,p})
\overset{\mathrm{DE}}{\simeq}
a_{T,p}-a_{T,p}^2.
\]

Define
$
c_{T,p}
:=
a_{T,p}-a_{T,p}^2
=
\frac{\tau_{T,p}}
{(1+\tau_{T,p})^2}.
$
Thus \(c_{T,p}\) is the deterministic-equivalent total
projection-induced fluctuation energy.

Knowing the scalar total fluctuation energy \(c_{T,p}\) does not yet
identify its location.  To separate the strong and weak contributions,
we first write down the precise random quantity that must be evaluated;
only after that do we invoke the two-point deterministic-equivalent
lemma.

Let \(D_p\succeq0\) be a deterministic prediction-energy weight and
define
\begin{equation}
\label{eq:master-weighted-fluctuation-def}
\mathcal F_{D,p}
:=
\mathbb E\!\left[
(\widehat\beta_T-\mu_{T,p})^\top
D_p
(\widehat\beta_T-\mu_{T,p})
\right],
\qquad
\mu_{T,p}:=\mathbb E\widehat\beta_T.
\end{equation}
Thus \(\mathcal F_{D,p}\) is the amount of Stage-I covariance measured
in the quadratic norm generated by \(D_p\).  Expanding the centered
quadratic form gives the exact identity
\begin{equation}
\label{eq:master-weighted-fluctuation-expand}
\mathcal F_{D,p}
=
\mathbb E[\widehat\beta_T^\top D_p\widehat\beta_T]
-
\mu_{T,p}^\top D_p\mu_{T,p}.
\end{equation}

We now express the first term through the random projector.  Recall that
$
\widehat\beta_T
=
\Sigma_p^{-1/2}P_{T,p}\theta^\star.
$
Define
$
M_{D,p}
:=
\Sigma_p^{-1/2}D_p\Sigma_p^{-1/2}.
$
Then, using \(P_{T,p}^\top=P_{T,p}\),
\begin{align}
\widehat\beta_T^\top D_p\widehat\beta_T
=&
(\theta^\star)^\top
P_{T,p}\Sigma_p^{-1/2}D_p\Sigma_p^{-1/2}
P_{T,p}\theta^\star
=
(\theta^\star)^\top
P_{T,p}M_{D,p}P_{T,p}\theta^\star.
\label{eq:master-random-weighted-second-moment}
\end{align}
Consequently, the random matrix that must be understood is
$
\mathbb E[P_{T,p}M_{D,p}P_{T,p}].
$
This is a two-point object: the same random projector appears on both
sides of \(M_{D,p}\).  A one-point equivalent for \(P_{T,p}\) is not
sufficient, because in general
\[
\mathbb E[P_{T,p}M_{D,p}P_{T,p}]
\neq
(\mathbb E P_{T,p})M_{D,p}(\mathbb E P_{T,p}).
\]
The difference between these two matrices is exactly the contribution
of the projector fluctuations. To place this random object in the resolvent setting, regularize the
projector.  For \(\lambda>0\), define
$
\widehat T_{T,\lambda}
:=
\widehat\Sigma_{T,p}
(\widehat\Sigma_{T,p}+\lambda I_p)^{-1}.
$
By \eqref{eq:master-projector-ridge-limit},
\[
\widehat T_{T,\lambda}\longrightarrow P_{T,p}
\qquad\text{as }\lambda\downarrow0.
\]
Define the associated regularized coefficient and its mean by
$
\widehat\beta_{T,\lambda}
:=
\Sigma_p^{-1/2}\widehat T_{T,\lambda}\theta^\star,
\mu_{T,p,\lambda}
:=
\mathbb E\widehat\beta_{T,\lambda}.
$
Then the regularized weighted fluctuation is exactly
\begin{align}
\mathcal F_{D,p}(\lambda)
:=&
\mathbb E\!\left[
(\widehat\beta_{T,\lambda}-\mu_{T,p,\lambda})^\top
D_p
(\widehat\beta_{T,\lambda}-\mu_{T,p,\lambda})
\right]
=
(\theta^\star)^\top
\mathbb E\!\left[
\widehat T_{T,\lambda}M_{D,p}\widehat T_{T,\lambda}
\right]
\theta^\star
-
\mu_{T,p,\lambda}^\top D_p\mu_{T,p,\lambda}.
\label{eq:master-weighted-fluctuation-regularized}
\end{align}

We now invoke Lemma~\ref{lem:two-point-deterministic-equivalence}.
Suppose first that \(N_T<p\), and set
$
q_{T,p}:=\frac{p}{N_T}.
$
For fixed \(\lambda>0\), write
$
G_{T,\lambda}
:=
(\Sigma_p+\kappa_{T,\lambda}I_p)^{-1},
T_{T,\lambda}
:=
\Sigma_pG_{T,\lambda}.
$
The equal-ridge specialization of the two-point lemma gives, for every
deterministic \(M_p\) of uniformly bounded operator norm,
\begin{align}
\mathbb E\!\left[
\widehat T_{T,\lambda}M_{D,p}\widehat T_{T,\lambda}
\right]
&\overset{\mathrm{DE}}{\simeq}
T_{T,\lambda}M_{D,p}T_{T,\lambda}
+
\kappa_{T,\lambda}^{2}
G_{T,\lambda}\Sigma_pG_{T,\lambda}
\frac{
q_{T,p}\operatorname{tr}
\left[
\Sigma_pG_{T,\lambda}M_{D,p}G_{T,\lambda}
\right]
}{
1-q_{T,p}\operatorname{tr}
\left[
\Sigma_p^2G_{T,\lambda}^2
\right]
}.
\label{eq:master-two-point-specialization}
\end{align}
The first term on the
right-hand side is the squared-mean contribution.  Indeed, the
one-point deterministic equivalent gives
\[
\mu_{T,p,\lambda}
\overset{\mathrm{DE}}{\simeq}
\Sigma_p^{-1/2}T_{T,\lambda}\theta^\star,
\]
and hence
\begin{align}
\mu_{T,p,\lambda}^\top D_p\mu_{T,p,\lambda}
&\overset{\mathrm{DE}}{\simeq}
(\theta^\star)^\top
T_{T,\lambda}
\Sigma_p^{-1/2}D_p\Sigma_p^{-1/2}
T_{T,\lambda}\theta^\star
=
(\theta^\star)^\top
T_{T,\lambda}M_{D,p}T_{T,\lambda}\theta^\star.
\label{eq:master-two-point-mean-cancellation}
\end{align}
Therefore the first term in
\eqref{eq:master-two-point-specialization} cancels the squared-mean term
in \eqref{eq:master-weighted-fluctuation-regularized}.  The second term
is precisely the leading covariance contribution. We apply the two-point deterministic equivalent only to the strong- and
weak-block prediction-energy weights defined below. Both commute with
\(\Sigma_p\), and their whitened weights
\(M_{D,p}=\Sigma_p^{-1/2}D_p\Sigma_p^{-1/2}\)
are orthogonal coordinate projectors. In particular,
\[
\|M_{D,p}\|_{\mathrm{op}}=1
\]
uniformly in \(p\). We first take the
high-dimensional deterministic equivalent at fixed \(\lambda>0\), and
then send \(\lambda\downarrow0\).  Write
$
G_{T,p}:=(\Sigma_p+\tau_{T,p}I_p)^{-1},
\gamma_{T,p}
:=q_{T,p}\operatorname{tr}
\left[
\Sigma_p^2G_{T,p}^2
\right].
$
The surviving covariance term gives
\begin{align}
\mathcal F_{D,p}
&\overset{\mathrm{DE}}{\simeq}
\tau_{T,p}^2
(\theta^\star)^\top
G_{T,p}\Sigma_pG_{T,p}\theta^\star
\frac{
q_{T,p}\operatorname{tr}
\left[
\Sigma_pG_{T,p}M_{D,p}G_{T,p}
\right]
}{1-\gamma_{T,p}}.
\label{eq:master-weighted-cov-raw}
\end{align}

We now simplify every factor.  Since \(\theta^\star\) is supported on
the strong block, on which \(\Sigma_p=I_k\),
$
(\theta^\star)^\top
G_{T,p}\Sigma_pG_{T,p}\theta^\star
=
\frac1{(1+\tau_{T,p})^2}.
$
Since \(D_p\), \(\Sigma_p\), and \(G_{T,p}\) commute,
$
\operatorname{tr}
\left[
\Sigma_pG_{T,p}M_{D,p}G_{T,p}
\right]
=
\operatorname{tr}[D_pG_{T,p}^2].
$ 
It remains to simplify the denominator.  The ridgeless fixed-point
equation is
$
q_{T,p}\operatorname{tr}[\Sigma_pG_{T,p}]=1.
$
Using
\[
\Sigma_pG_{T,p}
=
\Sigma_p^2G_{T,p}^2
+
\tau_{T,p}\Sigma_pG_{T,p}^2,
\]
we obtain
\begin{align*}
1
=&
q_{T,p}\operatorname{tr}[\Sigma_pG_{T,p}]
=
q_{T,p}\operatorname{tr}[\Sigma_p^2G_{T,p}^2]
+
q_{T,p}\tau_{T,p}\operatorname{tr}[\Sigma_pG_{T,p}^2]
=
\gamma_{T,p}
+
q_{T,p}\tau_{T,p}\operatorname{tr}[\Sigma_pG_{T,p}^2].
\end{align*}
Therefore
$
1-\gamma_{T,p}
=
q_{T,p}\tau_{T,p}
\operatorname{tr}
\left[
\Sigma_pG_{T,p}^2
\right].
$ Substituting these three simplifications into
\eqref{eq:master-weighted-cov-raw} yields
\begin{align*}
\mathcal F_{D,p}
&\overset{\mathrm{DE}}{\simeq}
\tau_{T,p}^2
\frac1{(1+\tau_{T,p})^2}
\frac{
q_{T,p}\operatorname{tr}[D_pG_{T,p}^2]
}{
q_{T,p}\tau_{T,p}\operatorname{tr}[\Sigma_pG_{T,p}^2]
}
=
\frac{\tau_{T,p}}{(1+\tau_{T,p})^2}
\frac{
\operatorname{tr}[D_pG_{T,p}^2]
}{
\operatorname{tr}[\Sigma_pG_{T,p}^2]
}.
\end{align*}
Since
$
c_{T,p}=\frac{\tau_{T,p}}{(1+\tau_{T,p})^2},
G_{T,p}^2=(\Sigma_p+\tau_{T,p}I_p)^{-2},
$
the desired weighted covariance formula is
\begin{equation}
\label{eq:master-weighted-cov-DE}
\mathcal F_{D,p}
\overset{\mathrm{DE}}{\simeq}
c_{T,p}
\frac{
\operatorname{tr}
\left[
D_p(\Sigma_p+\tau_{T,p}I_p)^{-2}
\right]
}{
\operatorname{tr}
\left[
\Sigma_p(\Sigma_p+\tau_{T,p}I_p)^{-2}
\right]
}.
\end{equation}

Equation \eqref{eq:master-weighted-cov-DE} says that the total
fluctuation energy is \(c_{T,p}\), while the trace ratio determines the
fraction assigned to the directions selected by \(D_p\).  If
\(N_T\ge p\), then \(P_{T,p}=I_p\) almost surely and \(c_{T,p}=0\), so
every weighted covariance vanishes identically.

We now choose \(D_p\) to select the two blocks.  For the strong
prediction-energy weight,
\[
D_{\mathrm{str},p}
=
\begin{pmatrix}
I_k&0\\
0&0
\end{pmatrix},
\]
the corresponding trace is
\begin{equation}
\label{eq:master-Ls}
L_{\mathrm{str},p}(\tau)
:=
\operatorname{tr}
\left[
D_{\mathrm{str},p}
(\Sigma_p+\tau I_p)^{-2}
\right]
=
\frac{k}{p}\frac{1}{(1+\tau)^2},
\end{equation}

For the weak prediction-energy weight,
$
D_{\mathrm{wk},p}
=
\begin{pmatrix}
0&0\\
0&\delta_p I_{p-k}
\end{pmatrix},
$
we obtain
\begin{equation}
\label{eq:master-Lw}
L_{\mathrm{wk},p}(\tau)
:=
\operatorname{tr}
\left[
D_{\mathrm{wk},p}
(\Sigma_p+\tau I_p)^{-2}
\right]
=
\frac{p-k}{p}
\frac{\delta_p}{(\delta_p+\tau)^2}.
\end{equation}

Because
$
D_{\mathrm{str},p}+D_{\mathrm{wk},p}=\Sigma_p,
$
the denominator in \eqref{eq:master-weighted-cov-DE} is exactly
$
L_{\mathrm{str},p}(\tau)+L_{\mathrm{wk},p}(\tau).
$
Define the weak fraction of the Stage-I fluctuation energy by
\begin{equation}
\label{eq:master-Theta-def}
\Theta_{T,p}
:=
\frac{
L_{\mathrm{wk},p}(\tau_{T,p})
}{
L_{\mathrm{str},p}(\tau_{T,p})
+
L_{\mathrm{wk},p}(\tau_{T,p})
}.
\end{equation}

Substituting
\eqref{eq:master-Ls}--\eqref{eq:master-Lw} gives
\begin{align}
\Theta_{T,p}
=&
\frac{
\dfrac{p-k}{p}
\dfrac{\delta_p}
{(\delta_p+\tau_{T,p})^2}
}{
\dfrac{k}{p}
\dfrac1{(1+\tau_{T,p})^2}
+
\dfrac{p-k}{p}
\dfrac{\delta_p}
{(\delta_p+\tau_{T,p})^2}
}
=
\left[
1+
\frac{
k(\delta_p+\tau_{T,p})^2
}{
(p-k)\delta_p
(1+\tau_{T,p})^2
}
\right]^{-1}.
\label{eq:master-Theta-expanded}
\end{align}

Hence
\begin{equation}
\label{eq:master-Theta}
\Theta_{T,p}
=
\left[
1+
\frac{
k(\delta_p+\tau_{T,p})^2
}{
(p-k)\delta_p
(1+\tau_{T,p})^2
}
\right]^{-1}.
\end{equation}

The weak mean vanishes exactly.  Hence
\begin{align*}
\mathcal F_{D_{\mathrm{wk}},p}
=&
\mathbb E\!\left[
\widehat\beta_{T,\mathrm{wk}}^\top
(\delta_pI_{p-k})
\widehat\beta_{T,\mathrm{wk}}
\right]
=
\delta_p\mathbb E
\|\widehat\beta_{T,\mathrm{wk}}\|_2^2
=
V_{T,p}.
\end{align*}
Therefore \eqref{eq:master-weighted-cov-DE} and
\eqref{eq:master-Theta-def} give
\begin{equation}
\label{eq:master-V-DE}
V_{T,p}
\overset{\mathrm{DE}}{\simeq}
V_{T,p}^{\mathrm{DE}}
:=
c_{T,p}\Theta_{T,p}.
\end{equation}

For the strong weight, the same formula gives
$
\mathcal F_{D_{\mathrm{str}},p}
\overset{\mathrm{DE}}{\simeq}
c_{T,p}(1-\Theta_{T,p}).
$
The strong block contains both this fluctuation energy and the
deterministic mean energy
$
a_{T,p}^2,
$
and therefore
\begin{align}
S_{T,p}
&\overset{\mathrm{DE}}{\simeq}
a_{T,p}^2
+
c_{T,p}(1-\Theta_{T,p})
=
a_{T,p}^2
+
(a_{T,p}-a_{T,p}^2)
-
c_{T,p}\Theta_{T,p}
=
a_{T,p}
-
c_{T,p}\Theta_{T,p}.
\end{align}

Therefore
\begin{equation}
\label{eq:master-S-DE}
S_{T,p}
\overset{\mathrm{DE}}{\simeq}
S_{T,p}^{\mathrm{DE}}
:=
a_{T,p}
-
c_{T,p}\Theta_{T,p}.
\end{equation}

Combining
\eqref{eq:master-V-DE} and
\eqref{eq:master-S-DE},
\begin{equation}
\label{eq:master-SV-DE}
\begin{aligned}
V_{T,p}^{\mathrm{DE}}
=&
c_{T,p}\Theta_{T,p},
\\
S_{T,p}^{\mathrm{DE}}
=&
a_{T,p}
-
c_{T,p}\Theta_{T,p}.
\end{aligned}
\end{equation}

As a consistency check,
$
S_{T,p}^{\mathrm{DE}}
+
V_{T,p}^{\mathrm{DE}}
=
a_{T,p},
$
so
$
1-
S_{T,p}^{\mathrm{DE}}
-
V_{T,p}^{\mathrm{DE}}
=
1-a_{T,p}
=
\frac{\tau_{T,p}}{1+\tau_{T,p}}
=
\overline{\mathcal R}_{T,p}^{\mathrm{DE}},
$
exactly as required by the projection identity.

\end{proof}
\subsection{Stage II: linear-random-feature derivation conditional on Stage I}
\label{sec:master-stage2}
% ============================================================

We now derive Stage II by specializing the linear-random-feature
calculation of \cite[Sec.~IV]{atanasov2026scaling}. Conditional on
Stage I, set
\[
b:=\widehat\beta_T
\]
and regard \(b\in\mathbb R^p\) as deterministic. Throughout Stage II,
\(\mathbb E_S\) denotes expectation over the Stage-II randomness
\((U_S,\widetilde X)\), conditional on the Stage-I teacher \(b\).

Define
$
F_S:=\frac1{\sqrt{N_S}}U_S^\top\in\mathbb R^{p\times N_S},
\widetilde\Sigma
:=\frac1m\widetilde X^\top\widetilde X.
$
Because the pseudo-labels are
$
\widetilde y=\widetilde Xb,
$
the induced input-space coefficient of the minimum-norm ridgeless
student is
\begin{equation}
\label{eq:master-stage2-beta-exact}
\widehat\beta_S
=
F_S(\widetilde XF_S)^\dagger\widetilde Xb.
\end{equation}
Introduce a ridge \(\lambda>0\) in the normalized feature
coordinates.  Define
\begin{align}
\widehat v_{S,\lambda}
:=&
\left(
F_S^\top\widetilde\Sigma F_S+\lambda I_{N_S}
\right)^{-1}
F_S^\top\widetilde\Sigma b,
\nonumber\\
\widehat\beta_{S,\lambda}
:=&
F_S\widehat v_{S,\lambda}
=
H_{S,\lambda}b,
\label{eq:master-stage2-H-def}
\end{align}
where
\[
H_{S,\lambda}
:=
F_S
\left(
F_S^\top\widetilde\Sigma F_S+\lambda I_{N_S}
\right)^{-1}
F_S^\top\widetilde\Sigma.
\]
Let
$
G_S:=F_SF_S^\top.
$
The push-through identity
$
A(BA+\lambda I)^{-1}
=
(AB+\lambda I)^{-1}A
$
gives
\begin{align}
H_{S,\lambda}
=&
G_S\widetilde\Sigma
(G_S\widetilde\Sigma+\lambda I_p)^{-1}
=
I_p-\lambda
(G_S\widetilde\Sigma+\lambda I_p)^{-1}.
\label{eq:master-stage2-H-push}
\end{align}
Therefore the regularized displacement from the fixed teacher is
\begin{equation}
\label{eq:master-stage2-d-lambda}
d_{S,\lambda}
:=
\widehat\beta_{S,\lambda}-b
=
-Q_{S,\lambda}b,
\qquad
Q_{S,\lambda}
:=
\lambda
(G_S\widetilde\Sigma+\lambda I_p)^{-1}.
\end{equation}
Thus Stage II requires two different random-matrix objects:
\begin{align}
\mathbb E_S[Q_{S,\lambda}]
&\quad\text{for the cross term},
\label{eq:master-stage2-one-point-object}
\\
\mathbb E_S
\left[
Q_{S,\lambda}^\top\Sigma_pQ_{S,\lambda}
\right]
&\quad\text{for the mimic term}.
\label{eq:master-stage2-two-point-object}
\end{align}
The first is a one-point resolvent problem; the second is a two-point
or source-differentiated resolvent problem.

For the ridgeless displacement
$
d_S:=\widehat\beta_S-b,
$
we have
\[
\widehat\beta_S-\beta^\star
=
(b-\beta^\star)+d_S.
\]
Consequently,
\begin{align}
\mathcal R(\widehat h_S)
=&
(b-\beta^\star)^\top\Sigma_p(b-\beta^\star)
+d_S^\top\Sigma_pd_S
+2(b-\beta^\star)^\top\Sigma_pd_S.
\label{eq:master-stage2-risk-expand}
\end{align}
Define
$
\mathcal R_S^{\mathrm{mim}}(b)
:=
\mathbb E_S
\left[
d_S^\top\Sigma_pd_S
\right],
\mathcal C_S^{\mathrm{cross}}(b)
:=
2\mathbb E_S
\left[
(b-\beta^\star)^\top\Sigma_pd_S
\right].
$
Then the conditional identity is
\begin{equation}
\label{eq:master-stage2-risk-conditional}
\mathbb E_S[\mathcal R(\widehat h_S)]
=
\mathcal R(h_b)
+\mathcal R_S^{\mathrm{mim}}(b)
+\mathcal C_S^{\mathrm{cross}}(b).
\end{equation}

Define the feature-dressed population covariance
$
\Sigma_{F,S}
:=
\Sigma_p^{1/2}G_S\Sigma_p^{1/2}.
$
For a positive semidefinite \(p\times p\) matrix \(A\), write
\begin{align}
\operatorname{df}_{A,p}^{(1)}(\kappa)
:=&
\operatorname{tr}
\left[
A(A+\kappa I_p)^{-1}
\right],
\nonumber\\
\operatorname{df}_{A,p}^{(2)}(\kappa)
:=&
\operatorname{tr}
\left[
A^2(A+\kappa I_p)^{-2}
\right].
\end{align}
At fixed \(F_S\), the sample average in
\cite[Eq.~(35)]{atanasov2026scaling} introduces the first effective
ridge \(\widehat\kappa_{1,\lambda}(F_S)>0\), determined by
\begin{equation}
\label{eq:master-stage2-kappa1}
\widehat\kappa_{1,\lambda}(F_S)
\left[
1-
\frac pm
\operatorname{df}_{\Sigma_{F,S},p}^{(1)}
\!\left(\widehat\kappa_{1,\lambda}(F_S)\right)
\right]
=
\lambda.
\end{equation}
The corresponding two-resolvent stability parameter is
\begin{equation}
\label{eq:master-stage2-gamma1-fixedF}
\widehat\gamma_{1,\lambda}(F_S)
:=
\frac pm
\operatorname{df}_{\Sigma_{F,S},p}^{(2)}
\!\left(\widehat\kappa_{1,\lambda}(F_S)\right).
\end{equation}

The positive-ridge mimic quadratic is exactly
\begin{align}
d_{S,\lambda}^\top\Sigma_p d_{S,\lambda}
=&
\lambda^2 b^\top
(\widetilde\Sigma G_S+\lambda I_p)^{-1}
\Sigma_p
(G_S\widetilde\Sigma+\lambda I_p)^{-1}b.
\label{eq:master-stage2-mimic-random}
\end{align}
To evaluate this product, introduce
\begin{equation}
\label{eq:master-stage2-tf-tilde}
\widetilde T_{b,F}(\kappa_1)
:=
b^\top\Sigma_p^{1/2}
(\Sigma_{F,S}+\kappa_1I_p)^{-1}
\Sigma_p^{1/2}b.
\end{equation}
The source-differentiation calculation of
\cite[Eqs.~(34)--(36)]{atanasov2026scaling} gives
\begin{equation}
\label{eq:master-stage2-data-averaged-mimic}
\mathbb E_{\widetilde X}
\!\left[
d_{S,\lambda}^\top\Sigma_p d_{S,\lambda}
\,\middle|\,F_S,b
\right]
\overset{\mathrm{DE}}{\simeq}
-
\frac{\widehat\kappa_{1,\lambda}(F_S)^2}
{1-\widehat\gamma_{1,\lambda}(F_S)}
\frac{\partial}{\partial\kappa_1}
\widetilde T_{b,F}(\kappa_1)
\bigg|_{\kappa_1=\widehat\kappa_{1,\lambda}(F_S)}.
\end{equation}
The derivative generates the two copies of the population resolvent,
while
\((1-\widehat\gamma_{1,\lambda}(F_S))^{-1}\)
is the corresponding two-point stability factor.

We now average over \(F_S\). Let
$
d_{1,\lambda}
:=
\operatorname{df}_p(\kappa_{2,\lambda}),
d_{2,\lambda}
:=
\operatorname{df}_p^{(2)}(\kappa_{2,\lambda}),
$
where \(\operatorname{df}_p\) and
\(\operatorname{df}_p^{(2)}\) are the Stage-I degrees-of-freedom
functions.
The feature average in
\cite[Eqs.~(37)--(38)]{atanasov2026scaling} introduces a second
effective ridge
\begin{equation}
\label{eq:master-stage2-kappa2-general}
\kappa_{2,\lambda}
=
\bar\kappa_{1,\lambda}
S_{G_S}(-d_{1,\lambda}),
\end{equation}
where \(S_{G_S}\) is the limiting \(S\)-transform of
\(G_S=F_SF_S^\top\).
For
$
F_S=\frac1{\sqrt{N_S}}U_S^\top,
$
the matrix \(G_S\) is a white Wishart Gram matrix with aspect ratio
\(p/N_S\).  Its \(S\)-transform, evaluated at
\(-d_{1,\lambda}\), is
\begin{equation}
\label{eq:master-stage2-white-S-transform}
S_{G_S}(-d_{1,\lambda})
=
\frac1{1-(p/N_S)d_{1,\lambda}}.
\end{equation}
Consequently,
\begin{equation}
\label{eq:master-stage2-kappa2-white}
\kappa_{2,\lambda}
=
\frac{\bar\kappa_{1,\lambda}}
{1-(p/N_S)d_{1,\lambda}}.
\end{equation}
After averaging over \(F_S\), introduce the deterministic first
effective ridge
\begin{equation}
\label{eq:master-stage2-kappa1-white}
\bar\kappa_{1,\lambda}
:=
\frac{\lambda}
{1-(p/m)d_{1,\lambda}}.
\end{equation}

Eliminating \(\bar\kappa_{1,\lambda}\) between
\eqref{eq:master-stage2-kappa2-white} and
\eqref{eq:master-stage2-kappa1-white} gives the central nested-ridge
equation
\begin{equation}
\label{eq:master-stage2-nested-ridge}
\lambda
=
\kappa_{2,\lambda}
\left[
1-\frac pm d_{1,\lambda}
\right]
\left[
1-\frac p{N_S}d_{1,\lambda}
\right],
\qquad
d_{1,\lambda}
=
\operatorname{df}_p(\kappa_{2,\lambda}).
\end{equation}
The first bracket is the sample renormalization and the second is the
random-feature renormalization. To state the feature-averaged mimic formula, define
$
\alpha_{S,\lambda}
:=
\frac{d\log\kappa_{2,\lambda}}
{d\log\bar\kappa_{1,\lambda}}.
$
The feature-averaged version of the sample stability parameter is
\begin{equation}
\label{eq:master-stage2-gamma1-feature-averaged}
\bar\gamma_{1,\lambda}
:=
\frac pm d_{1,\lambda}
\left[
1-
\frac{d_{1,\lambda}-d_{2,\lambda}}
{d_{1,\lambda}}
\alpha_{S,\lambda}
\right].
\end{equation}
For a deterministic teacher vector \(b\), define
$
T_b(\kappa)
:=
b^\top\Sigma_p(\Sigma_p+\kappa I_p)^{-1}b,
$
so that
$
-T_b'(\kappa)
=
b^\top\Sigma_p(\Sigma_p+\kappa I_p)^{-2}b.
$
Define the positive-ridge conditional mimic risk by
\begin{equation}
\label{eq:master-stage2-mimic-positive-def}
\mathcal R_{S,\lambda}^{\mathrm{mim}}(b)
:=
\mathbb E_S
\left[
d_{S,\lambda}^\top\Sigma_p d_{S,\lambda}
\right].
\end{equation}

Specializing \cite[Eq.~(40)]{atanasov2026scaling} to zero label noise
gives the deterministic equivalent
\begin{align}
\mathcal R_{S,\lambda}^{\mathrm{mim}}(b)
\overset{\mathrm{DE}}{\simeq}
{}&
-
\frac{
\kappa_{2,\lambda}^2
T_b'(\kappa_{2,\lambda})
}{
1-\bar\gamma_{1,\lambda}
}
\alpha_{S,\lambda}
+
\frac{
\kappa_{2,\lambda}
T_b(\kappa_{2,\lambda})
}{
1-\bar\gamma_{1,\lambda}
}
(1-\alpha_{S,\lambda}).
\label{eq:master-stage2-positive-ridge-mimic}
\end{align}
This formula is the explicit result of first averaging over the
Stage-II samples and then averaging over the random features. The same two subordinations applied to the one-point quantity in
\eqref{eq:master-stage2-one-point-object} give
\begin{equation}
\label{eq:master-stage2-Q-one-point-DE}
\mathbb E_S[Q_{S,\lambda}]
\overset{\mathrm{DE}}{\simeq}
\kappa_{2,\lambda}
(\Sigma_p+\kappa_{2,\lambda}I_p)^{-1}.
\end{equation}
Define the deterministic Stage-II ridgeless effective ridge by
$
\tau_{S,p}
:=
\lim_{\lambda\downarrow0}\kappa_{2,\lambda}.
$
Equation \eqref{eq:master-stage2-nested-ridge} shows algebraically that
this limit can be reached in three distinct branches.

\medskip\noindent\emph{No bottleneck.}
If
$
p<\min\{m,N_S\},
$
both brackets in \eqref{eq:master-stage2-nested-ridge} remain positive
at \(\kappa_{2,\lambda}=0\).  Hence
$
\tau_{S,p}=0.
$
In fact, the conclusion is exact: \(F_S\) has full row rank and
\(\widetilde X\) has full column rank almost surely, so
$
\widehat\beta_S=b.
$

\medskip\noindent\emph{Feature bottleneck.}
If
$
N_S<\min\{m,p\},
$
the feature bracket vanishes while the sample bracket remains
positive.  Thus \(\tau_{S,p}>0\) is the unique solution of
$
\operatorname{df}_p(\tau_{S,p})
=
\frac{N_S}{p}.
$
In this branch,
\[
\bar\kappa_{1,\lambda}\downarrow0,
\qquad
\kappa_{2,\lambda}\longrightarrow\tau_{S,p}>0,
\]
so
\[
\alpha_{S,\lambda}\longrightarrow0,
\qquad
\bar\gamma_{1,\lambda}\longrightarrow\frac{N_S}{m}.
\]
Taking the ridgeless limit in
\eqref{eq:master-stage2-positive-ridge-mimic} yields
\begin{equation}
\label{eq:master-stage2-mimic-feature}
\mathcal R_S^{\mathrm{mim}}(b)
\overset{\mathrm{DE}}{\simeq}
\frac{
\tau_{S,p}T_b(\tau_{S,p})
}{1-N_S/m}.
\end{equation}
This is exactly the zero-noise specialization of
\cite[Eq.~(42)]{atanasov2026scaling}.

\medskip\noindent\emph{Sample bottleneck.}
If
$
m<\min\{p,N_S\},
$
the sample bracket vanishes while the feature bracket remains positive.
Therefore \(\tau_{S,p}>0\) is the unique solution of
$
\operatorname{df}_p(\tau_{S,p})
=
\frac mp.
$
Let
$
\gamma_{S,p}^{\mathrm{samp}}
:=
\frac pm
\operatorname{df}_p^{(2)}(\tau_{S,p}).
$
The overparameterized ridgeless formula
\cite[Eq.~(44)]{atanasov2026scaling}, with zero label noise, gives
\begin{align}
\mathcal R_S^{\mathrm{mim}}(b)
\overset{\mathrm{DE}}{\simeq}
{}&
\frac{-\tau_{S,p}^2T_b'(\tau_{S,p})}
{1-\gamma_{S,p}^{\mathrm{samp}}}
+
\tau_{S,p}T_b(\tau_{S,p})
\frac{d\log S_{G_S}(-d)}{d\log d}
\bigg|_{d=m/p}.
\end{align}
For the white random features,
$
S_{G_S}(-d)
=
\frac1{1-(p/N_S)d},
$
and therefore
\begin{align}
\left.
\frac{d\log S_{G_S}(-d)}{d\log d}
\right|_{d=m/p}
=&
\left.
\frac{(p/N_S)d}{1-(p/N_S)d}
\right|_{d=m/p}
=
\frac{m/N_S}{1-m/N_S}.
\end{align}
Substitution gives
\begin{equation}
\label{eq:master-stage2-mimic-sample}
\begin{aligned}
\mathcal R_S^{\mathrm{mim}}(b)
\overset{\mathrm{DE}}{\simeq}
{}&
\frac{
-\tau_{S,p}^2T_b'(\tau_{S,p})
}{
1-(p/m)\operatorname{df}_p^{(2)}(\tau_{S,p})
}
+
\frac{
\tau_{S,p}T_b(\tau_{S,p})(m/N_S)
}{1-m/N_S}.
\end{aligned}
\end{equation}
The first term is the sample-projection contribution.  Its stability
denominator comes from the two-point/source differentiation in the
data average.  The second term is the finite-random-feature
contribution and comes from the logarithmic derivative of the white
Wishart \(S\)-transform.  In particular, the denominator
$
1-\frac pm\operatorname{df}_p^{(2)}(\tau_{S,p})
$
does not follow merely by differentiating the scalar function
\(\operatorname{df}_p(\tau)\); it is the two-point stability factor in
\cite[Eq.~(44)]{atanasov2026scaling}.

Define the positive-ridge conditional cross term by
\begin{align}
\mathcal C_{S,\lambda}^{\mathrm{cross}}(b)
:=&
2\mathbb E_S
\left[
(b-\beta^\star)^\top
\Sigma_p d_{S,\lambda}
\right]
=
-2(b-\beta^\star)^\top
\Sigma_p
\mathbb E_S[Q_{S,\lambda}]b,
\label{eq:master-stage2-cross-positive-def}
\end{align}
Using \eqref{eq:master-stage2-Q-one-point-DE} and then sending
\(\lambda\downarrow0\),
\begin{equation}
\label{eq:master-stage2-cross-conditional-DE}
\mathcal C_S^{\mathrm{cross}}(b)
\overset{\mathrm{DE}}{\simeq}
-2\tau_{S,p}
(b-\beta^\star)^\top
\Sigma_p
(\Sigma_p+\tau_{S,p}I_p)^{-1}b.
\end{equation}

Set \(b=\widehat\beta_T\). Throughout this subsection,
\(\mathbb E_T\) denotes expectation over the Stage-I randomness
\(U_T\). Define the exact Stage-I-averaged Stage-II contributions by

\begin{align}
\mathcal M_{S,p}
:=&
\mathbb E_T
\left[
\mathcal R_S^{\mathrm{mim}}(\widehat\beta_T)
\right],
\nonumber\\
\mathcal C_{S,p}^{\mathrm{cross}}
:=&
\mathbb E_T
\left[
\mathcal C_S^{\mathrm{cross}}(\widehat\beta_T)
\right].
\end{align}
Recall
$
S_{T,p}
=
\mathbb E\|\theta_{T,\mathrm{str}}\|_2^2,
V_{T,p}
=
\mathbb E\|\theta_{T,\mathrm{wk}}\|_2^2,
$
where
$
\theta_T=\Sigma_p^{1/2}\widehat\beta_T.
$
Because
$
\widehat\beta_{T,\mathrm{str}}
=
\theta_{T,\mathrm{str}},
\delta_p
\mathbb E\|\widehat\beta_{T,\mathrm{wk}}\|_2^2
=
V_{T,p},
$
the conditional target functions average exactly as
\begin{align}
\mathbb E_T[T_{\widehat\beta_T}(\tau)]
=&
\frac{S_{T,p}}{1+\tau}
+
\frac{V_{T,p}}{\delta_p+\tau}
=:
\mathcal T_{T,p}^{(1)}(\tau),
\label{eq:master-stage2-T1}
\\
\mathbb E_T[-T_{\widehat\beta_T}'(\tau)]
=&
\frac{S_{T,p}}{(1+\tau)^2}
+
\frac{V_{T,p}}{(\delta_p+\tau)^2}
=:
\mathcal T_{T,p}^{(2)}(\tau).
\label{eq:master-stage2-T2}
\end{align}
The cross target has a stronger exact simplification. Define
\[
\mathcal B_{T,p}(\tau)
:=
\mathbb E_T
\left[
(\widehat\beta_T-\beta^\star)^\top
\Sigma_p(\Sigma_p+\tau I_p)^{-1}
\widehat\beta_T
\right].
\]
The exact Stage-I projection identity gives
$
(\theta_{T,\mathrm{str}}-\beta^\star_{\mathrm{str}})^\top
\theta_{T,\mathrm{str}}
=
-\|\theta_{T,\mathrm{wk}}\|_2^2.
$
Hence the strong part of \(\mathcal B_{T,p}(\tau)\) is
$
-\frac{V_{T,p}}{1+\tau},
$
whereas the weak part is
$
\frac{V_{T,p}}{\delta_p+\tau}.
$
Therefore
\begin{equation}
\label{eq:master-stage2-BT-exact}
\mathcal B_{T,p}(\tau)
=
V_{T,p}\Delta_p(\tau),
\qquad
\Delta_p(\tau)
:=
\frac1{\delta_p+\tau}
-
\frac1{1+\tau}.
\end{equation}
Since \(0<\delta_p<1\), one has \(\Delta_p(\tau)>0\).
Let
$
\mathcal T_{T,p}^{(r),\mathrm{DE}}(\tau),
r\in\{1,2\},
$
denote \eqref{eq:master-stage2-T1}--\eqref{eq:master-stage2-T2}
with \(S_{T,p},V_{T,p}\) replaced by their Stage-I deterministic
equivalents.
\begin{equation}
\label{eq:master-stage2-M-DE}
\mathcal M_{S,p}^{\mathrm{DE}}
=
\begin{cases}
0,
&p<\min\{m,N_S\},
\\[3mm]
\displaystyle
\frac{
\tau_{S,p}
\mathcal T_{T,p}^{(1),\mathrm{DE}}(\tau_{S,p})
}{1-N_S/m},
&N_S<\min\{m,p\},
\\[5mm]
\displaystyle
\frac{
\tau_{S,p}^2
\mathcal T_{T,p}^{(2),\mathrm{DE}}(\tau_{S,p})
}{
1-(p/m)\operatorname{df}_p^{(2)}(\tau_{S,p})
}
+
\frac{
\tau_{S,p}
\mathcal T_{T,p}^{(1),\mathrm{DE}}(\tau_{S,p})
(m/N_S)
}{1-m/N_S},
&m<\min\{p,N_S\}.
\end{cases}
\end{equation}
Likewise, \eqref{eq:master-stage2-cross-conditional-DE} and
\eqref{eq:master-stage2-BT-exact} give
\begin{equation}
\label{eq:master-stage2-cross-averaged-DE}
\mathcal C_{S,p}^{\mathrm{cross}}
\overset{\mathrm{DE}}{\simeq}
\mathcal C_{S,p}^{\mathrm{cross},\mathrm{DE}}
:=
-2\tau_{S,p}
V_{T,p}^{\mathrm{DE}}
\Delta_p(\tau_{S,p}).
\end{equation}

% ============================================================
\begin{theorem}[Stage-II risk and weak-to-strong gain]
\label{thm:master-stage2-risk}
% ============================================================
Under the proportional-regime assumptions of the Stage-I theorem and
the linear-random-feature deterministic-equivalent assumptions of
\cite[Sec.~IV]{atanasov2026scaling},
\begin{equation}
\label{eq:master-stage2-risk-DE}
\overline{\mathcal R}_{S,p}
\overset{\mathrm{DE}}{\simeq}
\overline{\mathcal R}_{S,p}^{\mathrm{DE}}
:=
\overline{\mathcal R}_{T,p}^{\mathrm{DE}}
+
\mathcal M_{S,p}^{\mathrm{DE}}
+
\mathcal C_{S,p}^{\mathrm{cross},\mathrm{DE}}.
\end{equation}
Consequently,
\begin{equation}
\label{eq:master-stage2-gain-DE}
\mathsf{W2SG}_p
\overset{\mathrm{DE}}{\simeq}
\mathsf{W2SG}_p^{\mathrm{DE}}
:=
\overline{\mathcal R}_{T,p}^{\mathrm{DE}}
-
\overline{\mathcal R}_{S,p}^{\mathrm{DE}}.
\end{equation}
\end{theorem}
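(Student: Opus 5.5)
The plan is to assemble the theorem from the exact conditional decomposition \eqref{eq:master-stage2-risk-conditional}, followed by the Stage-II and Stage-I deterministic equivalents already derived.

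\emph{Exact reduction.} Condition on $U_T$ and set $b=\widehat\beta_T$. Identity \eqref{eq:master-stage2-risk-conditional} is exact:
\[
\mathbb E_S[\mathcal R(\widehat h_S)]=\mathcal R(h_b)+\mathcal R_S^{\mathrm{mim}}(b)+\mathcal C_S^{\mathrm{cross}}(b).
\]
Taking $\mathbb E_T$ gives, with no approximation,
\[
\overline{\mathcal R}_{S,p}=\overline{\mathcal R}_{T,p}+\mathcal M_{S,p}+\mathcal C^{\mathrm{cross}}_{S,p}.
\]
The first term is then replaced by $\overline{\mathcal R}_{T,p}^{\mathrm{DE}}=\tau_{T,p}/(1+\tau_{T,p})$ using Theorem~\ref{thm:case-II-teacher-DE}. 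The remaining task is to show that the two Stage-I-averaged Stage-II terms have the stated equivalents.

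\emph{Mimic and cross terms.} In each bottleneck branch, the conditional mimic equivalent (\eqref{eq:master-stage2-mimic-feature} or \eqref{eq:master-stage2-mimic-sample}) is an affine combination of $T_b(\tau_{S,p})$ and $-T_b'(\tau_{S,p})$. Its coefficients depend only on $\Sigma_p,m,N_S$, not on $b$. These are quadratic forms $b^\top W b$ with deterministic $W$ diagonal in the two blocks. By the exact block structure \eqref{eq:master-M1-block} of $\mathsf M_{T,p}$, their $U_T$-averages are exactly $\mathcal T^{(1)}_{T,p}$ and $\mathcal T^{(2)}_{T,p}$, which are linear in $(S_{T,p},V_{T,p})$. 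Substituting $S^{\mathrm{DE}}_{T,p},V^{\mathrm{DE}}_{T,p}$ then gives \eqref{eq:master-stage2-M-DE}. The cross term \eqref{eq:master-stage2-cross-conditional-DE} is likewise a quadratic form in $b$ (plus a bilinear term in $\beta^\star$ and $b$). Its average is given exactly by \eqref{eq:master-stage2-BT-exact}, and replacing $V_{T,p}$ by $V^{\mathrm{DE}}_{T,p}$ yields $\mathcal C^{\mathrm{cross,DE}}_{S,p}$. In the no-bottleneck branch, Proposition~\ref{prop:no-stage-II-bottleneck} gives $d_S=0$ almost surely, so both terms vanish.

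\emph{Main obstacle: interchanging $\mathbb E_T$ with the Stage-II equivalence.} The Stage-II equivalents are stated for deterministic $b$, whereas $b=\widehat\beta_T$ is random. I would use two facts. First, $U_T$ is independent of $(U_S,\widetilde X)$, so conditional on $U_T$ the Stage-II equivalence applies pathwise, with test matrices $bb^\top$ and $\beta^\star b^\top$. Second, these test matrices have uniformly bounded trace norm: $\|\Sigma_p^{1/2}b\|=\|P_{T,p}\theta^\star\|\le 1$. The weak-block coefficients are of size $\delta_p^{-1}$, but they enter only through $\delta_p\|b_{\mathrm{wk}}\|^2=\|\theta_{T,\mathrm{wk}}\|^2\le 1$, so everything remains bounded. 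Together these give uniform integrability, so the conditional convergence in probability passes to the expectation, as in the argument for $b_{T,p}$ in Theorem~\ref{thm:case-II-teacher-DE}. The bracketed denominators $1-N_S/m$, $1-m/N_S$, and $1-\gamma^{\mathrm{samp}}_{S,p}$ stay bounded away from zero under the regular-ratio conditions of Assumption~\ref{ass:positive-fraction}, so replacing $S,V$ by their equivalents is a continuous operation.

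Finally, \eqref{eq:master-stage2-gain-DE} is immediate: subtracting gives $\mathsf{W2SG}^{\mathrm{DE}}_p=-\mathcal M^{\mathrm{DE}}_{S,p}-\mathcal C^{\mathrm{cross,DE}}_{S,p}$. The teacher risk cancels exactly, as in Case~I.
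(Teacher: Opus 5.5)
Your skeleton coincides with the paper's proof. The paper averages the exact conditional identity \eqref{eq:master-stage2-risk-conditional} over $U_T$ and substitutes the Stage-I equivalents together with \eqref{eq:master-stage2-M-DE} and \eqref{eq:master-stage2-cross-averaged-DE}; it never justifies the interchange you single out.

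The argument you give for that interchange, however, controls errors at the wrong scale. The bounds $\|\theta_T\|_2\le1$ and $\|\theta_{T,\mathrm{wk}}\|_2^2\le1$ give uniform integrability only at order one. They therefore yield only that $\mathcal M_{S,p}$ differs from the Stage-I average of the conditional mimic equivalent by $o(1)$, and similarly for the cross term. On the branches where the active bottleneck exceeds $\vartheta$, both $\tau_{S,p}/\delta_p$ and $V^{\mathrm{DE}}_{T,p}/\delta_p$ converge to positive constants. Hence both Stage-II terms, and $\mathsf{W2SG}^{\mathrm{DE}}_p$ itself, are of order $\delta_p\sim\eta/((1-\vartheta)p)$; when $\rho_T>\vartheta$, even $\overline{\mathcal R}^{\mathrm{DE}}_{T,p}$ is of this order. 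An additive $o(1)$ error is vacuous there and cannot transfer the sign of $\mathsf{W2SG}^{\mathrm{DE}}_p$ to $\mathsf{W2SG}_p$. The analogy with $b_{T,p}$ breaks because $b_{T,p}\simeq a_{T,p}$ is of order one. A relative version of your route would need uniform integrability of $\|\theta_{T,\mathrm{wk}}\|_2^2/\delta_p$, plus relative-precision Stage-II equivalents holding uniformly in $b$; neither is established.

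The interchange can be avoided altogether. Write $Q_S:=I_p-F_S(\widetilde XF_S)^\dagger\widetilde X$, so that $d_S=-Q_S\widehat\beta_T$ with $Q_S$ independent of $U_T$. Fubini then gives the exact identities
$\mathcal M_{S,p}=\operatorname{Tr}(K_p\mathsf M_{T,p})$ and $\mathcal C^{\mathrm{cross}}_{S,p}=-2\operatorname{Tr}\bigl(\Sigma_p\bar Q_p(\mathsf M_{T,p}-\mu_{T,p}\beta^{\star\top})\bigr)$,
where $K_p:=\mathbb E_S[Q_S^\top\Sigma_pQ_S]$ and $\bar Q_p:=\mathbb E_S[Q_S]$ are deterministic.

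For $D\in O(k)\times O(p-k)$, the map $(\widetilde X,U_S)\mapsto(\widetilde XD,U_SD)$ preserves the Stage-II law and sends $Q_S\mapsto D^\top Q_SD$. By the same argument that gave \eqref{eq:master-M1-block}, $K_p$ and $\bar Q_p=\operatorname{diag}(\bar q_{\mathrm{str}}I_k,\bar q_{\mathrm{wk}}I_{p-k})$ are therefore scalar on each block. Both Stage-II terms are then exactly linear in $(S_{T,p},V_{T,p})$. For example, $\beta^{\star\top}_{\mathrm{str}}\mu_{T,\mathrm{str},p}=S_{T,p}+V_{T,p}$ gives $\mathcal C^{\mathrm{cross}}_{S,p}=-2(\bar q_{\mathrm{wk}}-\bar q_{\mathrm{str}})V_{T,p}$, where the scalar factor has equivalent $\tau_{S,p}\Delta_p(\tau_{S,p})$. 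What remains is a relative-precision deterministic equivalent for four deterministic block scalars and for $S_{T,p}$ and $V_{T,p}$; no uniform integrability is needed.
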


\begin{proof}
Averaging \eqref{eq:master-stage2-risk-conditional} over Stage I gives
the exact identity
$
\overline{\mathcal R}_{S,p}
=
\overline{\mathcal R}_{T,p}
+
\mathcal M_{S,p}
+
\mathcal C_{S,p}^{\mathrm{cross}}.
$
Substitute the Stage-I risk deterministic equivalent,
\eqref{eq:master-stage2-M-DE}, and
\eqref{eq:master-stage2-cross-averaged-DE} to obtain
\eqref{eq:master-stage2-risk-DE}.  Finally, from
$
\mathsf{W2SG}_p
=
\overline{\mathcal R}_{T,p}
-
\overline{\mathcal R}_{S,p},
$
replacing the teacher and student risks by their deterministic
equivalents yields
\[
\mathsf{W2SG}_p
\overset{\mathrm{DE}}{\simeq}
\mathsf{W2SG}_p^{\mathrm{DE}}
=
\overline{\mathcal R}_{T,p}^{\mathrm{DE}}
-
\overline{\mathcal R}_{S,p}^{\mathrm{DE}},
\]
which is \eqref{eq:master-stage2-gain-DE}.
\end{proof}
\subsection{Positive-signal-fraction regime: assumptions and phase diagram}
\label{subsec:positive-signal-fraction}

\begin{assumption}[Proportional asymptotic scaling]
\label{assB:positive-fraction}
Let $d_p:=p-k$ and assume
\begin{equation}
\label{eqB:ass-positive-rates}
\frac{k}{p}\to\vartheta,\qquad
d_p\delta_p\to\eta,\qquad
\frac{N_T}{p}\to\rho_T,\qquad
\frac{m}{p}\to\rho_m,\qquad
\frac{N_S}{p}\to\rho_S,
\end{equation}
where $\vartheta,\rho_T\in(0,1)$, $\eta\in(0,\infty)$, and
$\rho_m,\rho_S\in(0,\infty)$. We consider the regular fixed-ratio regime
in which $\rho_T\neq\vartheta$, $\rho_m\neq\rho_S$,
$\rho_m,\rho_S\neq\vartheta$, and $\rho_m,\rho_S\neq1$. We exclude
these critical boundaries from the fixed-ratio analysis; the relevant
finite-dimensional and critical-scaling phenomena are discussed separately
in Section~\ref{sec:case-II-critical-boundaries}.
\end{assumption}

\begin{theorem}[Positive-signal-fraction phase diagram within the master DE]
\label{thm:positive-fixed-ratio-DE}
Under Assumption~\ref{assB:positive-fraction} and the conditions of the
Stage-I and Stage-II deterministic equivalents, let
$\mathsf{W2SG}_p^{\mathrm{DE}}$ denote the master deterministic equivalent
of the teacher-minus-student prediction risk. Define
\begin{equation}
\label{eq:positive-LambdaT-theorem}
\Lambda_T:=
\begin{cases}
\vartheta-\rho_T, & 0<\rho_T<\vartheta,\\[1mm]
\dfrac{(1-\vartheta)(\rho_T-\vartheta)}{1-\rho_T},
& \vartheta<\rho_T<1.
\end{cases}
\end{equation}

\noindent\textnormal{(a) Student bottleneck below the signal dimension.}
If $\min\{\rho_m,\rho_S\}<\vartheta$, then
$\mathsf{W2SG}_p^{\mathrm{DE}}<0$ for all sufficiently large $p$.

\medskip
\noindent\textnormal{(b) Sample bottleneck above the signal dimension.}
Suppose $\vartheta<\rho_m<\min\{1,\rho_S\}$. If the left-hand side of
\begin{equation}
\label{eq:positive-theorem-SB-sign}
(\rho_m-2\vartheta)\rho_S
-\rho_m(2\rho_m-3\vartheta+\Lambda_T)>0
\end{equation}
is nonzero, then $\mathsf{W2SG}_p^{\mathrm{DE}}>0$ for all sufficiently
large $p$ if and only if \eqref{eq:positive-theorem-SB-sign} holds.
Equivalently, $\mathsf{W2SG}_p^{\mathrm{DE}}>0$ if and only if
\begin{equation}
\label{eq:positive-theorem-SB-quadratic}
2\rho_m^2-(\rho_S+3\vartheta-\Lambda_T)\rho_m+2\vartheta\rho_S<0.
\end{equation}
Define
$D_{\mathrm{SB}}:=(\rho_S+3\vartheta-\Lambda_T)^2-16\vartheta\rho_S$.
If $D_{\mathrm{SB}}>0$, let
\[
\rho_{m,\pm}:=
\frac{\rho_S+3\vartheta-\Lambda_T\pm\sqrt{D_{\mathrm{SB}}}}{4}.
\]
Then the open positive phase on the sample-bottleneck branch is
$\rho_m\in(\rho_{m,-},\rho_{m,+})\cap
(\vartheta,\min\{1,\rho_S\})$. If $D_{\mathrm{SB}}\le0$, or if this
intersection is empty, there is no open positive phase on this branch.

\medskip
\noindent\textnormal{(c) Feature bottleneck above the signal dimension.}
Suppose $\vartheta<\rho_S<\min\{1,\rho_m\}$. If the left-hand side of
\begin{equation}
\label{eq:positive-theorem-FB-sign}
(\rho_m-2\rho_S)(\rho_S-\vartheta)-\rho_m\Lambda_T>0
\end{equation}
is nonzero, then $\mathsf{W2SG}_p^{\mathrm{DE}}>0$ for all sufficiently
large $p$ if and only if \eqref{eq:positive-theorem-FB-sign} holds.
Equivalently, if $\rho_S-\vartheta-\Lambda_T>0$, the open positive phase is
\begin{equation}
\label{eq:positive-theorem-FB-phase}
\rho_m>
\frac{2\rho_S(\rho_S-\vartheta)}
{\rho_S-\vartheta-\Lambda_T}.
\end{equation}
If $\rho_S-\vartheta-\Lambda_T\le0$, then
$\mathsf{W2SG}_p^{\mathrm{DE}}<0$ for all sufficiently large $p$
throughout the feature-bottleneck branch.

\medskip
\noindent\textnormal{(d) No student bottleneck.}
If $1<\min\{\rho_m,\rho_S\}$, then, under the almost-sure full-rank
conditions of the two-stage model,
\[
\widehat\beta_S=\widehat\beta_T,\qquad \mathsf{W2SG}_p=0
\]
almost surely for all sufficiently large $p$.
\end{theorem}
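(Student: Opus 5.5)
The plan is to start from the master deterministic equivalent of Theorem~\ref{thm:master-stage2-risk}. Since $\overline{\mathcal R}_{T,p}^{\mathrm{DE}}$ cancels, the gain is
\[
\mathsf{W2SG}_p^{\mathrm{DE}}=-\mathcal M_{S,p}^{\mathrm{DE}}-\mathcal C_{S,p}^{\mathrm{cross},\mathrm{DE}}
=2\tau_{S,p}V_{T,p}^{\mathrm{DE}}\Delta_p(\tau_{S,p})-\mathcal M_{S,p}^{\mathrm{DE}},
\]
with $\mathcal M_{S,p}^{\mathrm{DE}}$ given branch by branch in \eqref{eq:master-stage2-M-DE}. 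The proof is then a careful asymptotic evaluation of the three ingredients $\tau_{S,p}$, $(S_{T,p}^{\mathrm{DE}},V_{T,p}^{\mathrm{DE}})$ and $\Delta_p$ under Assumption~\ref{assB:positive-fraction}. The scale-setting fact is $\delta_p\sim\eta/d_p=\Theta(1/p)$.

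\textbf{Stage I.} In the equation $k/(1+\tau)+d_p\delta_p/(\delta_p+\tau)=N_T$ the two regimes of $\rho_T$ behave differently.
\begin{itemize}
\item If $\rho_T<\vartheta$, the weak term is $O(1)$. Hence $\tau_{T,p}\to\vartheta/\rho_T-1$ and $a_T\to\rho_T/\vartheta$. Moreover $\Theta_{T,p}=\Theta(1/p)$, so $V_{T,p}^{\mathrm{DE}}=\Theta(1/p)$; in fact $pV_T$ has an explicit limit involving $\eta$.
\item If $\rho_T>\vartheta$, write $\tau_{T,p}=\delta_p u_T$. Then $1/(1+u_T)\to(\rho_T-\vartheta)/(1-\vartheta)$, so $\tau_{T,p}=\Theta(1/p)$, $a_T\to1$, and $c_T=\Theta(1/p)$ while $\Theta_T\to1$.
\end{itemize}
Either way, I expect $V_{T,p}^{\mathrm{DE}}$, and the residual teacher risk, to be of order $1/p$, with a normalized limit that reduces to $\Lambda_T$. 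Concretely, I expect $\Lambda_T$ to be the limit of $p\,V_T^{\mathrm{DE}}/\eta$, or of the ratio of teacher weak energy to $\delta_p$-scale, up to a harmless factor. The first step is to verify this identification, which is why $\Lambda_T$ has two formulas.

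\textbf{Stage II.} The same dichotomy applies to $\tau_{S,p}$, solving $\operatorname{df}_p(\tau)=r/p$ with $r=\min\{m,N_S\}$.
\begin{itemize}
\item \emph{Part (a).} If $r/p\to\rho<\vartheta$, then $\tau_{S,p}\to\vartheta/\rho-1>0$ is order one. In $\mathcal M$ the strong energy $S_T\approx a_T$ contributes a positive $\Theta(1)$ term $\tau_S S_T/(1+\tau_S)$ divided by a positive stability factor. The cross term is $2\tau_S V_T\Delta\approx 2V_T$ times $O(1)$, which is $O(1/p)$ in the first Stage-I case. In the second Stage-I case the weak part of $\mathcal M$ is of the same order as the cross term. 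So the order-one strong signal loss dominates, giving $\mathsf{W2SG}_p^{\mathrm{DE}}<0$ eventually.
\item \emph{Parts (b), (c).} If $\rho>\vartheta$, set $\tau_{S,p}=\delta_p u_S$ with $1/(1+u_S)=(\rho-\vartheta)/(1-\vartheta)$. Then everything becomes $\Theta(1/p)$: $\tau_S\mathcal T^{(1)}$, $\tau_S^2\mathcal T^{(2)}$ and the cross term $2\tau_SV_T\Delta\approx 2V_T u_S/(1+u_S)$. Multiplying the gain by $p$ and taking limits leaves a rational function of $(\vartheta,\rho_m,\rho_S,\Lambda_T)$ times a positive factor.
\end{itemize}

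\textbf{Branch-by-branch limits.}
\begin{itemize}
\item \emph{Feature bottleneck.} The stability factor is $1/(1-\rho_S/\rho_m)$, and clearing the positive denominators should produce $(\rho_m-2\rho_S)(\rho_S-\vartheta)-\rho_m\Lambda_T$. The half-line form \eqref{eq:positive-theorem-FB-phase} then follows by solving a linear inequality in $\rho_m$, with a sign split on $\rho_S-\vartheta-\Lambda_T$.
\item \emph{Sample bottleneck.} Two factors must be evaluated. One is $1-(p/m)\operatorname{df}_p^{(2)}(\tau_S)$, whose limit is computed from the two-block formula at $\tau_S=\delta_pu_S$. The other is $m/N_S\to\rho_m/\rho_S$. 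These should give $(\rho_m-2\vartheta)\rho_S-\rho_m(2\rho_m-3\vartheta+\Lambda_T)$. Expanding it yields the quadratic \eqref{eq:positive-theorem-SB-quadratic} in $\rho_m$, whose roots are $\rho_{m,\pm}$ with discriminant $D_{\mathrm{SB}}$.
\end{itemize}
The "nonzero left-hand side" hypothesis is exactly what lets the sign of the limit transfer to all sufficiently large $p$.

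\textbf{Part (d)} is Proposition~\ref{prop:no-stage-II-bottleneck} verbatim.

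\textbf{Main obstacle.} The hard step is the leading-order bookkeeping in (b) and (c). Several $\Theta(1/p)$ contributions must be tracked exactly, and the large factors $\Delta_p=\Theta(p)$ and $1/(\delta_p+\tau)$ multiply small quantities. In particular the $\operatorname{df}^{(2)}$ stability factor on the sample branch must be evaluated to $O(1)$ precision. My first attempt will be to work entirely in the rescaled variables $u_T,u_S$ and the combination $d_p\delta_p\to\eta$, and check that $\eta$ cancels from the final sign expression, as the statement requires.
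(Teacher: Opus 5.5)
Your plan follows the paper's proof essentially step for step: start from $\mathsf{W2SG}_p^{\mathrm{DE}}=2\tau_{S,p}V_{T,p}^{\mathrm{DE}}\Delta_p(\tau_{S,p})-\mathcal M_{S,p}^{\mathrm{DE}}$, resolve $\tau_{T,p}$ and $\tau_{S,p}$ on the order-one versus $\delta_p$-scale branches, take the limits branch by branch, and prove (d) by the full-rank argument; the only difference is that the paper normalizes by $V_{T,p}^{\mathrm{DE}}A(\rho)$ with $A(\rho)=(1-\rho)/(1-\vartheta)$ instead of multiplying by $p$. Your guessed identification of $\Lambda_T$ is wrong, though: it is not a rescaled weak energy but $\Lambda_T=(1-\vartheta)\lim_{p\to\infty}\delta_pS_{T,p}^{\mathrm{DE}}/V_{T,p}^{\mathrm{DE}}$, the $\delta_p$-rescaled strong-to-weak teacher energy ratio, and it enters through the $\tau_{S,p}S_{T,p}^{\mathrm{DE}}/(1+\tau_{S,p})$ piece of $\mathcal T^{(1),\mathrm{DE}}_{T,p}$; also, the teacher risk is order one rather than $O(1/p)$ when $\rho_T<\vartheta$, which is harmless because it cancels from the gain.
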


\begin{proof}
Under Assumption~\ref{assB:positive-fraction},
$\delta_p\sim\eta/((1-\vartheta)p)\to0$. By the Stage-II master
deterministic equivalent,
\begin{equation}
\label{eq:positive-master-gain}
\mathsf{W2SG}_p^{\mathrm{DE}}
=
2\tau_{S,p}V_{T,p}^{\mathrm{DE}}\Delta_p(\tau_{S,p})
-\mathcal M_{S,p}^{\mathrm{DE}}.
\end{equation}

\noindent\textbf{Stage-I asymptotics.}
The equation $\operatorname{df}_p(\tau_{T,p})=N_T/p$ is equivalent to
\[
\frac{N_T}{p}\tau_{T,p}^2
+\left[
\frac{N_T}{p}-\frac{k}{p}
+\delta_p\left(\frac{N_T}{p}+\frac{k}{p}-1\right)
\right]\tau_{T,p}
-\left(1-\frac{N_T}{p}\right)\delta_p=0.
\]
If $0<\rho_T<\vartheta$, its positive root satisfies
$\tau_{T,p}\to(\vartheta-\rho_T)/\rho_T$, whereas if
$\vartheta<\rho_T<1$, writing $\tau_{T,p}=\delta_p t_{T,p}$ gives
$\tau_{T,p}/\delta_p\to(1-\rho_T)/(\rho_T-\vartheta)$. Substitution into
the Stage-I deterministic equivalents yields
\[
\left(S_{T,p}^{\mathrm{DE}},
\frac{V_{T,p}^{\mathrm{DE}}}{\delta_p}\right)
\longrightarrow
\begin{cases}
\left(\dfrac{\rho_T}{\vartheta},
\dfrac{(1-\vartheta)\rho_T}{\vartheta(\vartheta-\rho_T)}\right),
&0<\rho_T<\vartheta,\\[3mm]
\left(1,\dfrac{1-\rho_T}{\rho_T-\vartheta}\right),
&\vartheta<\rho_T<1.
\end{cases}
\]
Hence, with $\Lambda_T$ as in \eqref{eq:positive-LambdaT-theorem},
\begin{equation}
\label{eq:positive-teacher-summary}
\frac{\delta_pS_{T,p}^{\mathrm{DE}}}{V_{T,p}^{\mathrm{DE}}}
\longrightarrow\frac{\Lambda_T}{1-\vartheta},\qquad
V_{T,p}^{\mathrm{DE}}\longrightarrow0,\qquad
\frac{\delta_p^2S_{T,p}^{\mathrm{DE}}}{V_{T,p}^{\mathrm{DE}}}
\longrightarrow0.
\end{equation}

\noindent\textbf{Bottleneck below the signal dimension.}
Write $s_T:=\lim_{p\to\infty}S_{T,p}^{\mathrm{DE}}>0$. First suppose
$\rho_m<\min\{\vartheta,\rho_S\}$. Then
$\tau_{S,p}\to(\vartheta-\rho_m)/\rho_m$ and
$\operatorname{df}_p^{(2)}(\tau_{S,p})\to\rho_m^2/\vartheta$.
The beneficial term in \eqref{eq:positive-master-gain} tends to zero,
while substitution into the Stage-II mimicry term gives
\[
\mathcal M_{S,p}^{\mathrm{DE}}
\longrightarrow
s_T\frac{\vartheta-\rho_m}{\vartheta}
\frac{\rho_S}{\rho_S-\rho_m},
\]
and therefore
\[
\mathsf{W2SG}_p^{\mathrm{DE}}
\longrightarrow
-s_T\frac{\vartheta-\rho_m}{\vartheta}
\frac{\rho_S}{\rho_S-\rho_m}<0.
\]
If instead $\rho_S<\min\{\vartheta,\rho_m\}$, then
$\tau_{S,p}\to(\vartheta-\rho_S)/\rho_S$, and the same substitution on
the feature-bottleneck branch gives
\[
\mathsf{W2SG}_p^{\mathrm{DE}}
\longrightarrow
-s_T\frac{\vartheta-\rho_S}{\vartheta}
\frac{\rho_m}{\rho_m-\rho_S}<0.
\]
This proves part~\textnormal{(a)}.

\noindent\textbf{Bottleneck above the signal dimension.}
For an active Stage-II bottleneck ratio $\rho\in(\vartheta,1)$, the
fixed-point equation gives
\[
\frac{\tau_{S,p}}{\delta_p}\longrightarrow
\frac{1-\rho}{\rho-\vartheta},
\qquad
\tau_{S,p}\Delta_p(\tau_{S,p})
\longrightarrow
A(\rho):=\frac{1-\rho}{1-\vartheta}.
\]
Using \eqref{eq:positive-teacher-summary},
\begin{equation}
\label{eq:positive-above-signal-targets}
\frac{\tau_{S,p}\mathcal T_{T,p}^{(1),\mathrm{DE}}(\tau_{S,p})}
{V_{T,p}^{\mathrm{DE}}}
\longrightarrow
A(\rho)\left(1+\frac{\Lambda_T}{\rho-\vartheta}\right),
\qquad
\frac{\tau_{S,p}^2\mathcal T_{T,p}^{(2),\mathrm{DE}}(\tau_{S,p})}
{V_{T,p}^{\mathrm{DE}}}
\longrightarrow A(\rho)^2.
\end{equation}

On the sample-bottleneck branch
$\vartheta<\rho_m<\min\{1,\rho_S\}$, take $\rho=\rho_m$. Moreover,
\[
\operatorname{df}_p^{(2)}(\tau_{S,p})
\longrightarrow
\vartheta+\frac{(\rho_m-\vartheta)^2}{1-\vartheta},
\qquad
1-\frac{p}{m}\operatorname{df}_p^{(2)}(\tau_{S,p})
\longrightarrow
\frac{(\rho_m-\vartheta)(1-\rho_m)}
{\rho_m(1-\vartheta)}.
\]
Together with $m/(N_S-m)\to\rho_m/(\rho_S-\rho_m)$, substitution into
\eqref{eq:positive-master-gain} gives
\begin{align}
\frac{\mathsf{W2SG}_p^{\mathrm{DE}}}
{V_{T,p}^{\mathrm{DE}}A(\rho_m)}
&\longrightarrow
2-\frac{\rho_m}{\rho_m-\vartheta}
-\frac{\rho_m}{\rho_S-\rho_m}
\left(1+\frac{\Lambda_T}{\rho_m-\vartheta}\right)
=
\frac{(\rho_m-2\vartheta)\rho_S
-\rho_m(2\rho_m-3\vartheta+\Lambda_T)}
{(\rho_m-\vartheta)(\rho_S-\rho_m)}.
\end{align}
Since $V_{T,p}^{\mathrm{DE}}>0$, $A(\rho_m)>0$, and
$(\rho_m-\vartheta)(\rho_S-\rho_m)>0$, the eventual sign of
$\mathsf{W2SG}_p^{\mathrm{DE}}$ is the sign of the numerator. This gives
\eqref{eq:positive-theorem-SB-sign}, equivalently
\eqref{eq:positive-theorem-SB-quadratic}; the discriminant and root
characterization in part~\textnormal{(b)} follow from the quadratic formula.

On the feature-bottleneck branch
$\vartheta<\rho_S<\min\{1,\rho_m\}$, take $\rho=\rho_S$ in
\eqref{eq:positive-above-signal-targets}. Since
$m/(m-N_S)\to\rho_m/(\rho_m-\rho_S)$, substitution into
\eqref{eq:positive-master-gain} yields
\begin{align}
\frac{\mathsf{W2SG}_p^{\mathrm{DE}}}
{V_{T,p}^{\mathrm{DE}}A(\rho_S)}
&\longrightarrow
2-\frac{\rho_m}{\rho_m-\rho_S}
\left(1+\frac{\Lambda_T}{\rho_S-\vartheta}\right)
=
\frac{(\rho_m-2\rho_S)(\rho_S-\vartheta)-\rho_m\Lambda_T}
{(\rho_S-\vartheta)(\rho_m-\rho_S)}.
\end{align}
Since $V_{T,p}^{\mathrm{DE}}>0$, $A(\rho_S)>0$, and
$(\rho_S-\vartheta)(\rho_m-\rho_S)>0$, the eventual sign of
$\mathsf{W2SG}_p^{\mathrm{DE}}$ is again the sign of the numerator. This
gives \eqref{eq:positive-theorem-FB-sign}. Rearranging yields
\eqref{eq:positive-theorem-FB-phase} when
$\rho_S-\vartheta-\Lambda_T>0$; if this quantity is nonpositive, the
limiting numerator is strictly negative throughout the feature-bottleneck
branch. This proves parts~\textnormal{(b)} and~\textnormal{(c)}.

\noindent\textbf{No student bottleneck.}
If $1<\min\{\rho_m,\rho_S\}$, then eventually $m>p$ and $N_S>p$. Under
the stated full-rank conditions, $\widetilde X$ has full column rank and
$F_S$ has full row rank. Hence $F_S$ is surjective, so there exists $v_0$
with $F_Sv_0=\widehat\beta_T$, and the minimum Stage-II residual is zero.
Any least-squares minimizer $\widehat v_S$ therefore satisfies
$\widetilde X(F_S\widehat v_S-\widehat\beta_T)=0$. Since
$\ker(\widetilde X)=\{0\}$, it follows that
$F_S\widehat v_S=\widehat\beta_T$. Hence
\[
\widehat\beta_S=\widehat\beta_T,\qquad
\mathsf{W2SG}_p=0
\]
almost surely for all sufficiently large $p$. This proves
part~\textnormal{(d)}.
\end{proof}

%%%%%%%%%%%%%%%%%%%%%%%%%%%%%%%%%%%%%%%%%%%%%%%%%%%%%%%%%%%%%%%%%%%%%%%%%%%%%%%%%%%%%%%%%%%%%%%%%%%%%%%%%%%%%%%%%%%%%%%%%%%%%%%%%%%%%%%%%%%%%%%%%%%%%%%%%%%%%%%%%%%%%%%%%%%%%%%%%%%%%%%%%%%%%%%%%%%%%%%%%%%%%%%%%%%%%%%%%%%%%%%%%%%%%%%%%%%%%%%%%%%%%%%%%%%%%%%%%%%%%%%%%%%%%%%%%%%%%%%%%%%%%%%%%%%%%%%%%%%%%%%%%%%%%%%%%%%%%%%%%%%%%%%%%%%%%%%%%%%%%%%%%%%%%%%%%%%%%%%%%%%%%%%%%%%%%%%%%%%%%%%%%%%%%%%%%%%%%%%%%%%%%%%%%%%%%%%%%%%%%%%%%%%%%%%%%%%%%%%%%%%%%%%%%%%%%%%%%%%%%%%%%%%%%%%%%%%%%%%%%%%%%%%%%%%%%%%%%%%%%%%%%%%%%%%%%%%%%%%%%%%%%%%%%%%%%%%%%%%%%%%%%%%%%%%%%%%%%%%%%%%%%%%%%%%%%%%%%%%%%%%%%%%%%%%%%%

\subsection{Critical Boundaries and Finite-Dimensional Effects}
\label{sec:case-II-critical-boundaries}

The regular fixed-ratio phase diagram excludes the interpolation
boundary, the signal-rank threshold, and the ambient no-bottleneck
boundary. We record here the two boundary phenomena most relevant to
the W2SG phase diagram.

\paragraph{Exact interpolation singularity.}
Let
$
r_p:=\min\{m,N_S\},
g_p:=|m-N_S|,
$
and assume that the Stage-I coefficient
$b=\widehat\beta_T$ is nonzero almost surely and has finite population
risk.

\begin{theorem}[Exact Stage-II interpolation-boundary classification]
\label{thm:case-II-exact-boundary-classification}
The following finite-dimensional statements hold.

\begin{enumerate}
\item If $r_p\ge p$, then both Stage-II maps have sufficient rank to
identify the teacher coefficient:
$\widehat\beta_S=\widehat\beta_T$
almost surely. Consequently,
$
\overline{\mathcal R}_{S,p}
=
\overline{\mathcal R}_{T,p}$,
and
$\mathsf{W2SG}_p=0$.

\item If $r_p<p$ and $g_p\le1$, then the relevant inverse-Wishart first moment diverges. Under the nondegeneracy conditions of the two-stage model,
$
\overline{\mathcal R}_{S,p}=+\infty$ and 
$\mathsf{W2SG}_p=-\infty$.

\item If $r_p<p$ and $g_p\ge2$, the Stage-II expected risk is
finite. Its excess contribution contains the inverse-Wishart
amplification factor
\[
\frac{1}{g_p-1}.
\]
\end{enumerate}
\end{theorem}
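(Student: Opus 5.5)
We prove the three parts in turn, conditioning throughout on the Stage-I teacher $b=\widehat\beta_T$ (independent of $U_S,\widetilde X$). The student coefficient is $\widehat\beta_S=F_S(\widetilde XF_S)^\dagger\widetilde Xb$, with $\widetilde XF_S\in\mathbb R^{m\times N_S}$.

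\textbf{Part 1.} Here $r_p\ge p$, so $m\ge p$ and $N_S\ge p$. Almost surely $\widetilde X$ has full column rank and $F_S$ has full row rank. We repeat the argument of Proposition~\ref{prop:no-stage-II-bottleneck} and of Theorem~\ref{thm:positive-fixed-ratio-DE}(d). First, $b\in\operatorname{range}(F_S)$, so the least-squares residual is zero. Second, $\ker\widetilde X=\{0\}$, which forces $F_S\widehat v_S=b$. Hence $\widehat\beta_S=b$ pointwise, and the risks coincide.

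\textbf{Parts 2 and 3.} Now $r_p<p$, and the aim is to exhibit the inverse-Wishart structure explicitly. Write $K:=\widetilde XF_S$. Since $r_p<p$, $K$ is generically of full rank $r_p$. We first treat $m<N_S$, the sample bottleneck.
\begin{itemize}
\item Conditional on $\widetilde X$, the matrix $K=\widetilde XU_S^\top/\sqrt{N_S}$ has i.i.d.\ Gaussian columns $U_S\widetilde X^\top$ with row covariance $W:=\widetilde X\widetilde X^\top\succ0$.
\item Then $\widehat\beta_S=F_SK^\top(KK^\top)^{-1}\widetilde Xb$, and $KK^\top$ is an $m\times m$ Wishart matrix with $N_S$ degrees of freedom and scale $W/N_S$.
\item Decompose $F_S$ along the component determined by $\widetilde X$ (the row space of $\widetilde X$ in the Gaussian geometry) plus an independent Gaussian residual. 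The residual contributes to the risk a term of the form $\operatorname{Tr}(\Sigma_p R)\cdot b^\top\widetilde X^\top (KK^\top)^{-1}\widetilde Xb$, where $R$ is the residual covariance.
\item The classical identity $\mathbb E[(KK^\top)^{-1}]=N_S W^{-1}/(N_S-m-1)$ holds when $N_S-m\ge2$ and diverges otherwise.
\end{itemize}
This inverse first moment is exactly the factor $1/(g_p-1)$ in part~3 and the divergence in part~2. Finiteness of the remaining factors in part~3 follows from $\mathbb E[W^{-1}]$-type terms being finite, since $r_p<p$ gives $p-m\ge1$ extra dimensions. We will check this via the whitened Wishart $\widetilde X\widetilde X^\top$ having $p$ degrees of freedom in dimension $m$, needing $p-m\ge 2$ or a cancellation. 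We expect that $\widetilde X^\top W^{-1}\widetilde X$ is a projector, which makes the relevant quadratic form bounded.

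The case $N_S<m$, the feature bottleneck, is symmetric. The student solves an underparameterized least-squares problem in $N_S$ features, so $\widehat\beta_S=F_S(K^\top K)^{-1}K^\top\widetilde Xb$. The residual $\widetilde X(b-F_Sv^\star)$, with $v^\star$ the population projection, is Gaussian and independent of $\widetilde XF_S$ after conditioning on $F_S$. The standard OLS variance formula then gives an excess risk proportional to $\mathbb E\operatorname{Tr}[(K^\top K)^{-1}(\cdot)]\propto 1/(m-N_S-1)$. This is finite iff $g_p\ge2$, and the factor $1/(g_p-1)$ appears directly.

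\textbf{Main obstacle.} The hardest step is the divergence claim in part~2. Infinite inverse-moment expectation must translate into infinite risk, which requires a nonzero coefficient multiplying the divergent term. We will lower bound the excess risk by the residual-noise term alone, since the cross and mean terms are finite or nonnegative. We will then show its coefficient is a.s.\ positive using the nondegeneracy assumptions:
\begin{itemize}
\item $b\ne0$ a.s.;
\item $b\notin\operatorname{range}(F_S)$ a.s. when $N_S<p$;
\item the residual covariance has positive $\Sigma_p$-trace.
\end{itemize}
At $g_p\in\{0,1\}$ we will use monotone convergence on the smallest eigenvalue density of $KK^\top$ near zero, whose behavior is $\lambda^{(g_p-1)/2}$. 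This makes $\mathbb E[\lambda_{\min}^{-1}]=\infty$.
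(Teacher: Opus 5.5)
Your proposal is correct and follows the same route as the paper's much terser proof: you condition on $F_S$ in the feature bottleneck to get the OLS factor $N_S/(m-N_S-1)$, and on $\widetilde X$ in the sample bottleneck to get the inverse-Wishart factor $1/(N_S-m-1)$, and your split $\widehat\beta_S=P_{\widetilde X}b+Rc$ supplies detail the paper omits. Two small tightenings: your $\mathbb E[W^{-1}]$ worry is moot, since $W^{-1}$ only enters through the projector $\widetilde X^\top W^{-1}\widetilde X$ and so no condition $p-m\ge2$ is needed; and in the sample-bottleneck case, $\mathbb E[\lambda_{\min}^{-1}]=\infty$ must still be transferred to the quadratic form $v^\top(KK^\top)^{-1}v$ with $v=\widetilde Xb\neq0$, for example via orthogonal invariance of the whitened Wishart or the law $v^\top(W/N_S)^{-1}v\,/\,v^\top(KK^\top)^{-1}v\sim\chi^2_{N_S-m+1}$.
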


\begin{proof}
If $m\ge p$ and $N_S\ge p$, then
$\widetilde X$ has full column rank and the student feature map has
full row rank almost surely. The Stage-II normal equations therefore
identify the teacher coefficient uniquely, giving
$\widehat\beta_S=\widehat\beta_T$.

If $N_S<m$ and $N_S<p$, conditioning on the student feature map
reduces Stage II to an $N_S$-dimensional Gaussian least-squares
problem. Its variance contains the factor
\[
\frac{N_S}{m-N_S-1}.
\]
If $m<N_S$ and $m<p$, conditioning instead on the Stage-II sample
gives the dual factor
\[
\frac{1}{N_S-m-1}.
\]
These expectations are infinite when $|m-N_S|\le1$ and finite when
$|m-N_S|\ge2$, proving the result.
\end{proof}

In particular,
$
m=N_S<p$ gives
$
\overline{\mathcal R}_{S,p}=+\infty$,
whereas the exceptional point
$
m=N_S=p
$
belongs to the exact no-bottleneck regime and satisfies
\[
\widehat\beta_S=\widehat\beta_T,
\qquad
\mathsf{W2SG}_p=0.
\]
Thus, the internal interpolation singularity terminates when both
Stage-II resources reach ambient dimension.

\paragraph{The signal threshold has a $\sqrt p$ critical window.}
Assume
$
\vartheta_p:=\frac{k}{p}\to\vartheta\in(0,1),
(p-k)\delta_p\to\eta\in(0,\infty),$
and
$\varepsilon_p:=\sqrt{\delta_p}.$ Define
\[
A(z)
:=
\frac{\sqrt{z^2+4\vartheta(1-\vartheta)}-z}
     {2\vartheta},
\qquad
L(z)
:=
\sqrt{z^2+4\vartheta(1-\vartheta)}.
\]
Let $\rho_p$ denote a dimension-to-ambient ratio, such as
$m/p$, $N_S/p$, or $N_T/p$. If
\[
\frac{\rho_p-\vartheta_p}{\varepsilon_p}\to z
\]
and the corresponding effective resolution satisfies
$
\operatorname{df}_p(\tau_p)=\rho_p,
$
then
$
\dfrac{\tau_p}{\varepsilon_p}\to A(z).
$
Because
\[
p\varepsilon_p
\sim
\sqrt{\frac{\eta p}{1-\vartheta}},
\]
the transition around the signal dimension occurs in a dimension
window of order $\sqrt p$. We next consider the jointly critical regime in which both the teacher
width and the active Stage-II resource approach the signal dimension
within their respective $\sqrt p$ windows. Suppos
\[
\frac{N_T-k}{p\varepsilon_p}\to z_T,
\qquad
\frac{\min\{m,N_S\}-k}{p\varepsilon_p}\to z_S,
\]
and define
\[
A_S:=A(z_S),
\qquad
L_S:=L(z_S),
\qquad
v_T^{\mathrm{crit}}
:=
\frac{1-\vartheta}{L(z_T)}.
\]

Within the master deterministic equivalent, the two Stage-II
bottleneck branches then behave as follows.

If $m$ is the active bottleneck and
$N_S/p\to\rho_S>\vartheta$, then
\[
\mathsf{W2SG}_p^{\mathrm{DE}}
\longrightarrow
-\frac{\vartheta v_T^{\mathrm{crit}}}{L_S}
<0.
\]

If $N_S$ is the active bottleneck and
$m/p\to\rho_m>\vartheta$, then
\[
\frac{\mathsf{W2SG}_p^{\mathrm{DE}}}{\varepsilon_p}
\longrightarrow
2v_T^{\mathrm{crit}}
-
\frac{\rho_m}{\rho_m-\vartheta}
\left(A_S+v_T^{\mathrm{crit}}\right).
\]
Consequently, the leading critical feature-bottleneck gain is positive
exactly when
\[
(\rho_m-2\vartheta)v_T^{\mathrm{crit}}
>
\rho_m A_S.
\]

Thus joint signal criticality is asymmetric: the sample-bottleneck
branch remains unfavorable, while a nonempty positive W2SG region may survive on the feature-bottleneck branch. These statements concern the master deterministic equivalent; equality in the displayed sign condition requires a higher-order expansion.

\end{document}